
\documentclass[a4paper]{nusthesis}

\dsp 

\usepackage[utf8]{inputenc}
\usepackage[english]{babel}
\usepackage{csquotes}


\setcounter{secnumdepth}{3}

\setcounter{tocdepth}{2}

\usepackage{indentfirst} 

\usepackage{silence} 
\WarningFilter{biblatex}{File 'english-ieee.lbx'}


\usepackage{bbding}
\usepackage{float}
\usepackage{algpseudocode}
\usepackage{multirow}
\usepackage{booktabs}
\usepackage{placeins}
\usepackage{subcaption}
\usepackage{bm}
\usepackage{mathtools}
\usepackage{pgfplots}
\pgfplotsset{compat=1.7}
\usepackage{numprint}
\npthousandsep{,} 
\usepackage[framemethod=TikZ,nobreak=true]{mdframed}

\definecolor{Lavender}{RGB}{248, 233, 236}

\usepackage{bookmark}
\usepackage[
defernumbers=true,
backref=true,
  backend=biber,
  citestyle=numeric,
  giveninits=false,
  sorting=none,
  maxbibnames=99,
style=numeric-comp,
  doi=false]{biblatex}
\DeclareFieldFormat*{title}{``#1''\newunitpunct} 
\addbibresource{references.bib}

\usepackage{microtype} 
\usepackage{tabulary} 
\usepackage{hhline}

\usepackage{enumitem}

\usepackage{hyperref}

\usepackage{listings}

\lstset{language=SQL, 
    morekeywords={ONLINE, WITHIN},
    frame=none,
    float,
    basicstyle=\ttfamily\normalsize,
    keywordstyle=\bfseries,
    numbers=none,
    showstringspaces=false,
    aboveskip=-2pt, 
    belowskip=-4pt, 
    xleftmargin=1em,
    escapeinside={(*@}{@*)},
    captionpos=b
}

\usepackage{amsthm}
\theoremstyle{plain}

\newtheorem{lemma}{Lemma}

\usepackage{mathtools}
\usepackage{bbm}
\DeclarePairedDelimiter{\ceil}{\lceil}{\rceil}

\usepackage{interval}
\intervalconfig{soft open fences}

\usepackage[linesnumbered,ruled,vlined,algochapter]{algorithm2e}
\SetAlgorithmName{Algorithm}{Algorithm}{Algorithm}
\IncMargin{0.5em}
\SetCommentSty{textnormal}
\SetNlSty{}{}{:}
\SetAlgoNlRelativeSize{0}
\SetKwInput{KwGlobal}{Global}
\SetKwInput{KwPrecondition}{Precondition}
\SetKwProg{Proc}{Procedure}{:}{}
\SetKwProg{Func}{Function}{:}{}
\SetKw{And}{and}
\SetKw{Or}{or}
\SetKw{To}{to}
\SetKw{DownTo}{downto}
\SetKw{Break}{break}
\SetKw{Continue}{continue}
\SetKw{SuchThat}{\textit{s.t.}}
\SetKw{WithRespectTo}{\textit{wrt}}
\SetKw{Iff}{\textit{iff.}}
\SetKw{MaxOf}{\textit{max of}}
\SetKw{MinOf}{\textit{min of}}
\SetKwBlock{Match}{match}{}{}

\usepackage{array}
\newcolumntype{L}[1]{>{\raggedright\let\newline\\\arraybackslash\hspace{0pt}}m{#1}}
\newcolumntype{C}[1]{>{\centering\let\newline\\\arraybackslash\hspace{0pt}}m{#1}}
\newcolumntype{R}[1]{>{\raggedleft\let\newline\\\arraybackslash\hspace{0pt}}m{#1}}

\usepackage{color}
\usepackage{xcolor}

\hypersetup{
    colorlinks,
    linkcolor={red!50!black},
    citecolor={blue!50!black},
    urlcolor={blue!80!black}
}

\usepackage{soul}
\soulregister\cite7
\soulregister\ref7
\soulregister\pageref7
\soulregister\autoref7
\soulregister\eqref7

\renewcommand{\eqref}{Equation~\ref}


\newcommand*{\RootPicDir}{pic}
\newcommand*{\PicDir}{\RootPicDir}

\newcommand*{\SetPicSubDir}[1]{\renewcommand*{\PicDir}{\RootPicDir /#1}}

\newcommand*{\RootExpDir}{exp}
\newcommand*{\ExpDir}{\RootExpDir}

\newcommand*{\SetExpSubDir}[1]{\renewcommand*{\ExpDir}{\RootExpDir /#1}}

\usepackage{lipsum} 

\newcounter{theo}[chapter] \setcounter{theo}{0}
\renewcommand{\thetheo}{\arabic{chapter}.\arabic{theo}}
\newenvironment{theo}[1][]{~\\%
\interlinepenalty=10000
\refstepcounter{theo}%
\ifstrempty{#1}%
{\mdfsetup{%
frametitle={%
\tikz[baseline=(current bounding box.east),outer sep=0pt]
\node[anchor=east,rectangle,fill=green!20]
{\strut Theorem~\thetheo};}}
}%
{\mdfsetup{%
frametitle={%
\tikz[baseline=(current bounding box.east),outer sep=0pt]
\node[anchor=east,rectangle,fill=green!20]
{\strut Theorem~\thetheo:~#1};}}%
}%
\mdfsetup{innertopmargin=10pt,linecolor=green!20,%
linewidth=2pt,topline=true,%
frametitleaboveskip=\dimexpr-\ht\strutbox\relax
}
\begin{mdframed}[]\relax%
}{\end{mdframed}}
\newcounter{lem}[chapter] \setcounter{lem}{0}
\renewcommand{\thelem}{\arabic{chapter}.\arabic{lem}}

\newcounter{prob}[chapter]\setcounter{prob}{0}
\renewcommand{\theprob}{\arabic{chapter}.\arabic{prob}}

\newcounter{obs}[chapter]\setcounter{obs}{0}
\renewcommand{\theobs}{\arabic{chapter}.\arabic{obs}}

\newcounter{defn}[chapter]\setcounter{defn}{0}
\renewcommand{\thedefn}{\arabic{chapter}.\arabic{defn}}
\newenvironment{defn}[1][]{~\\%
\interlinepenalty=10000
\refstepcounter{defn}%
\ifstrempty{#1}%
{\mdfsetup{%
frametitle={%
\tikz[baseline=(current bounding box.east),outer sep=0pt]
\node[anchor=east,rectangle,fill=magenta!20]
{\strut Definition~\thedefn};}}
}%
{\mdfsetup{%
frametitle={%
\tikz[baseline=(current bounding box.east),outer sep=0pt]
\node[anchor=east,rectangle,fill=magenta!20]
{\strut Definition~\thedefn:~#1};}}%
}%
\mdfsetup{innertopmargin=10pt,linecolor=magenta!20,%
linewidth=2pt,topline=true,%
frametitleaboveskip=\dimexpr-\ht\strutbox\relax
}
\begin{mdframed}[]\relax%
}{\end{mdframed}}
\newcounter{intu}[chapter]\setcounter{intu}{0}

\newcounter{prop}[chapter] \setcounter{prop}{0}
\renewcommand{\theprop}{\arabic{chapter}.\arabic{prop}}

\begin{document}

\makeatletter
\let\blx@rerun@biber\relax
\makeatother



\newcommand{\midas}{\textsc{Midas}}
\newcommand{\sess}{\textsc{SESS}}
\newcommand{\tlp}{Temporal Label Propagation}
\newcommand{\asteq}{\mathrel{*}=}
\newcommand{\mstream}{\textsc{MStream}}
\newcommand{\memstream}{\textsc{MemStream}}
\newcommand{\sedanspot}{\textsc{SedanSpot}}
\newcommand{\methodedge}{\textsc{AnoEdge}}
\newcommand{\methodgraph}{\textsc{AnoGraph}}
\newcommand{\densestream}{\textsc{DenseStream}}
\newcommand{\densealert}{\textsc{DenseAlert}}
\newcommand{\spotlight}{\textsc{SpotLight}}
\newcommand{\anomrank}{\textsc{AnomRank}}
\newcommand{\augsplicing}{\textsc{AugSplicing}}
\newcommand{\rcf}{Random Cut Forest}
\newcommand{\iso}{Isolation Forest}
\newcommand{\elliptic}{Elliptic Envelope}
\newcommand{\lof}{Local Outlier Factor}
\newcommand{\system}{{GraphAnoGAN}}
\newcommand{\anomalous}{\textsc{Anomalous}}
\newcommand{\dominant}{\textsc{Dominant}}
\newcommand{\sskn}{\textsc{SSKN}}
\newcommand{\amen}{\textsc{AMEN}}

\newcommand{\acm}{\textsc{ACM}}
\newcommand{\generator}{\mathbb{G}}
\newcommand{\D}{\mathbb{D}}
\newcommand{\blogc}{\textsc{BlogC}}
\newcommand{\darpa}{\textsc{DARPA}}
\newcommand{\enron}{\textsc{Enron}}

\newcommand{\EE}{\mathbb{E}} 
\newcommand{\one}{\mathbbm{1}}  

\newtheorem{observation}{Observation}
\newtheorem{iproblem}{Informal Problem}[chapter]
\newtheorem{problem}{Problem}[chapter]
\newtheorem{definition}{Definition}[chapter]
\newtheorem{theorem}{Theorem}[chapter]
\newtheorem{proposition}{Proposition}[chapter]
\newenvironment{conditions}[1][where:]
  {#1 \begin{tabular}[t]{>{}l<{} @{{}={}} l}}
  {\end{tabular}\\[\belowdisplayskip]}

\newcommand*\BitAnd{\mathbin{\&}}
\newcommand*\BitOr{\mathbin{|}}
\newcommand*\ShiftLeft{\ll}
\newcommand*\ShiftRight{\gg}
\newcommand*\BitNeg{\ensuremath{\mathord{\sim}}}

\newcommand\mycommfont[1]{\footnotesize\ttfamily\textcolor{blue}{#1}}
\SetCommentSty{mycommfont}

\renewcommand{\vec}[1]{\mathbf{#1}}

\newcommand{\hx}{{\hat x}}
\newcommand{\hy}{{\hat y}}
\newcommand{\hdelta}{{\hat \delta}}
\newcommand{\cx}{{\check x}}
\newcommand{\hLcal}{{\hat \Lcal}}
\newcommand{\tf}{{\tilde f}}
\newcommand{\tphi}{{\tilde \phi}}
\newcommand{\tC}{{\tilde C}}
\newcommand{\bCcal}{{\bar \Ccal}}
\newcommand{\tpsi}{{\tilde \psi}}
\newcommand{\td}{{\tilde d}}
\newcommand{\tLcal}{{\tilde \Lcal}}
\newcommand{\bbeta}{{\bar \beta}}
\newcommand{\bd}{{\bar d}}

\newcommand{\RR}{\mathbb{R}} 
\newcommand{\NN}{\mathbb{N}} 
\newcommand{\Bcal}{\mathcal{B}}

\renewbibmacro{in:}{%
  \ifentrytype{article}
    {}
    {\bibstring{in}%
     \printunit{\intitlepunct}}}

\begin{frontmatter}
\title{\bfseries Streaming Anomaly Detection}

\author{Siddharth Bhatia}
\degree{Doctor of Philosophy}
\field{Computer Science}
\degreeyear{2022}
\supervisor{Assistant Professor Bryan Hooi}

\examiners{%
  Professor Ng See-Kiong \\
  Associate Professor Stephane Bressan }

\maketitle

\declaredate{06 December, 2022}
\declarationpage

  \dedicate{Dedicated to my teachers}
  \begin{acknowledgments}

First and foremost, I want to thank my family and the Supreme Lord. Without their love and blessings, none of this would have been possible.

I cannot overstate how thankful I am to my advisor, Bryan Hooi. Throughout my time in graduate school, I really appreciated his kindness and patience, and how he genuinely cares about his students. During our meetings, he is always incredibly enthusiastic and energetic, even before a conference deadline. Bryan has always helped greatly in preparing me for an independent academic career by involving me in student mentorship and giving me numerous very helpful suggestions on paper writing and research presentations. I could surely not have gotten a better advisor.

I also want to thank my other thesis committee members: Ng See-Kiong and Stephane Bressan. Their guidance, questions, and comments throughout the process were invaluable to me in shaping the direction of the thesis.

I especially thank Sudipto Guha and Christos Faloutsos for being wonderful collaborators and mentors in research - I certainly learned a lot from our research discussions and meetings, particularly from your insights. I am also very thankful to Rajiv Kumar and Shan Sundar Balasubramaniam, who were my undergraduate research advisors.

I greatly thank the Outlier Detection and Description (ODD) workshop co-organizers: Leman Akoglu, Manish Gupta, Sourav Chatterjee, Xiaodong Jiang, and Bryan. I certainly enjoyed and learned a lot from your experience. I also thank Charu Aggarwal, Danai Koutra, Deepak Padmanabhan, Hanghang Tong, Ian Davidson, James Verbus, Jing Gao, Neil Shah, Rajmonda Caceres, Solon Barocas, and Sudipto for their insightful talks and panel discussion, and making the workshop a success.

I have learned a lot about research from my collaborators: Arjit Jain, Mohit Wadhwa, Rui Liu, Shivin Srivastava, Kenji Kawaguchi, Koki Kawabata, Ritesh Kumar, Shenghua Liu, Pan Li, Neil Shah, Yiwei Wang, Vaibhav Rajan, Nannan Wu, Ying Sun, Philip S. Yu,  Kijung Shin, Minji Yoon, Tanmoy Chakraborty; thanks for being such enthusiastic and helpful collaborators.

I also greatly thank Lei Cao, Samuel Madden, Mihai Cucuringu, Zak Jost, Elena Sizikova, Anton Strezhnev, Swarnima Sircar, Kai Xin Thia, François Scharffe, Kacy Zurkus, Matt Alderman, and Paul Asadoorian for inviting me to speak about my research and for being excellent hosts.

I am thankful to the community developers who extended our open-source projects: Joshua Tokle, Andrew Kane, Scott Steele, Steve Tan, Wong Mun Hou, Tobias Heidler, and Ashrya Agrawal.

I also appreciate Gregory Piatetsky, Limarc Ambalina, Matthew Mayo, Lucy Smith, Josh Miramant, John Desmond, Rahul Agarwal, and Nimish Mishra for covering our work in the press.

I had a lot of interesting conversations with Thijs Laarhoven, Hongfu Liu, Yue Zhao, Tim Januschowski, George Karypis, Milind Tambe, Aparna Taneja, Aude Hofleitner, Huan Liu, Jundong Li, Jiliang Tang, Evangelos Papalexakis, Srijan Kumar, Daniel Ting, Lee Rhodes, Jon Malkin, Graham Cormode, Arif Merchant, Jure	Leskovec, John Palowitch, Sean Taylor, Dhivya Eswaran, Yonatan Naamad, Eamonn Keogh, Yedid Hoshen, Guansong Pang, Jason Robinson, Xinyi Zheng, Acar Tamersoy, Dima Karamshuk, Yikun Ban, Susik Yoon, Kaize Ding, Antonia Saravanou, Derek Young, Anh Dinh, Raj Joshi, Ananta Narayanan Balaji, Qinbin Li, Prateek Saxena, Jonathan Scarlett, Kuldeep Meel, Gim Hee Lee, Abhik Roychoudhury, Kian Lee Tan, Lee Mong Li, Whynee Hsu, Bingsheng He, David Rosenblum, Damith Chatura Rajapakse, Wai Kay Leong, and Wenjie Feng; thank you for sharing your insights.

I greatly appreciate the wonderful support from Wei Ngan Chin, Li-Shiuan Peh, Beng Chin Ooi, Xiaokui Xiao, Line Fong, Agnes Ang, Aminah Ayu, Thiba Ahwahday, Irene Chuan, Catharine Tan, Sarada A, Aerin Oon, Goh Lee Kheng, and others: thanks for always being amazingly helpful with your advice, administrative and technical support and even going the extra mile in so many ways.

Last but certainly not the least, I am very grateful to my friends: Yash Sinha, Shivin Srivastava, and Pankaj Kumar. Graduate school has been a much more enriching experience for me thanks to the chance to be with you all.

\end{acknowledgments}
  \tableofcontents 
  \begin{abstract}

Anomaly detection is critical for finding suspicious behavior in innumerable systems, such as intrusion detection, fake ratings, and financial fraud. We need to detect anomalies in real-time or near real-time, i.e. determine if an incoming entity is anomalous or not, as soon as we receive it, to minimize the effects of malicious activities and start recovery as soon as possible. Therefore, online algorithms that can detect anomalies in a streaming manner are essential. Also, since the data increases as the stream is processed, we can only afford constant memory which makes the problem of streaming anomaly detection more challenging.

We first propose \textsc{Midas} which detects anomalous edges in dynamic graphs in an online manner, using constant time and memory. \textsc{Midas} focuses on detecting \emph{microcluster anomalies}, or suddenly arriving groups of suspiciously similar edges such as denial of service attacks in network traffic data. In addition, by using a principled hypothesis testing framework, \textsc{Midas} provides theoretical bounds on the false positive probability, which previous methods do not provide. We then propose two variants, \midas-R which incorporates temporal and spatial relations, and \midas-F which aims to filter away anomalous edges to prevent them from negatively affecting the algorithm's internal data structures. Our experimental results show that \textsc{Midas} outperforms baselines in accuracy by up to $62\%$ while processing the data orders of magnitude faster.

We then extend the count-min sketch data structure to a Higher-Order Sketch to capture complex relations in graph data, and to reduce detecting suspicious dense subgraph problem to finding a dense submatrix in constant time. Using this sketch, we propose four streaming methods to detect edge and subgraph anomalies in constant time and memory. Furthermore, our approach is the first streaming work that incorporates dense subgraph search to detect graph anomalies in constant memory and constant update time per newly arriving edge. We also provide theoretical guarantees on the higher-order sketch estimate and the submatrix density measure. Experimental results on real-world datasets demonstrate our effectiveness as opposed to popular state-of-the-art streaming edge and graph baselines.

Next, we broaden the graph setting to multi-aspect data. We propose \textsc{MStream} which detects anomalies in multi-aspect data streams including both categorical and numeric attributes and is online, thus processing each record in constant time and constant memory. Moreover, the anomalies detected by \mstream\ are explainable. We further propose \mstream-PCA, \mstream-IB, and \mstream-AE to incorporate correlation between features.

Finally, we consider multi-dimensional data streams with concept drift and propose \memstream, a streaming anomaly detection framework, allowing us to detect unusual events as they occur while being resilient to concept drift. \memstream\ leverages the power of a denoising autoencoder to learn representations and a memory module to learn the dynamically changing trend in data without the need for labels. We prove a theoretical bound on the size of memory for effective drift handling. In addition, we allow quick retraining when the arriving stream becomes sufficiently different from the training data. Furthermore, \memstream\ makes use of two architecture design choices to be robust to memory poisoning. Experimental results show the effectiveness of our approach compared to state-of-the-art streaming baselines.

\end{abstract}
  \newcommand*{\boldname}[3]{%
  \def\lastname{#1}%
  \def\firstname{#2}%
  \def\firstinit{#3}}
\boldname{}{}{}

\boldname{Bhatia^{*}}{Siddharth}{S.}
\boldname{Bhatia}{Siddharth}{S.}

\renewcommand{\mkbibnamegiven}[1]{%
  \ifboolexpr{ ( test {\ifdefequal{\firstname}{\namepartgiven}} or test {\ifdefequal{\firstinit}{\namepartgiven}} ) }
  {\mkbibbold{#1}}{#1}%
}

\renewcommand{\mkbibnamefamily}[1]{%
  \ifboolexpr{ ( test {\ifdefequal{\firstname}{\namepartgiven}} or test {\ifdefequal{\firstinit}{\namepartgiven}} ) }
  {\mkbibbold{#1}}{#1}%
}

\begin{refsection}
\nocite{%
  bhatia2020midas,
  bhatia2022midas,
  Bhatia2021MSTREAM,
  bhatia2022memstream,
  bhatia2021exgan,
  bhatia2022anograph,
  bhatia2022sess,
}

\defbibnote{PubListPrenote}{%
This dissertation is primarily related to the following peer-reviewed articles:
}
\defbibnote{PubListPostnote}{%
\newpage
}

\bookmarksetup{startatroot}
\newrefcontext[sorting=none]
\printbibliography[
  heading=bibintoc,
  title={Publications},
  prenote=PubListPrenote,
  postnote=PubListPostnote
]
\end{refsection}

\begin{refsection}
\nocite{%
  bhatia2021graphanogan,
  Kawabata2021SSMFSS,
  Wang2021AdaptiveDA,
  zhang2020augsplicing,
  Sun2022MonLADML,
  Sun2022AAANAA,
}

\defbibnote{PubListPrenote}{%
The following articles have also been completed over the course of the PhD but are not discussed in the dissertation:
}
\defbibnote{PubListPostnote}{%
}

\bookmarksetup{startatroot}
\newrefcontext[sorting=none]
\printbibliography[
  heading=none,
  title={Publications},
  prenote=PubListPrenote,
  postnote=PubListPostnote,
  resetnumbers=8
]
\end{refsection} 
  \listoffigures
  \listoftables
\end{frontmatter}

\sloppy
\makeatletter
\renewcommand\part{%
  \if@openright
    \cleardoublepage
  \else
    \clearpage
  \fi
  \thispagestyle{empty}%
  \if@twocolumn
    \onecolumn
    \@tempswatrue
  \else
    \@tempswafalse
  \fi
  \null\vfil
  \secdef\@part\@spart}
\makeatother

\SetPicSubDir{Introduction}
\SetExpSubDir{Introduction}

\chapter[Introduction][Introduction]{Introduction}
\label{ch:intro}

The need to detect anomalies in real-time or near real-time is driven by the need to respond quickly to potential security threats or other forms of abnormal behavior. By detecting anomalies as soon as they occur, organizations can take action to prevent or mitigate the impact of such threats, and reduce the likelihood of damage or loss. Moreover, the faster an organization can detect and respond to anomalies, the better able it will be to start recovery as soon as possible.

Consider an intrusion detection system (IDS), which is an important part of an organization's overall security strategy, providing protection against potential threats, valuable information about network security, and a layer of defense against cyber attacks. Anomalous behavior in this scenario can be described as a group of attackers making a large number of connections to some set of targeted machines to restrict accessibility or look for potential vulnerabilities. By continuously monitoring network traffic and alerting on potential threats, IDS allows organizations to respond quickly and help prevent attacks from succeeding or minimize their impact.

We can model an intrusion detection system as a dynamic graph, where nodes correspond to machines, and each edge represents a timestamped connection from one machine to another. In this graph, anomalous behavior often takes the form of a dense subgraph, as shown in several real-world datasets \cite{shin2017densealert,eswaran2018spotlight}.

Several approaches \cite{akoglu2010oddball,chakrabarti2004autopart,hooi2017graph,jiang2016catching,kleinberg1999authoritative,shin2018patterns,tong2011non} aim to detect anomalies in graph settings. However, these approaches focus on static graphs, whereas many real-world graphs are dynamic in nature, and methods based on static connections may miss temporal characteristics of the graphs and anomalies.

Among the methods focusing on dynamic graphs, most of them have edges aggregated into graph snapshots \cite{eswaran2018spotlight,sun2006beyond,sun2007graphscope,koutra2013deltacon,Sricharan,Gupta}. However, in order to minimize the effect of malicious activities and start recovery as soon as possible, we need to detect anomalies in real-time or near real-time i.e., to identify whether an incoming edge is anomalous or not, as soon as we receive it. This requires that we process the data as an edge stream rather than an aggregated graph snapshot. In addition, since the number of vertices can increase as we process the stream of edges, we need an algorithm that uses constant memory in graph size.

Moreover, fraudulent or anomalous events in many applications occur in microclusters or suddenly arriving groups of suspiciously similar edges e.g., denial of service attacks in network traffic data and lockstep behavior. However, existing methods that process edge streams in an online manner, including \cite{eswaran2018sedanspot,ranshous2016scalable}, aim to detect individually surprising edges, not microclusters, and can thus miss large amounts of suspicious activity.

Thus, we ask the question: Given a stream of graph edges from a dynamic graph, how can we detect anomalies, using constant memory and constant update time?

We first propose \midas\ (Chapter \ref{ch:midas}), which detects \emph{microcluster anomalies}, or suddenly arriving groups of suspiciously similar edges, in edge streams, using constant time and memory. By using a principled hypothesis testing framework, \textsc{Midas} provides theoretical bounds on the false positive probability, which previous methods do not provide.

Next, we extend the count-min sketch data structure to a higher-order sketch (Chapter \ref{ch:anograph}). Unlike traditional sketches, higher-order sketches can capture not just the frequency of data points in the data stream, but also the correlations and other higher-order statistics of the data. This higher-order sketch has the useful property of preserving the dense subgraph structure (dense subgraphs in the input turn into dense submatrices in the data structure). We then propose four online algorithms that utilize this enhanced data structure to detect both edge and graph anomalies in constant memory and constant update time. Existing work in streaming graph scenarios seeks to detect the presence of either anomalous edges \cite{eswaran2018sedanspot,bhatia2020midas,belth2020mining,chang2021f} or anomalous subgraphs \cite{shin2017densealert,eswaran2018spotlight,yoon2019fast}, but not both. Moreover, our approach is the only streaming method that makes use of dense subgraph search to detect graph anomalies while only requiring constant memory and time. We also provide theoretical guarantees on the higher-order sketch estimate and the submatrix density measure.

\paragraph{}
Recent intrusion detection datasets typically report tens of features for each individual flow, such as its source and destination IP, port, protocol, average packet size, etc. This makes it important to design approaches that can handle \emph{multi-aspect data}. Developing effective methods for handling \textit{multi-aspect data} (i.e., data having multiple features or dimensions) still remains a challenge, especially in an unsupervised setting, where traditional anomaly detection algorithms, such as One-Class SVM, tend to perform poorly because of the curse of dimensionality.

Some existing approaches for this problem aim to detect \emph{point anomalies}, or individually unusual connections. However, since this ignores the relationships between records, it does not effectively detect large and suddenly appearing \emph{groups} of connections, as is the case in denial of service and other attacks. For detecting such groups, there are also existing methods based on dense subgraph detection~\cite{bhatia2020midas} as well as dense subtensor detection~\cite{shin2017densealert,sun2006beyond}. However, these approaches are generally designed for datasets with a smaller number of dimensions, thus facing significant difficulties scaling to our dataset sizes. Moreover, they treat all variables of the dataset as categorical variables, whereas our approach can handle arbitrary mixtures of categorical variables (e.g., source IP address) and numerical variables (e.g., average packet size).

We propose \mstream\ (Chapter \ref{ch:mstream}), a method for processing a stream of multi-aspect data that detects \emph{group anomalies}, i.e., the sudden appearance of large amounts of suspiciously similar activity. Our approach naturally allows for similarity both in terms of categorical variables (e.g., a small group of repeated IP addresses creating a large number of connections), as well as in numerical variables (e.g., numerically similar values for average packet size). \mstream\ is a streaming approach that performs each update in constant memory and time. This is constant both with respect to the stream length as well as in the number of attribute values for each attribute. We also demonstrate that \mstream\ incorporates correlation between features and that the anomalies detected by \mstream\ are explainable.

Finally, the problem of anomaly detection becomes even more challenging when multi-aspect data streams contain concept drift (drift in the distribution over time). Existing approaches \cite{Hariri2021ExtendedIF,Bhatia2021MSTREAM,Manzoor2018xStreamOD,Na2018DILOFEA,Mirsky2018KitsuneAE,guha2016robust} are unable to fully handle such streams with concept drift. We propose \memstream\ (Chapter \ref{ch:memstream}), which uses a denoising autoencoder \cite{denoisingae} to extract features, and a memory module to learn the dynamically changing trend. Our streaming framework is resilient to concept drift and robust to memory poisoning, and we prove a theoretical bound on the size of memory for effective drift handling. Moreover, we allow quick retraining when the arriving stream becomes sufficiently different from the training data.

\section{Overview}
This thesis is organized into two main parts: (1) Graphs, and (2) Multi-Aspect Data. Related work in both graph and multi-aspect data settings is discussed in Chapter \ref{ch:related}. In Chapter \ref{ch:midas}, we study how to detect anomalous edges in a dynamic graph using the count-min sketch data structure. In Chapter \ref{ch:anograph}, we extend the count-min sketch to a higher-order sketch data structure to detect both anomalous edges and subgraphs. In Chapter \ref{ch:mstream}, we broaden the graph setting to a multi-aspect data stream and detect anomalous records in an online manner. Finally, in Chapter \ref{ch:memstream}, we consider multi-aspect data streams with concept drift. Two complementary directions are discussed in the Appendix: \ref{ch:exgan}: Adversarial generation of extreme/anomalous data; and \ref{ch:sess}: Incorporating semi-supervision in streaming anomaly detection. Table \ref{tab:overview} provides an overview of this thesis.

\begin{table*}[!htb]
\centering
\caption{Overview of the thesis.}
\label{tab:overview}
\resizebox{\columnwidth}{!}{
\begin{tabular}{@{}l|l|l|l|l}
\toprule
{\bfseries Chapter} & {\bfseries Setting} & {\bfseries Anomaly Type} & {\bfseries Data Structure} & {\bfseries Method}\\ 
\midrule
Ch. \ref{ch:midas} & Graph & Edges & Count-Min Sketch & {\midas} \href{https://arxiv.org/pdf/1911.04464}{[PDF]} \\ \midrule
Ch. \ref{ch:anograph} & Graph & Edges + Subgraphs & Higher-Order Sketch & {\textsc{AnoEdge/AnoGraph}} \href{https://arxiv.org/pdf/2106.04486.pdf}{[PDF]}\\ \midrule
Ch. \ref{ch:mstream} & Multi-Aspect Data & Records & Count-Min Sketch & {\mstream} \href{https://dl.acm.org/doi/pdf/10.1145/3442381.3450023}{[PDF]}\\ \midrule
Ch. \ref{ch:memstream} & Multi-Aspect Data & Records & Autoencoder + Memory   & {\memstream} \href{https://dl.acm.org/doi/pdf/10.1145/3485447.3512221}{[PDF]} \\ \bottomrule
\end{tabular}
}
\end{table*}

\paragraph{Reproducibility:} Our code and datasets are open-sourced and publicly available at \href{https://github.com/Stream-AD/}{\bfseries https://github.com/Stream-AD/}.

\paragraph{Summary of Impact}

\begin{itemize}
    \item {\bfseries Open Source Traction:} Our projects received 900+ stars on GitHub. \midas\ was implemented in C++, Python, Golang, Ruby, Rust, R, Java, and Julia.

    \item {\bfseries Awards:} \mstream\ was the WWW 2021 Best Paper Finalist. \midas\ won the popular choice award at Microsoft Azure Hackathon 2020.
    
    \item {\bfseries Invited Talks:} We were invited by the MIT Data Systems Group, Alan Turing Institute, New York University Center for Data Science, Security Weekly, DataScience SG, and Data Science Congress to share our research.

    \item {\bfseries Media Coverage:} Our research was covered by ACM TechNews, AIhub, Hacker News, Hacker Noon, insideBIGDATA, KDnuggets, and Towards Data Science.
    
\end{itemize}

Next, we summarize the goals and contributions of each of our proposed methods.

\section{Chapter Summaries}

\subsection{Chapter \ref{ch:midas}: MIDAS}
    
Given a stream of graph edges from a dynamic graph, how can we assign anomaly scores to edges in an online manner, for the purpose of detecting unusual behavior, using constant time and memory?
		
\paragraph{Contributions:}
\begin{enumerate}
		\item {\bfseries Streaming Microcluster Detection:} We propose a novel streaming approach combining statistical (chi-squared test) and algorithmic (count-min sketch) ideas to detect microcluster anomalies, requiring constant time and memory.
		\item {\bfseries Theoretical Guarantees:} We show guarantees on the false positive probability of \midas.
		\item {\bfseries Effectiveness}: Our experimental results show that \midas{} outperforms baseline approaches by up to $62$\% higher ROC-AUC, and processes the data orders-of-magnitude faster than baseline approaches.
		\item {\bfseries Relations and Filtering}: We propose two variants, \midas-R that incorporates temporal and spatial relations, and \midas-F that aims to filter away anomalous edges to prevent them from negatively affecting the algorithm's internal data structures.
\end{enumerate}

\subsection{Chapter \ref{ch:anograph}: AnoEdge/AnoGraph}
    
Given a stream of graph edges from a dynamic graph, how can we assign anomaly scores to edges and subgraphs in an online manner, for the purpose of detecting unusual behavior, using constant time and memory?

\paragraph{Contributions:}
\begin{enumerate}
    \item {\bfseries Higher-Order Sketch:} We transform the dense subgraph detection problem into finding a dense submatrix (which can be achieved in constant time) by extending the count-min sketch data structure to a higher-order sketch.
    \item {\bfseries Streaming Anomaly Detection:} We propose four novel online approaches to detect anomalous edges and graphs in real-time, with constant memory and update time. Moreover, this is the first streaming work that incorporates dense subgraph search to detect graph anomalies in constant memory/time.
    \item {\bfseries Effectiveness:} We outperform state-of-the-art streaming edge and graph anomaly detection methods on four real-world datasets.
\end{enumerate}
    
\subsection{Chapter \ref{ch:mstream}: MSTREAM}
Given a stream of entries (i.e., records) in \emph{multi-aspect data} (i.e., data having multiple features or dimensions), how can we detect anomalous behavior, including group anomalies involving the sudden appearance of large groups of suspicious activity, in an unsupervised manner? 
    
\paragraph{Contributions:}
\begin{enumerate}
    \item {\bfseries Multi-Aspect Group Anomaly Detection:} We propose a novel approach for detecting group anomalies in multi-aspect data, including both categorical and numeric attributes. Moreover, the anomalies detected by \mstream\ are explainable.
    \item {\bfseries Streaming Approach:} Our approach processes the data in a fast and streaming fashion, performing each update in constant time and memory.
    \item {\bfseries Effectiveness:} Our experimental results using \emph{KDDCUP99}, \emph{CICIDS-DoS}, \emph{UNSW-NB 15} and \emph{CICIDS-DDoS} datasets show that \mstream\ outperforms baseline approaches.
    \item {\bfseries Incorporating Correlation:} We propose \mstream-PCA, \mstream-IB and \mstream-AE to incorporate correlation between features.
\end{enumerate}
    
    \subsection{Chapter \ref{ch:memstream}: MemStream}
    
Given a stream of entries over time in a multi-dimensional data setting where concept drift is present, how can we detect anomalous activities?

    \paragraph{Contributions:}
    \begin{enumerate}
    \item {\bfseries Streaming Anomaly Detection:} We propose a novel streaming approach using a denoising autoencoder and a memory module, for detecting anomalies. \memstream\ is resilient to concept drift and allows quick retraining.
    \item {\bfseries Theoretical Guarantees:} We discuss both the optimum memory size for effective concept drift handling and the motivation behind our architecture design.
    \item {\bfseries Robustness to Memory Poisoning:} \memstream\ prevents anomalies from entering the memory and can self-correct and recover from bad memory states.
    \item {\bfseries Effectiveness:} Our experimental results show that \memstream\ convincingly outperforms $11$ state-of-the-art baselines using $2$ synthetic datasets (that we release as open-source) and $11$ popular real-world datasets.
\end{enumerate}
    
\subsection{Appendix \ref{ch:exgan}: ExGAN} To manage the risk arising from anomalous and extreme events like natural disasters, financial crashes, and epidemics, a vital step is to be able to generate and understand a wide range of extreme scenarios. Existing approaches based on Generative Adversarial Networks (GANs) excel at generating realistic samples but seek to generate \emph{typical} samples, rather than extreme samples.
    
In this chapter, we propose ExGAN which allows the user to specify both the desired extremeness measure, as well as the desired extremeness probability to sample at. Our work draws from Extreme Value Theory, a probabilistic approach for modelling the extreme tails of distributions. Experiments on real US Precipitation data show that ExGAN generates realistic samples efficiently, based on visual inspection and quantitative measures. Moreover, generating increasingly extreme examples can now be done in constant time, as opposed to the $\mathcal{O}(\frac{1}{\tau})$ time required by the baseline.

\subsection{Appendix \ref{ch:sess}: SESS} In this chapter, we discuss semi-supervision for streaming anomaly detection algorithms that use sketches. Using a two-state conceptual system that draws on partially observable markov decision processes, we show that off-the-shelf semi-supervision ideas can lead to undesirable algorithms. We also show that unbalanced classification, as is the case in anomaly detection, provides a significantly greater opportunity for well-designed algorithms. We introduce \sess, which incorporates semi-supervision to improve the performance of \midas\ significantly while retaining the online, low memory characteristics of streaming algorithms. Next, we propose \sess-3D which can directly incorporate node feedback, and further improves the performance by being cache-aware and using higher-order sketches. Finally, we show how the performance of \spotlight\ can be improved in a weakly semi-supervised setting.
\SetPicSubDir{Related Work}
\SetExpSubDir{Related Work}

\chapter[Related Work][Related Work]{Related Work}
\label{ch:related}

This thesis is closely related to areas such as graph streams \cite{luo2020dynamic,boniol2020series2graph,liakos2020rapid,6544842,DBLP:conf/pakdd/ZhangLYFC19}, sketches \cite{Bahri2018ASN,Mu2017StreamingCW,khan2018composite,rusu2009sketching,Shi2020HigherOrderCS,Zhao2011gSketchOQ,Menon2007AnID}, dense subgraph discovery \cite{Ma2020EfficientAF,Epasto2015EfficientDS,Sawlani2020NearoptimalFD,Mcgregor2015DensestSI,Esfandiari2018MetricSA}, concept drift in streams \cite{Pasricha2018IdentifyingAA,Benczr2019ReinforcementLU,Chi2018HashingFA,Shao2014PrototypebasedLO,Bai2016AnOM}, anomaly detection in graphs \cite{DBLP:conf/pakdd/ZhangLYFC19, DBLP:conf/sdm/BogdanovFMPRS13, 7836684, 10.1145/3139241, DBLP:journals/wias/BonchiBGS19, 7817049,Bojchevski2018BayesianRA,Yu2018NetWalkAF,Kumagai2021SemisupervisedAD,Liu2021AnomalyDI,Shao2018AnEF,noble2003graph,saebi2020efficient,kulkarni2017network,Malliaros2012FastRE,perozzi2016scalable, tong2011non,yoon2019fast} and anomaly detection in streams \cite{Jankov2017RealtimeHP,Zou2017NonparametricDO,Moshtaghi2015EvolvingFR, Siffer2017AnomalyDI,Togbe2020AnomalyDF,Zhang2020AugSplicingSB,Wang2008ProcessingOM,10.1145/3178876.3186056,sun2019fast,7837870,DBLP:journals/corr/abs-1906-02524}. Anomaly detection is a vast topic by itself and cannot be fully covered in this thesis. In this chapter, we mainly focus on methods that detect anomalies in graph and multi-aspect data settings. \cite{chandola2009anomaly} discusses traditional anomaly detection methods, \cite{akoglu2015graph} surveys graph-based anomaly detection, \cite{gupta2013outlier} reviews outlier detection in temporal data and \cite{lu2018learning} is a literature survey on concept drift.


\section{Graphs}
\label{rel:graph}

	\noindent \textbf{Anomaly detection in static graphs} can be classified by which anomalous entities (nodes, edges, subgraph, etc.) are spotted.

	\begin{itemize}
		\item Anomalous node detection:
		\textsc{OddBall} \cite{akoglu2010oddball} extracts egonet-based features and finds empirical patterns with respect to the features.
		Then, it identifies nodes whose egonets deviate from the patterns, including the count of triangles, total weight, and principal eigenvalues.
		\textsc{CatchSync} \cite{jiang2016catching} computes node features, including degree and authoritativeness~\cite{kleinberg1999authoritative}, then spots nodes whose neighbors are notably close in the feature space.
		\item Anomalous subgraph detection:
		FRAUDAR \cite{hooi2017graph} and k-cores \cite{shin2018patterns} measure the anomalousness of nodes and edges, detecting a dense subgraph consisting of many anomalous nodes and edges.
		\item Anomalous edge detection:
		AutoPart \cite{chakrabarti2004autopart} encodes an input graph based on similar connectivity among nodes, then spots edges whose removal reduces the total encoding cost significantly.
		NrMF \cite{tong2011non} factorize the adjacency matrix and flag edges with high reconstruction error as outliers.
	\end{itemize}

	\noindent \textbf{Anomaly detection in graph streams} use as input a series of graph snapshots over time.
	We categorize them similarly according to the type of anomaly detected:

	\begin{itemize}
		\item Anomalous node detection:
		DTA/STA \cite{sun2006beyond} approximates the adjacency matrix of the current snapshot based on incremental matrix factorization, then spots nodes corresponding to rows with high reconstruction error. \cite{aggarwal2011outlier} dynamically partitions the network graph to construct a structural connectivity model and detect outliers in graph streams.
		\item Anomalous subgraph detection:
		Given a graph with timestamps on edges, \textsc{CopyCatch} \cite{beutel2013copycatch} spots near-bipartite cores where each node is connected to others in the same core densely within a short time.
		SPOT/DSPOT \cite{Siffer2017AnomalyDI} use extreme value theory to automatically set thresholds for anomalies. IncGM+ \cite{abdelhamid2017incremental} utilizes an incremental method to process graph updates.
		\item Anomalous edge detection:
		\textsc{SpotLight} \cite{eswaran2018spotlight} discovers anomalous graphs with dense bi-cliques, but uses a randomized approach without any search for dense subgraphs, and \textsc{AnomRank} \cite{yoon2019fast} iteratively updates two score vectors and computes anomaly scores.
	\end{itemize}

	\noindent \textbf{Anomaly detection in edge streams} use as input a stream of edges over time.
	Categorizing them according to the type of anomaly detected:

	\begin{itemize}
		\item Anomalous node detection:
		Given an edge stream, \textsc{HotSpot} \cite{yu2013anomalous} detects nodes whose egonets suddenly and significantly change.
		\item Anomalous subgraph detection:
		Given an edge stream, \textsc{DenseAlert} \cite{shin2017densealert} identifies dense subtensors created within a short time and utilizes incremental method to process graph updates or subgraphs more efficiently.
		\item Anomalous edge detection: Only the methods in this category are applicable to our task, as they operate on edge streams and output a score per edge.
		\cite{ye2001an} proposes a method that utilizes the chi-squared test to give a score to the individual events from a stream. CAD \cite{Sricharan} localizes anomalous changes using commute time distance measurement. \densestream\ \cite{shin2017densealert} maintains and updates a dense subtensor in a tensor stream. RHSS \cite{ranshous2016scalable} focuses on sparsely-connected parts of a graph. Sedanspot \cite{eswaran2018sedanspot} uses personalized PageRank to detect edge anomalies based on edge occurrence, preferential attachment, and mutual neighbors in sublinear space and constant time per edge. PENminer \cite{belth2020mining} explores the persistence of activity snippets, i.e., the length and regularity of edge-update sequences' reoccurrences. F-FADE \cite{chang2021f} aims to detect anomalous interaction patterns by factorizing the frequency of those patterns. These methods can effectively detect anomalies, but they require a considerable amount of time.
		
		
	\end{itemize}

\section{Multi-Aspect Data}
\label{rel:multi}

\begin{itemize}

\item Deep Learning: See \cite{chalapathy2019deep,pang2020deep} for extensive surveys. Several deep learning based methods have been proposed for anomaly detection such as GAN-based approaches \cite{yang2020memgan,Bashar2020TAnoGANTS,ngo2019,zenati2018adversarially,deecke2018image,akcay2018ganomaly,schlegl2017unsupervised}, Energy-based \cite{kumar2019maximum,zhai2016deep}, Autoencoder-based \cite{gong2019memorizing,Su2019RobustAD,zong2018deep,xu2018unsupervised,zhou2017anomaly,Slch2016VariationalIF,an2015variational,Goodge2020RobustnessOA,Goodge2022ARESLA}, and RNN-based \cite{saurav2018rnn_online_anomaly}. For example, DAGMM \cite{zong2018deep} learns a Gaussian Mixture density model (GMM) over a low-dimensional latent space produced by a deep autoencoder, \cite{JU2020167} uses metric learning for anomaly detection and DSEBM \cite{zhai2016deep} trains deep energy models such as Convolutional and Recurrent EBMs using denoising score matching instead of maximum likelihood. However, deep learning based approaches do not process the data in a streaming manner and typically require a large amount of training data in an offline setting, whereas we process the data in an online manner.

\item Tensor decomposition:
See \cite{fanaee2016tensor} for an extensive survey on tensor-based anomaly detection. Tensor decomposition methods such as \cite{kolda2009tensor,zhou2016accelerating} can be used to find anomalies. Score Plots obtained from tensor decomposition can also be analyzed manually or automatically for anomaly detection. These score plots can be one-dimensional: \cite{papalexakis2014spotting}, multi-dimensional: \textsc{MalSpot} \cite{mao2014malspot} or time-series  \cite{papalexakis2012parcube}. STenSr \cite{shi2015stensr} models the tensor stream as a single incremental tensor for representing the entire network, instead of dealing with each tensor in the stream separately. \cite{li2011robust} uses subspace learning in tensors to find anomalies. MASTA \cite{fanaee2015multi} uses histogram approximation to analyze tensors. It vectorizes the whole tensor and simultaneously segments it into slices in each mode. The distribution of each slice is compared against the vectorized tensor to identify anomalous slices. STA \cite{sun2006beyond} monitors the streaming decomposition reconstruction error for each tensor at each time instant and anomalies occur when this error goes beyond a pre-defined threshold. However \cite{shin2017densealert} shows limited accuracy for dense-subtensor detection based on tensor decomposition.

\item Dense subtensor detection:
Dense-subtensor detection has been used to detect anomalies in \textsc{M-Zoom} \cite{shin2016m}, \textsc{D-Cube} \cite{shin2017d}, \cite{maruhashi2011multiaspectforensics} and \textsc{CrossSpot} \cite{jiang2015general} but these approaches consider the data as a static tensor. \textsc{DenseAlert} \cite{shin2017densealert} is a streaming algorithm to identify dense subtensors created within a short time and utilizes an incremental method to process graph updates or subgraphs more efficiently.

\item Density-based:  Local Outlier Factor (LOF) \cite{breunig2000lof} estimates the local density at each point, then identifies anomalies as points with much lower local density than their neighbors. Elliptic Envelope \cite{rousseeuw1999fast} fits an ellipse to the normal data points by fitting a robust covariance estimate to the data. DILOF \cite{Na2018DILOFEA} improves upon LOF and LOF variants \cite{Salehi2016FastME,Pokrajac2007IncrementalLO} by adopting a novel density-based sampling scheme to summarize the data, without prior assumptions on the data distribution. LUNAR \cite{goodge2021lunar} is a hybrid approach combining deep learning and LOF. However, these approaches are suitable only for lower-dimensional data due to the curse of dimensionality.

\item Tree-based: Isolation Forest (IF) \cite{liu2008isolation} constructs trees by randomly selecting features and splitting them at random split points, and then defines anomalies as points that are separated from the rest of the data at low depth values. HS-Tree \cite{Tan2011FastAD} uses an ensemble of randomly constructed half-space trees with a sliding window to detect anomalies in evolving streaming data. iForestASD \cite{Ding2013AnAD} uses a sliding window frame scheme to handle abnormal data. Random Cut Forest (RCF) \cite{guha2016robust} tries to further improve upon IF  by creating multiple random cuts (trees) of data and constructing a forest of such trees to determine whether a point is anomalous or not. Recently, \cite{Hariri2021ExtendedIF} shows that splitting by only one variable at a time introduces some biases in IF which can be overcome by using hyperplane cuts instead. They propose Extended Isolation Forest (Ex. IF) \cite{Hariri2021ExtendedIF} where the split criterion is based on a threshold set on a linear combination of randomly chosen variables instead of a threshold on a single variable's value at a time. However, these approaches compute an anomaly score by traversing a tree structure that is bounded by the maximum depth parameter and the size of the sliding window, therefore they do not capture long-range dependence.

\item Popular streaming approaches include STORM \cite{Angiulli2007DetectingDO}, which uses a sliding window to detect global distance-based outliers in data streams with respect to the current window. RS-Hash \cite{Sathe2016SubspaceOD} uses subspace grids and randomized hashing in an ensemble to detect anomalies. For each model in the ensemble, a grid is constructed using subsets of features and data, random hashing is used to record data counts in grid cells, and the anomaly score of a data point is the log of the frequency in its hashed bins. LODA \cite{Pevn2015LodaLO} generates several weak anomaly detectors by producing many random projections of the data and then computing a density estimation histogram for each projection. The outlier scores produced are the mean negative log-likelihood according to each histogram for each point. \textsc{xStream} \cite{Manzoor2018xStreamOD} detects anomalies in feature-evolving data streams through the use of a streaming random projection scheme and ensemble of half-space chains. Kitsune \cite{Mirsky2018KitsuneAE} is an ensemble of light-weight autoencoders for real-time anomaly detection.

\end{itemize}



\part{Graphs}
\SetPicSubDir{MIDAS}
\SetExpSubDir{MIDAS}

\chapter[MIDAS: Microcluster-Based Detector of Anomalies in Edge Streams][MIDAS]{MIDAS: Microcluster-Based Detector of Anomalies in Edge Streams}
\label{ch:midas}

\begin{mdframed}[backgroundcolor=magenta!20] 
Chapter based on work that appeared at AAAI'20 \cite{bhatia2020midas} \href{https://arxiv.org/pdf/1911.04464}{[PDF]} and TKDD'22 \cite{bhatia2022midas} \href{https://arxiv.org/pdf/2009.08452.pdf}{[PDF]}.
\end{mdframed}

\section{Introduction}

Given a stream of graph edges from a dynamic graph, how can we assign anomaly scores to edges in an online manner, for the purpose of detecting unusual behavior, using constant time and memory?

Fraudulent or anomalous events in many applications occur in microclusters or suddenly arriving groups of suspiciously similar edges e.g. denial of service attacks in network traffic data. However, existing methods which process edge streams in an online manner aim to detect individually surprising edges, not microclusters, and can thus miss large amounts of suspicious activity.

In this chapter, we propose \midas{}, which detects \emph{microcluster anomalies}, or suddenly arriving groups of suspiciously similar edges, in edge streams. It is worth noting that in other literature, microcluster may have different meanings~\cite{aggarwal2010on2,kranen2011the,bah2019an}, while we specifically refer to a group of sudden arriving edges. The \midas{} algorithm uses count-min sketches (CMS) \cite{cormode2005improved} to count the number of occurrences in each timestamp, then use the chi-squared test to evaluate the degree of deviation and produce a score representing the anomalousness. The higher the score, the more anomalous the edge is. The proposed method uses constant memory and has a constant time complexity processing each edge. Additionally, by using a principled hypothesis testing framework, \midas{} provides theoretical bounds on the false positive probability, which those methods do not provide.

We then propose a relational variant \midas-R, which incorporates temporal and spatial relations. In the base version of the \midas{} algorithm, the CMS is cleared after every timestamp change. However, some anomalies persist for multiple timestamps. Maintaining partial counts of previous timestamps to the next allows the algorithm to quickly produce a high score when the edge occurs again. This variant also considers the source and destination nodes as additional information that helps determine anomalous edges.

Finally, we propose \midas-F, to solve the problem by which anomalies are incorporated into the algorithm's internal states, creating a `poisoning' effect that can allow future anomalies to slip through undetected. \midas-F introduces two modifications: 1) We modify the anomaly scoring function, aiming to reduce the `poisoning' effect of newly arriving edges; 2) We introduce a conditional merge step, which updates the algorithm's data structures after each time tick, but only if the anomaly score is below a threshold value, also to reduce the `poisoning' effect.

	Our main contributions are as follows:
	\begin{enumerate}
		\item {\bfseries Streaming Microcluster Detection:} We propose a novel streaming approach combining statistical (chi-squared test) and algorithmic (count-min sketch) ideas to detect microcluster anomalies, requiring constant time and memory.
		\item {\bfseries Theoretical Guarantees:} We show guarantees on the false positive probability of \midas.
		\item {\bfseries Effectiveness}: Our experimental results show that \midas{} outperforms baseline approaches by up to $62$\% higher ROC-AUC, and processes the data orders-of-magnitude faster than baseline approaches.
		\item {\bfseries Relations and Filtering}: We propose two variants, \midas-R that incorporates temporal and spatial relations, and \midas-F that aims to filter away anomalous edges to prevent them from negatively affecting the algorithm's internal data structures.
	\end{enumerate}
	{\bfseries Reproducibility}: Our code and datasets are publicly available at \href{https://github.com/Stream-AD/MIDAS}{https://github.com/Stream-AD/MIDAS}.

	\section{Problem}

	Let $\mathcal{E} = \{e_1, e_2, \cdots\}$ be a stream of edges from a time-evolving graph $\mathcal{G}$. Each arriving edge is a tuple $e_i = (u_i, v_i, t_i)$ consisting of a source node $u_i \in \mathcal{V}$, a destination node $v_i \in \mathcal{V}$, and a time of occurrence $t_i$, which is the time at which the edge was added to the graph. For example, in a network traffic stream, an edge $e_i$ could represent a connection made from a source IP address $u_i$ to a destination IP address $v_i$ at time $t_i$. We do not assume that the set of vertices $\mathcal{V}$ is known a priori: for example, new IP addresses or user IDs may be created over the course of the stream.

	We model $\mathcal{G}$ as a directed graph. Undirected graphs can simply be handled by treating an incoming undirected $e_i = (u_i, v_i, t_i)$ as two simultaneous directed edges, one in either direction.

	We also allow $\mathcal{G}$ to be a multigraph: edges can be created multiple times between the same pair of nodes. Edges are allowed to arrive simultaneously: i.e. $t_{i+1} \ge t_i$, since in many applications $t_i$ are given in the form of discrete time ticks.

	The desired properties of our algorithm are as follows:

	\begin{itemize}
		\item {\bfseries Microcluster Detection:} It should detect suddenly appearing bursts of activity that share many repeated nodes or edges, which we refer to as microclusters.
		\item {\bfseries Guarantees on False Positive Probability:} Given any user-specified probability level $\epsilon$ (e.g. $1\%$), the algorithm should be adjustable so as to provide a false positive probability of at most $\epsilon$ (e.g. by adjusting a threshold that depends on $\epsilon$). Moreover, while guarantees on the false positive probability rely on assumptions about the data distribution, we aim to make our assumptions as weak as possible.
		\item {\bfseries Constant Memory and Update Time:} For scalability in the streaming setting, the algorithm should run in constant memory and constant update time per newly arriving edge. Thus, its memory usage and update time should not grow with the length of the stream or the number of nodes in the graph.
	\end{itemize}

	\section{MIDAS and MIDAS-R Algorithms}

	\subsection{Overview}

	Next, we describe our \midas{} and \midas-R approaches. The following provides an overview:

	\begin{enumerate}
		\item {\bfseries Streaming Hypothesis Testing Approach:} We describe our \midas{} algorithm, which uses streaming data structures within a hypothesis testing-based framework, allowing us to obtain guarantees on false positive probability.
		\item {\bfseries Detection and Guarantees:} We describe our decision procedure for determining whether a point is anomalous, and our guarantees on false positive probability.
		\item {\bfseries Incorporating Relations:} We extend our approach to the \midas-R algorithm, which incorporates relationships between edges temporally and spatially\footnote{We use `spatially' in a graph sense, i.e. connecting nearby nodes, not to refer to any other continuous spatial dimension.}.
	\end{enumerate}

	\subsection{MIDAS: Streaming Hypothesis Testing Approach}

	Consider the example in Figure \ref{fig:intro} of a single source-destination pair $(u,v)$, which shows a large burst of activity at time $10$. This burst is the simplest example of a microcluster, as it consists of a large group of edges that are very similar to one another (in fact identical), both {\bfseries spatially} (i.e. in terms of the nodes they connect) and {\bfseries temporally}.
	
	\begin{figure}[!htb]
		\center{\includegraphics[width=0.7\columnwidth]{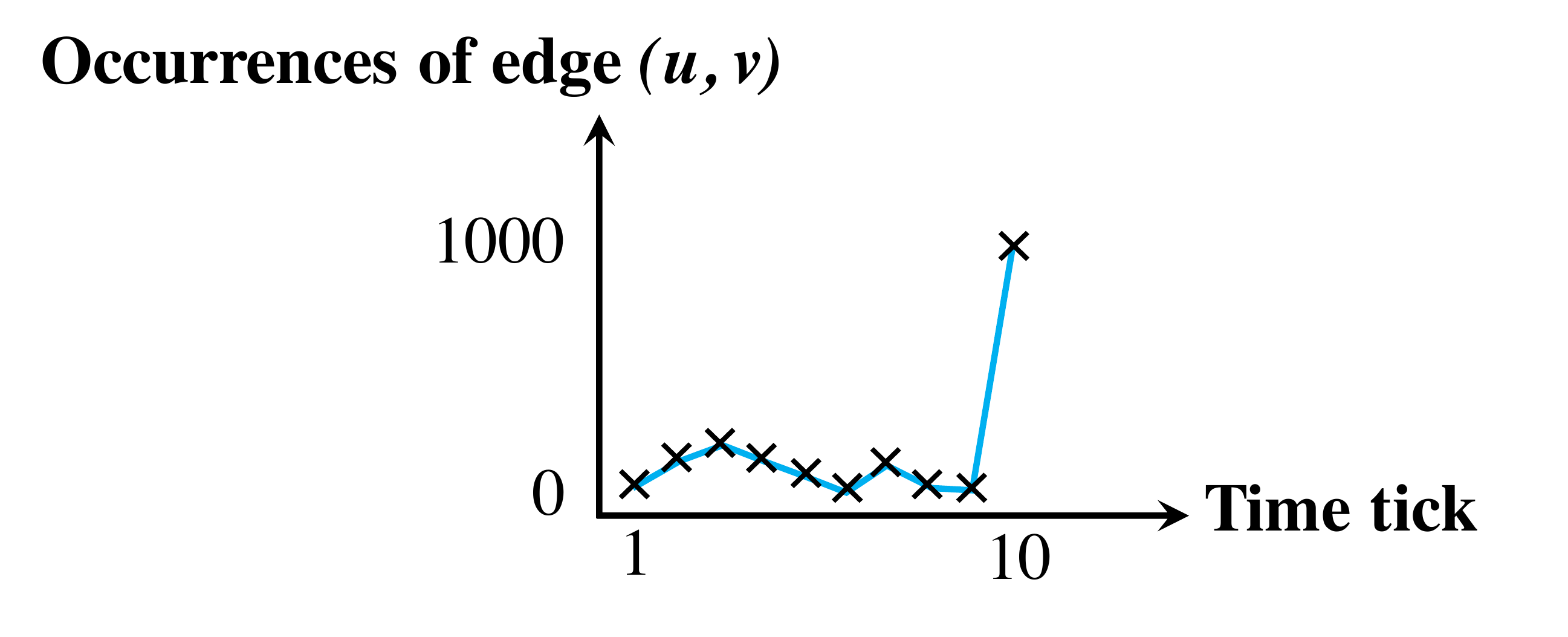}}
		\caption{\label{fig:intro} Time series of a single source-destination pair $(u,v)$, with a large burst of activity at time tick $10$.}
	\end{figure}

	\subsubsection{Streaming Data Structures}

	In an offline setting, there are many time-series methods that could detect such bursts of activity. However, in an online setting, recall that we want memory usage to be bounded, so we cannot keep track of even a single such time series. Moreover, there are many such source-destination pairs, and the set of sources and destinations is not fixed a priori.

	To circumvent these problems, we maintain two types of Count-min sketch (CMS)~\cite{cormode2005improved} data structures. Assume we are at a particular fixed time tick $t$ in the stream; we treat time as a discrete variable for simplicity. Let $s_{uv}$ be the total number of edges from $u$ to $v$ up to the current time. Then, we use a single CMS data structure to approximately maintain all such counts $s_{uv}$ (for all edges $uv$) in constant memory: at any time, we can query the data structure to obtain an approximate count $\hat{s}_{uv}$.

	Secondly, let $a_{uv}$ be the number of edges from $u$ to $v$ in the current time tick (but not including past time ticks). We keep track of $a_{uv}$ using a similar CMS data structure, the only difference being that we reset this CMS data structure every time we transition to the next time tick. Hence, this CMS data structure provides approximate counts $\hat{a}_{uv}$ for the number of edges from $u$ to $v$ in the current time tick $t$.

	\subsubsection{Hypothesis Testing Framework}

	Given approximate counts $\hat{s}_{uv}$ and $\hat{a}_{uv}$, how can we detect microclusters? Moreover, how can we do this in a principled framework that allows for theoretical guarantees?

	Fix a particular source and destination pair of nodes, $(u,v)$, as in Figure \ref{fig:intro}. One approach would be to assume that the time series in Figure \ref{fig:intro} follows a particular generative model: for example, a Gaussian distribution. We could then find the mean and standard deviation of this Gaussian distribution. Then, at time $t$, we could compute the Gaussian likelihood of the number of edge occurrences in the current time tick, and declare an anomaly if this likelihood is below a specified threshold.

	However, this requires a restrictive Gaussian assumption, which can lead to excessive false positives or negatives if the data follows a very different distribution. Instead, we use a weaker assumption: that the mean level (i.e. the average rate at which edges appear) in the current time tick (e.g. $t=10$) is the same as the mean level before the current time tick $(t<10)$. Note that this avoids assuming any particular distribution for each time tick, and also avoids a strict assumption of stationarity over time.

	Hence, we can divide the past edges into two classes: the current time tick $(t=10)$ and all past time ticks $(t<10)$. Recalling our previous notation, the number of events at $(t=10)$ is $a_{uv}$, while the number of edges in past time ticks $(t<10)$ is $s_{uv} - a_{uv}$.

	Under the chi-squared goodness-of-fit test, the chi-squared statistic is defined as the sum over categories of $\frac{(\text{observed} - \text{expected})^2}{\text{expected}}$. In this case, our categories are $t=10$ and $t<10$. Under our mean level assumption, since we have $s_{uv}$ total edges (for this source-destination pair), the expected number at $t=10$ is $\frac{s_{uv}}{t}$, and the expected number for $t<10$ is the remaining, i.e. $\frac{t-1}{t} s_{uv}$. Thus the chi-squared statistic is:

	\begin{align*}
		X^2 &= \frac{(\text{observed}_{(t=10)} - \text{expected}_{(t=10)})^2}{\text{expected}_{(t=10)}} \\
		&+ \frac{(\text{observed}_{(t<10)} - \text{expected}_{(t<10)})^2}{\text{expected}_{(t<10)}}\\
		&= \frac{(a_{uv} - \frac{s_{uv}}{t})^2}{\frac{s_{uv}}{t}} + \frac{((s_{uv} - a_{uv}) - \frac{t-1}{t} s_{uv})^2}{\frac{t-1}{t} s_{uv}}\\
		&= \frac{(a_{uv} - \frac{s_{uv}}{t})^2}{\frac{s_{uv}}{t}} + \frac{(a_{uv} - \frac{s_{uv}}{t})^2}{\frac{t-1}{t} s_{uv}}\\
		&= (a_{uv} - \frac{s_{uv}}{t})^2 \frac{t^2}{s_{uv}(t-1)}
	\end{align*}
	Note that both $a_{uv}$ and $s_{uv}$ can be estimated by our CMS data structures, obtaining approximations $\hat{a}_{uv}$ and $\hat{s}_{uv}$ respectively. This leads to our following anomaly score, using which we can evaluate a newly arriving edge with source-destination pair $(u,v)$:

	\begin{defn}[Anomaly Score]
		Given a newly arriving edge $(u,v,t)$, our anomaly score is computed as:
		\begin{align}
			\text{score}(u,v,t) = (\hat{a}_{uv} - \frac{\hat{s}_{uv}}{t})^2 \frac{t^2}{\hat{s}_{uv}(t-1)}
		\end{align}
	\end{defn}

	Algorithm \ref{alg:midas} summarizes our \midas{} algorithm.

	\begin{algorithm}
		\caption{\midas:\ Streaming Anomaly Scoring \label{alg:midas}}
		\KwIn{Stream of graph edges over time}
		\KwOut{Anomaly scores per edge}
		{\bfseries $\triangleright$ Initialize CMS data structures:} \\
		Initialize CMS for total count $s_{uv}$ and current count $a_{uv}$ \\
		\While{new edge $e=(u,v,t)$ is received:}{
			{\bfseries $\triangleright$ Update Counts:} \\
			Update CMS data structures for the new edge $uv$\\
			{\bfseries $\triangleright$ Query Counts:} \\
			Retrieve updated counts $\hat{s}_{uv}$ and $\hat{a}_{uv}$\\
			{\bfseries $\triangleright$ Anomaly Score:}\\
			{\bfseries output} $\text{score}((u,v,t)) = (\hat{a}_{uv} - \frac{\hat{s}_{uv}}{t})^2 \frac{t^2}{\hat{s}_{uv}(t-1)}$\\
		}
	\end{algorithm}

	\subsection{Detection and Guarantees}

	While Algorithm~\ref{alg:midas} computes an anomaly score for each edge, it does not provide a binary decision for whether an edge is anomalous or not.
	We want a decision procedure that provides binary decisions and a guarantee on the false positive probability: i.e. given a user-defined threshold $\epsilon$, the probability of a false positive should be at most $\epsilon$.
	Intuitively, the key idea is to combine the approximation guarantees of CMS data structures with the properties of a chi-squared random variable.

	The key property of CMS data structures we use is that given any $\epsilon$ and $\nu$, for appropriately chosen CMS data structure sizes ($w = \lceil ln \frac{2}{\epsilon} \rceil, b = \lceil \frac{e}{\nu} \rceil$) \cite{cormode2005improved}, with probability at least $1-\frac{\epsilon}{2}$, the estimates $\hat{a}_{uv}$ satisfy:

	\begin{align}
		\hat{a}_{uv} \le a_{uv} + \nu \cdot N_t
	\end{align}

	where $N_t$ is the total number of edges in the CMS for $a_{uv}$ at time tick $t$.
	Since CMS data structures can only overestimate the true counts, we additionally have

	\begin{align}
		s_{uv} \le \hat{s}_{uv}
	\end{align}

	Define an adjusted version of our earlier score:

	\begin{align}
		\tilde{a}_{uv} = \hat{a}_{uv} - \nu N_t
	\end{align}

	To obtain its probabilistic guarantee, our decision procedure computes $\tilde{a_{uv}}$, and uses it to compute an adjusted version of our earlier statistic:

	\begin{align}
		\tilde{X^2} = (\tilde{a}_{uv} - \frac{\hat{s}_{uv}}{t})^2 \frac{t^2}{\hat{s}_{uv}(t-1)}
	\end{align}

	Note that the usage of $X^2$ and $\tilde{X^2}$ are different.
	$X^2$ is used as the score of individual edges while $\tilde{X^2}$ facilitates making binary decisions.

	Then our main guarantee is as follows:

	\begin{theo}[False Positive Probability Bound]
		\label{thm:bound}
		Let $\chi_{1-\epsilon/2}^2(1)$ be the $1-\epsilon/2$ quantile of a chi-squared random variable with 1 degree of freedom.
		Then:
		\begin{align}
			P(\tilde{X^2} > \chi_{1-\epsilon/2}^2(1)) < \epsilon
		\end{align}
		In other words, using $\tilde{X^2}$ as our test statistic and threshold $\chi_{1-\epsilon/2}^2(1)$ results in a false positive probability of at most $\epsilon$.
	\end{theo}

	\begin{proof}
		Recall that
		\begin{align}
			X^2 = (a_{uv} - \frac{s_{uv}}{t})^2 \frac{t^2}{s_{uv}(t-1)}
		\end{align}
		was defined so that it has a chi-squared distribution. Thus:
		\begin{align}
			\label{eq:cond1}
			P(X^2 \le \chi_{1-\epsilon/2}^2(1)) = 1-\epsilon/2
		\end{align}
		At the same time, by the CMS guarantees we have:
		\begin{align}
			\label{eq:cond2}
			P(\hat{a}_{uv} \le a_{uv} + \nu \cdot N_t) \ge 1-\epsilon/2
		\end{align}

		By union bound, with probability at least $1-\epsilon$, both these events \eqref{eq:cond1} and \eqref{eq:cond2} hold, in which case:

		\begin{align*}
			\tilde{X^2} &= (\tilde{a}_{uv} - \frac{\hat{s}_{uv}}{t})^2 \frac{t^2}{\hat{s}_{uv}(t-1)}\\
			& = (\hat{a}_{uv} - \nu \cdot N_t - \frac{\hat{s}_{uv}}{t})^2 \frac{t^2}{\hat{s}_{uv}(t-1)}\\
			& \le (a_{uv} - \frac{s_{uv}}{t})^2 \frac{t^2}{s_{uv}(t-1)}\\
			& = X^2 \le \chi_{1-\epsilon/2}^2(1)
		\end{align*}
		Finally, we conclude that
		\begin{align}
			P(\tilde{X^2} > \chi_{1-\epsilon/2}^2(1)) < \epsilon.
		\end{align}
	\end{proof}

	\subsection{Incorporating Relations}

	In this section, we describe our \midas-R approach, which considers edges in a {\bfseries relational} manner: that is, it aims to group together edges that are nearby, either temporally or spatially.

	\textbf{Temporal Relations:} Rather than just counting edges in the same time tick (as we do in \midas), we want to allow for some temporal flexibility: i.e. edges in the recent past should also count toward the current time tick, but modified by reduced weight. A simple and efficient way to do this using our CMS data structures is as follows: at the end of every time tick, rather than resetting our CMS data structures for $a_{uv}$, we scale all its counts by a fixed fraction $\alpha \in (0, 1)$. This allows past edges to count toward the current time tick, with a diminishing weight. Note that we do not consider $0$ or $1$, because $0$ clears all previous values when the time tick changes and hence does not include any temporal effect; and $1$ does not scale the CMS data structures at all.

	\textbf{Spatial Relations:} We would like to catch large groups of spatially nearby edges: e.g. a single source IP address suddenly creating a large number of edges to many destinations, or a small group of nodes suddenly creating an abnormally large number of edges between them. A simple intuition we use is that in either of these two cases, we expect to observe {\bfseries nodes} with a sudden appearance of a large number of edges. Hence, we can use CMS data structures to keep track of edge counts like before, except counting all edges adjacent to any node $u$. Specifically, we create CMS counters $\hat{a}_u$ and $\hat{s}_u$ to approximate the current and total edge counts adjacent to node $u$. Given each incoming edge $(u,v)$, we can then compute three anomaly scores: one for edge $(u,v)$, as in our previous algorithm; one for source node $u$, and one for destination node $v$. Finally, we combine the three scores by taking their maximum value. Another possibility of aggregating the three scores is to take their sum and we discuss the performance of summing the scores in Section \ref{sec:expmidas}. Algorithm \ref{alg:midasr} summarizes the resulting \midas-R algorithm.

	\begin{algorithm}
		\caption{\midas-R:\ Incorporating Relations \label{alg:midasr}}
		\KwIn{Stream of graph edges over time}
		\KwOut{Anomaly scores per edge}
		{\bfseries $\triangleright$ Initialize CMS data structures:} \\
		Initialize CMS for total count $s_{uv}$ and current count $a_{uv}$ \\
		Initialize CMS for total count $s_{u}, s_{v}$ and current count $a_{u}, a_{v}$ \\
		\While{new edge $e=(u,v,t)$ is received:}{
			{\bfseries $\triangleright$ Update Counts:} \\
			Update CMS data structures for the new edge $uv$, source node $u$ and destination node $v$\\
			{\bfseries $\triangleright$ Query Counts:} \\
			Retrieve updated counts $\hat{s}_{uv}$ and $\hat{a}_{uv}$\\
			Retrieve updated counts $\hat{s}_u,\hat{s}_v,\hat{a}_{u},\hat{a}_{v}$\\
			{\bfseries $\triangleright$ Compute Edge Scores:}\\
			$\text{score}(u,v,t) = (\hat{a}_{uv} - \frac{\hat{s}_{uv}}{t})^2 \frac{t^2}{\hat{s}_{uv}(t-1)}$\\
			{\bfseries $\triangleright$ Compute Node Scores:}\\
			$\text{score}(u,t) = (\hat{a}_{u} - \frac{\hat{s}_{u}}{t})^2 \frac{t^2}{\hat{s}_{u}(t-1)}$\\
			$\text{score}(v,t) = (\hat{a}_{v} - \frac{\hat{s}_{v}}{t})^2 \frac{t^2}{\hat{s}_{v}(t-1)}$\\
			{\bfseries $\triangleright$ Final Scores:}\\
			$\textbf{output} \max\{ \text{score}(u,v,t), \text{score}(u,t), \text{score}(v,t) \}$
		}
	\end{algorithm}

	\section{MIDAS-F: Filtering Anomalies}

    In \midas{} and \midas-R, in addition to being assigned an anomaly score, all normal and anomalous edges are also always recorded into the internal CMS data structures, regardless of their score. However, this inclusion of anomalous edges creates a `poisoning' effect which can allow future anomalies to slip through undetected.

	Let us consider a simplified case of a denial of service attack where a large number of edges arrive between two nodes within a short period of time. \midas{} and \midas-R analysis can be divided into three stages.

	In the first stage, when only a small number of such edges have been processed, the difference between the current count, $\hat{a}_{uv}$, and the expected count, $\frac{\hat{s}_{uv}}{t}$, is relatively small, so the anomaly score is low. This stage will not last long as the anomaly score will increase rapidly with the number of occurrences of anomalous edges.

	In the second stage, once the difference between these two counters becomes significant, the algorithm will return a high anomaly score for those suspicious edges.

	In the third stage, as the attack continues, i.e. anomalous edges continue to arrive, the expected count of the anomalous edge will increase. As a result, the anomaly score will gradually decrease, which can lead to false negatives, i.e. the anomalous edges being considered as normal edges, which is the `poisoning' effect due to the inclusion of anomalies in the CMS data structures.

    Therefore, to prevent these false negatives, we introduce the improved filtering \midas{} (\midas-F) algorithm. The following provides an overview:

	\begin{enumerate}
		\item {\bfseries Refined Scoring Function:} The new formula of the anomaly score only considers the information of the current time tick and uses the mean value of the previous time ticks as the expectation.

		\item {\bfseries Conditional Merge:} The current count $a$ for the source, destination and edge are no longer merged into the total count $s$ immediately. We determine whether they should be merged or not at the end of the time tick conditioned on the anomaly score.
	\end{enumerate}

	\subsection{Refined Scoring Function}

    During a time tick, while new edges continue to arrive, we only assign them a score, but do not directly incorporate them into our CMS data structures as soon as they arrive. This prevents anomalous edges from affecting the subsequent anomaly scores, which can possibly lead to false negatives. To solve this problem, we refine the scoring function to delay incorporating the edges to the end of the current time tick using a conditional merge as discussed in Section \ref{sec:ConditionalMerge}.

    As defined before, let $a_{uv}$ be the number of edges from $u$ to $v$ in the current time tick (but not including past time ticks). But unlike \midas{} and \midas-R, in \midas-F, we define $s_{uv}$ to be the total number of edges from $u$ to $v$ up to the previous time tick, not including the current edge count $a_{uv}$. By not including the current edge count immediately, we prevent a high $a_{uv}$ from being merged into $s_{uv}$ so that the anomaly score for anomalous edges is not reduced.

    In the \midas-F algorithm, we still follow the same assumption: that the mean level in the current time tick is the same as the mean level before the current time tick. However, instead of dividing the edges into two classes: past and current time ticks, we only consider the current time ticks. Similar to the chi-squared statistic of \cite{bhatia2020midas}, our statistic is as below.

	\begin{align*}
		X^2&=\frac{(\text{observed}-\text{expected})^2}{\text{expected}} \\
		&=\frac{\displaystyle\left(a_{uv}-\frac{s_{uv}}{t-1}\right)^2}{\displaystyle\frac{s_{uv}}{t-1}} \\
		&=\frac{\displaystyle\left[a_{uv}^2-\frac{2a_{uv}s_{uv}}{t-1}+\left(\frac{s_{uv}}{t-1}\right)^2\right](t-1)}{\displaystyle s_{uv}} \\
		&=\frac{\displaystyle a_{uv}^2(t-1)^2-2a_{uv}s_{uv}(t-1)+s_{uv}^2}{\displaystyle s_{uv}(t-1)} \\
		&=\frac{(a_{uv}+s_{uv}-a_{uv}t)^2}{s_{uv}(t-1)} \\
	\end{align*}

     Both $a_{uv}$ and $s_{uv}$ can be estimated by our CMS data structures, obtaining approximations $\hat{a}_{uv}$ and $\hat{s}_{uv}$ respectively. We will use this new score as the anomaly score for our \midas-F algorithm.

	\begin{defn}[MIDAS-F Anomaly Score]
		Given a newly arriving edge $(u,v,t)$, our anomaly score for this edge is computed as:
		\begin{equation}
			\label{eqn:FilteringCore.AnomalyScore}
			score(u,v,t)=\frac{(\hat{a}_{uv}+\hat{s}_{uv}-\hat{a}_{uv}t)^2}{\hat{s}_{uv}(t-1)}
		\end{equation}
	\end{defn}

	\subsection{Conditional Merge}\label{sec:ConditionalMerge}

    At the end of the current time tick, we decide whether to add $a_{uv}$ to $s_{uv}$ or not based on whether the edge $(u,v)$ appears normal or anomalous.

    We introduce $c_{uv}$ to keep track of the anomaly score. Whenever the time tick changes, if $c_{uv}$ is less than the pre-determined threshold $\varepsilon$, then the corresponding $a_{uv}$ will be added to $s_{uv}$; otherwise, the expected count, i.e., $\frac{s_{uv}}{t-1}$ will be added to $s_{uv}$ to keep the mean level unchanged. We add $a_{uv}$ only when the cached score $c_{uv}$ is less than the pre-determined threshold $\varepsilon$ to prevent anomalous instances of $a_{uv}$ from being added to the $s_{uv}$, which would reduce the anomaly score for an anomalous edge in the future time ticks.

	To store the latest anomaly score $c_{uv}$, we use a CMS-like data structure resembling the CMS data structure for $a$ and $s$ used in \midas{} and \midas-R. The only difference is that the updates to this data structure do not increment the existing occurrence counts, but instead override the previous values. Hereafter, we refer to this CMS-like data structure as CMS for convenience.

	To efficiently merge the CMS data structure for $a$ into the CMS data structure for $s$, we need to know which buckets in the same hash functions across the multiple CMS data structures correspond to a particular edge. However, the algorithm does not store the original edges after processing. Therefore it is necessary that for each entity (edge, source, destination), the three CMS data structures for $a$, $s$, $c$ use the same layout and the same hash functions for each hash table so that the corresponding buckets refer to the same edge and we can do a bucket-wise merge. In practice, the nine CMS data structures can be categorized into three groups, corresponding to the edges, source nodes, and destination nodes, respectively. Only the three CMS data structures within the same group need to share the same structure.

	The conditional merge step is described in Algorithm~\ref{alg:FilteringCore.Merge}.

	\begin{algorithm}[!htb]
		\caption{\textsc{Merge}}
		\label{alg:FilteringCore.Merge}
		\KwIn{CMS for $s$, $a$, $c$, threshold $\varepsilon$}
		\For{$\hat{s}$, $\hat{a}$, $\hat{c}$ from CMS buckets}{
			\If{$\hat{c}<\varepsilon$}{
				$\hat{s} = \hat{s}+\hat{a}$ \\
			}\ElseIf{$t\ne 1$}{
				$\hat{s} = \hat{s}+\dfrac{\hat{s}}{t-1}$ \tcp*{ $\hat{s}$ is up-to-date until $t-1$}
			}
		}
	\end{algorithm}

	We also incorporate temporal and spatial relations as done in \midas-R. For temporal relations, at the end of every time tick, rather than resetting our CMS data structures for $a_{uv}$, we scale all its counts by a fixed fraction $\alpha \in (0, 1)$. This allows past edges to count toward the current time tick, with a diminishing weight. For spatial relations, we use CMS data structures to keep track of the anomaly score of each edge like before, except considering all edges adjacent to any node $u$. Specifically, we create CMS counters $\hat{c}_u$ to keep track of the anomaly score for each node $u$ across all its neighbors. Given each incoming edge $(u,v)$, we can then compute three anomaly scores: one for edge $(u,v)$, as in \midas\ and \midas-R; one for source node $u$, and one for destination node $v$.

	Algorithm \ref{alg:FilteringCore.Algorithm} summarizes the resulting \midas-F algorithm. It can be divided into two parts: 1) regular edge processing in lines $13$ to $24$, where we compute anomaly scores for each incoming edge and update the relevant counts, and 2) scaling and merging steps in lines $6$ to $12$, where at the end of each time tick, we scale the current counts by $\alpha$ and merge them into the total counts.

	\begin{algorithm}
		\caption{\midas-F}
		\label{alg:FilteringCore.Algorithm}
		\KwIn{Stream of graph edges over time, threshold $\varepsilon$}
		\KwOut{Anomaly scores per edge}
		{\bfseries $\triangleright$ Initialize CMS data structures:} \\
		Initialize CMS data structure for total count $s_{uv}$, current count $a_{uv}$, anomaly score $c_{uv}$ \\
		Initialize CMS data structure for total count $s_{u}$, current count $a_{u}$, anomaly score $c_{u}$ \\
		Initialize CMS data structure for total count $s_{v}$, current count $a_{v}$, anomaly score $c_{v}$ \\
		\While{new edge $e=(u,v,t)$ is received}{
			\If(// Time tick changes){$t\ne t_{internal}$}{
				{\bfseries $\triangleright$ Merge Counts:} \\
				\Call{Merge}{$\hat{s}_{uv}$, $\hat{a}_{uv}$, $\hat{c}_{uv}$, $\varepsilon$} \\
				\Call{Merge}{$\hat{s}_u$, $\hat{a}_u$, $\hat{c}_u$, $\varepsilon$} \\
				\Call{Merge}{$\hat{s}_v$, $\hat{a}_v$, $\hat{c}_v$, $\varepsilon$} \\
				Scale CMS data structures for $a_{uv}$, $a_{u}$, $a_{v}$ by $\alpha$ \\
				$t_{internal} = t$ \\
			}
			{\bfseries $\triangleright$ Update Counts:} \\
			Update CMS data structure for $a$ for new edge $uv$ and nodes $u,v$ \\
			{\bfseries $\triangleright$ Query Counts:} \\
			Retrieve updated counts $\hat{s}_{uv}$ and $\hat{a}_{uv}$\\
			Retrieve updated counts $\hat{s}_u,\hat{s}_v,\hat{a}_{u},\hat{a}_{v}$\\
			{\bfseries $\triangleright$ Compute Scores:}\\
			$c_{uv} = \dfrac{(\hat{a}_{uv}+\hat{s}_{uv}-\hat{a}_{uv}t)^2}{\hat{s}_{uv}(t-1)}$ \\
			$c_u = \dfrac{(\hat{a}_u+\hat{s}_u-\hat{a}_{u}t)^2}{\hat{s}_u(t-1)}$ \\
			$c_v = \dfrac{(\hat{a}_v+\hat{s}_v-\hat{a}_{v}t)^2}{\hat{s}_v(t-1)}$ \\
			Update CMS data structure for $c$ for edge $uv$ and nodes $u,v$\\
			{\bfseries $\triangleright$ Final Scores:}\\
			$\textbf{output} \max\{ c_{uv},c_u,c_v\}$
		}
	\end{algorithm}

	\section{Time and Memory Complexity}

	In terms of memory, \midas, \midas-R, and \midas-F only need to maintain the CMS data structures over time, which are proportional to $O(wb)$, where $w$ and $b$ are the number of hash functions and the number of buckets in the CMS data structures; which is bounded with respect to the data size.

    For time complexity, the only relevant steps in Algorithms \ref{alg:midas}, \ref{alg:midasr} and \ref{alg:FilteringCore.Algorithm} are those that either update or query the CMS data structures, which take $O(w)$ (all other operations run in constant time). Thus, time complexity per update step is $O(w)$.

    For \midas{}-F, additionally, at the end of each time tick, $a$ is merged into $s$, as shown in Algorithm \ref{alg:FilteringCore.Merge}. At the end of each time tick, the algorithm needs to iterate over all hash functions and buckets. Thus, time complexity per time tick is $O(wb)$.

\FloatBarrier

	\section{Experiments}
	\label{sec:expmidas}

	In this section, we evaluate the performance of \midas{}, \midas-R, and \midas-F on dynamic graphs. We aim to answer the following questions:

	\begin{enumerate}[label=\textbf{Q\arabic*.}]
		\item {\bfseries Accuracy:} How accurately does \midas{} detect real-world anomalies compared to baselines, as evaluated using the ground truth labels? How will hyperparameters affect the accuracy?
		\item {\bfseries Scalability:} How does it scale with input stream length? How does the time needed to process each input compare to baseline approaches?
		\item {\bfseries Real-World Effectiveness:} Does it detect meaningful anomalies in case studies on \emph{Twitter} graphs?
	\end{enumerate}

	\textbf{Datasets:} \emph{DARPA}~\cite{lippmann1999results} is an intrusion detection dataset created in $1998$.
	It has $25K$ nodes, $4.5M$ edges, and $46K$ timestamps.
	The dataset records IP-IP connections from June $1$ to August $1$.
	Due to the relatively sparse time density, we use minutes as timestamps. \emph{CTU-13}~\cite{garcia2014empirical} is a botnet traffic dataset captured in the CTU University in $2011$.
	It consists of botnet samples from thirteen different scenarios.
	We mainly focus on those with denial of service attacks, i.e., scenarios $4$, $10$, and $11$.
	The dataset includes $371K$ nodes, $2.5M$ edges, and $33K$ timestamps, where the resolution of timestamps is one second. \emph{UNSW-NB15}~\cite{moustafa2015unsw} is a hybrid of real normal activities and synthetic attack behaviors.
	The dataset contains only $50$ nodes but has $2.5M$ records and $85K$ timestamps.
	Each timestamp in the dataset represents an interval of one second. \emph{TwitterSecurity}~\cite{rayana2016less} has $2.6M$ tweet samples for four months (May-Aug $2014$) containing Department of Homeland Security keywords related to terrorism or domestic security.
	Entity-entity co-mention temporal graphs are built on a daily basis. Ground truth contains the dates of major world incidents. \emph{TwitterWorldCup}~\cite{rayana2016less} has $1.7M$ tweet samples for the World Cup $2014$ season (June $12$-July $13$).
	The tweets are filtered by popular/official World Cup hashtags, such as \#worldcup, \#fifa, \#brazil, etc.
	Entity-entity co-mention temporal graphs are constructed on one hour sample rate.

	Note that we use different time tick resolutions for different datasets, demonstrating our algorithm is capable of processing datasets with various edge densities.

	\textbf{Baselines:}

		We compare with \sedanspot, PENminer, and F-FADE, however, as shown in Table~\ref{tab:comparisonmidas}, neither method aims to detect microclusters, or provides guarantees on false positive probability.

\begin{table}[!htb]
		\centering
		\caption{Comparison of relevant edge stream anomaly detection approaches.}
		\label{tab:comparisonmidas}
		\resizebox{\linewidth}{!}{
		\begin{tabular}{@{}lccc|c@{}}
			\toprule
			& {\sedanspot~\cite{eswaran2018sedanspot}}
			& {PENminer~\cite{belth2020mining}}
			& {F-FADE~\cite{chang2021f}}
			& {\bfseries {MIDAS}} \\
			& (ICDM'20) & (KDD'20) & (WSDM'21) & \\\midrule
			\textbf{Microcluster Detection} & & & & \CheckmarkBold \\
			\textbf{Guarantee on False Positive Probability} & & & & \CheckmarkBold \\
			\textbf{Constant Memory} & \Checkmark & & \Checkmark & \CheckmarkBold \\
			\textbf{Constant Update Time} & \Checkmark & \Checkmark & \Checkmark & \CheckmarkBold \\
			\bottomrule
		\end{tabular}}
	\end{table}

	\textbf{Evaluation Metrics:}
	All the methods output an anomaly score per edge (higher is more anomalous).
	We report the area under the receiver operating characteristic curve (ROC-AUC, higher is better).

	\subsection{Experimental Setup}

	All experiments are carried out on a $2.4 GHz$ Intel Core $i9$ processor, $32 GB$ RAM, running OS $X$ $10.15.2$.
	We implement our algorithm in C++ and use the open-source implementations of \sedanspot, PENminer, and F-FADE provided by the authors, following parameter settings as suggested in the original papers.

	We use $2$ hash functions for the CMS data structures, and set the number of CMS buckets to $1024$ to result in an approximation error of $\nu=0.003$.
	For \midas-R and \midas-F, we set the temporal decay factor $\alpha$ as $0.5$.
	For \midas-F, the default threshold $\varepsilon$ is $1000$. We discuss the influence of $\alpha$ and the threshold $\varepsilon$ in the following section. Unless otherwise specified, all experiments are repeated 21 times and the median performance (ROC-AUC, running time, etc.) is reported to minimize the influence of randomization in hashing. Also, note that the reported running time does not include I/O.

	\subsection{Accuracy}

	Table~\ref{tab:Experiment.AUROC} shows the ROC-AUC of \sedanspot, PENminer, F-FADE, \midas, \midas-R, and \midas-F on the \emph{DARPA}, \emph{CTU-13}, and \emph{UNSW-NB15} datasets since only these three datasets have ground truth available for each edge. On \emph{DARPA}, compared to the baselines, \midas\ algorithms increase the ROC-AUC by $6$\%-$53$\%, on \emph{CTU-13} by $13$\%-$62$\%, and on \emph{UNSW-NB15} by $12$\%-$30$\%.
	
	\begin{table*}[!htb]
		\centering
		\caption{ROC-AUC (standard deviation)}\label{tab:Experiment.AUROC}
		\resizebox{\linewidth}{!}{
		\begin{tabular}{lrrrlll}
			\toprule
			Dataset & PENminer & F-FADE & \sedanspot & \midas & \midas-R       & \midas-F       \\
			\midrule
			\emph{DARPA}      & 0.8267   & 0.8451 & 0.6442     & 0.9042 (0.0032) & 0.9514 (0.0012)         & \textbf{0.9873} (0.0009) \\
			\emph{CTU-13}     & 0.6041   & 0.8028 & 0.6397     & 0.9079 (0.0049) & 0.9703 (0.0009)         & \textbf{0.9843} (0.0004) \\
			\emph{UNSW-NB15}  & 0.7028   & 0.6858 & 0.7575     & 0.8843 (0.0079) & \textbf{0.8952} (0.0028) & 0.8517 (0.0013)         \\
			\bottomrule
		\end{tabular}}
	\end{table*}

	Figures \ref{fig:AUCdarpa}, \ref{fig:AUCctu}, and \ref{fig:AUCunsw}  plot the ROC-AUC vs. running time for the baselines and our methods on the \emph{DARPA}, \emph{CTU-13}, and \emph{UNSW-NB15} datasets respectively. Note that \midas, \midas-R, and \midas-F achieve a much higher ROC-AUC compared to the baselines, while also running significantly faster.

	\begin{figure}[!ht]
		\center
		\includegraphics[width=0.8\columnwidth]{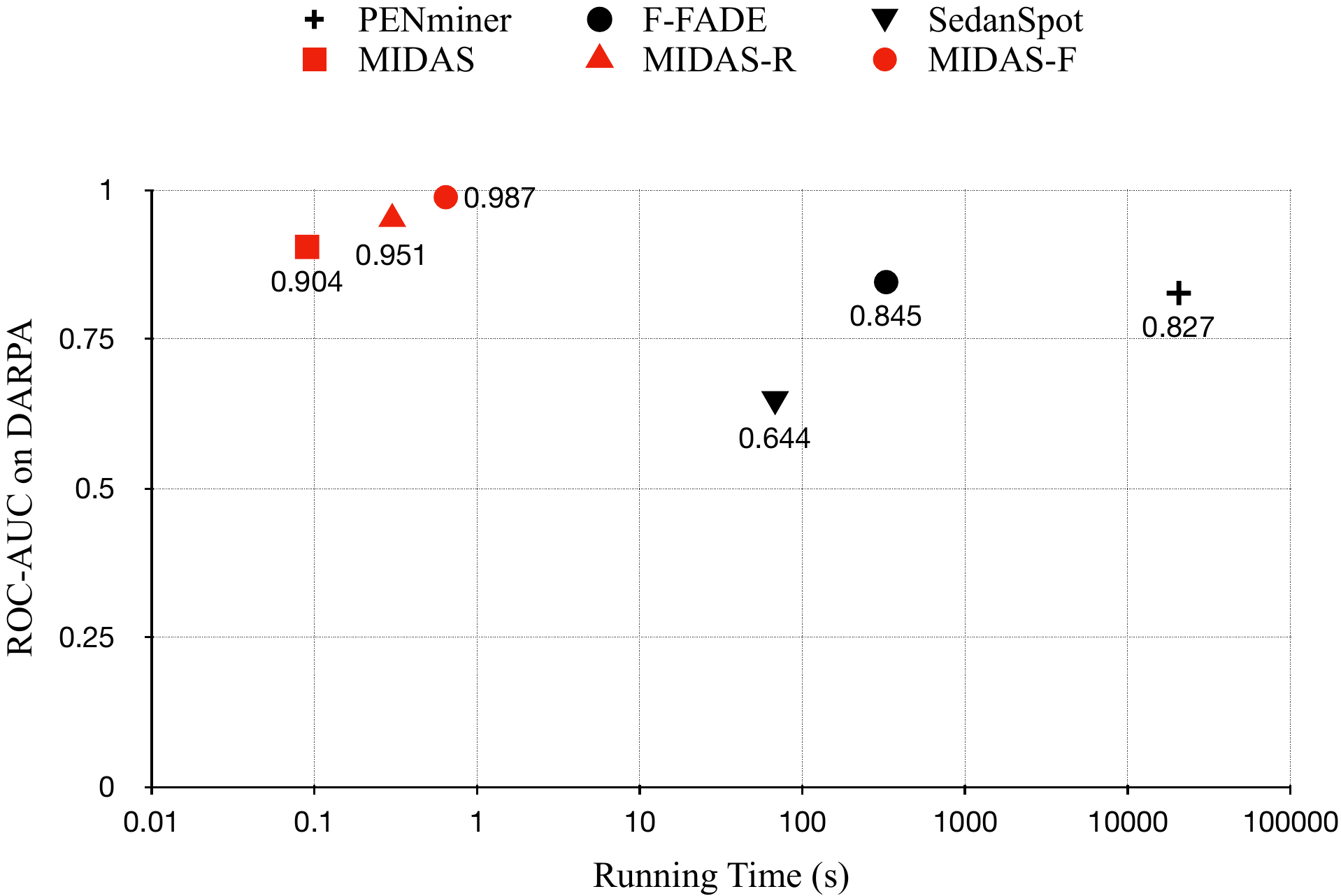}
		\caption{\label{fig:AUCdarpa} ROC-AUC vs. time on \emph{DARPA}}
	\end{figure}

	\begin{figure}[!ht]
		\center
		\includegraphics[width=0.8\columnwidth]{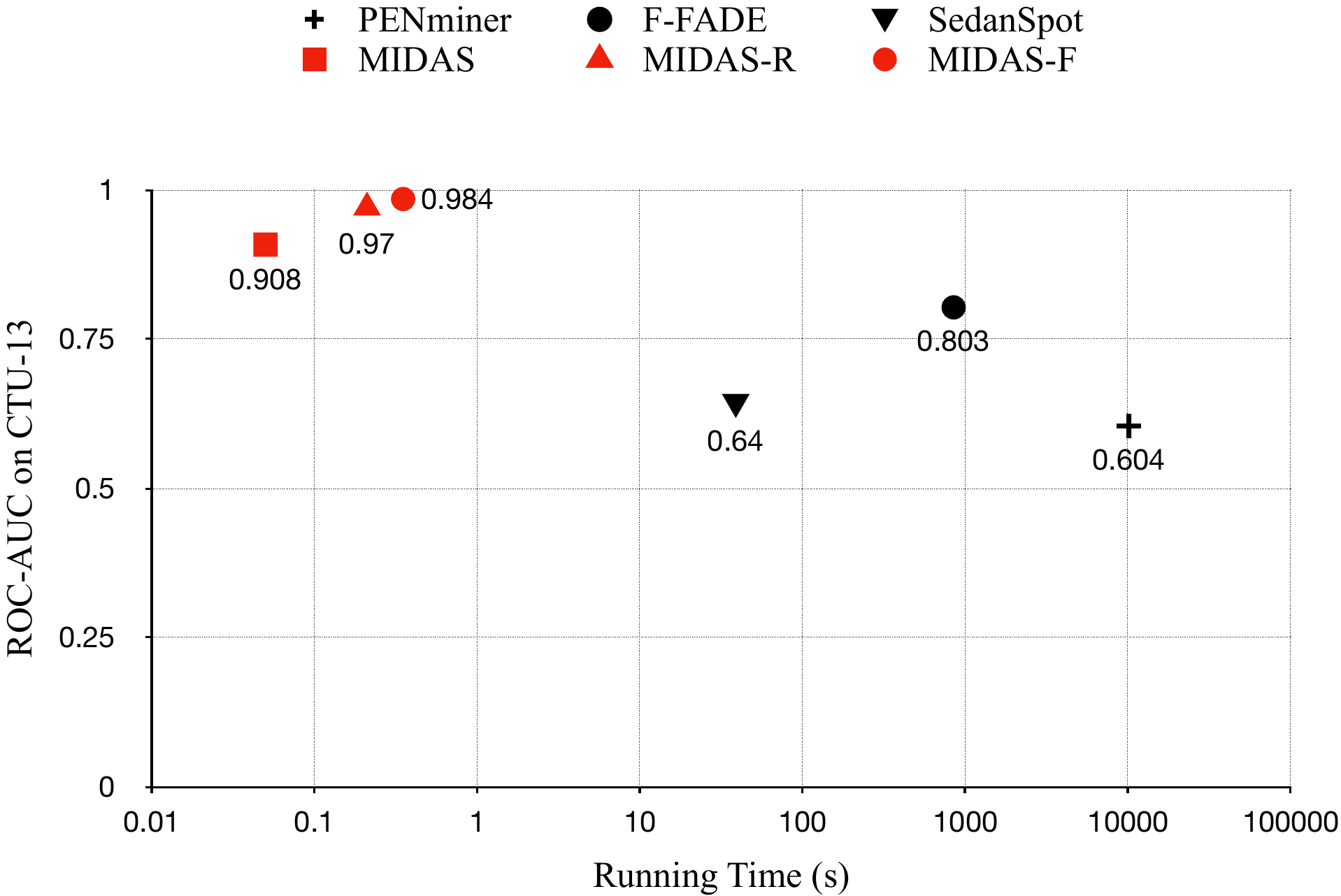}
		\caption{\label{fig:AUCctu} ROC-AUC vs. time on \emph{CTU-13}}
	\end{figure}

	\begin{figure}[!ht]
		\center
		\includegraphics[width=0.8\columnwidth]{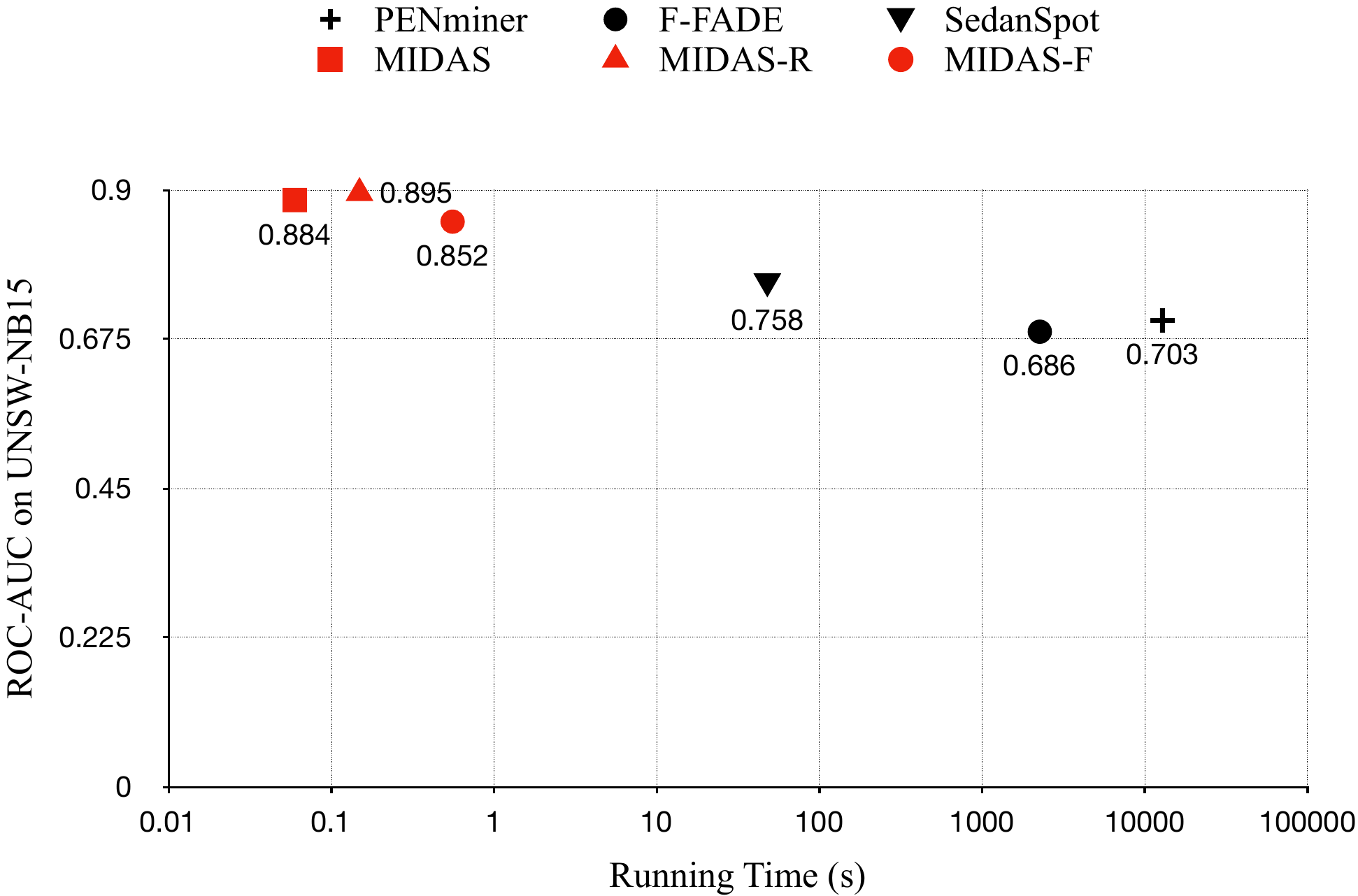}
		\caption{\label{fig:AUCunsw} ROC-AUC vs. time on \emph{UNSW-NB15}}
	\end{figure}

	Table~\ref{tab:FactorVsAUCmidas} shows the influence of the temporal decay factor $\alpha$ on the ROC-AUC for \midas-R and \midas-F in the \emph{DARPA} dataset. Note that instead of scaling the values in the CMS, \midas{} clears (or resets) values in the CMS data structure when the time tick changes; therefore, it is not included. We see that $\alpha=0.9$ gives the maximum ROC-AUC for \midas-R ($0.9657$) and \midas-F ($0.9876$).

	\begin{table}[!htb]
		\centering
		\caption{Influence of temporal decay factor $\alpha$ on the ROC-AUC in \midas-R and \midas-F}
		\label{tab:FactorVsAUCmidas}
		\begin{tabular}{@{}lrr@{}}
			\toprule
			$\alpha$ & \midas-R & \midas-F \\
			\midrule
		    $0.1$ & $0.9346$ & $0.9779$ \\
			$0.2$ & $0.9429$ & $0.9801$ \\
			$0.3$ & $0.9449$ & $0.9817$ \\
			$0.4$ & $0.9484$ & $0.9837$ \\
			$0.5$ & $0.9504$ & $0.9852$ \\
			$0.6$ & $0.9526$ & $0.9863$ \\
			$0.7$ & $0.9542$ & $0.9863$ \\
			$0.8$ & $0.9590$ & $0.9883$ \\
			$0.9$ & $0.9657$ & $0.9876$ \\
			\bottomrule
		\end{tabular}
	\end{table}

    Table~\ref{tab:ThresholdVsAUC} shows the influence of the threshold $\varepsilon$ on the ROC-AUC for \midas-F in the \emph{DARPA} dataset. If the threshold is too low, even normal edges can be rejected. On the other end, if the threshold is too high ($\varepsilon=10^7$), very few anomalous edges will be rejected, and \midas-F (ROC-AUC = $0.9572$) performs similar to \midas-R (ROC-AUC = $0.95$). We see that $\varepsilon=10^3$ achieves the maximum ROC-AUC of $0.9853$.

		\begin{table}[!htb]
		\centering
		\caption{Influence of threshold $\varepsilon$ on the ROC-AUC in \midas-F}
		\label{tab:ThresholdVsAUC}
		\begin{tabular}{@{}lr@{}}
			\toprule
			$\varepsilon$ & ROC-AUC \\
			\midrule
			$10^0$ & $0.9838$ \\
			$10^1$ & $0.9840$ \\
			$10^2$ & $0.9839$ \\
			$10^3$ & $0.9853$ \\
			$10^4$ & $0.9807$ \\
			$10^5$ & $0.9625$ \\
			$10^6$ & $0.9597$ \\
			$10^7$ & $0.9572$ \\
			\bottomrule
		\end{tabular}
	\end{table}

	Table~\ref{tab:BucketsVsAUC} shows the ROC-AUC vs. number of buckets ($b$) in CMSs on the \emph{UNSW-NB15} dataset.
	We can observe the increase in the performance, which indicates that increasing the buckets helps alleviate the effect of conflicts, and further reduce the false positive rate of the resulting scores.
	Also, note that the ROC-AUC does not change after $10,000$ buckets, one possible reason is that the number of columns is sufficiently high to negate the influence of conflicts.
	This also simulates the ``no-CMS'' situation, i.e., the edge counts are maintained in an array of infinite size.

	\begin{table}[!htb]
		\centering
		\caption{Influence of the number of buckets on the ROC-AUC in \midas, \midas-R, and \midas-F}
		\label{tab:BucketsVsAUC}
		\begin{tabular}{@{}lrrr@{}}
			\toprule
			$b$ & \midas & \midas-R & \midas-F \\
			\midrule
		    $10^2$ & $0.7978$ & $0.8161$ & $0.8653$ \\
            $10^3$ & $0.8732$ & $0.8418$ & $0.8863$ \\
            $10^4$ & $0.8842$ & $0.8517$ & $0.8952$ \\
            $10^5$ & $0.8842$ & $0.8517$ & $0.8952$ \\
            $10^6$ & $0.8842$ & $0.8517$ & $0.8952$ \\
            $10^7$ & $0.8842$ & $0.8517$ & $0.8952$ \\
			\bottomrule
		\end{tabular}
	\end{table}

	For \midas{}-R and \midas{}-F, we also test the effect of summing the three anomaly scores, one for the edge $(u,v)$, one for node $u$, and one for node $v$. The scores are not significantly different: with default parameters, the ROC-AUC is $0.95$ for \midas-R (vs. $0.95$ using maximum) and $0.98$ for \midas-F (vs. $0.99$ using maximum).

\FloatBarrier

	\subsection{Scalability}

	Table~\ref{tab:timesmidas} shows the running time for the baselines and \midas\ algorithms.	Compared to \sedanspot, on all the 5 datasets, \midas\ speeds up by $623-800\times$, \midas-R speeds up by $183-326\times$, and \midas-F speeds up by $85-286\times$. Compared to F-FADE, on all the 5 datasets, \midas\ speeds up by $806-37782\times$, \midas-R speeds up by $366-15112\times$, and \midas-F speeds up by $366-4047\times$. Compared to PENminer, on all the 5 datasets, \midas\ speeds up by $101419-214282\times$, and \midas-R speeds up by $46099-85712\times$, \midas-F speeds up by $22958-47324\times$.
	
		\begin{table*}[!htb]
		\centering
		\caption{Running time for different datasets in seconds}
		\label{tab:timesmidas}
		\resizebox{\linewidth}{!}{
		\begin{tabular}{lrrrrrr}
			\toprule
			Dataset & PENminer & F-FADE & \sedanspot & \midas & \midas-R       & \midas-F       \\
			\midrule
			\emph{DARPA}        & $20423$s & $325.1$s & $67.54$s   & $0.09$s & $0.30$s   & $0.64$s   \\
			\emph{CTU-13}       & $10065$s & $844.2$s & $38.73$s   & $0.05$s & $0.21$s   & $0.35$s   \\
			\emph{UNSW-NB15}      & $12857$s & $2267$s  & $48.03$s   & $0.06$s & $0.15$s   & $0.56$s   \\
			\emph{TwitterWorldCup}  & $3786$s  & $141.7$s & $22.92$s   & $0.03$s & $0.07$s   & $0.08$s   \\
			\emph{TwitterSecurity}  & $5071$s  & $40.34$s & $31.18$s   & $0.05$s & $0.11$s   & $0.11$s   \\
			\bottomrule
		\end{tabular}}
	\end{table*}

	\sedanspot{} requires several subprocesses (hashing, random-walking, reordering, sampling, etc), resulting in a large computation time. For PENminer and F-FADE, while the python implementation is a factor, the algorithm procedures also negatively affect their running speed. PENminer requires active pattern exploration and F-FADE needs expensive factorization operations.
	For \midas, the improvement of running speed is through both, the algorithm procedure as well as the implementation. The algorithm procedure is less complicated than baselines; for each edge, the only operations are updating CMSs (hashing) and computing scores, and both are within constant time complexity. The implementation is well optimized and utilizes techniques like auto-vectorization to boost execution efficiency.

	Figure~\ref{fig:scaling} shows the scalability of \midas, \midas-R, and \midas-F algorithms. We plot the time required to process the first $2^{16}, 2^{17},\ldots,2^{22}$ edges of the \emph{DARPA} dataset. This confirms the linear scalability of \midas{} algorithms with respect to the number of edges in the input dynamic graph due to its constant processing time per edge. Note that \midas{}, \midas{}-R and \midas-F can process $4.5M$ edges within $1$ second, allowing real-time anomaly detection.

	\begin{figure}[!htb]
		\center
		\includegraphics[width=0.7\columnwidth]{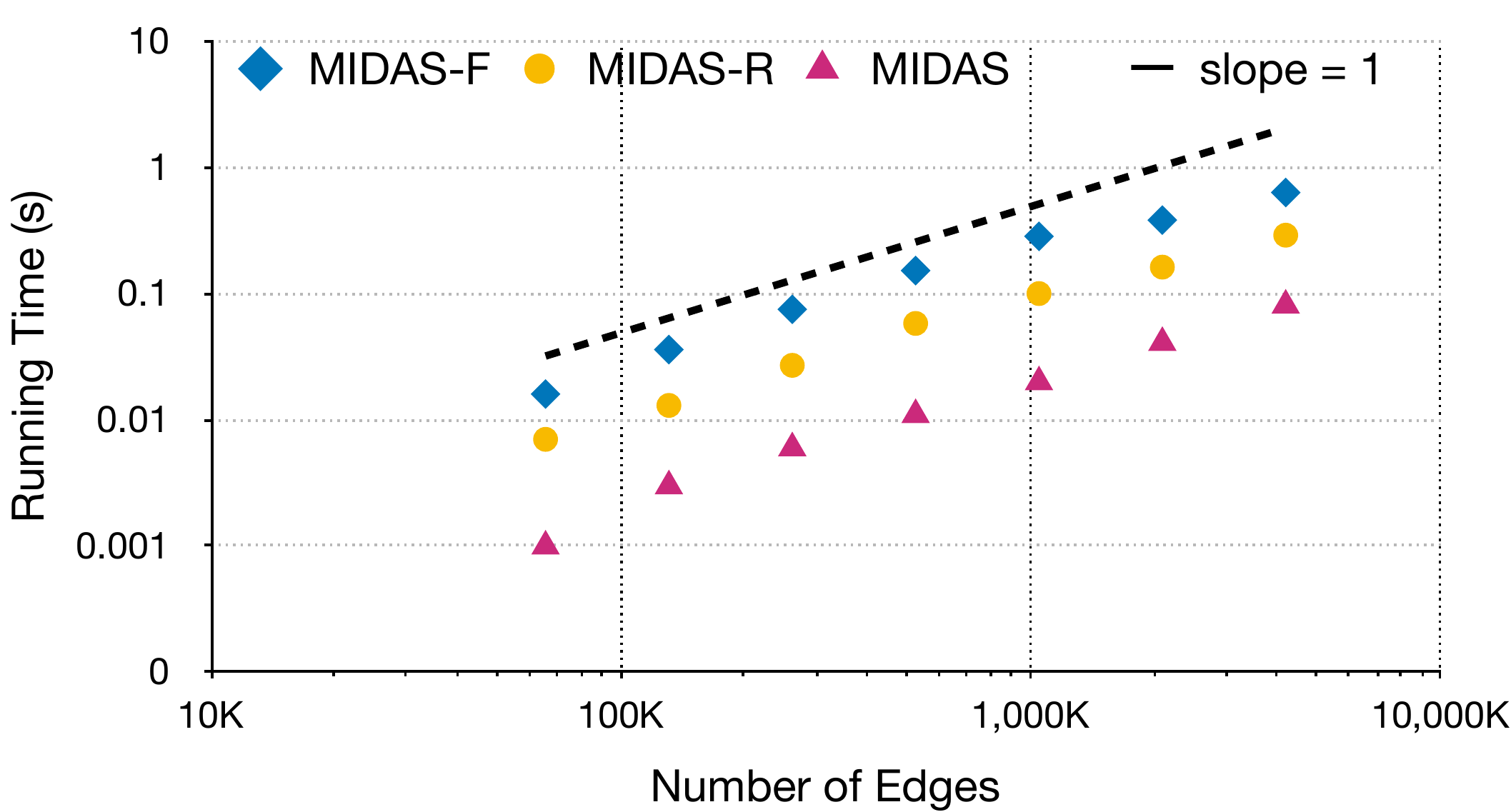}
		\caption{\label{fig:scaling} \midas{}, \midas-R and \midas-F scale linearly with the number of edges in the input dynamic graph.}
	\end{figure}

	Figure~\ref{fig:frequencymidas} plots the number of edges and the time to process each edge in the \emph{DARPA} dataset. Due to the limitation of clock accuracy, it is difficult to obtain the exact time of each edge. But we can approximately divide them into two categories, i.e., less than $1\mu s$ and greater than $1\mu s$. All three methods process majority of the edges within $1\mu s$.

	\begin{figure}[!htb]
		\center
		\includegraphics[width=0.7\columnwidth]{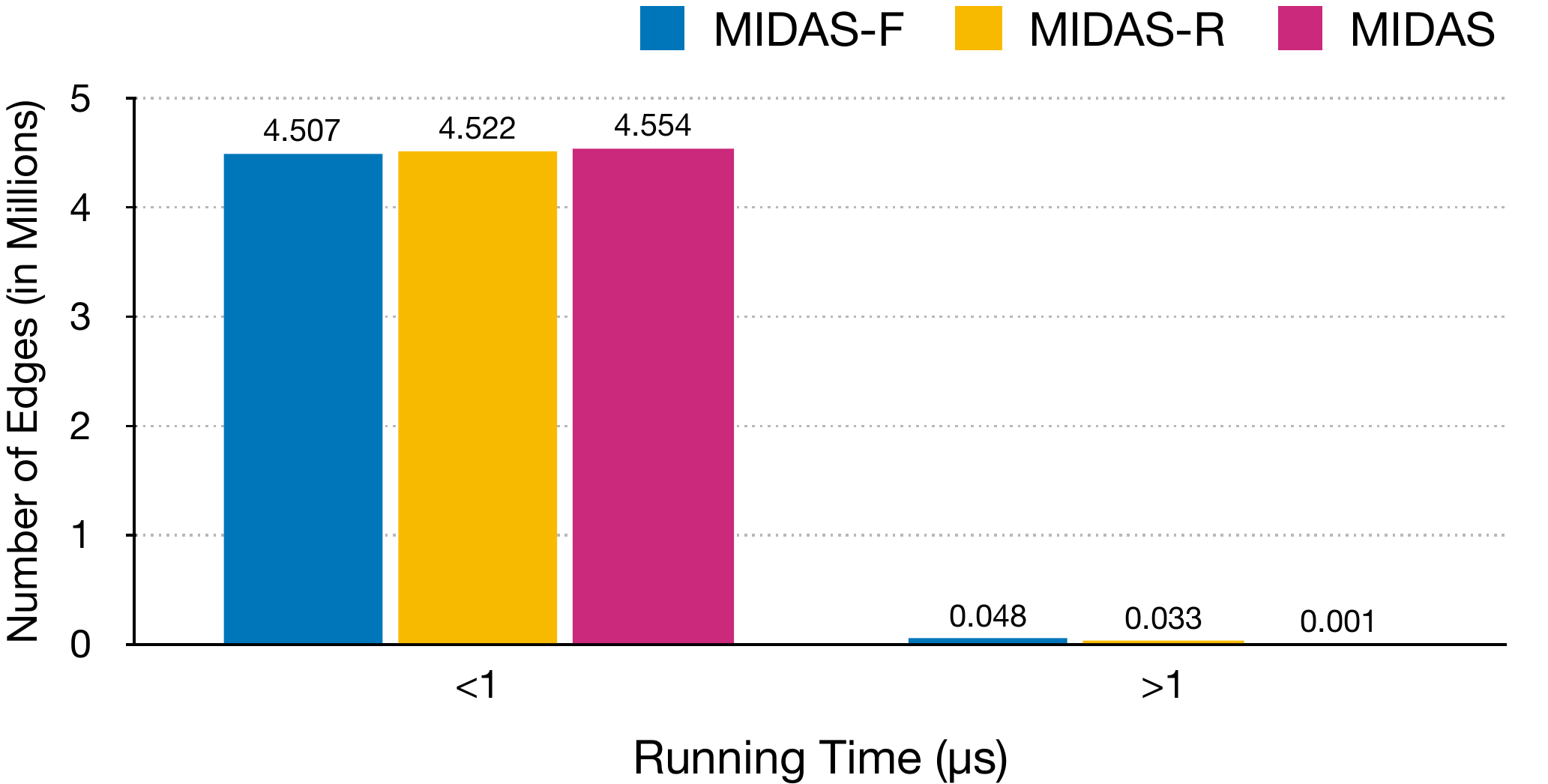}
		\caption{Distribution of processing times for $\sim4.5M$ edges of \emph{DARPA} dataset.}\label{fig:frequencymidas}
	\end{figure}

	Figure~\ref{fig:threshold} shows the dependence of the running time on the threshold for \midas-F. We observe that the general pattern is a line with a slope close to $0$. Therefore, the time complexity does not depend on the threshold.

	\begin{figure}[!htb]
		\center{\includegraphics[width=0.7\columnwidth]{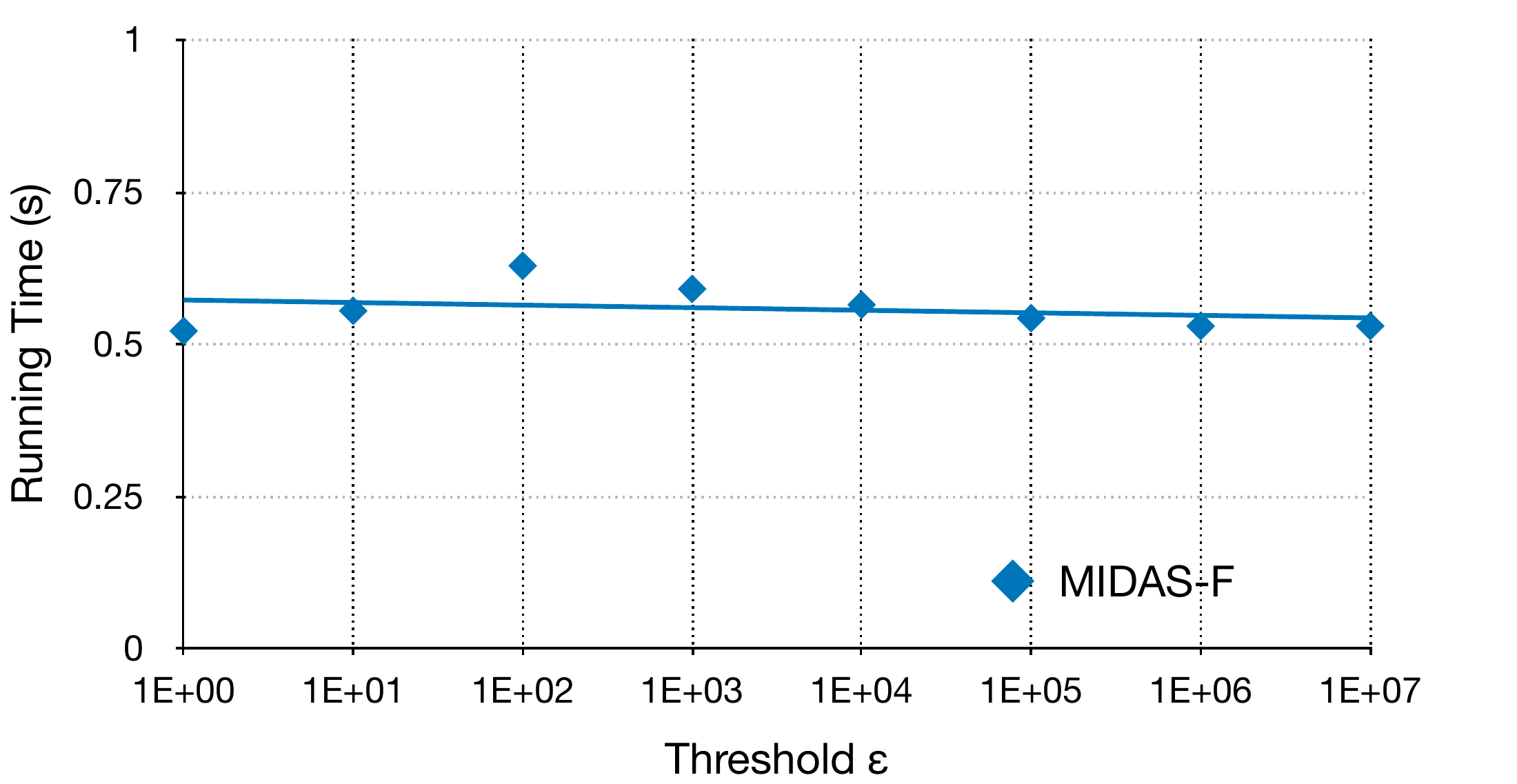}}
		\caption{\label{fig:threshold} Running time of \midas-F does not depend on the threshold $\varepsilon$.}
	\end{figure}

	Figure~\ref{fig:hashfunctions} shows the dependence of the running time on the number of hash functions and linear scalability.

	\begin{figure}[!htb]
		\center{\includegraphics[width=0.7\columnwidth]{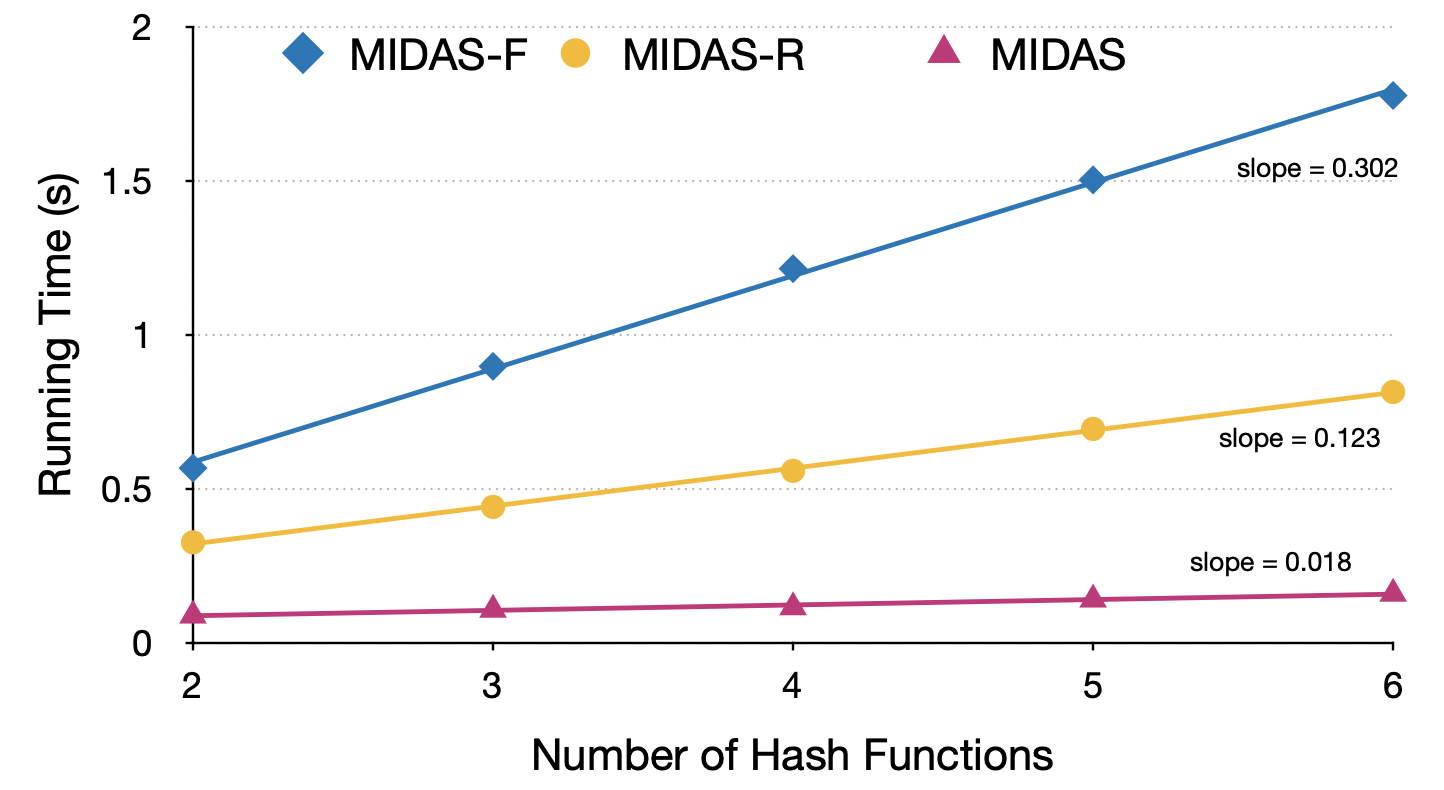}}
		\caption{\label{fig:hashfunctions} \midas{}, \midas-R and \midas-F scale linearly with the number of hash functions.}
	\end{figure}

	Figure~\ref{fig:buckets} shows the dependence of the running time on the number of buckets. In general, the time increases with the number of buckets, but \midas-F is more sensitive to the number of buckets. This is because \midas-F requires updating the CMS data structure, which, due to the nested selection operation, cannot be vectorized. On the other hand, in \midas{} and \midas-R, the clearing and $\alpha$ reducing operations can be efficiently vectorized.

	\begin{figure}[!htb]
		\center{\includegraphics[width=0.7\columnwidth]{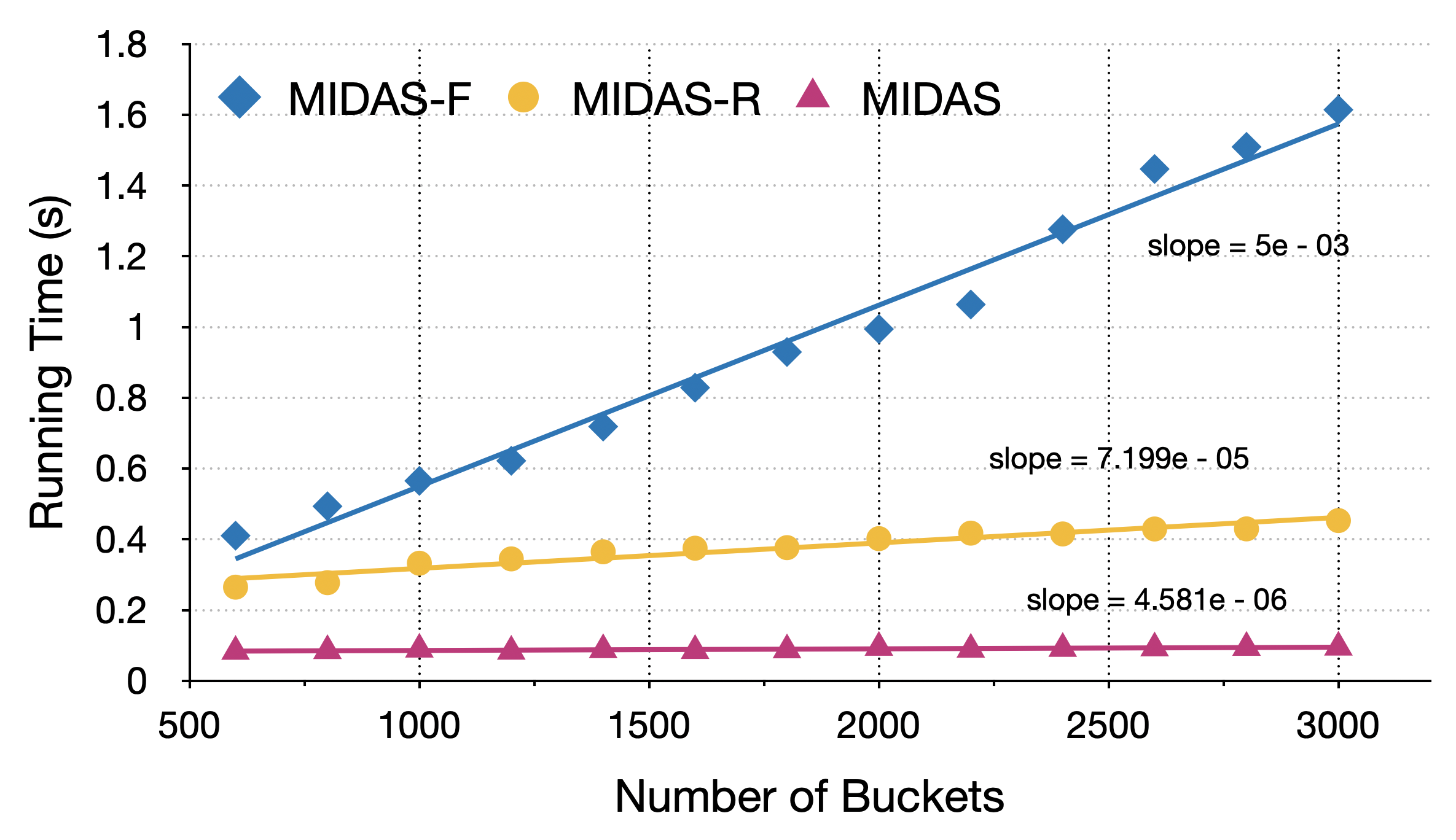}}
		\caption{\label{fig:buckets} \midas{}, \midas-R and \midas-F scale linearly with the number of buckets.}
	\end{figure}

\FloatBarrier

	\subsection{Real-World Effectiveness}

	We measure anomaly scores using \midas, \midas-R, \midas-F, \sedanspot, PENminer, and F-FADE on the \emph{TwitterSecurity} dataset.
	Figure~\ref{fig:security} plots the normalized anomaly scores vs. day (during the four months of 2014).
	We aggregate edges for each day by taking the highest anomaly score.
	Anomalies correspond to major world news such as the Mpeketoni attack (event 6) or the Soma Mine explosion (event 1).

	\begin{figure}[!htb]
		\centering
		\includegraphics[width=0.6\columnwidth]{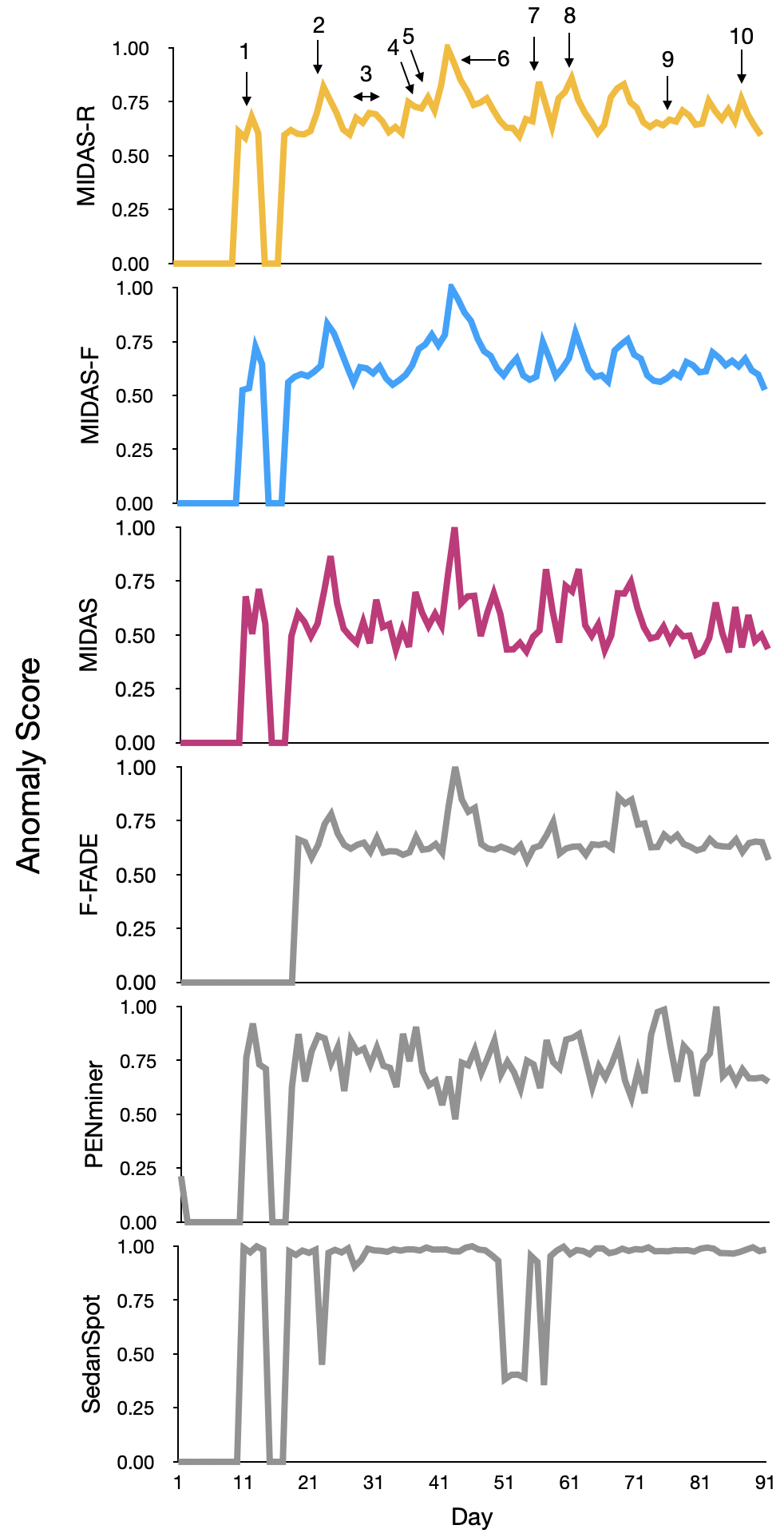}
		\caption{Anomalies detected by \midas, \midas-R and \midas-F correspond to major security-related events in \emph{TwitterSecurity}}\label{fig:security}
	\end{figure}

	\sedanspot\ gives relatively high scores for all days making it difficult to spot anomalies (events). F-FADE produces the highest score near event 6 and peaks at events 2 and 8. However, for other days, scores are maintained around a static level, which provides no useful information in detecting rest events.
	Also note that as F-FADE requires initial learning, thus there are no scores around event 1.
	PENminer's scores keep fluctuating during the four months.
	It would be hard to learn anomalies from the produced scores.
	For \midas\ and its variants, we can see four apparent peaks near major events like 2, 6, 7, 8, and at events 1 and 10, small peaks are also noticeable, though less obvious.
	Hence, we can see our proposed algorithm can extract more anomalous events from real-world social networks compared with baselines.

	The anomalies detected by \midas, \midas-R, and \midas-F coincide with the ground events in the \emph{TwitterSecurity} timeline as follows:

	\begin{enumerate}
	    \footnotesize
		\item 13-05-2014. Turkey Mine Accident, Hundreds Dead.
		\item 24-05-2014. Raid.
		\item 30-05-2014. Attack/Ambush.\\ 03-06-2014. Suicide bombing.
		\item 09-06-2014. Suicide/Truck bombings.
		\item 10-06-2014. Iraqi Militants Seized Large Regions.\\ 11-06-2014. Kidnapping.
		\item 15-06-2014. Attack.
		\item 26-06-2014. Suicide Bombing/Shootout/Raid.
		\item 03-07-2014. Israel Conflicts with Hamas in Gaza.
		\item 18-07-2014. Airplane with 298 Onboard was Shot Down over Ukraine.
		\item 30-07-2014. Ebola Virus Outbreak.
	\end{enumerate}

	\textbf{Microcluster anomalies:} Figure \ref{fig:micro} corresponds to Event $7$ in the \emph{TwitterSecurity} dataset. Single edges in the plot denote $444$ actual edges, while double edges in the plot denote $888$ actual edges between the nodes. This suddenly arriving (within $1$ day) group of suspiciously similar edges is an example of a microcluster anomaly which \midas, \midas-R\ and \midas-F detect, but \sedanspot{} misses.

	\begin{figure}[H]
		\centering
		\includegraphics[width=0.5\columnwidth]{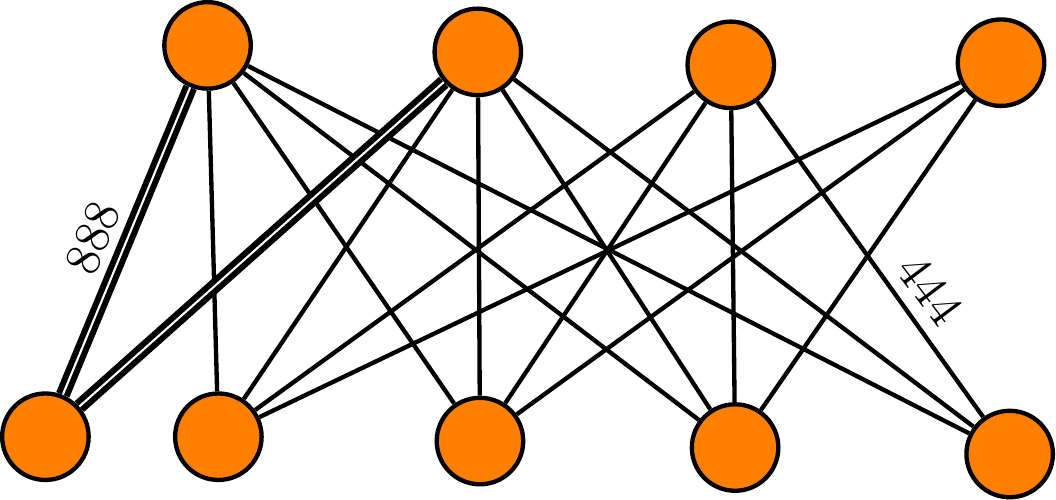}
		\caption{Microcluster Anomaly in \emph{TwitterSecurity}}\label{fig:micro}
	\end{figure}

\FloatBarrier

	\section{Conclusion}
	In this chapter, we proposed \midas, \midas-R, and \midas-F for microcluster based detection of anomalies in edge streams. Future work could consider more general types of data, including heterogeneous graphs or tensors.
	Our contributions are as follows:
	\begin{enumerate}
		\item {\bfseries Streaming Microcluster Detection:} We propose a novel streaming approach combining statistical (chi-squared test) and algorithmic (count-min sketch) ideas to detect microcluster anomalies, requiring constant time and memory.
		\item {\bfseries Theoretical Guarantees:} We show guarantees on the false positive probability of \midas.
		\item {\bfseries Effectiveness}: Our experimental results show that \midas{} outperforms baseline approaches by up to $62$\% higher ROC-AUC, and processes the data orders-of-magnitude faster than baseline approaches.
		\item {\bfseries Relations and Filtering}: We propose two variants, \midas-R that incorporates temporal and spatial relations, and \midas-F that aims to filter away anomalous edges to prevent them from negatively affecting the algorithm's internal data structures.
	\end{enumerate}
\SetPicSubDir{AnoGraph}
\SetExpSubDir{AnoGraph}

\chapter[Sketch-Based Anomaly Detection in Streaming Graphs][AnoGraph]{Sketch-Based Anomaly Detection in Streaming Graphs}
\label{ch:anograph}
\vspace{2em}

\SetKwInput{KwInput}{Input}                
\SetKwInput{KwOutput}{Output}              

\begin{mdframed}[backgroundcolor=magenta!20] 
Chapter based on work that is currently under submission \href{https://arxiv.org/pdf/2106.04486.pdf}{[PDF]}.
\end{mdframed}

\section{Introduction}
Given a stream of graph edges from a dynamic graph, how can we assign anomaly scores to both edges and subgraphs in an online manner, for the purpose of detecting unusual behavior, using constant memory and constant update time per newly arriving edge?

In streaming or online graph scenarios, some methods can detect the presence of anomalous edges, \cite{eswaran2018sedanspot,bhatia2020midas,belth2020mining,chang2021f}, while others can detect anomalous subgraphs \cite{shin2017densealert,eswaran2018spotlight,yoon2019fast}. However, all existing methods are limited to either anomalous edge or graph detection but are not able to detect both kinds of anomalies, as summarized in Table \ref{tab:comparison}.

We first extend the two-dimensional sketch to a \textit{higher-order sketch} to enable it to embed the relation between the source and destination nodes in a graph. A higher-order sketch has the useful property of preserving the dense subgraph structure; dense subgraphs in the input turn into dense submatrices in this data structure. Thus, the problem of detecting a dense subgraph from a large graph reduces to finding a dense submatrix in a constant size matrix, which can be achieved in constant time. The higher-order sketch allows us to propose several algorithms to detect both anomalous edges and subgraphs in a streaming manner.

We introduce two edge anomaly detection methods, \methodedge-G, and \methodedge-L, and two graph anomaly detection methods \methodgraph, and \methodgraph-K, that use the same data structure to detect the presence of a dense submatrix, and consequently anomalous edges, or subgraphs respectively. All our approaches process edges and graphs in constant time, and are independent of the graph size, i.e., they require constant memory. Moreover, our approach is the only streaming method that makes use of dense subgraph search to detect graph anomalies while only requiring constant memory and time. We also provide theoretical guarantees on the higher-order sketch estimate and the submatrix density measure. In summary, the main contributions of this chapter are:
\begin{enumerate}
    \item \textbf{Higher-Order Sketch (Section \ref{sec:sketch})}: We transform the dense subgraph detection problem into finding a dense submatrix (which can be achieved in constant time) by extending the count-min sketch (CMS) \cite{cormode2005improved} data structure to a higher-order sketch.
    \item \textbf{Streaming Anomaly Detection (Sections \ref{sec:edge},\ref{sec:graph})}: We propose four novel online approaches to detect anomalous edges and graphs in real-time, with constant memory and  update time. Moreover, this is the first streaming work that incorporates dense subgraph search to detect graph anomalies in constant memory/time.
    \item \textbf{Effectiveness (Section \ref{sec:exp})}: We outperform all state-of-the-art streaming edge and graph anomaly detection methods on four real-world datasets.
\end{enumerate}

{\bfseries Reproducibility}: Our code and datasets are available on \href{https://github.com/Stream-AD/AnoGraph}{https://github.com/Stream-AD/AnoGraph}.

\begin{table*}[!tb]
\centering
\caption{Comparison of relevant anomaly detection approaches.}
\label{tab:comparison}
\resizebox{\linewidth}{!}{
\addtolength{\tabcolsep}{-2pt}
\begin{tabular}{@{}r|ccccc|ccc|c@{}}
\toprule
{Property} 
& {DenseStream}
& {SedanSpot}
& {MIDAS-R}
& {PENminer}
& {F-FADE}
& {DenseAlert}
& {SpotLight}
& {AnomRank}
& {\bfseries {Our Method}} \\
& (KDD'17) & (ICDM'20) & (AAAI'20) & (KDD'20) & (WSDM'21) & (KDD'17) & (KDD'18) & (KDD'19) & ($2022$) \\\midrule
\textbf{Edge Anomaly} & \Checkmark & \Checkmark & \Checkmark & \Checkmark & \Checkmark & -- & -- & -- & \CheckmarkBold \\
\textbf{Graph Anomaly} & -- & -- & -- & -- & -- & \Checkmark & \Checkmark & \Checkmark & \CheckmarkBold \\
\textbf{Constant Memory} & -- & \Checkmark & \Checkmark & -- & \Checkmark & -- & \Checkmark & -- & \CheckmarkBold \\
\textbf{Constant Update Time} & -- & \Checkmark & \Checkmark & \Checkmark & \Checkmark & -- &  \Checkmark & -- & \CheckmarkBold \\
\textbf{Dense Subgraph Search} & \Checkmark & -- & -- & -- & -- & \Checkmark & -- & -- & \CheckmarkBold \\
\bottomrule
\end{tabular}}
\end{table*}

\section{Problem}
Let $\mathscr{E} = \{e_1, e_2, \cdots\}$ be a stream of weighted edges from a time-evolving graph $\mathcal{G}$. Each arriving edge is a tuple $e_i = (u_i, v_i, w_i, t_i)$ consisting of a source node $u_i \in \mathcal{V}$, a destination node $v_i \in \mathcal{V}$, a weight $w_i$, and a time of occurrence $t_i$, the time at which the edge is added to the graph. For example, in a network traffic stream, an edge $e_i$ could represent a connection made from a source IP address $u_i$ to a destination IP address $v_i$ at time $t_i$. We do not assume that the set of vertices $\mathcal{V}$ is known a priori: for example, new IP addresses or user IDs may be created over the course of the stream.

We model $\mathcal{G}$ as a directed graph. Undirected graphs can be handled by treating an incoming undirected edge as two simultaneous directed edges, one in each direction. We also allow $\mathcal{G}$ to be a multigraph: edges can be created multiple times between the same pair of nodes. Edges are allowed to arrive simultaneously: i.e. $t_{i+1} \ge t_i$, since in many applications $t_i$ is given as a discrete time tick.

The desired properties of our algorithm are as follows:

\begin{itemize}
\item {\bfseries Detecting Anomalous Edges:} To detect whether the edge is part of an anomalous subgraph in an online manner. Being able to detect anomalies at the finer granularity of edges allows early detection so that recovery can be started as soon as possible and the effect of malicious activities is minimized.

\item {\bfseries Detecting Anomalous Graphs:} To detect the presence of an unusual subgraph (consisting of edges received over a period of time) in an online manner, since such subgraphs often correspond to unexpected behavior, such as coordinated attacks. 

\item {\bfseries Constant Memory and Update Time:} To ensure scalability, memory usage and update time should not grow with the number of nodes or the length of the stream. Thus, for a newly arriving edge, our algorithm should run in constant memory and update time.

\end{itemize}

\section{Higher-Order Sketch \& Notations}
\label{sec:sketch}

Count-min sketches (CMS) \cite{cormode2005improved} are popular streaming data structures used by several online algorithms \cite{Mcgregor2014GraphSA}. CMS uses multiple hash functions to map events to frequencies, but unlike a hash table uses only sub-linear space, at the expense of overcounting some events due to collisions. Frequency is approximated as the minimum over all hash functions. CMS, shown in Figure \ref{fig:sketch}(a), is represented as a two-dimensional matrix where each row corresponds to a hash function and hashes to the same number of buckets (columns).

\begin{figure}[!b]
        \centering
        \includegraphics[width=0.8\columnwidth]{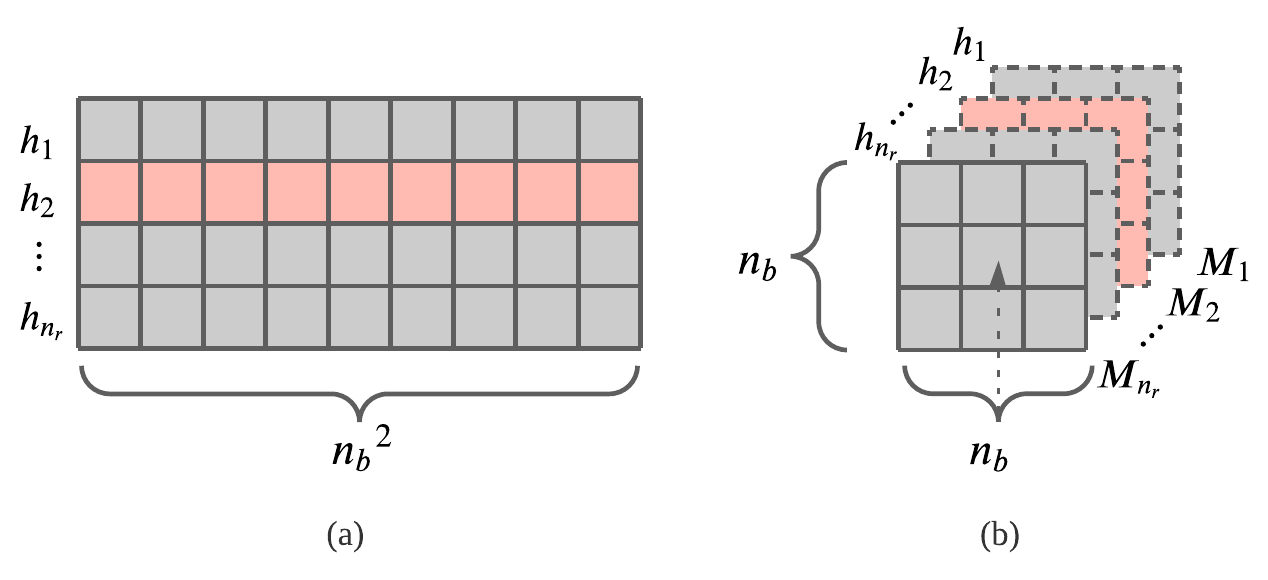} 
        \caption{(a) Original CMS with $n_b^{2}$ buckets for each hash function (b) Higher-order CMS with $n_b x n_b$ buckets for each hash function.}
        \label{fig:sketch}
\end{figure}

We introduce a Higher-order CMS (H-CMS) data structure where each hash function maps multi-dimensional input to a generic tensor instead of mapping it to a row vector. H-CMS enhances CMS by separately hashing the individual components of an entity thereby maintaining more information. Figure \ref{fig:sketch}(b) shows a 3-dimensional H-CMS that can be used to hash two-dimensional entities such as graph edges to a matrix. The source node is hashed to the first dimension and the destination node to the other dimension of the sketch matrix, as opposed to the original CMS that will hash the entire edge to a one-dimensional row vector as shown in Figure \ref{fig:sketch}(a).

We use a 3-dimensional H-CMS (operations described in Algorithm \ref{alg:hcmsoperations}) where the number of hash functions is denoted by $n_r$, and matrix $\mathcal{M}_j$ corresponding to $j$-th hash function $h_j$ is of dimension $n_b \times n_b$, i.e., a square matrix. For each $j \in [n_r]$, the $j$-th hash function denoted by $h_{j}(u,v)$ maps an edge $(u, v)$ to a matrix index $(h'_{j}(u), h''_{j}(v))$, i.e., the source node is mapped to a row index and the destination node is mapped to a column index. That is, $h_{j}(u,v)=(h'_{j}(u), h''_{j}(v))$. Therefore, each matrix in a 3-dimensional H-CMS captures the essence of a graph adjacency matrix. Dense subgraph detection can thus be transformed into a dense submatrix detection problem (as shown in Figure \ref{fig:hcms}) where the size of the matrix is a small constant, independent of the number of edges or the graph size.

\begin{algorithm}[!hb]
    \DontPrintSemicolon
    \SetNoFillComment
    \SetKwFunction{procinit}{INITIALIZE H-CMS}
    \SetKwFunction{procreset}{RESET H-CMS}
    \SetKwFunction{procupdate}{UPDATE H-CMS}
    \SetKwFunction{procdecay}{DECAY H-CMS}
    
	\setcounter{AlgoLine}{0}
    \SetKwProg{myproc}{Procedure}{}{}
    \myproc{\procinit{$n_r$, $n_b$}}{
        \For{$r\gets1$ ... $n_r$}{
            $h_r: \mathcal{V} \rightarrow [0, n_b)$ \tcp*{hash vertex}
            $M_r \rightarrow [0]_{n_b \times n_b}$
        }
    }
    \setcounter{AlgoLine}{0}
    \SetKwProg{myproc}{Procedure}{}{}
    \myproc{\procreset{$n_r$, $n_b$}}{
        \For{$r\gets1$ ... $n_r$}{
            $\mathcal{M}_r \leftarrow [0]_{n_b \times n_b}$ \tcp*{reset to zero matrix} 
        }
    }
    \setcounter{AlgoLine}{0}
    \SetKwProg{myproc}{Procedure}{}{}
    \myproc{\procupdate{$u, v, w$}}{
        \For{$r\gets1$ ... $n_r$} {
            $\mathcal{M}_r[h_r(u)][h_r(v)] \leftarrow \mathcal{M}_r[h_r(u)][h_r(v)] + w$
        }
    }
    \setcounter{AlgoLine}{0}
    \SetKwProg{myproc}{Procedure}{}{}
    \myproc{\procdecay{$\alpha$}}{
        \For{$r\gets1$ ... $n_r$}{
            $\mathcal{M}_r \leftarrow \alpha * \mathcal{M}_r$ \tcp*{decay factor: $\alpha$ }
        }
    }
    
	\caption{H-CMS Operations}
	\label{alg:hcmsoperations}
\end{algorithm}

\begin{figure}[!tb]
  \centering
  \includegraphics[width=0.7\columnwidth]{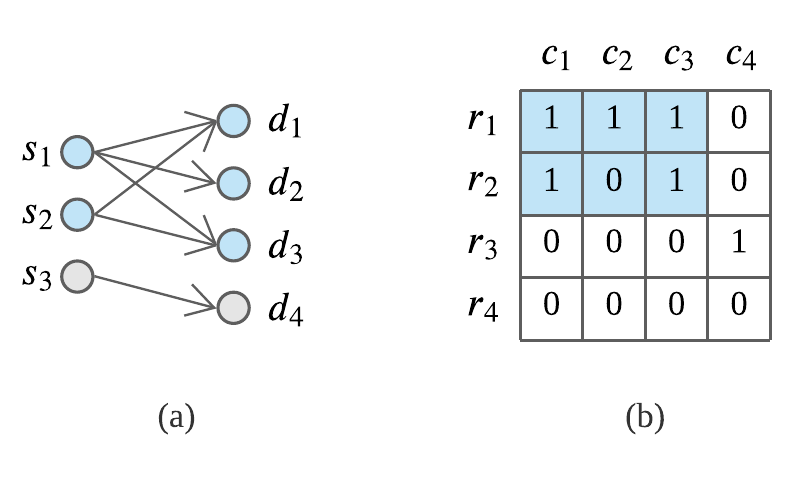}
  \caption{(a) Dense subgraph in the original graph between source nodes $s_1, s_2$, and destination nodes $d_1, d_2, d_3$ is transformed to a (b) Dense submatrix between rows $r_1,r_2$, and columns $c_1, c_2, c_3$ in the H-CMS.}
  \label{fig:hcms}
\end{figure}

For any $(u,v)$, let $y(u,v)$ be the true count of $(u,v)$ observed thus far and $\hat y(u,v)=\min_{j \in [n_r]} \mathcal{M}_j[h_j'(u)][h_j''(v)]$ be the estimate of the count via the 3-dimensional H-CMS. Since the H-CMS can overestimate the count by possible collisions (but not underestimate because we update and keep all the counts for every hash function), we have $y(u,v)\le \hat y(u,v)$. We define $M$ to be  the number of all observations so far; i.e., $M=\sum_{u,v} y(u,v)$.
The following theorem shows that the 3-dimensional H-CMS has estimate guarantees similar to the CMS:
\begin{theorem} \label{thm:1}
For all $k \in [n_r]$, let $h_k(u,v) = (h'_k(u), h''_k(v))$ where each of hash functions $h'_k$ and $h''_k$ is chosen uniformly at random from a pairwise-independent family. Here, we allow both cases of $h'=h''$ and $h'\neq h''$. Fix $\delta>0$ and set  $n_r = \ceil*{\ln \frac{1}{\delta}}$ and  $n_b^{}=\ceil{\frac{e}{\epsilon}}$.  Then, with probability at least $1-\delta$, $ \hat y(u,v)\le y(u,v)+\epsilon M$.
\end{theorem}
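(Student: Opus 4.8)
The plan is to mimic the classical Count-Min Sketch analysis \cite{cormode2005improved}, the only genuinely new ingredient being a collision-probability bound for the two-coordinate hashing scheme. Fix the edge $(u,v)$ whose estimate we wish to bound, and fix a row index $k \in [n_r]$. Since every update adds non-negative weight and no counts are discarded, the relevant bucket decomposes as $\mathcal{M}_k[h'_k(u)][h''_k(v)] = y(u,v) + Z_k$, where
\[
Z_k = \sum_{(u',v') \neq (u,v)} \one\{\, h'_k(u') = h'_k(u) \text{ and } h''_k(v') = h''_k(v) \,\}\; y(u',v') \;\ge\; 0
\]
collects the contribution of colliding edges. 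First I would show that for every $(u',v') \neq (u,v)$ the collision probability is at most $1/n_b$. This is the one place a small case analysis is needed: if $u' \neq u$, then — whether $h' = h''$ or $h' \neq h''$ — pairwise independence of $h'_k$ gives $\Pr[h'_k(u') = h'_k(u)] = 1/n_b$, and dropping the second conjunct yields the bound; symmetrically if $v' \neq v$. Since $(u',v') \neq (u,v)$ forces $u' \neq u$ or $v' \neq v$, the bound $\Pr[\text{collision}] \le 1/n_b$ holds in all cases.

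Next I would take expectations: by linearity and the collision bound, $\EE[Z_k] \le \frac{1}{n_b}\sum_{(u',v') \neq (u,v)} y(u',v') \le M/n_b$. Substituting $n_b = \ceil{e/\epsilon} \ge e/\epsilon$ gives $\EE[Z_k] \le \epsilon M / e$, and Markov's inequality yields $\Pr[Z_k \ge \epsilon M] \le 1/e$. Because distinct rows use independently drawn hash functions, the events $\{Z_k \ge \epsilon M\}$ are independent over $k \in [n_r]$, so $\Pr[\min_k Z_k \ge \epsilon M] \le (1/e)^{n_r} \le e^{-\ln(1/\delta)} = \delta$ using $n_r = \ceil{\ln(1/\delta)}$. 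On the complementary event we have $\min_k Z_k < \epsilon M$, whence $\hat y(u,v) = y(u,v) + \min_k Z_k < y(u,v) + \epsilon M$, which (together with the already-noted $\hat y(u,v) \ge y(u,v)$) is the desired conclusion, holding with probability at least $1-\delta$.

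The step I expect to require the most care is the collision bound in the $h' = h''$ case: it is tempting to reach for the product bound $1/n_b^2$, but that would need four-wise independence, whereas the weaker bound $1/n_b$ — obtained merely by ignoring one of the two coordinate constraints — both suffices for the stated theorem and follows from pairwise independence alone. The remaining ingredients (the Markov step, independence across rows, and monotonicity of the minimum-over-rows estimate) are exactly as in the standard CMS argument, so I do not expect any difficulty there. I would add a remark that when $h' \neq h''$ are independent the collision probability is in fact $1/n_b^2$, explaining why the H-CMS empirically concentrates far better than a plain CMS with the same number of buckets, though this improvement is not needed for Theorem~\ref{thm:1}.
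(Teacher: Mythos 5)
Your proposal is correct and follows essentially the same route as the paper's proof: decompose the bucket count into the true count plus a non-negative collision term, bound the per-edge collision probability by $1/n_b$ via the case analysis on which coordinate differs (using pairwise independence and discarding the other conjunct), apply Markov to get failure probability $1/e$ per row, and use independence across the $n_r$ rows to obtain $\delta$. Your explicit mention of row-independence is in fact slightly more careful than the paper's compressed final step, but the argument is the same.
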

\begin{proof}
Fix $j \in [n_r]$. Let  $a=(u_{a}, v_{a})$ and $b=(u_{b}, v_{b})$ such that $a\neq b$. This implies that at least one of the following holds: $u_a \neq u_b$ or  $v_a \neq v_b$. Since $h'_j$ (and $h''_j$) is chosen uniformly at random from a pairwise-independent family, 
$P(h'_{j}(u_{a}) = h'_{j}(u_{b})) = \frac{1}{n_b}$ or $P(h''_{j}(v_{a}) = h''_{j}(v_{b})) = \frac{1}{n_b}$. If $P(h'_{j}(u_{a}) = h'_{j}(u_{b})) = \frac{1}{n_b}$, we have that $P(h_j(a) = h_j(b))=P(h'_j(u_{a}) = h'_j(u_{b}) \wedge h''_j(v_{a}) = h''_j(v_{b}))= P(h'_j(u_{a}) = h'_j(u_{b}))P(h''_j(u_{a}) = h''_j(u_{b})|h'_j(u_{a}) = h'_j(u_{b}) ) \le \frac{1}{n_b}=\frac{\epsilon}{e}$. Similarly, if $P(h''_{j}(v_{a}) = h''_{j}(v_{b})) = \frac{1}{n_b}$,  $P(h_j(a) = h_j(b))= P(h'_j(u_{a}) = h'_j(u_{b})|h''_j(u_{a}) = h''_j(u_{b}))P(h''_j(u_{a}) = h''_j(u_{b}) ) \le \frac{1}{n_b}=\frac{\epsilon}{e}$. 
Thus, in the both cases, the probability of the collision is $P(h_{j}(a) = h_{j}(b))=\frac{\epsilon}{e}$. Thus, by defining $X_{a,j}=\sum_{b} \one\{a \neq b  \wedge h_{j}(a) = h_{j}(b)\}y(b)$,
$
\EE[X_{a,j}]\le\sum_{b}^{} y(b)\EE[\one\{a \neq b  \wedge h_{j}(a) = h_{j}(b)\}]\le \frac{\epsilon}{e}M. 
$     
Since $\hat y(a)= \min_j y(a)+X_{a,j}$, this implies that $P(\hat y(a)> y(a)+\epsilon M)) =P(\min_j y(a)+X_{a,j}> y(a)+\epsilon M))=P(\min_j X_{a,j}>\epsilon M))\le P(\min_j X_{a,j}> e\EE[X_{a,j}])) $.
By  the Markov's inequality on the right-hand side, we have that $P(\hat y(a)> y(a)+\epsilon M)) \le P(\min_j X_{a,j}> e\EE[X_{a,j}]))\le e^{-d}\le \delta$.
\end{proof}

Theorem \ref{thm:1} shows that we have the estimate guarantee even if we use the same hash function for both the source nodes and the destination node (i.e., $h'=h''$). Thus, with abuse of notation, we write $h(u,v)=(h(u), h(v))$ when $h'=h''$ by setting $h=h'=h''$ on the right-hand side. On the other hand, in the case of $h' \neq h''$, it would be possible to improve the estimate guarantee in Theorem \ref{thm:1}. For example, if we can make $h$ to be chosen uniformly at random from a  weakly universal set of hash functions (by defining corresponding families of distributions for  $h'$ and  $h''$ under some conditions), then we can set $n_b=\ceil{\sqrt{\frac{e}{\epsilon}}}$ to have the same estimate guarantee as that of  Theorem \ref{thm:1} based on the proof of Theorem \ref{thm:1}. The analysis for such a potential improvement is left for future work as an open problem.

Frequently used symbols are discussed in Table \ref{tab:symbols}, and we leverage the subgraph density measure discussed in \cite{khuller2009finding} to define the submatrix $(S_x, T_x)$ density.

\begin{definition}
Given matrix $\mathcal{M}$, density of a submatrix of $\mathcal{M}$ represented by $S_x \subseteq S$ and $T_{x} \subseteq T$, is: 
\begin{equation}
\mathcal{D}(\mathcal{M}, S_x, T_x) = \frac{\sum_{s \in S_x}\sum_{t \in T_x}\mathcal{M}[s][t]}{\sqrt{|S_x||T_x|}}
\end{equation}
\label{def:density}
\end{definition}

\begin{table}[!htb]
\centering
\caption{Table of symbols.}
\label{tab:symbols}
\begin{tabular}{>{\centering\arraybackslash}p{3.2cm}|p{10cm}}
\toprule
\textbf{Symbol} & \textbf{Definition} \\
\midrule
$n_r$ & number of hash functions \\
$n_b$ & number of buckets \\
$h(u)$ & hash function $u \rightarrow [0, n_b)$\\ \midrule
$\mathcal{M}$ & a square matrix of dimensions $n_b \times n_b$\\
$\mathcal{M}[i][j]$ & element at row index i and column index j\\ \midrule
$S$ & set of all row indices\\
$S_{cur}$ & set of current submatrix row indices \\
$S_{rem}$ & set of remaining row indices \\
$T$ & set of all column indices \\
$T_{cur}$ & set of current submatrix column indices \\
$T_{rem}$ & set of remaining column indices \\
$[z]$ & set of all integers in the range $[1, z]$ \\\midrule
$\mathcal{D}(\mathcal{M}, S_x, T_x)$ & density of submatrix ($S_x$, $T_x$) \\ 
$\mathcal{E}(\mathcal{M}, S_x, T_x)$ & sum of elements of submatrix ($S_x$, $T_x$)\\ 
$\mathcal{R}(\mathcal{M}, u, T_x)$ & submatrix row-sum \\
& i.e. sum of elements of submatrix ($\{u\}$, $T_x$) \\ 
$\mathcal{C}(\mathcal{M}, S_x, v)$ & submatrix column-sum \\
& i.e. sum of elements of submatrix ($S_x$, $\{v\}$)\\ 
$\mathcal{L}(\mathcal{M}, u, v, S_x, T_x)$ & likelihood of index $(u, v)$ w.r.t. submatrix $(S_x, T_x)$ \\ 
$d_{max}$ & maximum reported submatrix density \\
\bottomrule
\end{tabular}
\end{table}

\section{Edge Anomalies}
\label{sec:edge}

In this section, using the H-CMS data structure, we propose \methodedge-G and \methodedge-L to detect edge anomalies by checking whether the received edge when mapped to a sketch matrix element is part of a dense submatrix. \methodedge-G finds a \textbf{G}lobal dense submatrix and performs well in practice while \methodedge-L maintains and updates a \textbf{L}ocal dense submatrix around the matrix element and therefore has better time complexity.

\subsection{\methodedge-G}

\methodedge-G, as described in Algorithm \ref{alg:AnoEdge-G}, maintains a \emph{temporally decaying} H-CMS, i.e. whenever 1 unit of time passes, we multiply all the H-CMS counts by a fixed factor $\alpha$ (lines 2,4). This decay simulates the gradual `forgetting' of older, and hence, more outdated information. When an edge $(u, v)$ arrives, $u$, $v$ are mapped to matrix indices $h(u)$, $h(v)$ respectively for each hash function $h$, and the corresponding H-CMS counts are updated (line 5). \textsc{Edge-Submatrix-Density} procedure (described below) is then called to compute the density of a dense submatrix around $(h(u), h(v))$. Density is reported as the anomaly score for the edge; a larger density implies that the edge is more likely to be anomalous.

\textsc{Edge-Submatrix-Density} procedure calculates the density of a dense submatrix around a given index $(h(u), h(v))$. A $1 \times 1$ submatrix represented by $S_{cur}$ and $T_{cur}$, is initialized with row-index $h(u)$ and column index $h(v)$ (line 9). The submatrix is iteratively expanded by greedily selecting a row $u_p$ from $S_{rem}$ (or a column $v_p$ from $T_{rem}$) that obtains the maximum row (or column) sum with the current submatrix (lines 11,12). This selected row $u_p$ (or column $v_p$) is removed from $S_{rem}$ (or $T_{rem}$), and added to $S_{cur}$ (or $T_{cur}$) (lines 14,16). The process is repeated until both $S_{rem}$ and $T_{rem}$ are empty (line 10). Density of the current submatrix is computed at each iteration of the submatrix expansion process and the maximum over all greedily formed submatrix densities is returned (lines 17,18).

\begin{algorithm}[!htb]
\caption{\methodedge-G : Streaming Anomaly Edge Scoring}
\label{alg:AnoEdge-G}
    \DontPrintSemicolon
    \SetNoFillComment
    \SetKwFunction{algo}{algo}\SetKwFunction{procf}{\textsc{Edge-Submatrix-Density}}\SetKwFunction{procs}{\textsc{AnoEdge-G}}

    
    \KwInput{Stream $\mathscr{E}$ of edges over time}
    \KwOutput{Anomaly score per edge}
    
    \setcounter{AlgoLine}{0}
    \SetKwProg{myproc}{Procedure}{}{}
    \myproc{\procs{$\mathscr{E}$}}{
    \tcc{H-CMS data structure}
    Initialize H-CMS matrix $\mathcal{M}$ for edge count \\
    \While{new edge $e = (u, v, w, t) \in \mathscr{E}$ is received} {
        \tcc{decay count}
        Temporal decay H-CMS with timestamp change \\
        Update H-CMS matrix $\mathcal{M}$ for new edge $(u, v)$ with value $w$ \tcp*{update count}
        \textbf{output} $score(e) \leftarrow$ \textsc{Edge-Submatrix-Density}($\mathcal{M}, h(u), h(v)$) 
    }
    }
    
    \SetKwProg{myproc}{Procedure}{}{}
    \myproc{\procf{$\mathcal{M}$, $u$, $v$}}{
    $S \leftarrow [n_b]; \enspace T \leftarrow [n_b]; \enspace S_{cur} \leftarrow \{u\}; \enspace T_{cur} \leftarrow \{v\}; \enspace S_{rem} \leftarrow S/\{u\}; \enspace T_{rem} \leftarrow T/\{v\}$ \;
    $d_{max} \leftarrow \mathcal{D}(\mathcal{M}, S_{cur}, T_{cur})$ \;
    \While{$S_{rem} \neq \emptyset \enspace \vee \enspace T_{rem} \neq \emptyset$} {
        \tcc{submatrix max row-sum index}
        $u_p \leftarrow \operatorname*{argmax}_{s_p \in S_{rem}} \mathcal{R}(\mathcal{M}, s_p, T_{cur})$ \\
        \tcc{submatrix max column-sum index}
        $v_p \leftarrow \operatorname*{argmax}_{t_p \in T_{rem}} \mathcal{C}(\mathcal{M}, S_{cur}, t_p)$ \\
        \If{$\mathcal{R}(\mathcal{M}, u_p, T_{cur}) > \mathcal{C}(\mathcal{M}, S_{cur}, v_p)$} {
            $S_{cur} \leftarrow S_{cur}\cup\{u_p\}; \enspace S_{rem} \leftarrow S_{rem}/\{u_p\} $\;
        } 
        \Else {
             $T_{cur} \leftarrow T_{cur}\cup\{v_p\}; \enspace T_{rem} \leftarrow T_{rem}/\{v_p\} $\;
        }
        $d_{max} \leftarrow max(d_{max}, \mathcal{D}(\mathcal{M}, S_{cur}, T_{cur}))$ \;
    }
  
    \KwRet $d_{max}$ \tcp*{dense submatrix density}} 

\end{algorithm}

\begin{proposition}\label{thm:AnoEdge-G-time}
Time complexity of Algorithm \ref{alg:AnoEdge-G} is $O(|\mathscr{E}|*n_r*n_b^2)$ \footnote{This is for processing all edges; the time per edge is constant.}. Memory complexity of Algorithm \ref{alg:AnoEdge-G} is $O(n_r*n_b^2)$.
\end{proposition}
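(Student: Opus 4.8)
The plan is to treat the two claims separately, and for the running time to decompose the per-edge work of \methodedge-G into (i) maintaining the H-CMS — the temporal-decay rescaling and the count update — and (ii) the call to \textsc{Edge-Submatrix-Density}; I would bound each of these and then sum, finally multiplying by $|\mathscr{E}|$ and observing that $n_r,n_b$ are fixed.

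\textbf{Memory.} This is immediate accounting. The H-CMS stores $n_r$ matrices, each $n_b \times n_b$, so $O(n_r n_b^2)$ words. Inside \textsc{Edge-Submatrix-Density} the index sets $S, T, S_{cur}, T_{cur}, S_{rem}, T_{rem}$ and (see below) the per-matrix arrays of partial row- and column-sums together use $O(n_r n_b)$ words. Nothing else grows with the input, so peak memory is $O(n_r n_b^2)$, independent of $|\mathscr{E}|$ and of $|\mathcal{V}|$.

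\textbf{Time.} The cheap parts first: updating the H-CMS for a new edge touches one bucket per hash function, $O(n_r)$; the decay step — which in the worst case, when consecutive edges carry distinct time ticks, executes on every edge — rescales every bucket, $O(n_r n_b^2)$. The crux is \textsc{Edge-Submatrix-Density}. Its \textbf{while} loop removes exactly one element of $S_{rem} \cup T_{rem}$ per iteration, and these sets together start with $2(n_b-1)$ elements, so it runs $O(n_b)$ times. The only subtle point is that the naive way of evaluating the candidates — recomputing each row-sum $\mathcal{R}(\mathcal{M}, s_p, T_{cur})$ and column-sum $\mathcal{C}(\mathcal{M}, S_{cur}, t_p)$ from scratch — costs $O(n_b^2)$ per iteration, hence $O(n_b^3)$ per matrix, which is too slow for the claimed bound. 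The remedy is to maintain, for each hash matrix, the whole array $\{\mathcal{R}(\mathcal{M}, s, T_{cur})\}_s$, the whole array $\{\mathcal{C}(\mathcal{M}, S_{cur}, t)\}_t$, and the scalar $\mathcal{E}(\mathcal{M}, S_{cur}, T_{cur})$, updating them incrementally: appending a column $v_p$ to $T_{cur}$ increments each $\mathcal{R}(\mathcal{M}, s, T_{cur})$ by $\mathcal{M}[s][v_p]$ (an $O(n_b)$ pass) and leaves the column-sums fixed, and symmetrically for appending a row; the submatrix sum is then updated in $O(1)$ from the already-maintained arrays and the density $\mathcal{D} = \mathcal{E}/\sqrt{|S_{cur}||T_{cur}|}$ in $O(1)$. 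With these arrays available, each \texttt{argmax} is an $O(n_b)$ scan, so each iteration costs $O(n_b)$ per matrix; the loop therefore costs $O(n_b^2)$ per matrix, i.e.\ $O(n_r n_b^2)$ over all hash matrices, and the initial pass that sets up the arrays adds only $O(n_r n_b)$.

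Summing, the per-edge cost is $O(n_r) + O(n_r n_b^2) + O(n_r n_b^2) = O(n_r n_b^2)$, so the total over the stream is $O(|\mathscr{E}| \cdot n_r n_b^2)$. Because $n_r$ and $n_b$ are fixed hyperparameters — chosen in advance, not growing with the stream length or the graph — this equals $O(|\mathscr{E}|)$, i.e.\ constant time per edge, which is the footnoted statement. I expect the incremental maintenance of the partial row-/column-sum arrays (without which the greedy expansion would be cubic in $n_b$) to be the only step that needs care; the decay cost, the update cost, and the memory bound are routine.
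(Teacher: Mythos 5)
Your proposal is correct and follows essentially the same argument as the paper: both bound the greedy loop in \textsc{Edge-Submatrix-Density} by $O(n_b)$ iterations, each costing $O(n_b)$ via incrementally maintained $n_b$-sized row-sum and column-sum arrays with an $O(1)$ density update (the point you rightly single out as the only non-routine step), giving $O(n_r n_b^2)$ per edge, and both account for memory identically as $O(n_r n_b^2)$ for the sketch plus $O(n_r n_b)$ for the auxiliary arrays. Your charging of the temporal decay at $O(n_r n_b^2)$ per edge in the worst case is slightly more conservative than the paper's remark that decay occurs only on timestamp changes, but this does not affect the stated bound.
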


\begin{proof}
Procedure \textsc{Edge-Submatrix-Density} removes rows (or columns) iteratively, and the total number of rows and columns that can be removed is $n_b + n_b - 2$. In each iteration, the approach performs the following three operations: (a) pick the row with minimum row-sum; (b) pick the column with minimum column-sum; (c) calculate density. We keep $n_b$-sized arrays for flagging removed rows (or columns), and for maintaining row-sums (or column-sums). Operations (a) and (b) take maximum $n_b$ steps to pick and flag the row with minimum row-sum (or column-sum). Updating the column-sums (or rows-sums) based on the picked row (or column) again takes maximum $n_b$ steps. Time complexity of (a) and (b) is therefore $O(n_b)$. Density is directly calculated based on subtracting the removed row-sum (or column-sum) and reducing the row-count (or column-count) from the earlier density value. Row-count and column-count are kept as separate variables. Therefore, the time complexity of the density calculation step is $O(1)$. Total time complexity of procedure \textsc{Edge-Submatrix-Density} is $O((n_b + n_b - 2)*(n_b + n_b + 1)) = O(n_b^2)$.

Time complexity to initialize and decay the H-CMS data structure is $O(n_r*n_b^2)$. Temporal decay operation is applied whenever the timestamp changes, and not for every received edge. Update counts operation updates a matrix element value ($O(1)$ operation) for $n_r$ matrices, and the time complexity of this step is $O(n_r)$. Anomaly score for each edge is based on the submatrix density computation procedure which is $O(n_b^2)$; the time complexity of $n_r$ matrices becomes $O(n_r*n_b^2)$. Therefore, the total time complexity of Algorithm \ref{alg:AnoEdge-G} is $O(|\mathscr{E}|*(n_r + n_r*n_b^2)) = O(|\mathscr{E}|*n_r*n_b^2)$.

For procedure \textsc{Edge-Submatrix-Density}, we keep an $n_b$-sized arrays to flag rows and columns that are part of the current submatrix, and to maintain row-sums and column-sums. Total memory complexity of \textsc{Edge-Submatrix-Density} procedure is $O(4*n_b) = O(n_b)$.

Memory complexity of H-CMS data structure is $O(n_r*n_b^2)$. Dense submatrix search and density computation procedure require $O(n_b)$ memory. For $n_r$ matrices, this becomes $O(n_r*n_b)$. Therefore, the total memory complexity of Algorithm \ref{alg:AnoEdge-G} is $O(n_r*n_b^2 + n_r*n_b) = O(n_r*n_b^2)$.
\end{proof}

\subsection{\methodedge-L}
Inspired by Definition \ref{def:density}, we define the likelihood measure of a matrix index $(h(u), h(v))$ with respect to a submatrix $(S_x, T_x)$, as the sum of the elements of submatrix $(S_x, T_x)$ that either share row with index $h''(v)$ or column with index $h'(u)$ divided by the total number of such elements.
\begin{definition}
Given matrix $\mathcal{M}$, likelihood of an index $h(u, v)$ with respect to a submatrix represented by $S_x \subseteq S$ and $T_{x} \subseteq T$, is:
\begin{equation}
\mathcal{L}(\mathcal{M}, u, v, S_x, T_x) = \frac{\sum_{(s, t) \; \in \; \; S_x \times \{h(v)\} \; \cup \; \{h(u)\} \times {T_x}}\mathcal{M}[s][t]}{|S_x \times \{h(v)\} \; \cup \; \{h(u)\} \times {T_x}|}
\end{equation}
\label{def:likelihood}
\end{definition}

\methodedge-L, as described in Algorithm \ref{alg:AnoEdge-L}, maintains a temporally decaying H-CMS to store the edge counts. We also initialize a mutable submatrix of size $1 \times 1$ with a random element, and represent it as $(S_{cur}, T_{cur})$. As we process edges, we greedily update $(S_{cur}, T_{cur})$ to maintain it as a dense submatrix. When an edge arrives, H-CMS counts are first updated, and the received edge is then used to check whether to \emph{expand} the current submatrix (line 7). If the submatrix density increases upon the addition of the row (or column), then the row-index $h(u)$ (or column-index $h(v)$) is added to the current submatrix, $(S_{cur}, T_{cur})$. To remove the row(s) and column(s) decayed over time, the process iteratively selects the row (or column) with the minimum row-sum (or column-sum) until removing it increases the current submatrix density. This ensures that the current submatrix is as \emph{condensed} as possible (line 9). As defined in Definition \ref{def:likelihood}, \methodedge-L computes the likelihood score of the edge with respect to $(S_{cur}, T_{cur})$ (line 10). A higher likelihood measure implies that the edge is more likely to be anomalous.

\begin{algorithm}[!htb]
\caption{\methodedge-L : Streaming Anomaly Edge Scoring}
    \label{alg:AnoEdge-L}
    \DontPrintSemicolon
    \SetNoFillComment
    \SetKwFunction{algo}{algo}\SetKwFunction{procs}{\textsc{AnoEdge-L}}
    
    \KwInput{Stream $\mathscr{E}$ of edges over time}
    \KwOutput{Anomaly score per edge}
    \setcounter{AlgoLine}{0}
    \SetKwProg{myproc}{Procedure}{}{}
    \myproc{\procs{$\mathscr{E}$}}{
    \tcc{H-CMS data structure}
    Initialize H-CMS matrix $\mathcal{M}$ for edges count \\
    \tcc{mutable submatrix}
    Initialize a randomly picked $1 \times 1$ submatrix $(S_{cur}, T_{cur})$ \\
    \While{new edge $e = (u, v, w, t) \in \mathscr{E}$ is received} {
        \tcc{decay count}
        Temporal decay H-CMS with timestamp change \\
        Update H-CMS matrix $\mathcal{M}$ for new edge $(u, v)$ with value $w$ \tcp*{update count}
        \textbf{$\triangleright$ Check and Update Submatrix:} \;
        Expand $(S_{cur}, T_{cur})$ \tcp*{expand submatrix}
        Condense $(S_{cur}, T_{cur})$ \tcp*{condense submatrix}
        \tcc{likelihood score from Definition \ref{def:likelihood}}
        \textbf{output} $score(e) \leftarrow \mathcal{L}(\mathcal{M}, h(u), h(v), S_{cur}, T_{cur})$ 
    }
    }
    
\end{algorithm}

\begin{proposition}\label{thm:AnoEdge-L-time}
Time complexity of Algorithm \ref{alg:AnoEdge-L} is $O(n_r*n_b^2 + |\mathscr{E}|*n_r*n_b)$. Memory complexity of Algorithm \ref{alg:AnoEdge-L} is $O(n_r*n_b^2)$.
\end{proposition}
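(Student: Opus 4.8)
The plan is to follow the same accounting scheme as in the proof of Proposition~\ref{thm:AnoEdge-G-time}, but to replace the fixed per-edge submatrix recomputation of \methodedge-G by an \emph{amortized} analysis of the incremental \emph{Expand}/\emph{Condense} maintenance in Algorithm~\ref{alg:AnoEdge-L}. First I would pin down the auxiliary state kept alongside the $n_r$ sketch matrices: for each hash function, an $n_b$-sized array holding the submatrix row-sums $\mathcal{R}(\mathcal{M},s,T_{cur})$ for $s\in[n_b]$, an $n_b$-sized array holding the submatrix column-sums $\mathcal{C}(\mathcal{M},S_{cur},t)$, Boolean membership arrays for $S_{cur}$ and $T_{cur}$, and cached scalars $\mathcal{E}(\mathcal{M},S_{cur},T_{cur})$, $|S_{cur}|$, $|T_{cur}|$. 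With this state, incrementing $\mathcal{M}_r[h_r(u)][h_r(v)]$ by $w$ for a new edge touches one matrix cell and at most one entry of each sum-array per hash function, i.e. $O(n_r)$; and the temporal decay is applied lazily through a single global multiplier, costing $O(1)$ per timestamp change and folded into the reported score --- this is the step that prevents an $O(|\mathscr{E}|*n_r*n_b^2)$ term (from eagerly rescaling every matrix at every tick) from appearing.

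The core of the argument is the amortized cost of the submatrix maintenance. Adding a row (or column) to $(S_{cur},T_{cur})$, or deleting one, requires: scanning a sum-array to locate the relevant argmax/argmin index, which is $O(n_b)$ per hash function; refreshing the cached $\mathcal{E}$, the counts, and the per-index sum-arrays to reflect the new $S_{cur}$ or $T_{cur}$, again $O(n_b)$ per hash function; and evaluating a density, which is $O(1)$ given the cached quantities. Hence a single structural change is $O(n_r*n_b)$. Now observe that \emph{Expand} inserts at most one row and one column per incoming edge, so over the whole stream the number of insertions is at most $2|\mathscr{E}|$; since \emph{Condense} can only delete indices currently present and every deletion cancels an earlier insertion, the total number of deletions over the stream is also $O(|\mathscr{E}|)$ (plus the single index used to initialize the submatrix). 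The ``termination check'' of \emph{Condense} (the last iteration that finds no beneficial removal) adds one more $O(n_r*n_b)$ probe per edge. Therefore the cumulative cost of all \emph{Expand}/\emph{Condense} work across the stream is $O(|\mathscr{E}|*n_r*n_b)$, even though a single edge may trigger many deletions.

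It then remains to collect the pieces. Initializing the $n_r$ matrices and the auxiliary arrays costs $O(n_r*n_b^2)$ once. Each of the $|\mathscr{E}|$ edges incurs $O(n_r)$ for the count update, the amortized $O(n_r*n_b)$ for \emph{Expand}/\emph{Condense}, and $O(n_r*n_b)$ to evaluate the likelihood score of Definition~\ref{def:likelihood}, which is a sum over the cross of $|S_{cur}|+|T_{cur}|\le 2n_b$ cells in each of the $n_r$ sketch matrices (with the appropriate aggregate taken across hash functions). Summing gives $O(n_r*n_b^2 + |\mathscr{E}|*n_r*n_b)$. For memory, the $n_r$ matrices dominate at $O(n_r*n_b^2)$, while the sum-arrays, membership arrays and scalars add only $O(n_r*n_b)$, so the total is $O(n_r*n_b^2)$, matching the claim.

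I expect the main obstacle to be making the amortized deletion bound fully rigorous: one must set up a credit/potential argument showing that each deleted index is charged to a unique prior insertion even though \emph{Condense} runs to completion on every edge (it keeps stripping the minimum row or column until no removal improves density, rather than removing a bounded number), and one must check that the lazy rescaling by $\alpha$ never secretly forces a cascade of new insertions. A secondary point to verify carefully is that each density and likelihood value really can be maintained in $O(1)$ per structural change from the cached sums and counts, instead of being recomputed from scratch in $\Theta(n_b)$ or $\Theta(n_b^2)$, since this is what keeps the per-edge cost at $O(n_r*n_b)$ rather than $O(n_r*n_b^2)$.
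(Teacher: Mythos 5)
Your proposal is correct and follows essentially the same route as the paper's proof: the same decomposition into one-time H-CMS initialization, $O(n_r)$ per-edge count updates, $O(n_r \ast n_b)$ per structural change of $(S_{cur}, T_{cur})$ maintained via cached row/column sums, the same amortized charging of \emph{Condense} deletions to prior \emph{Expand} insertions (bounding total deletions by $O(|\mathscr{E}|)$), an $O(n_r \ast n_b)$ likelihood evaluation per edge, and identical memory accounting. Your additional care about the temporal decay --- handling it lazily via a global multiplier so that it does not contribute an $O(|\mathscr{E}| \ast n_r \ast n_b^2)$ term --- addresses a cost the paper's proof silently omits, and is a worthwhile refinement rather than a different approach.
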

\begin{proof}
As shown in Proposition \ref{thm:AnoEdge-G-time}, the time complexity of H-CMS is $O(n_r*n_b^2)$ and update operation is $O(n_r)$. Current submatrix $(S_{cur}, T_{cur})$ is updated based on \emph{expand} and \emph{condense} submatrix operations. (a) We keep an $n_b$-sized array to flag the current submatrix rows (or column), and also to maintain row-sums (or column-sums). Expand submatrix operation depends on the elements from row $h(u)$ and column $h(v)$, and the density is calculated by considering these elements, thus requiring maximum $n_b$ steps. Upon addition of the row (or column), the dependent column-sums (or row-sums) are also updated taking maximum $n_b$ steps. Time complexity of expand operation is therefore $O(n_b)$. (b) Condense submatrix operation removes rows and columns iteratively. A row (or column) elimination is performed by selecting the row (or column) with minimum row-sum (or column-sum) in $O(n_b)$ time. Removed row (or column) affects the dependent column-sums (or row-sums) and are updated in $O(n_b)$ time. Time complexity of a row (or column) removal is therefore $O(n_b)$. Condense submatrix removes rows (or columns) that were once added by the expand submatrix operation which in the worse case is $O|\mathscr{E}|$.

Expand and condense submatrix operations are performed for $n_r$ matrices. Likelihood score calculation depends on elements from row $h(u)$ and column $h(v)$, and takes $O(n_r*n_b)$ time for $n_r$ matrices. Therefore, the total time complexity of Algorithm \ref{alg:AnoEdge-L} is $O(n_r*n_b^2 + |\mathscr{E}|*n_r + |\mathscr{E}|*n_r*n_b + |\mathscr{E}|*n_r*n_b + |\mathscr{E}|*n_r*n_b) = O(n_r*n_b^2 + |\mathscr{E}|*n_r*n_b)$.

Memory complexity of the H-CMS data structure is $O(n_r*n_b^2)$. To keep current submatrix information, we utilize $n_b$-sized arrays similar to Proposition \ref{thm:AnoEdge-G-time}. For $n_r$ matrices, submatrix information requires $O(n_r*n_b)$ memory. Hence, total memory complexity of Algorithm \ref{alg:AnoEdge-L} is $O(n_r*n_b^2 + n_r*n_b) = O(n_r*n_b^2)$.
\end{proof}

\section{Graph Anomalies}
\label{sec:graph}
We now propose \methodgraph\ and \methodgraph-K to detect graph anomalies by first mapping the graph to a higher-order sketch, and then checking for a dense submatrix. These are the first streaming algorithms that make use of dense subgraph search to detect graph anomalies in constant memory and time. \methodgraph\ greedily finds a dense submatrix with a 2-approximation guarantee on the density measure. \methodgraph-K leverages \textsc{Edge-Submatrix-Density} from Algorithm \ref{alg:AnoEdge-G} to greedily find a dense submatrix around \textbf{$K$} strategically picked matrix elements performing equally well in practice.

\subsection{\methodgraph}
\methodgraph, as described in Algorithm \ref{alg:AnoGraph}, maintains an H-CMS to store the edge counts that are reset whenever a new graph arrives. The edges are first processed to update the H-CMS counts. \textsc{\methodgraph-Density} procedure (described below) is then called to find the dense submatrix. \methodgraph\ reports anomaly score as the density of the detected (dense) submatrix; a larger density implies that the graph is more likely to be anomalous.

\textsc{\methodgraph-Density} procedure computes the density of a dense submatrix of matrix $\mathcal{M}$. The current dense submatrix is initialized as matrix $\mathcal{M}$ and then the row (or column) from the current submatrix with minimum row (or column) sum is greedily removed. This process is repeated until $S_{cur}$ and $T_{cur}$ are empty (line 11). The density of the current submatrix is computed at each iteration of the submatrix expansion process and the maximum over all densities is returned (lines 18, 19). 

Algorithm \ref{alg:AnoGraph} is a special case of finding the densest subgraph in a directed graph problem \cite{khuller2009finding} where the directed graph is represented as an adjacency matrix and detecting the densest subgraph essentially means detecting dense submatrix. We now provide a guarantee on the density measure.

\begin{lemma}\label{lemma:2approx} Let $S^*$ and $T^*$ be the optimum densest sub-matrix solution of $\mathcal{M}$ with density $\mathcal{D}(\mathcal{M}, S^*, T^*) = d_{opt}$. Then $\forall u \in S^*$ and $\forall v \in T^*$,
\begin{equation}
    \mathcal{R}(\mathcal{M}, u, T^*) \ge \tau_{S^*}; \quad \mathcal{C}(\mathcal{M}, S^*, v) \ge \tau_{T^*}
\end{equation}
\begin{conditions}
    $\tau_{S^*}$ & $\mathcal{E}(\mathcal{M}, S^*, T^*)\left(1- \sqrt{1 - \frac{1}{|S^*|}}\right)$, \\
    $\tau_{T^*}$ & $\mathcal{E}(\mathcal{M}, S^*, T^*)\left(1- \sqrt{1 - \frac{1}{|T^*|}}\right)$ \\
\end{conditions}
\end{lemma}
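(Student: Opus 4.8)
The plan is to exploit the optimality of $(S^*,T^*)$ directly: since $(S^*,T^*)$ maximizes $\mathcal{D}(\mathcal{M},\cdot,\cdot)$ over all submatrices, it must in particular be at least as dense as the submatrix obtained by deleting a single row $u$ from $S^*$ (resp. a single column $v$ from $T^*$). Writing out that one inequality and simplifying will produce the stated lower bound on each row-sum; the column bound then follows by an identical argument applied to the transpose. Throughout I will abbreviate $E = \mathcal{E}(\mathcal{M},S^*,T^*)$, $p = |S^*|$, $q = |T^*|$, and $r_u = \mathcal{R}(\mathcal{M},u,T^*)$, and I will take the standing convention that ``optimum densest submatrix'' is meant over submatrices with at least one row and one column, and that all entries of $\mathcal{M}$ are nonnegative (so $0 \le r_u \le E$).

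First I would dispose of the degenerate case $p = 1$: then $u$ is the only row of $S^*$, so $r_u = E = \tau_{S^*}$ (since $1 - \sqrt{1 - 1/1} = 1$), and the claimed inequality holds with equality. So assume $p \ge 2$, in which case $S^* \setminus \{u\}$ is a nonempty row set and $(S^* \setminus \{u\}, T^*)$ is an admissible competitor. By optimality of $(S^*,T^*)$,
\[
\frac{E - r_u}{\sqrt{(p-1)q}} \;=\; \mathcal{D}(\mathcal{M}, S^* \setminus \{u\}, T^*) \;\le\; \mathcal{D}(\mathcal{M}, S^*, T^*) \;=\; \frac{E}{\sqrt{pq}}.
\]

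Next I would clear denominators. Multiplying both sides by $\sqrt{(p-1)q}\,\sqrt{pq} > 0$ and cancelling the common positive factor $\sqrt{q}$ gives $(E - r_u)\sqrt{p} \le E\sqrt{p-1}$, i.e. $E - r_u \le E\sqrt{1 - 1/p}$, which rearranges to $r_u \ge E\bigl(1 - \sqrt{1 - 1/p}\bigr) = \tau_{S^*}$. Since $u \in S^*$ was arbitrary, this establishes $\mathcal{R}(\mathcal{M}, u, T^*) \ge \tau_{S^*}$ for all $u \in S^*$. Running the same three steps with rows and columns interchanged (equivalently, on $\mathcal{M}^{\top}$), comparing $(S^*,T^*)$ against $(S^*, T^* \setminus \{v\})$, yields $\mathcal{C}(\mathcal{M}, S^*, v) \ge \tau_{T^*}$ for all $v \in T^*$.

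I do not anticipate a genuine obstacle here; the proof is essentially a one-line optimality comparison followed by routine algebra. The only points requiring care are (i) the edge cases $p = 1$ and $q = 1$, where the bound degenerates to an equality and the deleted-row/column submatrix is not itself admissible, and (ii) checking that the nonlinear $\sqrt{|S_x||T_x|}$ normalization still collapses to a clean bound — which it does, precisely because deleting a row leaves $|T^*| = q$ untouched, so the factor $\sqrt{q}$ cancels and we are left with an inequality in $p$ alone (and symmetrically for columns).
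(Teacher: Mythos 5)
Your proof is correct and is essentially the same argument as the paper's: both compare the density of $(S^*,T^*)$ against the submatrix obtained by deleting a single row (resp.\ column) and simplify, the paper phrasing it as a contradiction and you phrasing it directly. Your explicit handling of the degenerate case $|S^*|=1$ is a small bonus the paper omits, but it does not change the substance of the argument.
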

\begin{proof}
Leveraging the proof from \cite{khuller2009finding}, let's assume that $\exists u \in S^*$ with $\mathcal{R}(\mathcal{M}, u, T^*) < \tau_{S^*}$. Density of submatrix after removing $u = \frac{\mathcal{E}(\mathcal{M}, S^*, T^*) - \mathcal{R}(\mathcal{M}, u, T^*)}{\sqrt{(|S^*-1|)|T^*|}}$ which is greater than $\frac{\mathcal{E}(\mathcal{M}, S^*, T^*) - \tau_{S^*}}{\sqrt{(|S^*-1|)|T^*|}}=d_{opt}$, and that is not possible. Hence, $\mathcal{R}(\mathcal{M}, u, T^*) \ge \tau_{S^*}$. $\mathcal{C}(\mathcal{M}, S^*, v) \ge \tau_{T^*}$ can be proved in a similar manner.
\end{proof}

\begin{theorem}\label{thm:2approx-supp}
\textsc{\methodgraph-Density} procedure in Algorithm \ref{alg:AnoGraph} achieves a 2-approximation guarantee for the densest submatrix problem.
\end{theorem}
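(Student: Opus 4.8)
The plan is to mirror the classical greedy ``peeling'' analysis for densest subgraph (Charikar; \cite{khuller2009finding}), using Lemma~\ref{lemma:2approx} as the key structural fact. Let $(S^*,T^*)$ be an optimal submatrix with $\mathcal{D}(\mathcal{M},S^*,T^*)=d_{opt}$ and write $E^* = \mathcal{E}(\mathcal{M},S^*,T^*)$. Since every entry of $\mathcal{M}$ is nonnegative (counts, possibly scaled by $\alpha\in(0,1)$), the quantities $\mathcal{E}$, $\mathcal{R}$ and $\mathcal{C}$ are monotone under adding rows or columns; this monotonicity is what lets the optimal-solution bounds of Lemma~\ref{lemma:2approx} survive as the current submatrix shrinks toward $(S^*,T^*)$.

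First I would track the execution of \textsc{\methodgraph-Density}: it starts from $(S,T)=([n_b],[n_b])$ and at each iteration deletes whichever of the minimum-row-sum row or the minimum-column-sum column has the smaller sum, recording the running maximum density $d_{max}$. Consider the first iteration at which the procedure is about to delete an index belonging to $S^*\cup T^*$, and let $(S_{cur},T_{cur})$ be the current submatrix just before that deletion; by construction $S^*\subseteq S_{cur}$ and $T^*\subseteq T_{cur}$, so the density of this submatrix has already been offered to $d_{max}$. It therefore suffices to show $\mathcal{D}(\mathcal{M},S_{cur},T_{cur})\ge d_{opt}/2$.

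Next I would case on whether the index being deleted is a row or a column. Suppose it is a row $u^*\in S^*$ (the column case is symmetric, with $\tau_{S^*}$ replaced by $\tau_{T^*}$). Because the greedy chose to delete a row, the minimum row-sum is at most the minimum column-sum over $(S_{cur},T_{cur})$; and by Lemma~\ref{lemma:2approx} together with monotonicity, $\mathcal{R}(\mathcal{M},u^*,T_{cur})\ge \mathcal{R}(\mathcal{M},u^*,T^*)\ge \tau_{S^*}$. Hence every row of $S_{cur}$ sums to at least $\tau_{S^*}$ (its minimum does), and every column of $T_{cur}$ sums to at least $\tau_{S^*}$ as well. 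Summing over rows and over columns gives $\mathcal{E}(\mathcal{M},S_{cur},T_{cur})\ge \max(|S_{cur}|,|T_{cur}|)\,\tau_{S^*}\ge \sqrt{|S_{cur}||T_{cur}|}\,\tau_{S^*}$, i.e. $\mathcal{D}(\mathcal{M},S_{cur},T_{cur})\ge \tau_{S^*}$. Finally I would plug in $\tau_{S^*}=E^*\bigl(1-\sqrt{1-1/|S^*|}\bigr)$ and $E^*=d_{opt}\sqrt{|S^*||T^*|}$ and use the elementary inequality $1-\sqrt{1-1/k}\ge \frac{1}{2k}$ to finish.

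I expect this last step --- converting the per-row bound $\tau_{S^*}$ (resp.\ per-column bound $\tau_{T^*}$) into a clean factor-$2$ guarantee against $d_{opt}=E^*/\sqrt{|S^*||T^*|}$ --- to be the main obstacle, precisely because the $\sqrt{|S_x||T_x|}$ normalization is asymmetric in rows versus columns: the bound above only gives $\tau_{S^*}\ge \frac{1}{2}\,d_{opt}\sqrt{|T^*|/|S^*|}$, so I must combine it with the complementary column bounds $\mathcal{C}(\mathcal{M},S^*,v)\ge \tau_{T^*}$ also supplied by Lemma~\ref{lemma:2approx} in the row-deletion case (and vice versa). This balancing of the two bounds is exactly the technical crux established in \cite{khuller2009finding} for the directed densest-subgraph greedy, of which \textsc{\methodgraph-Density} is the adjacency-matrix instantiation; in the write-up I would either appeal directly to their analysis or reproduce this short balancing argument.
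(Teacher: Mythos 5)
Your proposal follows the same route as the paper's proof: peel greedily, stop at the last iteration at which $S^*\subseteq S_{cur}$ and $T^*\subseteq T_{cur}$, lower-bound the minimum row/column sum via Lemma~\ref{lemma:2approx} and monotonicity, and convert that into a density bound. You have also correctly diagnosed that the one-sided conclusion $\mathcal{D}(\mathcal{M},S_{cur},T_{cur})\ge\tau_{S^*}$ is \emph{not} enough, since it only yields $\tfrac{1}{2}d_{opt}\sqrt{|T^*|/|S^*|}$. The problem is that the step you defer to \cite{khuller2009finding} --- ``the short balancing argument'' --- is precisely the content of the theorem, and your write-up never supplies it. The paper's proof does: it asserts that at the chosen iteration \emph{both} aggregate bounds $\mathcal{E}(\mathcal{M},S_{cur},T_{cur})\ge|S_{cur}|\,\tau_{S^*}$ and $\mathcal{E}(\mathcal{M},S_{cur},T_{cur})\ge|T_{cur}|\,\tau_{T^*}$ hold, multiplies them and takes the geometric mean to get $\mathcal{D}(\mathcal{M},S_{cur},T_{cur})\ge\sqrt{\tau_{S^*}\tau_{T^*}}$, and then substitutes $|S^*|=1/\sin^2\alpha$, $|T^*|=1/\sin^2\beta$ to obtain $\sqrt{\tau_{S^*}\tau_{T^*}}\ge d_{opt}/\bigl(2\cos\tfrac{\alpha}{2}\cos\tfrac{\beta}{2}\bigr)\ge d_{opt}/2$. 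Your elementary inequality $1-\sqrt{1-1/k}\ge\tfrac{1}{2k}$ would replace that trigonometric step more cleanly, since it gives $\sqrt{\tau_{S^*}\tau_{T^*}}\ge E^*/\bigl(2\sqrt{|S^*||T^*|}\bigr)=d_{opt}/2$ directly --- but only once the simultaneous pair of aggregate bounds is in hand.

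Be aware that establishing that pair is where the real care is needed, and your own case analysis shows why: in the row-deletion case you obtain that every row \emph{and} every column of $(S_{cur},T_{cur})$ has sum at least $\tau_{S^*}$, but Lemma~\ref{lemma:2approx} only gives the $\tau_{T^*}$ lower bound for columns $v\in T^*$, not for all of $T_{cur}$, so the claim $\mathcal{E}\ge|T_{cur}|\,\tau_{T^*}$ does not follow from the comparison ``min row-sum $<$ min column-sum'' alone. A complete write-up must either argue this simultaneity at a suitably chosen iteration or reproduce the corresponding argument from \cite{khuller2009finding} explicitly; leaving it as ``appeal to their analysis'' leaves the factor-$2$ claim unproved. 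Everything up to that point in your proposal is correct and matches the paper.
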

\begin{proof} Leveraging the proof from \cite{khuller2009finding}, we greedily remove the row (or column) with minimum row-sum (or column-sum). At some iteration of the greedy process, $\;\forall u \in S_{cur}; \forall v \in T_{cur}$, $\;\mathcal{R}(\mathcal{M}, u, T_{cur}) \ge \tau_{S^*}$ and $\mathcal{C}(\mathcal{M}, S_{cur}, v) \ge \tau_{T^*}$. Therefore, $\mathcal{E}(\mathcal{M}, S_{cur}, T_{cur}) \ge |S_{cur}|\tau_{S^*}$ and $\mathcal{E}(\mathcal{M}, S_{cur}, T_{cur}) \ge |T_{cur}|\tau_{T^*}$. This implies that the density $\mathcal{D}(\mathcal{M}, S_{cur}, T_{cur}) \ge \sqrt{\frac{|S_{cur}|\tau_{S^*}|T_{cur}|\tau_{T^*}}{|S_{cur}||T_{cur}|}} = \sqrt{\tau_{S^*}\tau_{T^*}}$. Putting values of $\tau_{S^*}$ and $\tau_{T^*}$ from Lemma \ref{lemma:2approx}, and setting $|S^*| = \frac{1}{\sin^2\alpha}$, $|T^*| = \frac{1}{\sin^2\beta}$, we get $\mathcal{D}(\mathcal{M}, S_{cur}, T_{cur}) \ge \frac{\mathcal{E}(\mathcal{M, S^*, T^*})}{\sqrt{|S^*||T^*|}}\frac{\sqrt{(1-\cos\alpha)(1-\cos\beta)}}{\sin\alpha\sin\beta} \ge \frac{d_{opt}}{2\cos\frac{\alpha}{2}\cos\frac{\beta}{2}} \ge \frac{d_{opt}}{2}$.
\end{proof}

\begin{algorithm}[!htb]
\caption{\methodgraph: Streaming Anomaly Graph Scoring}
\label{alg:AnoGraph}
    \DontPrintSemicolon
    \SetNoFillComment
    \SetKwFunction{algo}{algo}\SetKwFunction{procf}{\textsc{\methodgraph-Density}}\SetKwFunction{procs}{\methodgraph}

    \KwInput{Stream $\mathscr{G}$ of edges over time}
    \KwOutput{Anomaly score per graph}
    
    \setcounter{AlgoLine}{0}
    \SetKwProg{myproc}{Procedure}{}{}
    \myproc{\procs{$\mathscr{G}$}}{
    \tcc{H-CMS data structure}
    Initialize H-CMS matrix $\mathcal{M}$ for graph edges count \\
    \While{new graph $G \in \mathscr{G}$ is received} {
        \tcc{reset count}
        Reset H-CMS matrix $\mathcal{M}$ for graph $G$ \\
        \For{edge $e = (u, v, w, t) \in G$}{
            Update H-CMS matrix $\mathcal{M}$ for edge $(u, v)$ with value $w$ \tcp*{update count}
        }
        \tcc{anomaly score}
        \textbf{output} $score(G) \leftarrow$ \methodgraph\textsc{-Density}($\mathcal{M}$) 
    }}
    
    \myproc{\procf{$\mathcal{M}$}}{
    $S_{cur} \leftarrow [n_b]; \enspace T_{cur} \leftarrow [n_b]$ \tcp*{initialize to size of $\mathcal{M}$}
    $d_{max} \leftarrow \mathcal{D}(\mathcal{M}, S_{cur}, T_{cur})$ \;
    \While{$S_{cur} \neq \emptyset \enspace \vee \enspace T_{cur} \neq \emptyset$} {
        \tcc{submatrix min row-sum index}
        $u_p \leftarrow \operatorname*{argmin}_{s_p \in S_{cur}} \mathcal{R}(\mathcal{M}, s_p, T_{cur})$ \\
        \tcc{submatrix min column-sum index}
        $v_p \leftarrow \operatorname*{argmin}_{t_p \in T_{cur}} \mathcal{C}(\mathcal{M}, S_{cur}, t_p)$ \\
        \If{$\mathcal{R}(\mathcal{M}, u_p, T_{cur}) < \mathcal{C}(\mathcal{M}, S_{cur}, v_p)$} {
            $S_{cur} \leftarrow S_{cur}/\{u_p\}$ \tcp*{remove row}
        } 
        \Else {
             $T_{cur} \leftarrow T_{cur}/\{v_p\}$ \tcp*{remove column}
        }
        $d_{max} \leftarrow max(d_{max}, \mathcal{D}(\mathcal{M}, S_{cur}, T_{cur}))$ \;
    }
    \KwRet $d_{max}$ \tcp*{dense submatrix density}}
\end{algorithm}

\begin{proposition}\label{thm:AnoGraph-time}
Time complexity of Algorithm \ref{alg:AnoGraph} is $O(|\mathscr{G}|*n_r*n_b^2 + |\mathscr{E}|*n_r)$. Memory complexity of Algorithm \ref{alg:AnoGraph} is $O(n_r*n_b^2)$.
\end{proposition}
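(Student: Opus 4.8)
The plan is to reuse the accounting scheme of Proposition~\ref{thm:AnoEdge-G-time}, separating the \emph{per-graph} work (resetting the sketch and running the density routine) from the \emph{per-edge} work (updating the sketch), and then summing over the stream. First I would bound the cost of the \textsc{\methodgraph-Density} procedure on a single $n_b \times n_b$ matrix $\mathcal{M}$. Forming the initial row-sums and column-sums of $\mathcal{M}$ costs $O(n_b^2)$; thereafter the procedure deletes exactly one row or column per iteration, so it terminates after at most $2n_b-1$ iterations. Each iteration (a) scans the surviving row-sums for the minimum in $O(n_b)$, (b) scans the surviving column-sums for the minimum in $O(n_b)$, (c) updates the affected column-sums (or row-sums) after the deletion in $O(n_b)$, and (d) updates the running density in $O(1)$ by subtracting the removed line-sum and decrementing the appropriate dimension count --- the same incremental bookkeeping justified for \textsc{Edge-Submatrix-Density}. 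Hence each iteration is $O(n_b)$, the routine is $O(n_b^2)$ per matrix, and $O(n_r n_b^2)$ across the $n_r$ sketch matrices.

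Next I would account for the outer loop of Algorithm~\ref{alg:AnoGraph}. Initializing and resetting the H-CMS each costs $O(n_r n_b^2)$, and a reset is triggered once per incoming graph, contributing $O(|\mathscr{G}| n_r n_b^2)$ overall. The update step touches one entry in each of the $n_r$ matrices, so it is $O(n_r)$ per edge; summed over every edge of every graph this is $O(|\mathscr{E}| n_r)$. The anomaly-score computation per graph is the density routine analyzed above, $O(n_r n_b^2)$, again contributing $O(|\mathscr{G}| n_r n_b^2)$. Adding these terms yields total time $O(|\mathscr{G}| n_r n_b^2 + |\mathscr{E}| n_r)$, as claimed.

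For memory, the H-CMS itself occupies $O(n_r n_b^2)$. The density routine needs, for the matrix being processed, only constant-size scalars for the running submatrix-sum $\mathcal{E}$ and the dimension counts $|S_{cur}|,|T_{cur}|$, together with $O(n_b)$-sized arrays flagging deleted rows/columns and storing the current row- and column-sums; even maintaining these for all $n_r$ matrices is $O(n_r n_b)$, dominated by the sketch. Thus the memory complexity is $O(n_r n_b^2)$. The only real subtlety --- and the one point worth stating carefully --- is step (d): one must keep $\mathcal{E}(\mathcal{M}, S_{cur}, T_{cur})$ and the cardinalities as scalars updated by the removed line-sum, so that $\mathcal{D}$ is recomputed in $O(1)$ rather than by re-summing the submatrix; since this device is already established in Proposition~\ref{thm:AnoEdge-G-time}, essentially no new argument is required beyond the straightforward summation above.
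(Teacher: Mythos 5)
Your proposal is correct and follows essentially the same route as the paper's proof: bound \textsc{\methodgraph-Density} by $O(n_b^2)$ via at most $2n_b-2$ deletions at $O(n_b)$ each, then charge the reset and density computation per graph and the sketch update per edge, and observe that the auxiliary $O(n_r n_b)$ arrays are dominated by the $O(n_r n_b^2)$ sketch. The only cosmetic difference is that you justify the per-iteration density update as $O(1)$ by incremental bookkeeping (as in Proposition~\ref{thm:AnoEdge-G-time}) while the paper allows $O(n_b)$ there; both yield the same $O(n_b)$ per-iteration and identical final bounds.
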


\begin{proof}
Procedure \textsc{\methodgraph-Density} iteratively removes row (or column) with minimum row-sum (or column-sum). Maximum number of rows and columns that can be removed is $n_b + n_b - 2$. We keep $n_b$-sized arrays to store the current submatrix rows and columns, and row-sums and column-sums. At each iteration, selecting the row (or column) with minimum row-sum (or column-sum) takes $O(n_b)$ time, and updating the dependent row-sums (or column-sums) also $O(n_b)$ time. Density is calculated in $O(n_b)$ time based on the current submatrix row-sum and column-sum. Each iteration takes $O(n_b + n_b + n_b) = O(n_b)$ time. Hence, the total time complexity of \textsc{\methodgraph-Density} procedure is $O((n_b + n_b - 2)*n_b) = O(n_b^2)$.

Initializing the H-CMS data structure takes $O(n_r*n_b^2)$ time. When a graph arrives, \methodgraph: (a) resets counts that take $O(n_r*n_b^2)$ time; (b) updates counts taking $O(1)$ time for every edge update; (c) computes submatrix density that follows from procedure \textsc{\methodgraph-Density} and takes $O(n_b^2)$ time. Each of these operations is applied for $n_r$ matrices. Therefore, the total time complexity of Algorithm \ref{alg:AnoGraph} is $O(n_r*n_b^2 + |\mathscr{G}|*n_r*n_b^2 + |\mathscr{E}|*n_r + |\mathscr{G}|*n_r*n_b^2) = O(|\mathscr{G}|*n_r*n_b^2 + |\mathscr{E}|*n_r)$, where $|\mathscr{E}|$ is the total number of edges over graphs $\mathscr{G}$.

For procedure \textsc{\methodgraph-Density}, we keep $n_b$-sized array to flag rows and columns that are part of the current submatrix, and to maintain row-sums and column-sums. Hence, memory complexity of \textsc{\methodgraph-Density} procedure is $O(4*n_b) = O(n_b)$. 

H-CMS data structure requires $O(n_r*n_b^2)$ memory. Density computation relies on \textsc{\methodgraph-Density} procedure, and takes $O(n_b)$ memory. Therefore, the total memory complexity of Algorithm \ref{alg:AnoGraph} is $O(n_r*n_b^2)$.
\end{proof}

\subsection{\methodgraph-K}
Similar to \methodgraph, \methodgraph-K maintains an H-CMS which is reset whenever a new graph arrives. It uses the \textsc{\methodgraph-K-Density} procedure (described below) to find the dense submatrix. \methodgraph-K is summarised in Algorithm \ref{alg:AnoGraph-K}.

\textsc{\methodgraph-K-Density} computes the density of a dense submatrix of matrix $\mathcal{M}$. The intuition comes from the heuristic that the matrix elements with a higher value are more likely to be part of a dense submatrix. Hence, the approach considers $K$ largest elements of the matrix $\mathcal{M}$ and calls \textsc{Edge-Submatrix-Density} from Algorithm \ref{alg:AnoEdge-G} to get the dense submatrix around each of those elements (line 13). The maximum density over the considered $K$ dense submatrices is returned.

\begin{algorithm}[!htb]
\caption{\methodgraph-K: Streaming Anomaly Graph Scoring}
\label{alg:AnoGraph-K}
    \DontPrintSemicolon
    \SetNoFillComment
    \SetKwFunction{algo}{algo}\SetKwFunction{procf}{\methodgraph-K-Density}\SetKwFunction{procs}{\textsc{\methodgraph-K}}
    
    \KwInput{Stream $\mathscr{G}$ of edges over time}
    \KwOutput{Anomaly score per graph}
    
    \setcounter{AlgoLine}{0}
    \SetKwProg{myproc}{Procedure}{}{}
    \myproc{\procs{$\mathscr{G}, K$}}{
    \tcc{H-CMS data structure}
    Initialize H-CMS matrix $\mathcal{M}$ for graph edges count \\
    \While{new graph $G \in \mathscr{G}$ is received} {
        \tcc{reset count}
        Reset H-CMS matrix $\mathcal{M}$ for graph $G$ \\
        \For{edge $e = (u, v, w, t) \in G$}{
            Update H-CMS matrix $\mathcal{M}$ for edge $(u, v)$ with value $w$ \tcp*{update count}
        }
        \tcc{anomaly score}
        \textbf{output} $score(G) \leftarrow$ \methodgraph\textsc{-K-Density}($\mathcal{M}, K$) 
    }}
    
    \myproc{\procf{$\mathcal{M}$, $K$}}{
    $B \leftarrow [n_b] \times [n_b]$ \tcp*{set of all indices}
    $d_{max} \leftarrow 0$ \;
    \For{$j\gets1$ ... $K$}{
        \tcc{pick the max element}
        $u_p, v_p \leftarrow \operatorname*{argmax}_{(s_p, t_p) \in B} \mathcal{M}[s_p][t_p]$ \\
        $d_{max} \leftarrow max(d_{max}, \textsc{Edge-Submatrix-Density}({\mathcal{M}, u_p, v_p}))$ \;
        $B \leftarrow B/\{(u_p, v_p)\}$ \tcp*{remove max element index}
    }
    \KwRet $d_{max}$ \tcp*{dense submatrix density}}
\end{algorithm}

\begin{proposition}\label{thm:AnoGraph-K-time}
Time complexity of Algorithm \ref{alg:AnoGraph-K} is $O(|\mathscr{G}|*K*n_r*n_b^2 + |\mathscr{E}|*n_r)$. Memory complexity of Algorithm \ref{alg:AnoGraph-K} is $O(n_r*n_b^2)$.
\end{proposition}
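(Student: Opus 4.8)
The plan is to mirror the analysis of Proposition~\ref{thm:AnoGraph-time}, since \methodgraph-K has the same outer loop as \methodgraph\ and differs only in how the dense submatrix is located. First I would bound the cost of the \textsc{\methodgraph-K-Density} procedure. Maintaining the candidate index set $B$ as an $n_b \times n_b$ boolean array, each of the $K$ iterations performs three operations: (i) scan $B$ to pick the largest remaining element of $\mathcal{M}$, costing $O(n_b^2)$; (ii) call \textsc{Edge-Submatrix-Density} from Algorithm~\ref{alg:AnoEdge-G}, which Proposition~\ref{thm:AnoEdge-G-time} already shows is $O(n_b^2)$; and (iii) flag the picked index as removed, costing $O(1)$. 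Hence each iteration is $O(n_b^2)$, the whole procedure is $O(K*n_b^2)$ for one sketch matrix, and $O(K*n_r*n_b^2)$ when applied to all $n_r$ hash matrices.

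Next I would assemble the per-graph cost exactly as in Proposition~\ref{thm:AnoGraph-time}: resetting the H-CMS is $O(n_r*n_b^2)$; processing the edges of a graph costs $O(n_r)$ per edge (an $O(1)$ bucket update in each of the $n_r$ matrices), which sums to $O(|\mathscr{E}|*n_r)$ over the stream, where $|\mathscr{E}|$ denotes the total number of edges across all graphs in $\mathscr{G}$; and the anomaly-score step is the $O(K*n_r*n_b^2)$ bound just established. Adding the one-time $O(n_r*n_b^2)$ initialization and summing over the $|\mathscr{G}|$ graphs gives $O(n_r*n_b^2 + |\mathscr{G}|*n_r*n_b^2 + |\mathscr{E}|*n_r + |\mathscr{G}|*K*n_r*n_b^2) = O(|\mathscr{G}|*K*n_r*n_b^2 + |\mathscr{E}|*n_r)$, as claimed.

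For memory the reasoning is unchanged from Proposition~\ref{thm:AnoGraph-time}: the H-CMS needs $O(n_r*n_b^2)$; the index set $B$ inside \textsc{\methodgraph-K-Density} is a single $n_b \times n_b$ structure, i.e.\ $O(n_b^2)$; and each call to \textsc{Edge-Submatrix-Density} reuses $O(n_b)$ scratch arrays per matrix, i.e.\ $O(n_r*n_b)$. All of these are dominated by $O(n_r*n_b^2)$.

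I do not expect a genuine obstacle: the argument is pure bookkeeping. The only point needing slight care is verifying that extracting the $K$ largest elements of $\mathcal{M}$ by repeated linear scans does not dominate; since each scan is $O(n_b^2)$ and there are $K$ of them, this contributes $O(K*n_b^2)$, the same order as the $K$ calls to \textsc{Edge-Submatrix-Density}, so no priority-queue machinery is needed and the stated bound holds up to constants.
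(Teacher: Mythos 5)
Your proof is correct and follows essentially the same route as the paper's: bound \textsc{AnoGraph-K-Density} by $K$ calls to \textsc{Edge-Submatrix-Density}, then assemble the per-graph reset, per-edge update, and per-graph scoring costs exactly as in Proposition~\ref{thm:AnoGraph-time}. Your explicit accounting of the $O(n_b^2)$ argmax scans over $B$ and of the $O(n_b^2)$ storage for $B$ is slightly more careful than the paper, which simply states that the relevant operations ``directly follow'' from \textsc{Edge-Submatrix-Density}, but this refinement does not change the decomposition or the final bounds.
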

\begin{proof}
Relevant operations in Procedure \textsc{\methodgraph-K-Density} directly follow from \textsc{Edge-Submatrix-Density} procedure, which has $O(n_b^2)$ time complexity. \textsc{Edge-Submatrix-Density} procedure is called $K$ times, therefore, the total time complexity of \textsc{\methodgraph-K-Density} procedure is $O(K*n_b^2)$.

For Algorithm \ref{alg:AnoGraph-K}, we initialize an H-CMS data structure that takes $O(n_r*n_b^2)$ time. When a graph arrives, \methodgraph-K: (a) resets counts that take $O(n_r*n_b^2)$ time; (b) updates counts taking $O(1)$ time for every edge update; (c) computes submatrix density that follows from procedure \textsc{\methodgraph-K-Density} and takes $O(K*n_b^2)$ time. Each of these operations is applied for $n_r$ matrices. Therefore, the total time complexity of Algorithm \ref{alg:AnoGraph-K} is $O(n_r*n_b^2 + |\mathscr{G}|*K*n_r*n_b^2 + |\mathscr{E}|*n_r + |\mathscr{G}|*n_r*n_b^2) = O(|\mathscr{G}|*K*n_r*n_b^2 + |\mathscr{E}|*n_r)$, where $|\mathscr{E}|$ is the total number of edges over graphs $\mathscr{G}$.

The density of $K$ submatrices is computed independently, and the memory complexity of Algorithm procedure \textsc{\methodgraph-K-Density} is the same as the memory complexity of \textsc{Edge-Submatrix-Density} procedure i.e. $O(n_b)$.

Maintaining the H-CMS data structure requires $O(n_r*n_b^2)$ memory. Density computation relies on \textsc{\methodgraph-K-Density} procedure, and it requires $O(n_b)$ memory. Therefore, the total memory complexity of Algorithm \ref{alg:AnoGraph-K} is $O(n_r*n_b^2)$.
\end{proof}

\FloatBarrier

\section{Experiments}
\label{sec:exp}

In this section, we evaluate the performance of our approaches as compared to all baselines discussed in Table \ref{tab:comparison}. 

Table~\ref{tab:Experiment.Dataset} shows the statistical summary of the four real-world datasets that we use: \emph{DARPA} \cite{lippmann1999results} and \emph{ISCX-IDS2012} \cite{shiravi2012toward} are popular datasets for graph anomaly detection used by baselines to evaluate their algorithms; \cite{ring2019survey} surveys more than $30$ datasets and recommends to use the newer \emph{CIC-IDS2018} and \emph{CIC-DDoS2019} datasets \cite{sharafaldin2018toward,sharafaldin2019developing} containing modern attack scenarios. $|E|$ corresponds to the total number of edge records, $|V|$ and $|T|$ are the number of unique nodes and unique timestamps, respectively. All edge (or graph)-based methods output an anomaly score per edge (or graph), a higher score implying more anomalousness. Similar to baseline papers, we report the Area under the ROC curve (AUC) and the running time. Unless explicitly specified, all experiments including those on the baselines are repeated $5$ times and the mean is reported. We aim to answer the following questions:
\begin{enumerate}[label=\textbf{Q\arabic*.}]
\item {\bfseries Edge Anomalies:} How accurately do \methodedge-G and \methodedge-L detect edge anomalies compared to baselines? Are they fast and scalable?
\item {\bfseries Graph Anomalies:} How accurately do \methodgraph\ and \methodgraph-K detect graph anomalies i.e. anomalous graph snapshots? Are they fast and scalable?
\end{enumerate}

\begin{table}[!htb]
		\centering
		\caption{Statistics of the datasets.}
		\begin{tabular}{lrrrrr}
			\toprule
			\textbf{Dataset} & $|V|$           & $|E|$             & $|T|$  \\
			\midrule
			DARPA               & \numprint{25525}  & \numprint{4554344}  & \numprint{46567}  \\
			ISCX-IDS2012         & \numprint{30917}  & \numprint{1097070}  & \numprint{165043} \\
			CIC-IDS2018      & \numprint{33176}  & \numprint{7948748}  & \numprint{38478} \\
			CIC-DDoS2019         & \numprint{1290}   & \numprint{20364525} & \numprint{12224} \\
			\bottomrule
		\end{tabular}
		\label{tab:Experiment.Dataset}
	\end{table}

\paragraph{Experimental Setup}\label{sec:setup}
	All experiments are carried out on a $2.4 GHz$ Intel Core $i9$ processor, $32 GB$ RAM, running OS $X$ $10.15.3$. For our approach, we keep $n_r=2$ and $n_b=32$ to have a fair comparison to MIDAS which uses ${n_b}^2=1024$ buckets. Temporal decay factor $\alpha=0.9$ for Algorithms $\ref{alg:AnoEdge-G}$ and $\ref{alg:AnoEdge-L}$. We keep $K=5$ for Algorithm $\ref{alg:AnoGraph-K}$. AUC for graph anomalies is shown with edge thresholds as $50$ for \emph{DARPA} and $100$ for other datasets. Time window is taken as $30$ minutes for \emph{DARPA} and $60$ minutes for other datasets.
 
\paragraph{Baselines}\label{sec:baselines}

We use open-source implementations of \densestream\ \cite{shin2017densealert} (Java), \sedanspot\ \cite{eswaran2018sedanspot} (C++), MIDAS-R \cite{bhatia2020midas} (C++), PENminer \cite{belth2020mining} (Python), F-FADE \cite{chang2021f} (Python), \densealert\ \cite{shin2017densealert} (Java), and \anomrank\ \cite{yoon2019fast} (C++) provided by the authors, following parameter settings as suggested in the original paper. For \spotlight\ \cite{eswaran2018spotlight}, we used open-sourced implementations of Random Cut Forest \cite{awsrando88:online} and Carter Wegman hashing \cite{carter1979universal}.
    
\subparagraph {Edge Anomalies}
    
    \begin{enumerate}
    \item {{\sedanspot:}}
		\texttt{sample\_size} $=10000$,
		\texttt{num\_walk} $=50$,
		\texttt{restart\_prob} $0.15$

    \item{{MIDAS:}} The size of CMSs is 2 rows by 1024 columns for all the tests.
	For MIDAS-R, the decay factor $\alpha=0.6$.

	\item{{PENminer:}}
		\texttt{ws} $=1$,
		\texttt{ms} $=1$,
		\texttt{view} = \texttt{id},
		\texttt{alpha} $=1$,
		\texttt{beta} $=1$,
		\texttt{gamma} $=1$

	\item{{\densestream:}} We keep default parameters, i.e., order $=3$.
	
	\item{{F-FADE:}}
		\texttt{embedding\_size} $=200$,
		\texttt{W\_upd} $=720$,
		\texttt{T\_th} $=120$,
		\texttt{alpha} $=0.999$,
		\texttt{M} $=100$

	For \texttt{t\_setup}, we always use the timestamp value at the $10^{th}$ percentile of the dataset.
	\end{enumerate}
	
\subparagraph {Graph Anomalies}
	
	\begin{enumerate}

	\item{{\spotlight:}}
	\texttt{K} $=50$,
	\texttt{p} $=0.2$,
	\texttt{q} $=0.2$

	\item{{\densealert:}} We keep default parameters, i.e., order $=3$ and window=$60$.
	
	\item{{\anomrank:}} We keep default parameters, i.e., damping factor c $= 0.5$, and L1 changes of node score vectors threshold epsilon $= 10^{-3}$.
	We keep ${1/4}^{th}$ number of graphs for initializing mean/variance as mentioned in the respective paper.
	
    \end{enumerate}   

\subsection{Edge Anomalies}

{\bfseries Accuracy:} Table \ref{tab:edgeanograph} shows the AUC of edge anomaly detection baselines, \methodedge-G, and \methodedge-L. We report a single value for \densestream\ and PENminer because these are non-randomized methods. PENminer is unable to finish on the large \emph{CIC-DDoS2019} within 24 hours; thus, that result is not reported. \sedanspot\ uses personalized PageRank to detect anomalies and is not always able to detect anomalous edges occurring in dense block patterns while PENminer is unable to detect structural anomalies. Among the baselines, MIDAS-R is the most accurate, however, it performs worse when there is a large number of timestamps as in \emph{ISCX-IDS2012}. Note that \methodedge-G and \methodedge-L outperform all baselines on all datasets.

\begin{table*}[!htb]
\centering
\caption{AUC and Running Time when detecting edge anomalies. Averaged over $5$ runs.}
\label{tab:edgeanograph}
\resizebox{\columnwidth}{!}{
\begin{tabular}{@{}lrrrrrrr}
\toprule
  Dataset & \densestream & \sedanspot & MIDAS-R & PENminer & F-FADE & \textbf{\methodedge-G} & \textbf{\methodedge-L} \\ 
\midrule
\multirow{2}{*}{DARPA} & $0.532$ & $0.647 \pm 0.006$ & $0.953 \pm 0.002$ &  0.872 & $0.919 \pm 0.005$ & $\mathbf 0.970 \pm 0.001$ & $ 0.964 \pm 0.001$ \\ 
& 57.7s & 129.1s  & 1.4s & 5.21 hrs & 317.8s & 28.7s & 6.1s \\ \midrule
\multirow{2}{*}{ISCX-IDS2012} & $0.551$ & $0.581 \pm 0.001$ & $0.820 \pm 0.050$ & 0.530 & $0.533 \pm 0.020$ & $0.954 \pm 0.000$ & $\mathbf 0.957 \pm 0.003$ \\ 
 & 138.6s & 19.5s & 5.3s & 1.3 hrs & 137.4s & 7.8s & 0.7s \\  \midrule
\multirow{2}{*}{CIC-IDS2018} & $0.756$ & $0.325 \pm 0.037$ & $0.919 \pm 0.019$ &  0.821 & $0.607 \pm 0.001$ & $\mathbf 0.963 \pm 0.014$ & $0.927 \pm 0.035$ \\ 
 & 3.3 hours & 209.6s & 1.1s & 10 hrs & 279.7s & 58.4s & 10.2s \\  \midrule
\multirow{2}{*}{CIC-DDoS2019} & $0.263$ & $0.567 \pm 0.004$ & $0.983 \pm 0.003$ &  --- & $0.717 \pm 0.041$ & $0.997 \pm 0.001$ & $\mathbf 0.998 \pm 0.001$ \\
 & 265.6s & 697.6s & 2.2s & > 24 hrs & 18.7s & 123.3s & 17.8s \\
\bottomrule
\end{tabular}
}
\end{table*}

{\bfseries Running Time:} Table \ref{tab:edgeanograph} shows the running time (excluding I/O) and real-time performance of \methodedge-G and \methodedge-L. Since \methodedge-L maintains a local dense submatrix, it is faster than \methodedge-G. \densestream\ maintains dense blocks incrementally for every coming tuple and updates dense subtensors when it meets an updating condition, limiting the detection speed. \sedanspot\ requires several subprocesses (hashing, random-walking, reordering, sampling, etc), PENminer and F-FADE need to actively extract patterns for every graph update, resulting in a large computation time. When there is a large number of timestamps like in \emph{ISCX-IDS2012}, MIDAS-R performs slower than \methodedge-L which is the fastest.

{\bfseries AUC vs Running Time:} Figure \ref{fig:edgea} plots accuracy (AUC) vs. running time (log scale, in seconds, excluding I/O) on \emph{ISCX-IDS2012} dataset. \methodedge-G and \methodedge-L achieve much higher accuracy compared to all baselines, while also running significantly faster.

\begin{figure}[!htb]
  \centering
    \includegraphics[width=0.7\textwidth]{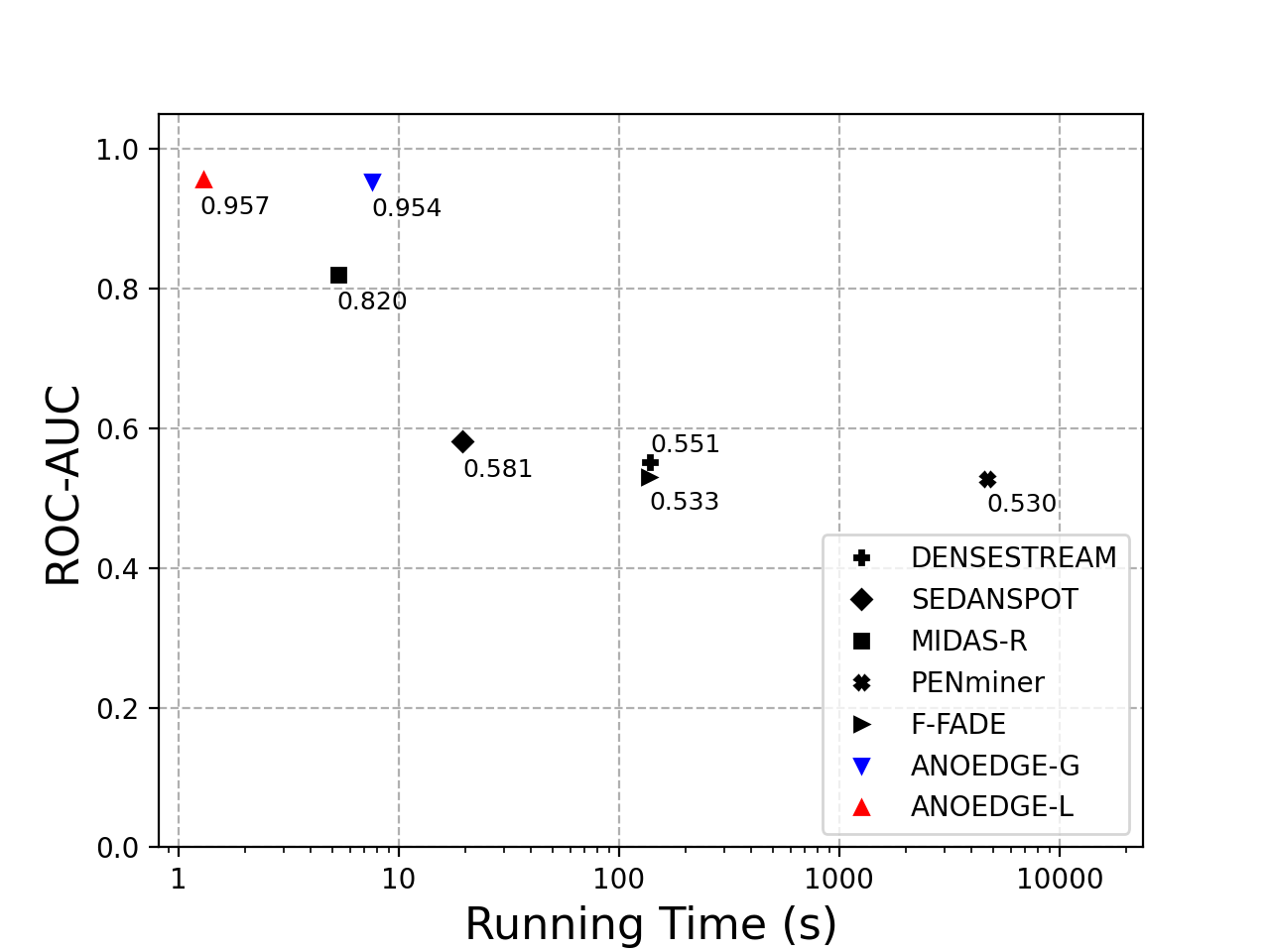}
   \caption{AUC vs running time when detecting edge anomalies on \emph{ISCX-IDS2012}.}
   \label{fig:edgea}
\end{figure}

{\bfseries Scalability:} Figures \ref{fig:edgeb} and \ref{fig:edgec} plot the running time with increasing number of hash functions and edges respectively, on the \emph{ISCX-IDS2012} dataset. This demonstrates the scalability of \methodedge-G and \methodedge-L.

\begin{figure}[!htb]
  \centering
    \includegraphics[width=0.7\textwidth]{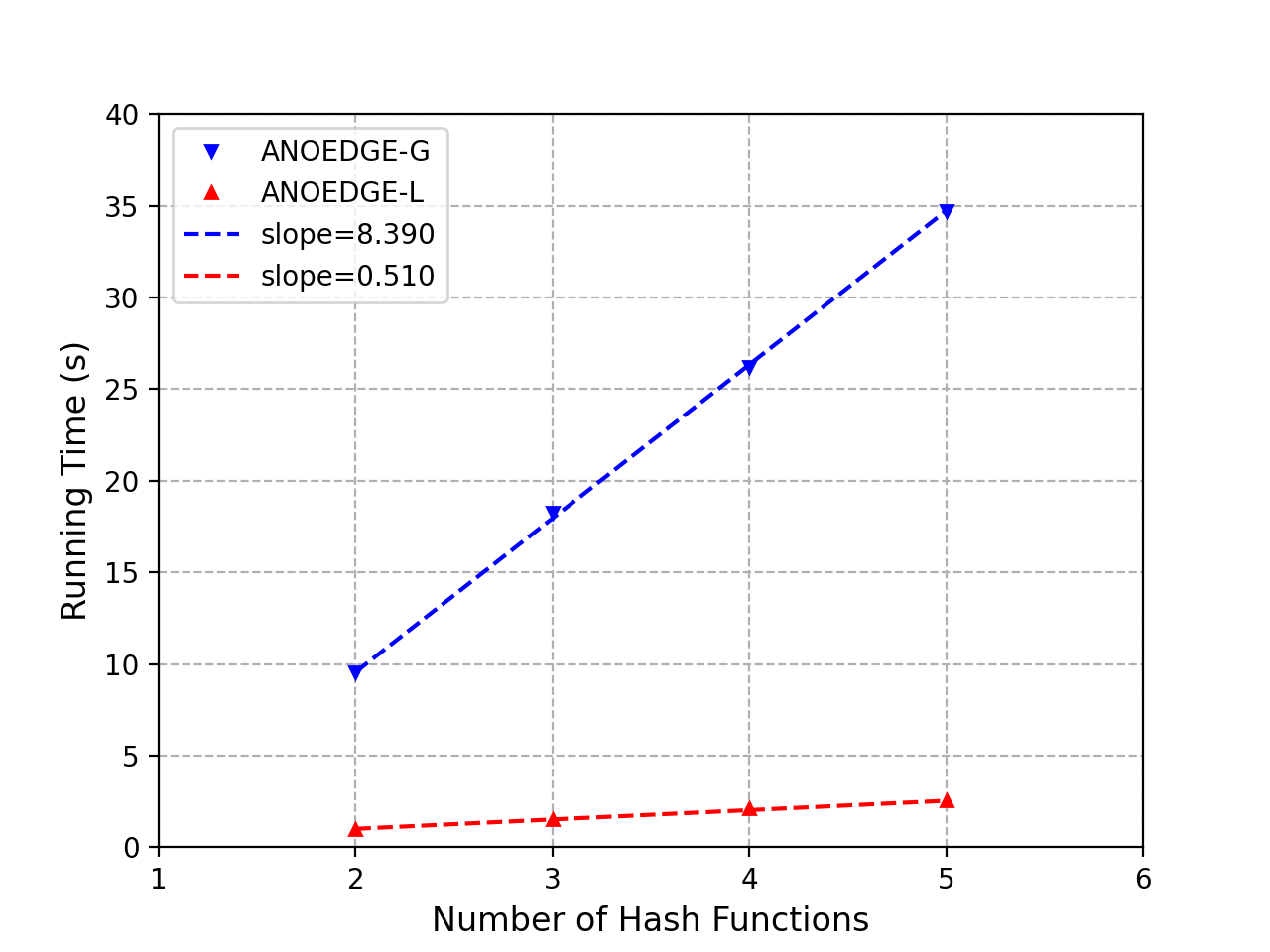}
   \caption{Linear scalability with number of hash functions on \emph{ISCX-IDS2012}.}
   \label{fig:edgeb}
\end{figure}

\begin{figure}[!htb]
  \centering
    \includegraphics[width=0.7\textwidth]{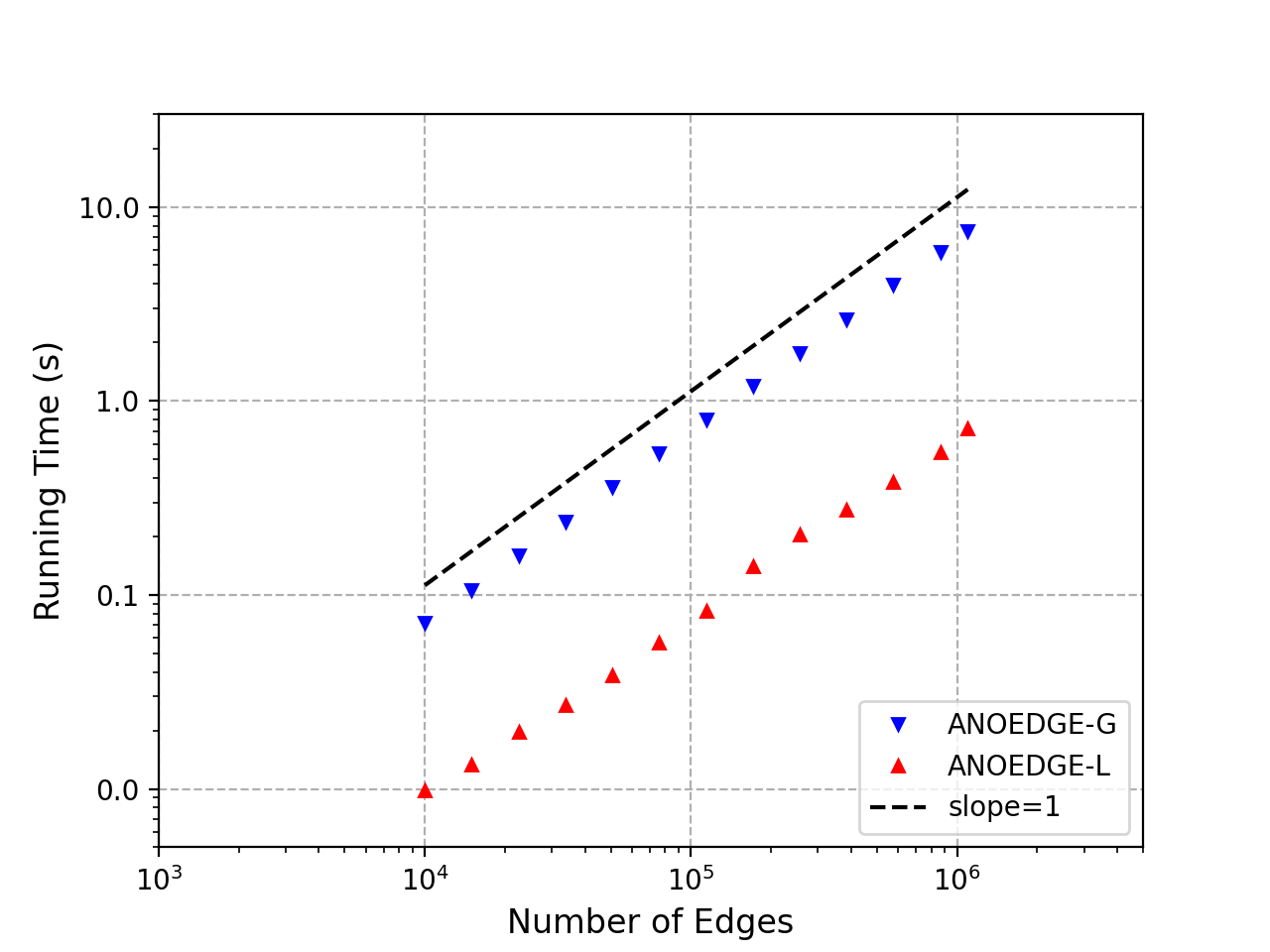}
   \caption{Linear scalability with number of edges on \emph{ISCX-IDS2012}.}
   \label{fig:edgec}
\end{figure}

\FloatBarrier

\subsection{Graph Anomalies}

{\bfseries Accuracy:} Table \ref{tab:graph} shows the AUC of graph anomaly detection baselines, \methodgraph, and \methodgraph-K. We report a single value for \densealert\ and \anomrank\ because these are non-randomized methods. \anomrank\ is not meant for a streaming scenario, therefore the low AUC. \densealert\ can estimate only one subtensor at a time and \spotlight\ uses a randomized approach without any actual search for dense subgraphs. Note that \methodgraph\ and \methodgraph-K outperform all baselines on all datasets while using a simple sketch data structure to incorporate dense subgraph search as opposed to the baselines. We provide results with an additional set of parameters in Table \ref{tab:graphaucsupp}.

\begin{table*}[!htb]
\centering
\caption{AUC and Running Time when detecting graph anomalies. Averaged over $5$ runs.}
\label{tab:graph}
\resizebox{\columnwidth}{!}{
\begin{tabular}{@{}lrrrrr}
\toprule
Dataset & \densealert & \spotlight & \anomrank & \textbf{\methodgraph} & \textbf{\methodgraph-K} \\ 
\midrule
\multirow{2}{*}{DARPA} & $0.833$ & $0.728 \pm 0.016$  & $0.754$ & $0.835 \pm 0.002$  & $\mathbf 0.839 \pm 0.002$ \\ 
& 49.3s & 88.5s & 3.7s & 0.3s & 0.3s \\ \midrule
\multirow{2}{*}{ISCX-IDS2012} & $0.906$ & $0.872 \pm 0.019$ & $0.194$ & $\mathbf0.950 \pm 0.001$ & $\mathbf 0.950 \pm 0.001$ \\ 
& 6.4s & 21.1s & 5.2s & 0.5s & 0.5s \\ \midrule
\multirow{2}{*}{CIC-IDS2018}  & $0.950$  & $0.835 \pm 0.022$ & $0.783$ & $\mathbf 0.957 \pm 0.000$ & $\mathbf 0.957 \pm 0.000$ \\
& 67.9s & 149.0s & 7.0s & 0.2s & 0.3s \\ \midrule
\multirow{2}{*}{CIC-DDoS2019} & $0.764$ & $0.468 \pm 0.048$ & $0.241$ & $0.946 \pm 0.002$ & $\mathbf 0.948 \pm 0.002$ \\   
& 1065.0s & 289.7s & 0.2s & 0.4s & 0.4s \\ 
\bottomrule
\end{tabular}
}
\end{table*}

{\bfseries Running Time:} Table \ref{tab:graph} shows the running time (excluding I/O). \densealert\ has $O(|\mathscr{E}|)$ worse case time complexity (per incoming edge). \anomrank\ needs to compute a global PageRank, which does not scale for stream processing. Note that \methodgraph\ and \methodgraph-K run much faster than all baselines.

{\bfseries AUC vs Running Time:} Figure \ref{fig:grapha} plots accuracy (AUC) vs. running time (log scale, in seconds, excluding I/O) on the \emph{CIC-DDoS2019} dataset. \methodgraph\ and \methodgraph-K achieve much higher accuracy compared to the baselines, while also running significantly faster.

\begin{figure}[!htb]
  \centering
    \includegraphics[width=0.7\textwidth]{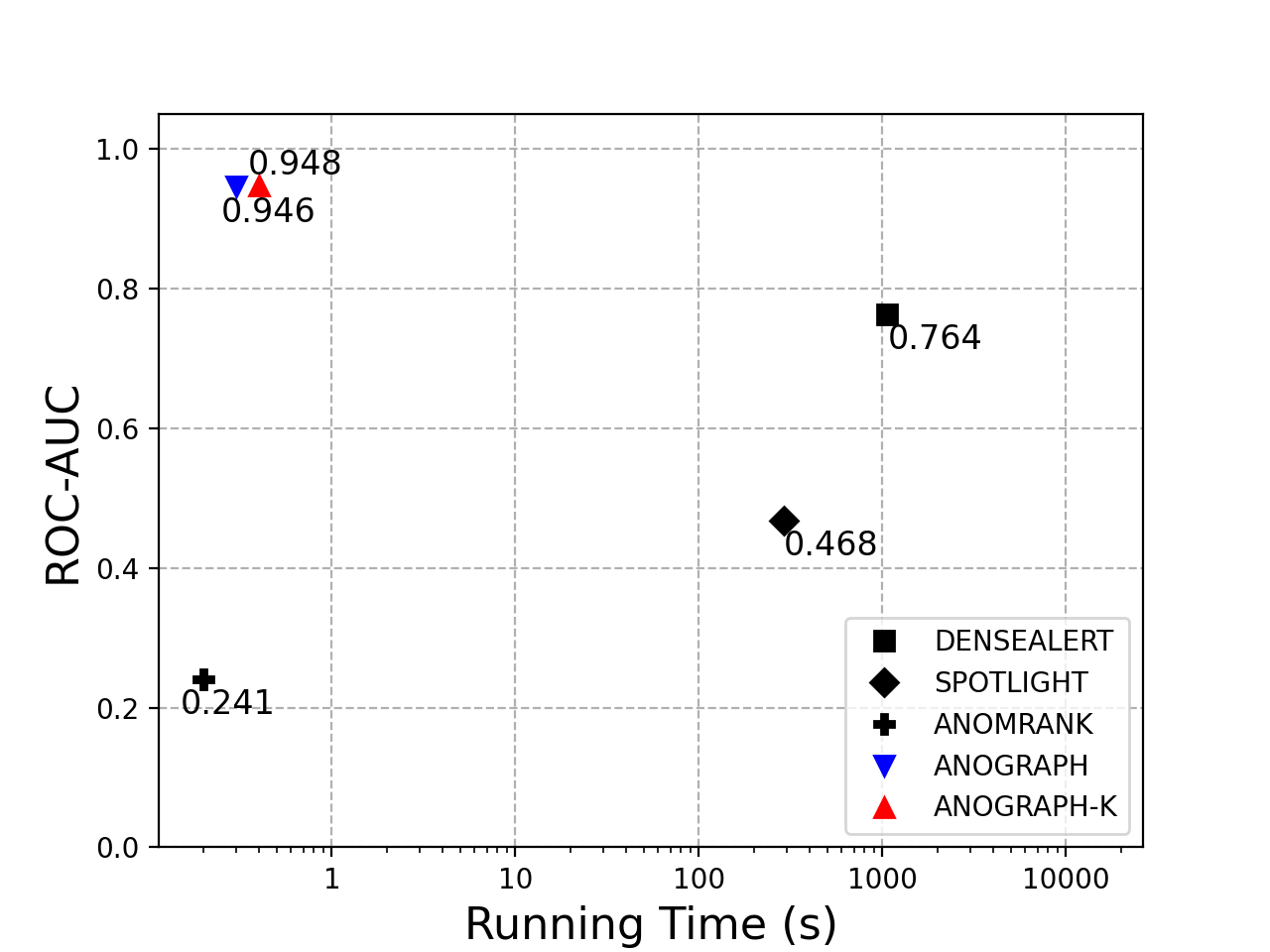}
  \caption{AUC vs running time when detecting graph anomalies on \emph{CIC-DDoS2019}.}
  \label{fig:grapha}
\end{figure}

{\bfseries Scalability:} Figures \ref{fig:graphb}, \ref{fig:graphc}, and \ref{fig:graphd} plot the running time with increasing factor $K$ (used for top-$K$ in Algorithm \ref{alg:AnoGraph-K}), number of hash functions and number of edges respectively, on the \emph{CIC-DDoS2019} dataset. This demonstrates the scalability of \methodgraph\ and \methodgraph-K.

\begin{figure}[!htb]
  \centering
    \includegraphics[width=0.7\textwidth]{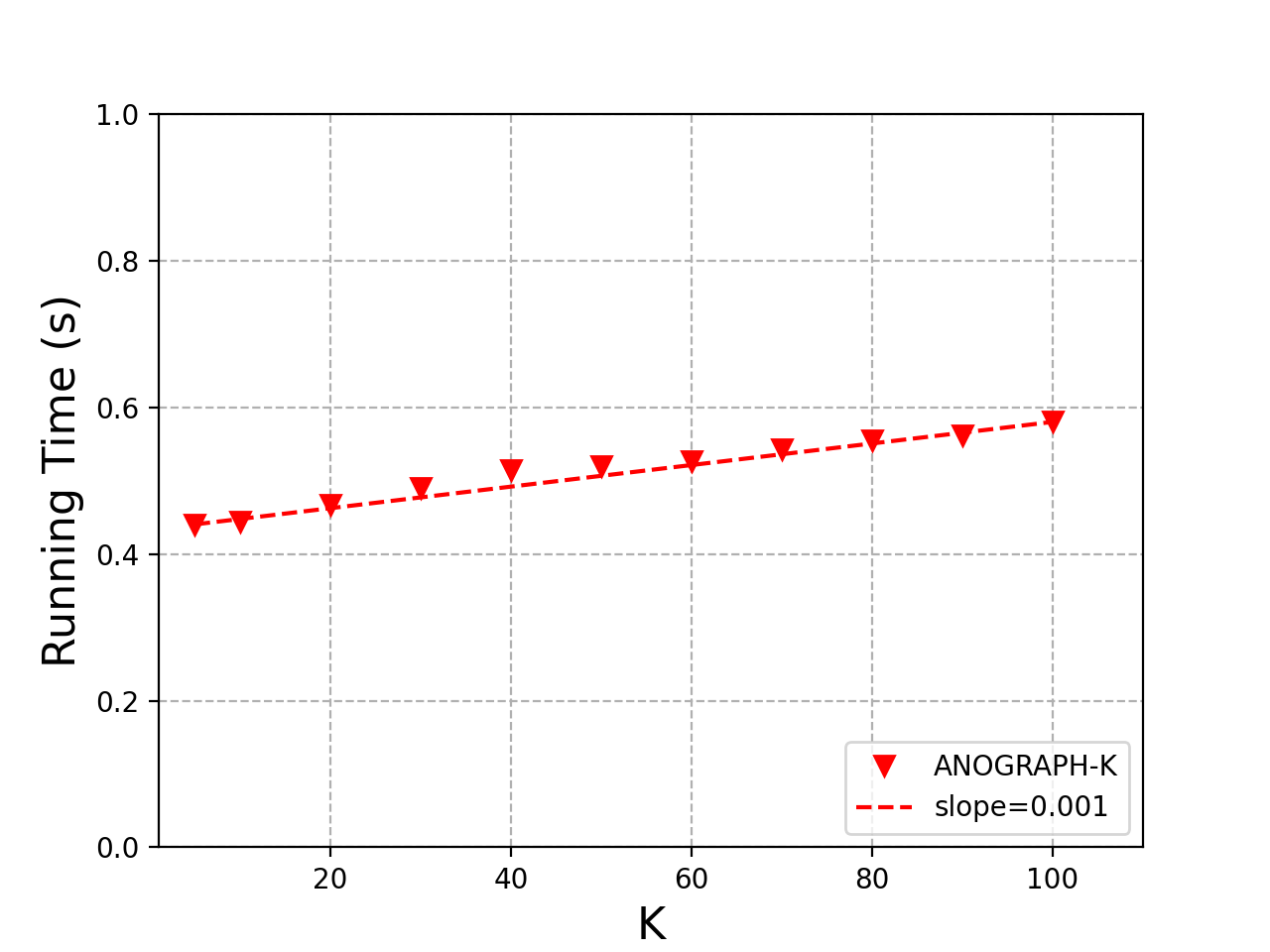}
  \caption{\methodgraph-K scales linearly with factor $K$ on \emph{CIC-DDoS2019}.}
  \label{fig:graphb}
\end{figure}

\begin{figure}[!htb]
  \centering
    \includegraphics[width=0.7\textwidth]{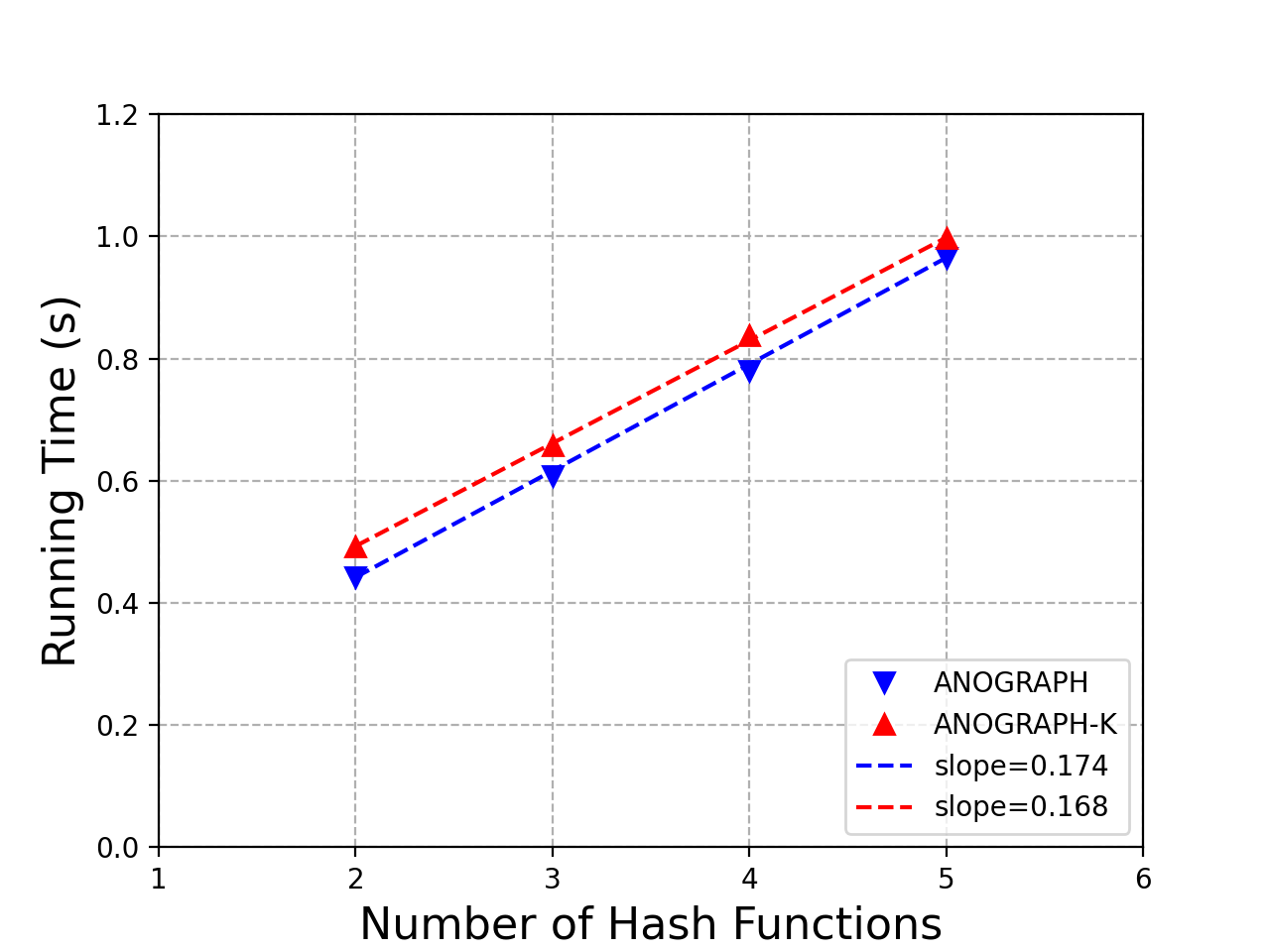}
  \caption{Linear scalability with number of hash functions on \emph{CIC-DDoS2019}.}
  \label{fig:graphc}
\end{figure}

\begin{figure}[!htb]
  \centering
    \includegraphics[width=0.7\textwidth]{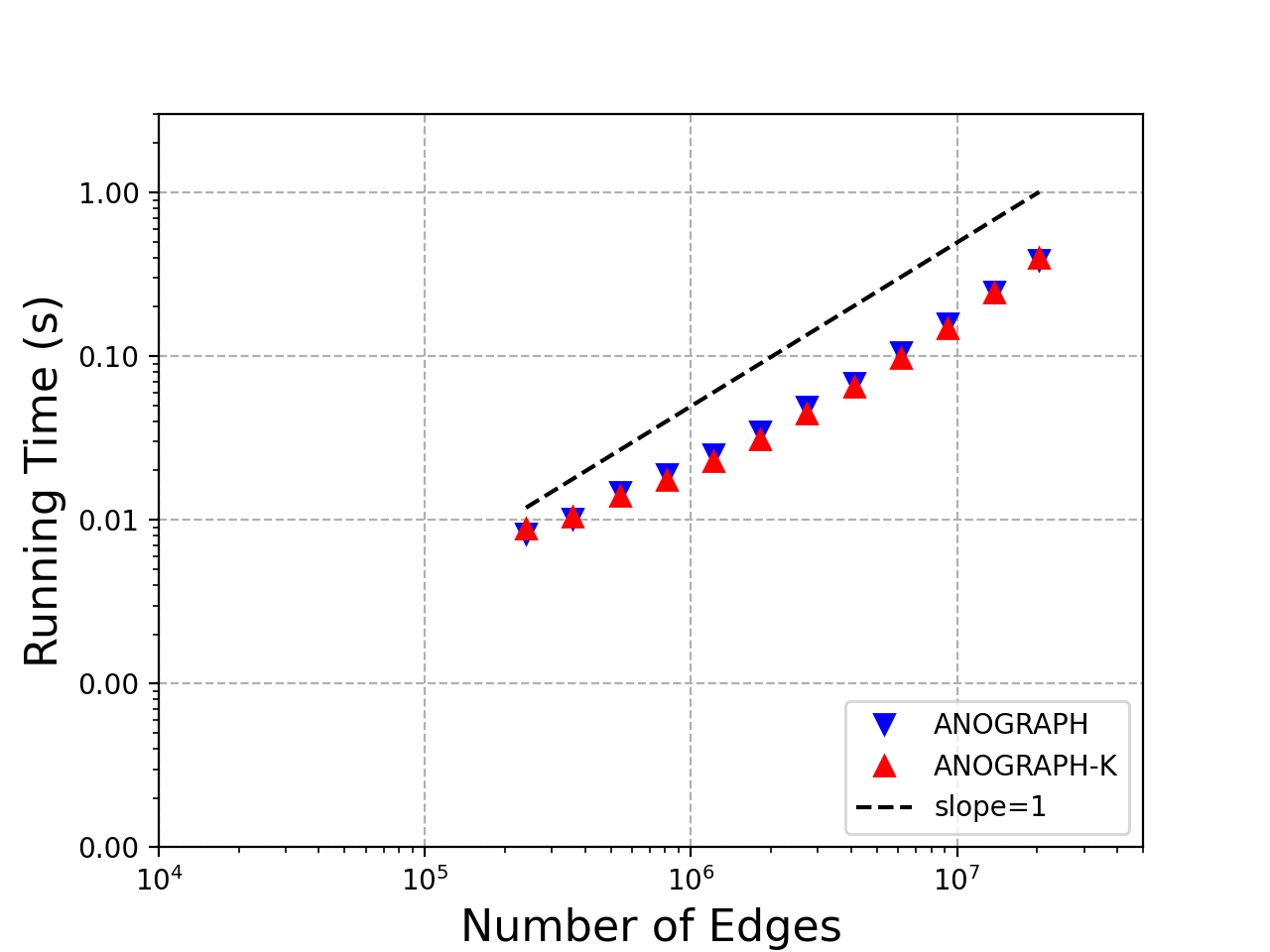}
  \caption{Linear scalability with number of edges on \emph{CIC-DDoS2019}.}
  \label{fig:graphd}
\end{figure}

\FloatBarrier

\subsection{Ablation Study}
\label{app:additionalresults}

Table \ref{tab:graphaucsupp} shows the performance of \methodgraph\ and \methodgraph-K for different time windows and edge thresholds. The edge threshold is varied in such a way that a sufficient number of anomalies are present within the time window. \methodgraph\ and \methodgraph-K perform similar to that in Table \ref{tab:graph}. Table \ref{tab:FactorVsAUC} shows the robustness of \methodedge-G and \methodedge-L as we vary the temporal decay factor $\alpha$.

\begin{table}[!htb]
\centering
\caption{Influence of time window and edge threshold on the ROC-AUC when detecting graph anomalies.}
\label{tab:graphaucsupp}
\begin{tabular}{@{}lrrrr}
\toprule
Dataset & Time & Edge & \textbf{\methodgraph} & \textbf{\methodgraph-K} \\ 
 & Window & Threshold & & \\
\midrule
\multirow{4}{*}{DARPA} & $15$ & $25$ & $0.835 \pm 0.001$ & $0.838 \pm 0.001$ \\ 
 & $30$ & $50$ & $0.835 \pm 0.002$ & $0.839 \pm 0.002$ \\
 & $60$ & $50$ & $0.747 \pm 0.002$ & $0.748 \pm 0.001$ \\ 
 & $60$ & $100$ & $0.823 \pm 0.000$ & $0.825 \pm 0.001$ \\ 
\midrule
\multirow{4}{*}{ISCX-IDS2012} & $15$ & $25$ & $0.945 \pm 0.001$ & $0.945 \pm 0.000$ \\  
 & $30$ & $50$ & $0.949 \pm 0.001$ & $0.948 \pm 0.000$ \\ 
 & $60$ & $50$ & $0.935 \pm 0.002$ & $0.933 \pm 0.002$ \\ 
 & $60$ & $100$ & $0.950 \pm 0.001$ & $0.950 \pm 0.001$ \\ 
\midrule
\multirow{4}{*}{CIC-IDS2018} & $15$ & $25$ & $0.945 \pm 0.004$ & $0.947 \pm 0.006$ \\ 
& $30$ & $50$ & $0.959 \pm 0.000$ & $0.959 \pm 0.001$ \\
& $60$ & $50$  & $0.920 \pm 0.001$ & $0.920 \pm 0.001$ \\ 
& $60$ & $100$ & $0.957 \pm 0.000$ & $0.957 \pm 0.000$ \\ 
\midrule
\multirow{4}{*}{CIC-DDoS2019} & $15$ & $25$ & $0.864 \pm 0.002$ & $0.863 \pm 0.003$ \\  
& $30$ & $50$ & $0.861 \pm 0.003$ & $0.861 \pm 0.003$ \\
& $60$ & $50$  & $0.824 \pm 0.004$ & $0.825 \pm 0.005$ \\ 
& $60$ & $100$ & $0.946 \pm 0.002$ & $0.948 \pm 0.002$ \\ 
\bottomrule
\end{tabular}
\end{table}

\begin{table}[!htb]
		\centering
		\caption{Influence of temporal decay factor $\alpha$ on the ROC-AUC in \methodedge-G and \methodedge-L on \emph{DARPA}.}
		\label{tab:FactorVsAUC}
		\begin{tabular}{@{}lrr@{}}
			\toprule
			$\alpha$ & \methodedge-G & \methodedge-L \\
			\midrule
			$0.1$ & $0.962$ & $0.957$ \\
            $0.2$ & $0.964$ & $0.957$ \\
            $0.3$ & $0.965$ & $0.958$ \\
            $0.4$ & $0.966$ & $0.959$ \\
            $0.5$ & $0.967$ & $0.960$ \\
            $0.6$ & $0.968$ & $0.961$ \\
            $0.7$ & $0.969$ & $0.962$ \\
            $0.8$ & $0.969$ & $0.964$ \\
            $0.9$ & $0.969$ & $0.966$ \\
            $0.95$ & $0.966$ & $0.966$ \\
			\bottomrule
		\end{tabular}
\end{table}

\FloatBarrier

\section{Conclusion}
In this chapter, we extend the CMS data structure to a higher-order sketch to capture complex relations in graph data and to reduce the problem of detecting suspicious dense subgraphs to finding a dense submatrix in constant time. We then propose four sketch-based streaming methods to detect edge and subgraph anomalies in constant time and memory. Furthermore, our approach is the first streaming work that incorporates dense subgraph search to detect graph anomalies in constant memory and time. We also provide a theoretical guarantee on the submatrix density measure and prove the time and space complexities of all methods. Experimental results on four real-world datasets demonstrate our effectiveness as opposed to popular state-of-the-art streaming edge and graph baselines. Future work could consider incorporating rectangular H-CMS matrices, node and edge representations, and more general types of data, including tensors.

\part{Multi-Aspect Data}
\SetPicSubDir{MSTREAM}
\SetExpSubDir{MSTREAM}

\chapter[MSTREAM: Fast Anomaly Detection in Multi-Aspect Streams][MSTREAM]{MSTREAM: Fast Anomaly Detection in Multi-Aspect Streams}
\label{ch:mstream}
\vspace{2em}



\newsavebox{\measurebox}

\begin{mdframed}[backgroundcolor=magenta!20] 
Chapter based on work that appeared at WWW'21 \cite{Bhatia2021MSTREAM} \href{https://dl.acm.org/doi/pdf/10.1145/3442381.3450023}{[PDF]}.
\end{mdframed}

\section{Introduction}

Given a stream of entries (i.e. records) in \emph{multi-aspect data} (i.e. data having multiple features or dimensions), how can we detect anomalous behavior, including group anomalies involving the sudden appearance of large groups of suspicious activity, in an unsupervised manner?

In this chapter, we propose \mstream, a method for processing a stream of multi-aspect data that detects \emph{group anomalies}, i.e. the sudden appearance of large amounts of suspiciously similar activity. Our approach naturally allows for similarity both in terms of categorical variables (e.g. a small group of repeated IP addresses creating a large number of connections), as well as in numerical variables (e.g. numerically similar values for average packet size).

\mstream\ is a streaming approach that performs each update in constant memory and time. This is constant both with respect to the stream length as well as in the number of attribute values for each attribute: this contrasts with tensor decomposition-based approaches such as STA and dense subtensor-based approaches such as \densealert, where memory usage grows in the number of possible attribute values. To do this, our approach makes use of locality-sensitive hash functions (LSH), which process the data in a streaming manner while allowing connections that form group anomalies to be jointly detected, as they consist of similar attribute values and hence are mapped into similar buckets by the hash functions. Finally, we demonstrate that the anomalies detected by \mstream\ are explainable.

To incorporate correlation between features, we further propose \mstream-PCA, \mstream-IB, and \mstream-AE which leverage Principal Component Analysis (PCA), Information Bottleneck (IB), and Autoencoders (AE) respectively, to map the original features into a lower-dimensional space and then execute \mstream\ in this lower-dimensional space. \mstream-AE is shown to provide better anomaly detection performance while also improving speed compared to \mstream, due to its lower number of dimensions.

In summary, the main contributions of our approach are:
\begin{enumerate}
    \item {\bfseries Multi-Aspect Group Anomaly Detection:} We propose a novel approach for detecting group anomalies in multi-aspect data, including both categorical and numeric attributes. Moreover, the anomalies detected by \mstream\ are explainable.
    \item {\bfseries Streaming Approach:} Our approach processes the data in a fast and streaming fashion, performing each update in constant time and memory.
    \item {\bfseries Effectiveness:} Our experimental results using \emph{KDDCUP99}, \emph{CICIDS-DoS}, \emph{UNSW-NB 15} and \emph{CICIDS-DDoS} datasets show that \mstream\ outperforms baseline approaches.
    \item {\bfseries Incorporating Correlation:} We propose \mstream-PCA, \mstream-IB and \mstream-AE to incorporate correlation between features.
\end{enumerate}
{\bfseries Reproducibility}: Our code and datasets are publicly available at \href{https://github.com/Stream-AD/MStream}{https://github.com/Stream-AD/MStream}.

\section{Problem}

Let $\mathcal{R} = \{r_1, r_2, \hdots\}$ be a stream of records, arriving in a streaming manner. Each record $r_i = (r_{i1}, \hdots, r_{id})$ consists of $d$ \emph{attributes} or dimensions, in which each dimension can either be categorical (e.g. IP address) or real-valued (e.g. average packet length). Note that since the data is arriving over time as a stream, we do not assume that the set of possible feature values is known beforehand; for example, in network traffic settings, it is common for new IP addresses to be seen for the first time at some point in the middle of the stream.

Our goal is to detect \emph{group anomalies}. Intuitively, group anomalies should have the following properties:

\begin{enumerate}
    \item {\bfseries Similarity in Categorical Attributes:} for categorical attributes, the group anomalies consist of a relatively small number of attribute values, repeated a suspiciously large number of times.
    \item {\bfseries Similarity in Real-Valued Attributes:} for real-valued attributes, the group anomalies consist of clusters of numerically similar attribute values.
    \item {\bfseries Temporally Sudden:} the group anomalies arrive suddenly, over a suspiciously short amount of time. In addition, their behavior (in terms of attribute values) should clearly differ from what we have observed previously, over the course of the stream.
\end{enumerate}

\section{Proposed Algorithm}

\subsection{Motivation}

Consider the toy example in Table \ref{tab:toy}, comprising a stream of connections over time. This dataset shows a clear block of suspicious activity from time $4$ to $5$, consisting of several IP addresses repeated a large number of times, as well as large packet sizes which seem to be anomalously large compared to the usual distribution of packet sizes.

The main challenge, however, is to detect this type of pattern in a {\bfseries streaming} manner, considering that we do not want to set any limits a priori on the duration of the anomalous activity we want to detect, or the number of IP addresses (or other attribute values) which may be involved in this activity.

As shown in Figure \ref{fig:mstream}, our approach addresses these problems through the use of a number of {\bfseries locality-sensitive hash functions}~\cite{charikar2002similarity}
which hash each incoming tuple into a fixed number of buckets. Intuitively, we do this such that tuples with many similar entries tend to be hashed into similar buckets. These hash functions are combined with a {\bfseries temporal scoring} approach, which takes into account how much overlap we observe between the buckets at any time: high amounts of overlap arriving in a short period of time suggest the presence of anomalous activity.

\begin{figure*}[!tb]
        \center{\includegraphics[width=\linewidth]
        {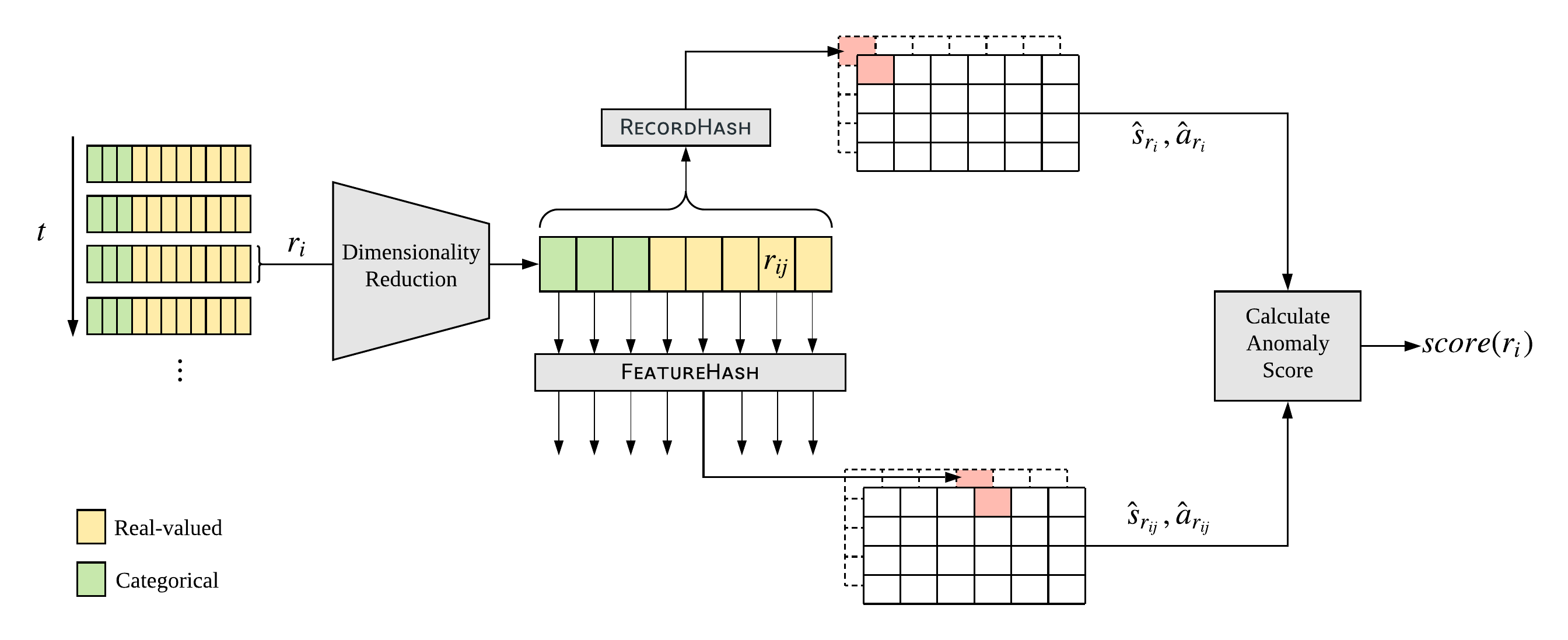}}
        \caption{\label{fig:mstream} Diagram of the proposed \mstream. The dimensionality reduction unit (Section \ref{sec:mstreamae}) takes in a record and outputs a lower-dimensional embedding. Two types of locality-sensitive hash functions are then applied. \textsc{FeatureHash} (Algorithm \ref{alg:featurehash}) hashes each individual feature and \textsc{RecordHash} (Algorithm \ref{alg:recordhash}) hashes the entire record jointly. These are then combined together using a temporal scoring approach to calculate the anomaly score for the record (Algorithm \ref{alg:mstream}).}
\end{figure*}

\begin{table}[!htb]
\centering
\caption{Simple toy example, consisting of a stream of multi-aspect connections over time.}
\label{tab:toy}
\addtolength{\tabcolsep}{-2pt}
\begin{tabular}{@{}ccccc@{}}
\toprule
{\bfseries Time} & {\bfseries Source IP} & {\bfseries Dest. IP} & {\bfseries Pkt. Size} & {\bfseries $\cdots$} \\ \midrule
$1$ & \ \ \ $194.027.251.021$ \ \ \ & $194.027.251.021$ & $100$ & $\cdots$ \\
$2$ & \ \ \ $172.016.113.105$ \ \ \ & $207.230.054.203$ & $80$ & $\cdots$ \\
\textcolor{red}{$4$} & \ \ \ \textcolor{red}{$194.027.251.021$} \ \ \ & \textcolor{red}{$192.168.001.001$} & \textcolor{red}{$1000$} & \textcolor{red}{$\cdots$} \\
\textcolor{red}{$4$} & \ \ \ \textcolor{red}{$194.027.251.021$} \ \ \ & \textcolor{red}{$192.168.001.001$} & \textcolor{red}{$995$} & \textcolor{red}{$\cdots$} \\
\textcolor{red}{$4$} & \ \ \ \textcolor{red}{$194.027.251.021$} \ \ \ & \textcolor{red}{$192.168.001.001$} & \textcolor{red}{$1000$} & \textcolor{red}{$\cdots$} \\
\textcolor{red}{$5$} & \ \ \ \textcolor{red}{$194.027.251.021$} \ \ \ & \textcolor{red}{$192.168.001.001$} & \textcolor{red}{$990$} & \textcolor{red}{$\cdots$} \\
\textcolor{red}{$5$} & \ \ \ \textcolor{red}{$194.027.251.021$} \ \ \ & \textcolor{red}{$194.027.251.021$} & \textcolor{red}{$1000$} & \textcolor{red}{$\cdots$} \\
\textcolor{red}{$5$} & \ \ \ \textcolor{red}{$194.027.251.021$} \ \ \ & \textcolor{red}{$194.027.251.021$} & \textcolor{red}{$995$} & \textcolor{red}{$\cdots$} \\
$6$ & \ \ \ $194.027.251.021$ \ \ \ & $194.027.251.021$ & $100$ & $\cdots$ \\
$7$ & \ \ \ $172.016.113.105$ \ \ \ & $207.230.054.203$ & $80$ & $\cdots$ \\
\bottomrule
\end{tabular}
\end{table}

In Sections \ref{sec:hashfunc} and \ref{sec:scoring}, we describe our \mstream\ approach, and in Section \ref{sec:mstreamae}, we describe our \mstream-PCA, \mstream-IB and \mstream-AE approaches which incorporate correlation between features in an unsupervised manner. \mstream-PCA uses principal component analysis, \mstream-IB uses information bottleneck, and \mstream-AE uses an autoencoder to first compress the original features and then apply \mstream\ in the compressed feature space.

\subsection{Hash Functions}
\label{sec:hashfunc}

Our approach uses two types of hash functions: \textsc{FeatureHash}, which hashes each feature individually, and \textsc{RecordHash}, which hashes an entire record jointly. We use multiple independent copies of each type of hash function, and explain how to combine these to produce a single anomalousness score.

\subsubsection{FeatureHash}
\label{sec:feature}

As shown in Algorithm \ref{alg:featurehash}, \textsc{FeatureHash} consists of hash functions independently applied to a single feature. There are two cases, corresponding to whether the feature is categorical (e.g. IP address) or real-valued (e.g. average packet length):

For \textbf{categorical} data, we use standard linear hash functions \cite{litwin1980linear} which map integer-valued data randomly into $b$ buckets, i.e. $\{0, \dots, b-1\}$, where $b$ is a fixed number.

For \textbf{real-valued} data, however, we find that randomized hash functions tend to lead to highly uneven bucket distributions for certain input datasets. Instead, we use a streaming log-bucketization approach. We first apply a log-transform to the data value (line 5), then perform min-max normalization, where the min and max are maintained in a streaming manner (line 7), and finally map it such that the range of feature values is evenly divided into $b$ buckets, i.e. $\{0, \dots, b-1\}$ (line 8). 
    
\begin{algorithm}
	\caption{\textsc{FeatureHash}: Hashing Individual Feature  \label{alg:featurehash}}
	\KwIn{$r_{ij}$ (Feature $j$ of record $r_{i}$)}
	\KwOut{Bucket index in $\{0, \dots, b-1\}$ to map $r_{ij}$ into}
	\textbf{if} $r_{ij}$ is categorical  \\
	\ \ \ \ \textbf{output} $\Call{HASH}{r_{ij}}$ \tcp*[f]{Linear Hash~\cite{litwin1980linear}} \\
	\textbf{else if} $r_{ij}$ is real-valued  \\
	\ \ \ \ {\bfseries $\triangleright$ Log-Transform} \\
	\ \ \ \ \ \ \ \ $\tilde{r}_{ij} = \log(1+r_{ij})$ \\
	\ \ \ \ {\bfseries $\triangleright$ Normalize} \\
	\ \ \ \ \ \ \ \ $\tilde{r}_{ij} \gets \frac{\tilde{r}_{ij} - min_{j}}{max_{j} - min_{j}}$ \tcp*[f]{Streaming Min-Max} \\
    \ \ \ \ {\bfseries output} $\lfloor \tilde{r}_{ij} \cdot b \rfloor  ($mod$ \ b)$ \tcp*[f]{Bucketization into $b$ buckets} \\
\end{algorithm}

\FloatBarrier

\subsubsection{RecordHash}
\label{sec:record}

As shown in Algorithm \ref{alg:recordhash}, in \textsc{RecordHash}, we operate on all features of a record simultaneously. We first divide the entire record $r_{i}$ into two parts, one consisting of the categorical features $\mathcal{C}$, say $r_{i}^{cat}$, and the other consisting of real-valued features $\mathcal{R}$, say $r_{i}^{num}$. We then separately hash $r_{i}^{cat}$ to get $bucket_{cat}$, and $r_{i}^{num}$ to get $bucket_{num}$. Finally we take the sum modulo $b$ of $bucket_{cat}$ and $bucket_{num}$ to get a bucket for $r_{i}$. We hash $r_{i}^{cat}$ and $r_{i}^{num}$ as follows:

\begin{enumerate}
\item $r_{i}^{cat}$: We use standard linear hash functions \cite{litwin1980linear} to map $\forall j \in \mathcal{C}$ each of the individual features $r_{ij}$ into $b$ buckets, and then combine them by summing them modulo $b$ to compute the bucket index $bucket_{cat}$ for $r_{i}^{cat}$ (line 3).

\item $r_{i}^{num}$: To compute the hash of a real-valued record $r_{i}^{num}$ of dimension $p=|\mathcal{R}|$, we choose $k$ random vectors $\vec{a_{1}},\vec{a_{2}},..,\vec{a_{k}}$ each having $p$ dimensions and independently sampled from a Gaussian distribution $\mathcal{N}_{p}(\vec{0},\vec{I_p})$, where $k=\lceil \log_{2}(b) \rceil$. We compute the scalar product of $r_{i}^{num}$ with each of these vectors (line 6). We then map the positive scalar products to $1$ and the non-positive scalar products to $0$ and then concatenate these mapped values to get a $k$-bit string, then convert it from a bitset into an integer $bucket_{num}$ between $0$ and $2^k-1$. (line 10).
\end{enumerate}

\begin{algorithm}
	\caption{\textsc{RecordHash}: Hashing Entire Record  \label{alg:recordhash}}
	\KwIn{Record $r_{i}$}
	\KwOut{Bucket index in $\{0, \dots, b-1\}$ to map $r_{i}$ into}
	{\bfseries $\triangleright$ Divide $r_{i}$ into its categorical part, $r_{i}^{cat}$, and its numerical part, $r_{i}^{num}$} \\
	{\bfseries $\triangleright$ Hashing $r_{i}^{cat}$} \\
	\ \ \ \ $bucket_{cat} = (\sum_{j \in \mathcal{C} }${$\Call{HASH}{r_{ij}})$} (mod $b$) \tcp*[f]{Linear Hash~\cite{litwin1980linear}} \\
	{\bfseries $\triangleright$ Hashing $r_{i}^{num}$} \\
	\ \ \ \ \textbf{for} $id \gets$ $1$ to $k$  \\
	\ \ \ \ \ \ \ \	\textbf{if} $\langle{r_{i}^{num} , \vec{a_{id}}}\rangle > 0$  \\
	\ \ \ \ \ \ \ \ \ \ \ \ $bitset[id] = 1$ \\
	\ \ \ \ \ \ \ \ \textbf{else}  \\
	\ \ \ \ \ \ \ \ \ \ \ \ $bitset[id] = 0$ \\
	\ \ \ \ $bucket_{num} = \Call{INT}{bitset}$ \tcp*[f]{Convert bitset to integer} \\

    {\bfseries output}\ $(bucket_{cat}+bucket_{num})($mod$ \ b)$
\end{algorithm}

\FloatBarrier

\subsection{Temporal Scoring}
\label{sec:scoring}

\textsc{Midas} \cite{bhatia2020midas} uses two types of CMS data structures to maintain approximate counts $\hat{s}_{uv}$ and $\hat{a}_{uv}$ which estimate $s_{uv}$ and $a_{uv}$ respectively. The anomaly score for an edge in \textsc{Midas} is then defined as:

\begin{equation}
score(u,v,t) = \left(\hat{a}_{uv} - \frac{\hat{s}_{uv}}{t}\right)^2 \label{eqn:eq1} \frac{t^2}{\hat{s}_{uv}(t-1)}
\end{equation}

\textsc{Midas} is designed to detect anomalous edges, which are two-dimensional records (consisting of source and destination node index). 
Therefore, it cannot be applied in the high-dimensional setting of multi-aspect data. Moreover, \textsc{Midas} treats variables of the dataset as categorical variables, whereas multi-aspect data can contain arbitrary mixtures of categorical variables (e.g. source IP address) and numerical variables (e.g. average packet size).

We extend \textsc{Midas} to define an anomalousness score for each record and detect anomalous records in a streaming manner. Given each incoming record $r_{i}$ having $j$ features, we can compute $j+1$ anomalousness scores: one for the entire record $r_{i}$ and one for each individual feature $r_{ij}$. We compute each score by computing the chi-squared statistic over the two categories: current time tick and past time ticks. Anomaly scores for individual attributes are useful for interpretability, as they help explain which features are most responsible for the anomalousness of the record.
Finally, we combine these scores by taking their sum.

\begin{defn}[Anomaly Score]
Given a newly arriving record $(r_{i},t)$, our anomalousness score is computed as:
\begin{align}
score (r_{i},t) = \left(\hat{a}_{r_{i}} - \frac{\hat{s}_{r_{i}}}{t}\right)^2 \frac{t^2}{\hat{s}_{r_{i}}(t-1)} + \sum_{j=1}^{d} score(r_{ij},t)
\end{align}
where,
\begin{align}
\text{score}(r_{ij},t) = \left(\hat{a}_{r_{ij}} - \frac{\hat{s}_{r_{ij}}}{t}\right)^2 \frac{t^2}{\hat{s}_{r_{ij}}(t-1)}
\end{align}
\end{defn}
and $\hat{a}_{r_{i}} ($or $\hat{a}_{r_{ij}})$ is an approximate count of  $r_{i} ($or $r_{ij})$ at current time $t$ and $\hat{s}_{r_{i}} ($or $\hat{s}_{r_{ij}})$ is an approximate count of $r_{i} ($or $r_{ij})$ up to time $t$.

We also allow temporal flexibility of records, i.e. records in the recent past count towards the current anomalousness score. This is achieved by reducing the counts $\hat{a}_{r_{i}}$ and $\hat{a}_{r_{ij}} \forall j \in \{1,..,d\}$ by a factor of $\alpha \in (0, 1)$ rather than resetting them at the end of each time tick. This results in past records counting towards the current time tick, with a diminishing weight.

\mstream\ is summarised in Algorithm \ref{alg:mstream}.

\begin{algorithm}
	\caption{\mstream:\ Streaming Anomaly Scoring \label{alg:mstream}}
	\KwIn{Stream of records over time}
	\KwOut{Anomaly scores for each record}
	{\bfseries $\triangleright$ Initialize data structures:} \\
	\ \ \ \ Total record count $\hat{s}_{r_{i}}$ and total attribute count $\hat{s}_{r_{ij}} \forall j \in \{1,..,d\}$ \\
	\ \ \ \ Current record count $\hat{a}_{r_{i}}$ and current attribute count  $\hat{a}_{r_{ij}} \forall j \in \{1,..,d\}$ \\
	\While{new record $(r_i,t) = (r_{i1}, \hdots, r_{id},t)$ is received:}{
	{\bfseries $\triangleright$ Hash and Update Counts:} \\

	\ \ \ \ \textbf{for} $j \gets$ $1$ to $d$  \\
	\ \ \ \ \ \ \ \ $bucket_{j} = \Call{FeatureHash}{r_{ij}}$ \\
    \ \ \ \ \ \ \ \ Update count of $bucket_{j}$ \\
	
	\ \ \ \ $bucket= \Call{RecordHash}{r_{i}}$\\
	\ \ \ \ Update count of $bucket$\\
	{\bfseries $\triangleright$ Query Counts:} \\
	\ \ \ \ Retrieve updated counts $\hat{s}_{r_{i}}$, $\hat{a}_{r_{i}}$, $\hat{s}_{r_{ij}}$ and $\hat{a}_{r_{ij}} \forall j \in \{1..d\}$\\
	{\bfseries $\triangleright$ Anomaly Score:}\\
	\ \ \ \ {\bfseries output} $score (r_{i},t) = \left(\hat{a}_{r_{i}} - \frac{\hat{s}_{r_{i}}}{t}\right)^2 \frac{t^2}{\hat{s}_{r_{i}}(t-1)} + \sum_{j=1}^{d} score(r_{ij},t)$\\
	}
\end{algorithm}

\FloatBarrier

\subsection{Incorporating Correlation Between Features}
\label{sec:mstreamae}

In this section, we describe our \mstream-PCA, \mstream-IB, and \mstream-AE approaches where we run the \mstream\ algorithm on a lower-dimensional embedding of the original data obtained using Principal Component Analysis (PCA) \cite{pearson1901liii}, Information Bottleneck (IB) \cite{tishby2000information} and Autoencoder (AE) \cite{hinton1994autoencoders} methods in a streaming manner.

Our motivation for combining PCA, IB, and AE methods with \mstream\ is two-fold. Firstly, the low-dimensional representations learned by these algorithms incorporate correlation between different attributes of the record, making anomaly detection more effective. Secondly, a reduction in the dimensions would result in faster processing per record.

For all three methods, we first learn the dimensionality reduction transformation using a very small initial subset of $256$ records from the incoming stream. We then compute the embeddings for the subsequent records and pass them to \mstream\ to detect anomalies in an online manner.

\paragraph{\textbf{Principal Component Analysis}}

We choose PCA because it only requires one major parameter to tune: namely the dimension of the projection space. Moreover, this parameter can be set easily by analysis of the explained variance ratios of the principal components. Hence \mstream-PCA can be used as an off-the-shelf algorithm for streaming anomaly detection with dimensionality reduction.

\paragraph{\textbf{Information Bottleneck}}
Information bottleneck for dimensionality reduction can be posed as the following optimization problem:
$$
\min _{p(t | x)} I(X ; T)-\beta I(T ; Y)
$$
where $X$, $Y$, and $T$ are random variables. $T$ is the compressed representation of $X$, $I(X ; T)$ and $I(T ; Y)$ are the mutual information of $X$ and $T$, and of $T$ and $Y$, respectively, and $\beta$ is a Lagrange multiplier.
In our setting, $X$ denotes the multi-aspect data, $Y$ denotes whether the data is anomalous and $T$ denotes the dimensionally reduced features that we wish to find. Our implementation is based on the Neural Network approach for Nonlinear Information Bottleneck \cite{kolchinsky2017nonlinear}.

\paragraph{\textbf{Autoencoder}}
Autoencoder is a neural network based approach for dimensionality reduction. An autoencoder network consists of an encoder and a decoder. The encoder compresses the input into a lower-dimensional space, while the decoder reconstructs the input from the low-dimensional representation. Our experimental results in Section \ref{sec:experiment} show that even with a simple 3-layered autoencoder, \mstream-AE outperforms both \mstream-PCA and \mstream-IB.
 
\subsection{Time and Memory Complexity}
\label{sec:timecomplexity}
In terms of memory, \mstream\ only needs to maintain data structures over time, which requires memory proportional to $O(wbd)$, where $w$, $b$, and $d$ are the number of hash functions, the number of buckets in the data structures and the total number of dimensions; which is bounded with respect to the stream size.

For time complexity, the only relevant steps in Algorithm \ref{alg:mstream} are those that either update or query the data structures, which take $O(wd)$ (all other operations run in constant time). Thus, the time complexity per update step is $O(wd)$.

\FloatBarrier

\section{Experiments}
\label{sec:experiment}
In this section, we evaluate the performance of \mstream\ and \mstream-AE compared to \elliptic, LOF, I-Forest, \rcf\ and \densealert\  on multi-aspect data streams. We aim to answer the following questions:

\begin{enumerate}[label=\textbf{Q\arabic*.}]
\item {\bfseries Anomaly Detection Performance:} How accurately does \mstream\ detect real-world anomalies compared to baselines, as evaluated using the ground truth labels?
\item {\bfseries Scalability:} How does it scale with input stream length and number of dimensions? How does the time needed to process each input compare to baseline approaches?
\item {\bfseries Real-World Effectiveness:} Does it detect meaningful anomalies? Does it detect group anomalies?

\end{enumerate}

\paragraph{\textbf{Datasets}}

\emph{KDDCUP99} dataset \cite{KDDCup192:online} is based on the DARPA dataset and is among the most extensively used datasets for intrusion detection. Since the proportion of data belonging to the `attack' class is much larger than the proportion of data belonging to the `non-attack' class, we downsample the `attack' class to a proportion of $20\%$. \emph{KDDCUP99} has $42$ dimensions and $1.21$ million records.

\cite{ring2019survey} surveys more than 30 intrusion detection datasets and recommends to use the newer CICIDS \cite{sharafaldin2018toward} and UNSW-NB15 \cite{moustafa2015unsw} datasets. These contain modern-day attacks and follow the established guidelines for reliable intrusion detection datasets (in terms of realism, evaluation capabilities, total capture, completeness, and malicious activity) \cite{sharafaldin2018toward}.

\emph{CICIDS} 2018 dataset was generated at the Canadian Institute of Cybersecurity. Each record is a flow containing features such as Source IP Address, Source Port, Destination IP Address, Bytes, and Packets. These flows were captured from a real-time simulation of normal network traffic and synthetic attack simulators. This consists of the \emph{CICIDS-DoS} dataset ($1.05$ million records, 80 features) and the \emph{CICIDS-DDoS} dataset ($7.9$ million records, 83 features). \emph{CICIDS-DoS} has $5\%$ anomalies whereas \emph{CICIDS-DDoS} has $7\%$ anomalies.

\emph{UNSW-NB 15} dataset was created by the Cyber Range Lab of the Australian Centre for Cyber Security (ACCS) for generating a hybrid of real modern normal activities and synthetic contemporary attack behaviors. This dataset has nine types of attacks, namely, Fuzzers, Analysis, Backdoors, DoS, Exploits, Generic, Reconnaissance, Shellcode, and Worms. It has $49$ features and $2.5$ million records including $13\%$ anomalies.

\begin{table*}[!htb]
\centering
\caption{Comparison of relevant multi-aspect anomaly detection approaches.}
\label{tab:comparisonmstream}
\resizebox{\linewidth}{!}{
\begin{tabular}{@{}rcccccccc|c@{}}
\toprule
& {Elliptic }
& {LOF }
& {I-Forest }
& {STA }
& {MASTA }
& {STenSr }
& {\rcf} 
& {\densealert } 
& {\bfseries {\mstream}} \\ 
& ($1999$) & ($2000$) & ($2008$) & ($2006$) & ($2015$) & ($2015$) & ($2016$) & ($2017$) & ($2021$) \\\midrule
\textbf{Group Anomalies} & & & & & & & & \Checkmark & \CheckmarkBold \\
\textbf{Real-valued Features} & \Checkmark & \Checkmark & \Checkmark & & & & \Checkmark & & \CheckmarkBold \\
\textbf{Constant Memory} & & & & & & & \Checkmark & \Checkmark & \CheckmarkBold \\
\textbf{Const. Update Time} & & & & \Checkmark & \Checkmark & \Checkmark & \Checkmark & \Checkmark & \CheckmarkBold \\
\bottomrule
\end{tabular}}
\end{table*}

\paragraph{\bfseries Baselines}

We consider unsupervised algorithms \lof, \iso, \elliptic, STA, MASTA, STenSr, \densealert\ and \rcf. Of these, only \densealert\ performs group anomaly detection (by detecting dense subtensors); however, as shown in Table \ref{tab:comparisonmstream}, it cannot effectively handle real-valued features (as it treats all features as discrete-valued). Due to a large number of dimensions, even sparse tensor versions of STA/MASTA/STenSr run out of memory on these datasets. So, we compare with \elliptic, \lof, \iso, \densealert\ and \rcf.

\paragraph{\bfseries Evaluation Metrics}
All the methods output an anomaly score per edge (higher is more anomalous). We plot the ROC curve, which compares the True Positive Rate (TPR) and False Positive Rate (FPR), without needing to fix any threshold. We also report the ROC-AUC (Area under the ROC curve).

\paragraph{\bfseries Experimental Setup}
All experiments are carried out on a $2.4 GHz$ Intel Core $i9$ processor, $32 GB$ RAM, running OS $X$ $10.15.2$. We implement \mstream\ in C\texttt{++}. We use $2$ independent copies of each hash function, and we set the number of buckets to 1024. We set the temporal decay factor $\alpha$ as $0.85$ for \emph{KDDCUP99}, $0.95$ for \emph{CICIDS-DoS} and \emph{CICIDS-DDoS}, and $0.4$ for \emph{UNSW-NB 15} due to its higher time granularity. Note that \mstream\ is not sensitive to variation of $\alpha$ parameter as shown in Table \ref{tab:FactorVsAUCmstream}. Since \emph{KDDCUP99} dataset does not have timestamps, we apply the temporal decay factor once every 1000 records. We discuss the influence of temporal decay factor $\alpha$ on the ROC-AUC in Section \ref{sec:ablations}.

To demonstrate the robustness of our proposed approach, we set the output dimension of \mstream-PCA, \mstream-IB and \mstream-AE for all datasets to a common value of $12$ instead of searching individually on each method and dataset. We reduce the real-valued columns to $12$ dimensions and then pass these along with the categorical columns to \mstream. Results on varying the number of output dimensions can be found in Section \ref{sec:ablations}. For \mstream-PCA we use the open-source implementation of PCA available in the scikit-learn \cite{scikit-learn} library. Parameters for \mstream-AE and \mstream-IB are described in Section \ref{sec:ablations}.

We use open-sourced implementations of \densealert\ and \rcf, provided by the authors, following parameter settings as suggested in the original papers. For \elliptic, \lof\ and \iso\ we use the open-source implementation available in the scikit-learn \cite{scikit-learn} library. We also pass the true anomaly percentage to \elliptic, \lof\ and \iso\ methods, while the remainder of the methods do not require the anomaly percentage.

All the experiments, unless explicitly specified, are performed $5$ times for each parameter group, and the mean and standard deviation values are reported.

\subsection{Anomaly Detection Performance}

Figure \ref{fig:ROC} plots the ROC curve for \mstream, \mstream-PCA, \mstream-IB\ and \mstream-AE along with the baselines, \elliptic, \lof, \iso, \densealert\ and \rcf\ on \emph{CICIDS-DoS} dataset. 
We see that \mstream, \mstream-PCA, \mstream-IB\ and \mstream-AE achieve a much higher ROC-AUC ($0.92-0.95$) compared to the baselines. \mstream\ and its variants achieve at least $50\%$ higher AUC than \densealert, $11\%$ higher than \rcf\ $26\%$ higher than \iso, $23\%$ higher than \elliptic\ and  $84\%$ higher than \lof.

\begin{figure}[!htb]
        \center{\includegraphics[width=0.8\columnwidth]
        {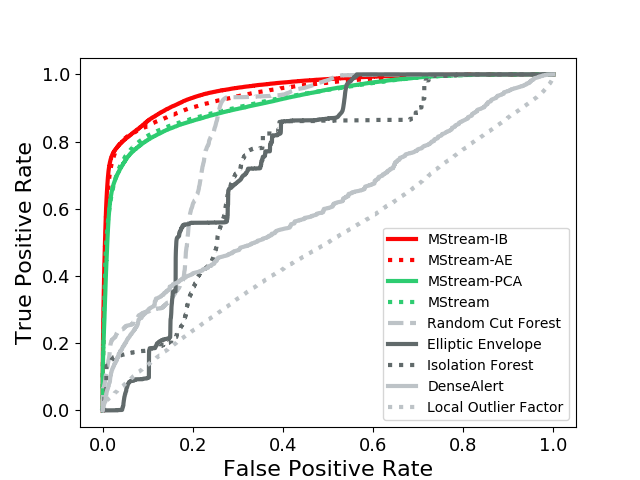}}
        \caption{\label{fig:ROC} ROC on \emph{CICIDS-DoS} dataset.}
\end{figure}

Table \ref{tab:aucmstream} shows the AUC of \elliptic, \lof, \iso, \densealert, \rcf\ and \mstream\ on \emph{KDDCUP99}, \emph{CICIDS-DoS}, \emph{UNSW-NB 15} and \emph{CICIDS-DDoS} datasets. We report a single value for \lof\ and \densealert\ because these are non-randomized methods. We also report a single value for \rcf\ because we use the parameters and random seed of the original implementation. \densealert\ performs well on small-sized datasets such as \emph{KDDCUP99} but as the dimensions increase, its performance decreases. On the large \emph{CICIDS-DDoS} dataset \densealert\ runs out of memory. We observe that \mstream\ outperforms all baselines on all datasets. By learning the correlation between features, \mstream-AE achieves higher ROC-AUC than \mstream, and performs comparably or better than \mstream-PCA and \mstream-IB. We also discuss evaluating the ROC-AUC in a streaming manner in Section \ref{sec:ablations}.

\begin{table*}[!htb]
\centering
\caption{AUC of each method on different datasets.}
\label{tab:aucmstream}
\resizebox{\linewidth}{!}{
\begin{tabular}{@{}rccccccccc@{}}
\toprule
& Elliptic
 & LOF
 & I-Forest
 &  DAlert
 & RCF
 & \textbf{\mstream}
 & \textbf{\mstream-PCA}
 & \textbf{\mstream-IB}
 & \textbf{\mstream-AE} \\ \midrule
 \textbf{KDD} & $0.34 \pm 0.025$ & $0.34$ &  $0.81 \pm 0.018$ & $0.92$ & $0.63$ &   $0.91 \pm 0.016$ & $0.92 \pm 0.000$ & $\mathbf{0.96} \pm 0.002$ & $\mathbf{0.96} \pm 0.005$ \\
 \textbf{DoS} & $0.75 \pm 0.021$ & $0.50$ & $0.73 \pm 0.008$ & $0.61$  & $0.83$ & $0.93 \pm 0.001$ & $0.92 \pm 0.001$ & $\mathbf{0.95} \pm 0.003$ & $0.94 \pm 0.001$ \\
  \textbf{UNSW} & $0.25 \pm 0.003$ & $0.49$ & $0.84 \pm 0.023$ & $0.80$ & $0.45$ & $0.86 \pm 0.001$ & $0.81 \pm 0.001$ & $0.82 \pm 0.001$ & $\mathbf{0.90} \pm 0.001$ \\
\textbf{DDoS} & $0.57 \pm 0.106$ & $0.46$ & $0.56 \pm 0.021$ & $--$ & $0.63$ &  $0.91 \pm 0.000$ & $\mathbf{0.94} \pm 0.000$ & $0.82 \pm 0.000$ & $0.93 \pm 0.000$ \\
\bottomrule
\end{tabular}}
\end{table*}

Figure \ref{fig:AUC} plots ROC-AUC vs. running time (log-scale, in seconds, excluding I/O) for the different methods on the \emph{CICIDS-DoS} dataset. We see that \mstream, \mstream-PCA, \mstream-IB and \mstream-AE achieve $11\%$ to $90\%$ higher AUC compared to baselines, while also running almost two orders of magnitude faster.

\begin{figure}[!htb]
        \center{\includegraphics[width=\columnwidth]
        {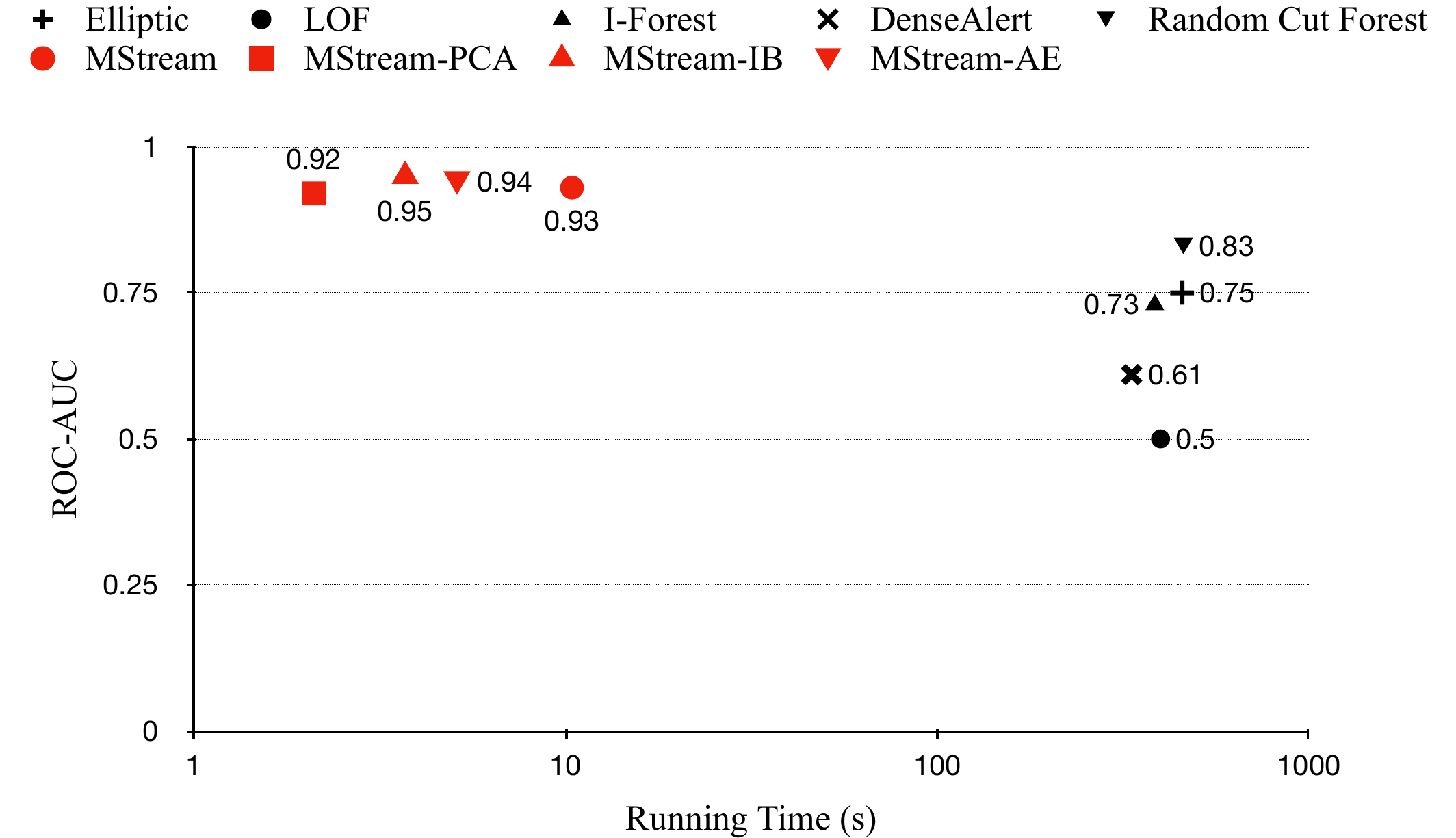}}
        \caption{\label{fig:AUC} ROC-AUC vs time on \emph{CICIDS-DoS} dataset.}
\end{figure}

\FloatBarrier

\subsection{Scalability}

Table \ref{tab:timesmstream} shows the time it takes \elliptic, \lof, \iso, \densealert, \rcf, \mstream\ and \mstream-AE to run on \emph{KDDCUP99}, \emph{CICIDS-DoS}, \emph{UNSW-NB 15} and \emph{CICIDS-DDoS} datasets. We see that \mstream\ runs much faster than the baselines: for example, \mstream\ is $79$ times faster than \densealert\ on the \emph{KDDCUP99} dataset. \mstream-PCA, \mstream-IB and \mstream-AE incorporate dimensionality reduction and are therefore faster than \mstream: for example, \mstream-AE is $1.38$ times faster than \mstream\ and $110$ times faster than \densealert\ on the \emph{KDDCUP99} dataset.

\begin{table*}[!htb]
\centering
\caption{Running time of each method on different datasets in seconds.}
\label{tab:timesmstream}
\resizebox{\linewidth}{!}{
\begin{tabular}{@{}rccccccccc@{}}
\toprule
& Elliptic
 & LOF
 & I-Forest
 &  DAlert
 & RCF
 & \textbf{\mstream}
 & \textbf{\mstream-PCA}
 & \textbf{\mstream-IB}
 & \textbf{\mstream-AE} \\ \midrule
 \textbf{KDD} & $216.3$ & $1478.8$ &  $230.4$ & $341.8$ & $181.6$ & $4.3$ & ${2.5}$ & ${3.1}$ & ${3.1}$ \\
 \textbf{DoS} & $455.8$ & $398.8$ & $384.8$ & $333.4$  & $459.4$ & $10.4$ & ${2.1}$ & $3.7$ & $5.1$ \\
 \textbf{UNSW} & $654.6$ & $2091.1$ & $627.4$ & $329.6$ & $683.8$ & $12.8$ & ${6.6}$ & $8$ & $8$ \\
\textbf{DDoS} & $3371.4$ & $15577s$ & $3295.8$ & $--$ & $4168.8$ & $61.6$ & ${16.9}$ & $25.6$ & $27.7$ \\
\bottomrule
\end{tabular}}
\end{table*}

Figure \ref{fig:scaling1} shows the scalability of \mstream\ with respect to the number of records in the stream (log-scale). We plot the time needed to run on the (chronologically) first $2^{12}, 2^{13}, 2^{14},...,2^{20}$ records of the \emph{CICIDS-DoS} dataset. Each record has $80$ dimensions. This confirms the linear scalability of \mstream\ with respect to the number of records in the input stream due to its constant processing time per record.

\begin{figure}[!htb]
        \center{\includegraphics[width=0.7\columnwidth]
        {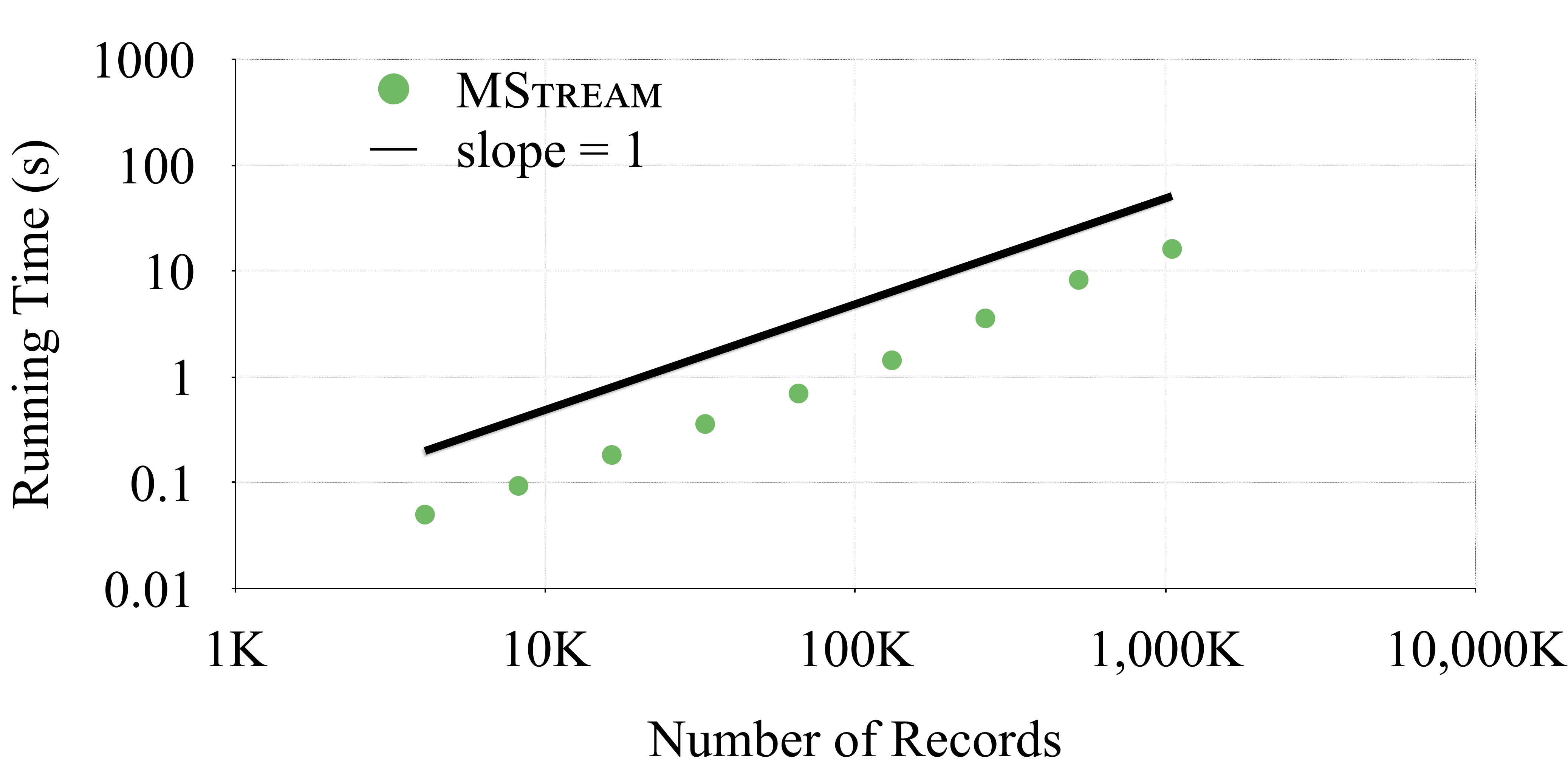}}
        \caption{\label{fig:scaling1} \mstream\ scales linearly with the number of records in \emph{CICIDS-DoS}.}
\end{figure}

Figure \ref{fig:scaling2} shows the scalability of \mstream\ with respect to the number of dimensions (linear-scale). We plot the time needed to run on the first $10, 20, 30,...,80$ dimensions of the \emph{CICIDS-DoS} dataset. This confirms the linear scalability of \mstream\ with respect to the number of dimensions in the input data.

\begin{figure}[!htb]
        \center{\includegraphics[width=0.7\columnwidth]
        {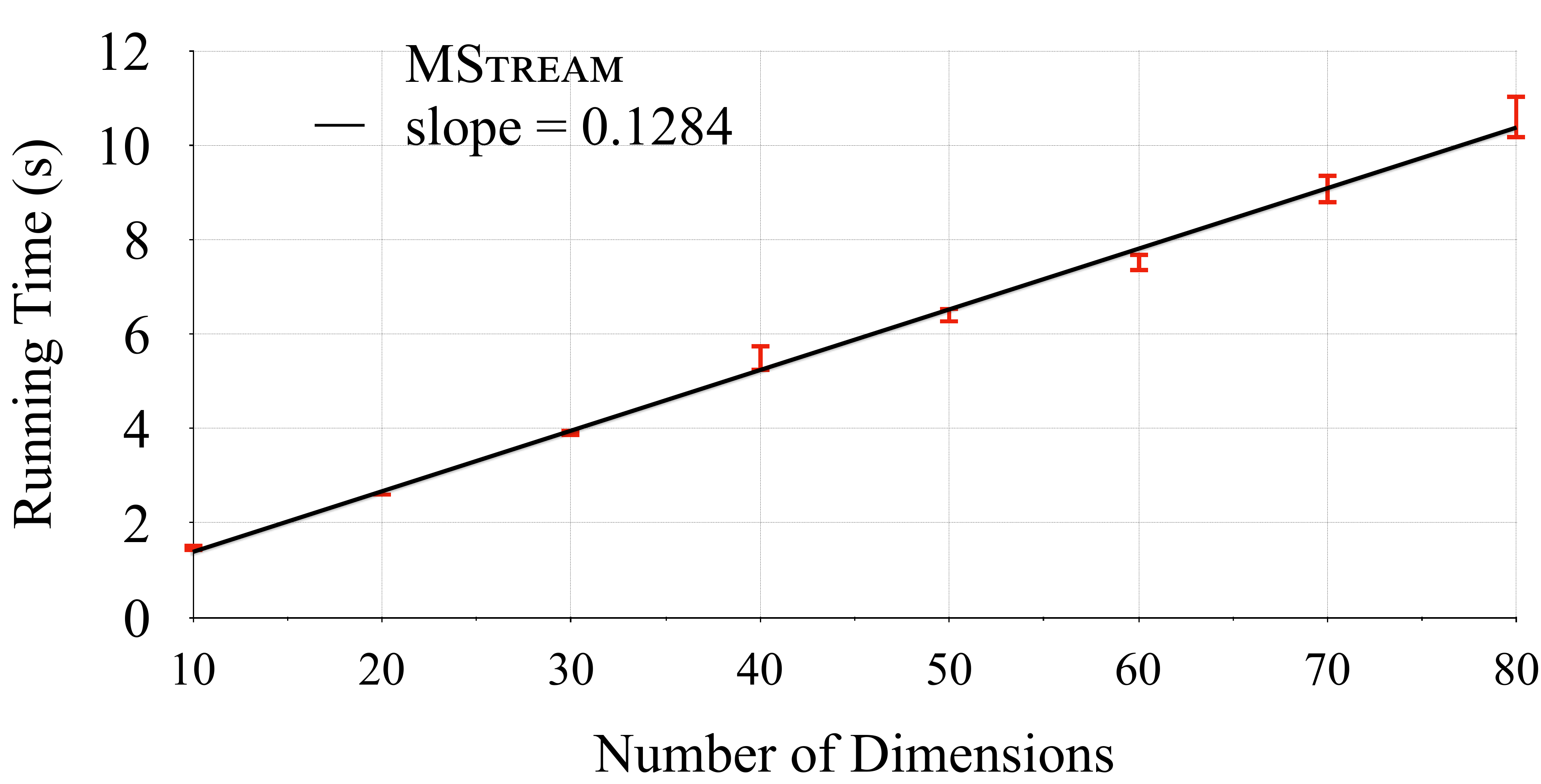}}
        \caption{\label{fig:scaling2} \mstream\ scales linearly with the number of dimensions in \emph{CICIDS-DoS}.}
\end{figure}

Figure \ref{fig:rows} shows the scalability of \mstream\ with respect to the number of hash functions (linear-scale). We plot the time taken to run on the \emph{CICIDS-DoS} dataset with $2, 3, 4$ hash functions. This confirms the linear scalability of \mstream\ with respect to the number of hash functions.

\begin{figure}[!htb]
        \center{\includegraphics[width=0.7\columnwidth]
        {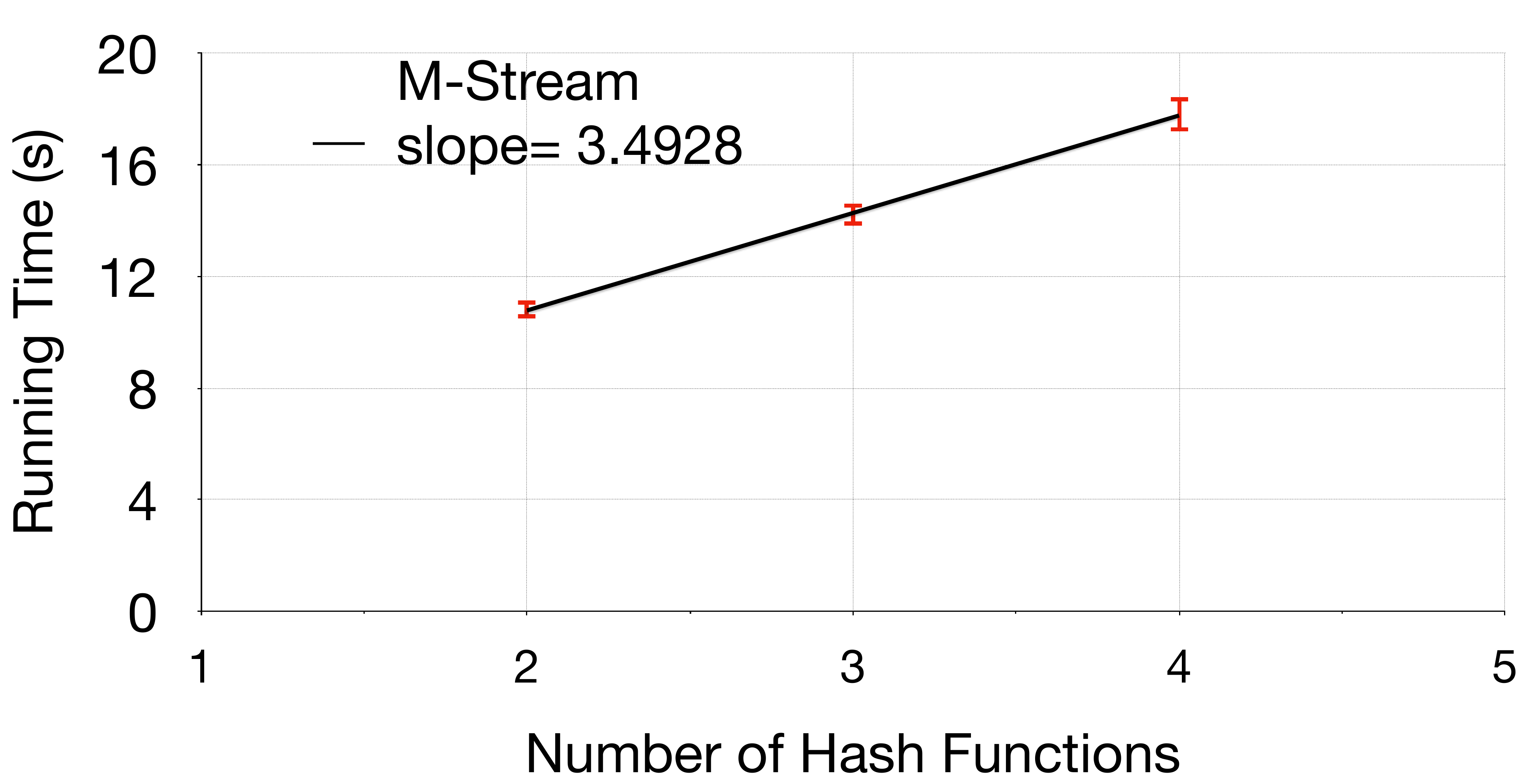}}
        \caption{\label{fig:rows} \mstream\ scales linearly with the number of hash functions in \emph{CICIDS-DoS}.}
\end{figure}

Since \mstream-PCA, \mstream-IB\ and \mstream-AE apply \mstream\ on the lower-dimensional features obtained using an autoencoder, they are also scalable.

Figure \ref{fig:frequencymstream} plots a frequency distribution of the time taken (in microseconds) to process each record in the \emph{CICIDS-DoS} dataset. \mstream\ processes $957K$ records within $10\mu s$ each, $60K$ records within $100\mu s$ each, and the remaining $30K$ records within $1000\mu s$ each.

\begin{figure}[!htb]
        \center{\includegraphics[width=0.7\columnwidth]
        {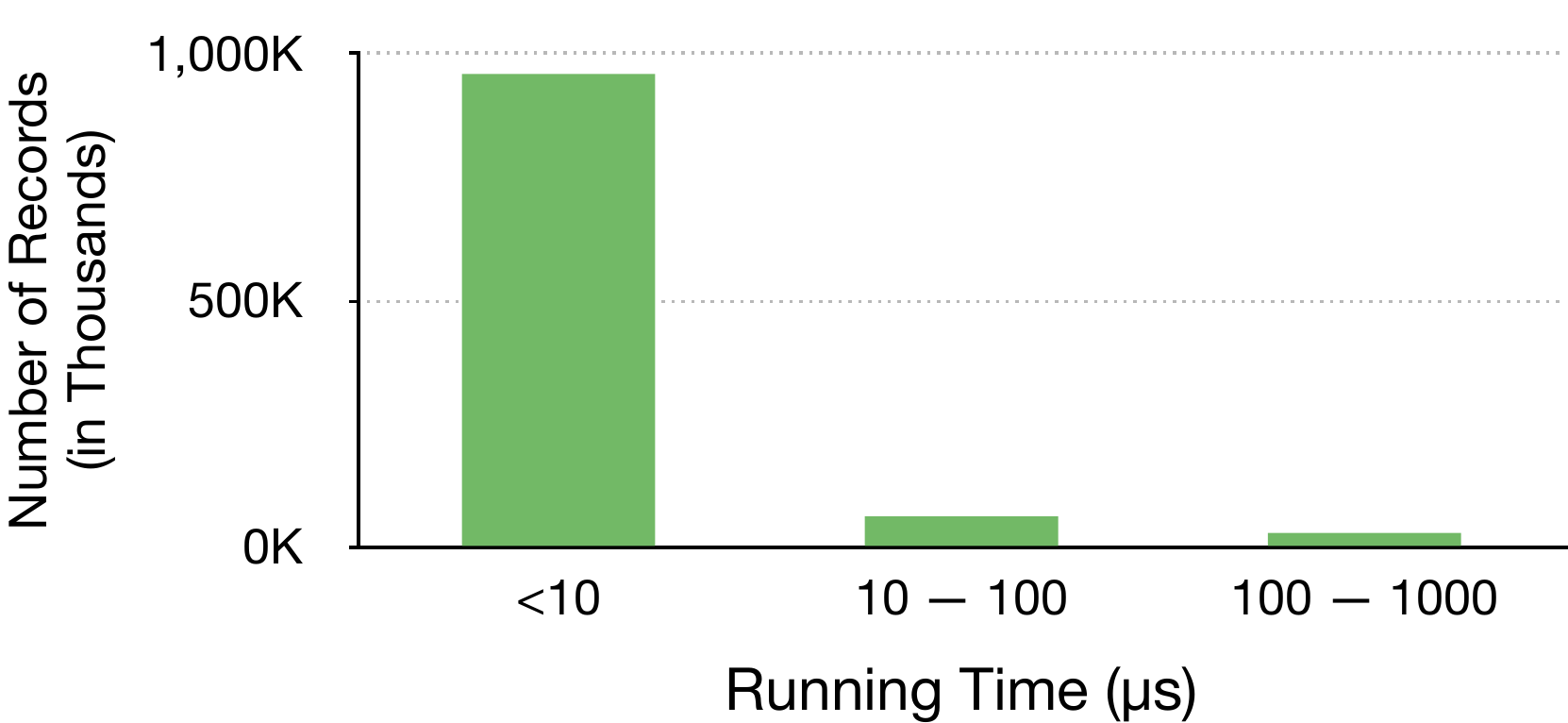}}
        \caption{\label{fig:frequencymstream} Distribution of processing times for $\sim1.05M$ records of the \emph{CICIDS-DoS} dataset.}
\end{figure}

\FloatBarrier

\subsection{Discoveries}
We plot normalized anomaly scores over time using \elliptic, \lof, \iso, \densealert, \rcf\ and \mstream\ on the \emph{CICIDS-DoS} dataset in Figure \ref{fig:dos}. To visualize, we aggregate records occurring in each minute by taking the max anomaly score per minute, for a total of $565$ minutes. Ground truth values are indicated by points plotted at $y=0$ (i.e. normal) or $y=1$ (anomaly).

\lof\ and \densealert\ miss many anomalies whereas \elliptic, \iso\ and \rcf\ output many high scores unrelated to any attacks. This is also reflected in Table \ref{tab:aucmstream} and shows that \mstream\ is effective in catching real-world anomalies.

\begin{figure}[!htb]
        \center{\includegraphics[width=0.7\columnwidth]
        {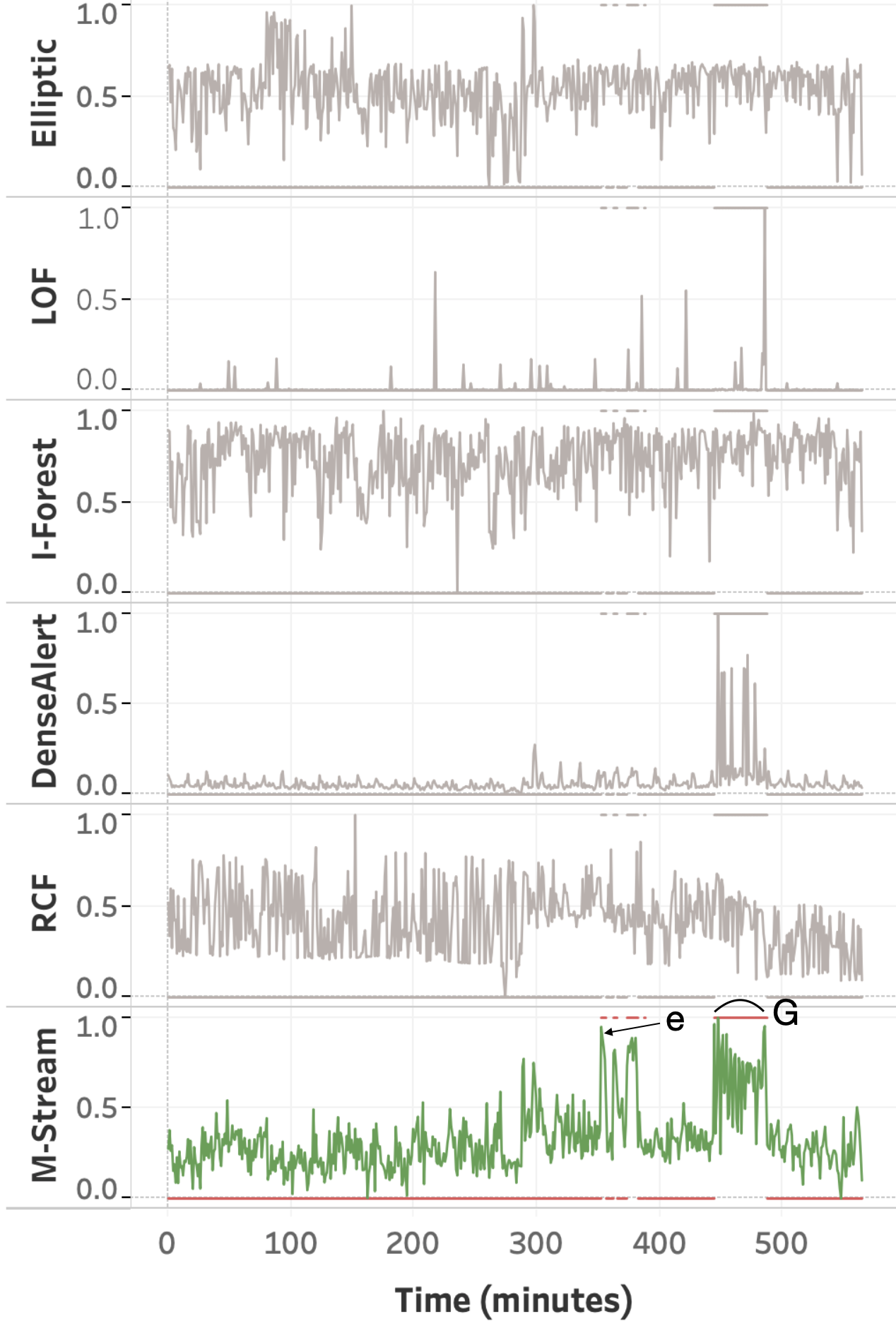}}
        \caption{\label{fig:dos} Plots of anomaly scores over time; spikes for \mstream\ correspond to the ground truth events in \emph{CICIDS-DoS}, but not for baselines.}
\end{figure}

\textbf{Group anomaly detection:}
In Figure \ref{fig:dos}, $G$ is a group anomaly that \mstream\ is able to detect, whereas \elliptic, \lof\ and \iso\ completely miss it. \densealert\ and \rcf\ partially catch it, but are also not fully effective in such high-dimensional datasets. This shows the effectiveness of \mstream\ in catching group anomalies such as DoS and DDoS attacks.

\textbf{Explainability:}
As \mstream\ estimates feature-specific anomaly scores before aggregating them, it is interpretable. For a given anomaly, we can rank the features according to their anomaly scores. We can then explain which features were most responsible for the anomalousness of a record in an unsupervised setting.

For example, in Figure \ref{fig:dos}, \mstream\ finds that $e$ is an anomaly that occurs due to the Flow IAT Min feature. This agrees with \cite{sharafaldin2018toward}, which finds that the best feature set for DoS using a Random Forest approach (supervised learning; in contrast, our approach does not require labels) are B.Packet Len Std, Flow IAT Min, Fwd IAT Min, and Flow IAT Mean.

\FloatBarrier

\section{Ablations}
\label{sec:ablations}
\subsection{Influence of temporal decay factor}
\label{app:3mstream}
Table~\ref{tab:FactorVsAUCmstream} shows the influence of the temporal decay factor $\alpha$ on the ROC-AUC for \mstream\ on \emph{CICIDS-DoS} dataset. We see that $\alpha=0.95$ gives the maximum ROC-AUC for \mstream\ ($0.9326 \pm 0.0006$), as also shown in Table~\ref{tab:aucmstream}.

\begin{table}[!htb]
		\centering
		\caption{Influence of temporal decay factor $\alpha$ on the
	    ROC-AUC in \mstream\  on \emph{CICIDS-DoS} dataset.}
		\label{tab:FactorVsAUCmstream}
		\begin{tabular}{@{}ccc@{}}
			\toprule
			$\alpha$ & ROC-AUC \\
			\midrule
		$0.1$ & $0.9129 \pm 0.0004$ \\
        $0.2$ & $0.9142 \pm 0.0009$\\
        $0.3$ & $0.9156	\pm 0.0006$ \\
        $0.4$ & $0.9164	\pm 0.0014$ \\
        $0.5$ & $0.9163	\pm 0.0005$ \\
        $0.6$ & $0.917 \pm 0.0005$ \\
        $0.7$ & $0.9196	\pm 0.0015$ \\
        $0.8$ & $0.9235	\pm 0.0003$ \\
        $0.9$ & $0.929 \pm	0.0003$ \\
        $0.95$ & $0.9326 \pm 0.0006$ \\
			\bottomrule
		\end{tabular}
		
\end{table}

\subsection{Influence of dimensions}
\label{app:5mstream}
Table~\ref{tab:dimensions} shows the influence of the output dimensions on the ROC-AUC for \mstream-PCA, \mstream-IB, and \mstream-AE \emph{KDDCUP99} dataset. We see that all methods are robust to the variation in output dimensions.

\begin{table}[H]
\centering
\caption{Influence of Output Dimensions on the ROC-AUC of \mstream-PCA, \mstream-IB, and \mstream-AE on \emph{KDDCUP99} dataset.}
\label{tab:dimensions}
\begin{tabular}{@{}rcccc@{}}
\toprule
\textbf{Dimensions}
 & \textbf{\mstream-PCA}
 & \textbf{\mstream-IB}
 & \textbf{\mstream-AE} \\ \midrule
 $4$ & $0.93$ & $0.95$ & $0.95$\\
 $8$ & $0.94$ & $0.95$ & $0.93$ \\
$12$ & $0.92$ & $0.96$ & $0.96$ \\
 $16$ & $0.87$ & $0.96$ & $0.96$ \\
\bottomrule
\end{tabular}
\end{table}

\subsection{Dimensionality Reduction}
\label{sec:app1}
For \mstream-IB, we used an online implementation, \url{https://github.com/burklight/nonlinear-IB-PyTorch} for the underlying Information Bottleneck algorithm with $\beta=0.5$ and the variance parameter set to a constant value of $1$. The network was implemented as a $2$ layer binary classifier. For \mstream-AE, the encoder and decoder were implemented as single layers with ReLU activation.

Table \ref{tab:aearch} shows the network architecture of the autoencoder. Here $n$ denotes the batch size, and $d$ denotes the input data dimensions. The input data dimensions for each dataset are described in Section \ref{sec:experiment}.  
\begin{table}[!htb]
\begin{center}
\caption{Autoencoder Architecture}
\label{tab:aearch}
\begin{tabular}{@{}rccc@{}}
\toprule
\textbf{Index} & \textbf{Layer} & \textbf{Output Size}  \\ \midrule
$1$ \ \ \ \ & Linear & $n\times12$\\
$2$ \ \ \ \ & ReLU & $n\times12$\\
$3$ \ \ \ \ & Linear & $n\times d$\\
\bottomrule
\end{tabular}
\end{center}
\end{table}

We used Adam Optimizer to train both these networks with $\beta_1 = 0.9$ and $\beta_2 = 0.999$. Grid Search was used for hyperparameter tuning:  Learning Rate was searched on $[1\mathrm{e}-2, 1\mathrm{e}-3, 1\mathrm{e}-4, 1\mathrm{e}-5]$, and number of epochs was searched on $[100, 200, 500, 1000]$. The final values for these can be found in Table \ref{tab:ibparam}.
\begin{table}[!htb]
\centering
\caption{\mstream-IB parameters for different datasets.}
\label{tab:ibparam}
\begin{tabular}{@{}rcccc@{}}
\toprule
& \multicolumn{2}{c}{\mstream-IB} & \multicolumn{2}{c}{\mstream-AE} \\
 \textbf{Dataset} & Learning Rate  &  Epochs & Learning Rate  & Epochs  \\ \midrule
 \textbf{KDD} & $1\mathrm{e}-2$ & $100$ & $1\mathrm{e}-2$ & $100$\\
 \textbf{DoS} & $1\mathrm{e}-5$ &  $200$ & $1\mathrm{e}-2$ &  $1000$\\
 \textbf{UNSW} & $1\mathrm{e}-2$ & $100$ & $1\mathrm{e}-2$ & $100$\\
\textbf{DDoS} & $1\mathrm{e}-3$ & $200$ & $1\mathrm{e}-3$ & $100$\\
\bottomrule
\end{tabular}
\end{table}

\subsection{Evaluating ROC-AUC in a streaming manner}
\label{app:4mstream}
Table~\ref{tab:streamingauc} shows the ROC-AUC for \mstream-AE on \emph{KDDCUP99} when evaluated over the stream. The evaluation is done on all records seen so far and is performed after every $100K$ records. We see that as the stream length increases, ROC-AUC for \mstream-AE converges to $0.96$, as also shown in Table~\ref{tab:aucmstream}.

\begin{table}[H]
\centering
\caption{Evaluating ROC-AUC of \mstream-AE in a streaming manner on \emph{KDDCUP99} dataset.}
\label{tab:streamingauc}
\begin{tabular}{@{}rcc@{}}
\toprule
	Stream Size & ROC-AUC \\ \midrule
 $100K$ & $0.912488$ \\
        $200K$ & $0.895391$ \\
        $300K$ & $0.855598$ \\
        $400K$ & $0.934532$ \\
        $500K$ & $0.965250$ \\
        $600K$ & $0.953906$ \\
        $700K$ & $0.947531$ \\
        $800K$ & $0.961340$ \\
        $900K$ & $0.973217$ \\
        $1000K$ & $0.970212$ \\
        $1100K$ & $0.967215$ \\
        $1200K$ & $0.959664$ \\
\bottomrule
\end{tabular}
\end{table}

\FloatBarrier

\section{Conclusion}
In this chapter, we proposed \mstream\ for detecting group anomalies in multi-aspect streams, and \mstream-PCA, \mstream-IB, and \mstream-AE\ which incorporate dimensionality reduction to improve accuracy and speed. Future work could consider more complex combinations (e.g. weighted sums) of anomaly scores for individual attributes. Our contributions are:
\begin{enumerate}
   \item {\bfseries Multi-Aspect Group Anomaly Detection:} We propose a novel approach for detecting group anomalies in multi-aspect data, including both categorical and numeric attributes. Moreover, the anomalies detected by \mstream\ are explainable.
    \item {\bfseries Streaming Approach:} Our approach processes the data in a fast and streaming fashion, performing each update in constant time and memory.
    \item {\bfseries Effectiveness:} Our experimental results using \emph{KDDCUP99}, \emph{CICIDS-DoS}, \emph{UNSW-NB 15} and \emph{CICIDS-DDoS} datasets show that \mstream\ outperforms baseline approaches.
    \item {\bfseries Incorporating Correlation:} We propose \mstream-PCA, \mstream-IB, and \mstream-AE to incorporate correlation between features.
\end{enumerate}

\SetPicSubDir{MemStream}
\SetExpSubDir{MemStream}

\chapter[MemStream: Memory-Based Streaming Anomaly Detection][MemStream]{MemStream:\:Memory-Based Streaming Anomaly Detection}
\label{ch:memstream}
\vspace{2em}

\begin{mdframed}[backgroundcolor=magenta!20] 
Chapter based on work that appeared at WWW'22 \cite{bhatia2022memstream} \href{https://dl.acm.org/doi/pdf/10.1145/3485447.3512221}{[PDF]}.
\end{mdframed}

\section{Introduction}
Given a stream of entries over time in a multi-dimensional data setting where concept drift is present, how can we detect anomalous activities?

To handle concept drift in a streaming setting, our approach uses an explicit memory module. For anomaly detection, this memory can be used to store the trends of normal data that act as a baseline with which to judge incoming records. A read-only memory, in a drifting setting, is of limited use and thus should be accompanied by an appropriate memory update strategy. The records arrive over time; thus, older records in the memory might no longer be relevant to the current trends suggesting a First-In-First-Out memory replacement strategy. The introduction of memory, with an appropriate update strategy, seems to tackle some of the issues in streaming anomaly detection with concept drift. However, the system described so far does not provide a fail-safe for when an anomalous sample enters the memory and is thus susceptible to memory poisoning.

We, therefore, propose \memstream, which uses a denoising autoencoder \cite{denoisingae} to extract features, and a memory module to learn the dynamically changing trend, thereby avoiding the over-generalization of autoencoders (i.e. the problem of autoencoders reconstructing anomalous samples well). Our streaming framework is resilient to concept drift and we prove a theoretical bound on the size of memory for effective drift handling. Moreover, we allow quick retraining when the arriving stream becomes sufficiently different from the training data.

We also discuss two architectural design choices to make \memstream\ robust to memory poisoning. The first modification prevents anomalous elements from entering the memory, and the second modification deals with how the memory can be self-corrected and recovered even if it harbors anomalous elements. Finally, we discuss the effectiveness of \memstream\ compared to state-of-the-art streaming baselines.

In summary, our main contributions are:
\begin{enumerate}
    \item {\bfseries Streaming Anomaly Detection:} We propose a novel streaming approach using a denoising autoencoder and a memory module, for detecting anomalies. \memstream\ is resilient to concept drift and allows quick retraining.
    \item {\bfseries Theoretical Guarantees:} In Proposition \ref{prop:1}, we discuss the optimum memory size for effective concept drift handling. In Proposition \ref{prop:2}, we discuss the motivation behind our architecture design.
    \item {\bfseries Robustness to Memory Poisoning:} \memstream\ prevents anomalies from entering the memory and can self-correct and recover from bad memory states.
    \item {\bfseries Effectiveness:} Our experimental results show that \memstream\ convincingly outperforms $11$ state-of-the-art baselines using $2$ synthetic datasets (that we release as open-source) and $11$ popular real-world datasets.
\end{enumerate}

{\bfseries Reproducibility}: Our code and datasets are available on \href{https://github.com/Stream-AD/MemStream}{https://github.com/Stream-AD/MemStream}.

\section{Problem}

Let $\mathcal{X} = \{x_1, x_2, \cdots\}$ be records arriving in a streaming manner. Each entry $x_i = (x_{i1}, \cdots, x_{id})$ consisting of $d$ \emph{attributes} or dimensions, where each dimension can either be categorical (e.g. IP address) or real-valued (e.g. average packet length).

Our goal is to detect anomalies in streaming data. A common phenomenon in real-world data is that the nature of the stream changes over time. These changes are generally described in terms of the statistical properties of the stream, such as the mean changes across some or all features. As the definition of the ``concept" of normal behavior changes, so does the definition of an anomaly. Thus, we need a model that is able to adapt to the dynamic trend and thereby recognize anomalous records.

\section{Algorithm}

\subsection{Motivation}

Consider an attacker who hacks a particular IP address and uses it to launch denial of service attacks on a server. Modern cybersecurity systems are trained to detect and block such attacks, but this is made more challenging by changes over time, e.g. in the identification of attacking machines. This is a ``concept" drift and the security system must learn to identify such changing trends to mitigate the attacks. Consider the toy example in Table \ref{tab:toymemstream}, comprising a multi-dimensional temporal data stream. There is a sudden distribution change and concept drift in all attributes from time $t=5$ to $t=6$.

\begin{table}[!ht]
\centering
\caption{Simple toy example, consisting of a stream of records over time with a trend shift at $t=6$.}
\label{tab:toymemstream}
\begin{tabular}{@{}rrrrr@{}}
\toprule
{\bfseries Time} & {\bfseries Feature 1} & {\bfseries Feature 2} & {\bfseries Feature 3} & {\bfseries ...} \\ \midrule
$1$ & $8.39$ & $1.44$ & $4.16$ & $\cdots$ \\
$2$ & $6.72$ & $4.55$ & $3.49$ & $\cdots$ \\
$3$ & $3.49$ & $2.10$ & $1.56$ & $\cdots$ \\
$4$ & $4.28$ & $0.64$ & $1.22$ & $\cdots$ \\
$5$ & $5.54$ & $2.40$ & $6.55$ & $\cdots$ \\
$6$ & $183.75$ & $132.03$ & $9.86$ & $\cdots$ \\
$7$ & $146.47$ & $128.49$ & $16.52$ & $\cdots$ \\
$8$ & $197.96$ & $97.16$ & $15.05$ & $\cdots$ \\
$9$ & $192.50$ & $89.95$ & $12.46$ & $\cdots$ \\
$10$ & $158.32$ & $10.37$ & $15.76$ & $\cdots$ \\
\bottomrule
\end{tabular}
\end{table}

The main challenge for the algorithm is to detect these types of patterns in a {\bfseries streaming} manner within a suitable timeframe. That is, the algorithm should not give an impulsive reaction to a short-lived change in the base distribution, but also should not take too long to adapt to the dynamic trend. Note that we do not want to set any limits a priori on the duration of the anomalous activity we want to detect, or the window size after which the model should be updated to account for the concept drift.

\subsection{Overview}
As shown in Figure \ref{fig:teaser}, the proposed \memstream\ algorithm addresses these problems through the use of a memory augmented feature extractor that is initially trained on a small subset of normal data. The memory acts as a reserve of encodings of normal data. At a high level, the role of the feature extractor is to capture the structure of normal data. An incoming record is then scored by \emph{calculating the discounted score} based on the similarity of its encoding as evaluated against those in memory. Based on this score, if the record is deemed normal, then it is used to \emph{update the memory}. To adapt to the changing data trend, memory is required to keep track of the data drift from the original distribution. Since concept drift is generally a gradual process, the memory should maintain the temporal contiguity of records. This is achieved by following a First-In-First-Out (FIFO) memory replacement policy.

\begin{figure*}[!t]
\begin{centering}
  \includegraphics[width=\textwidth]{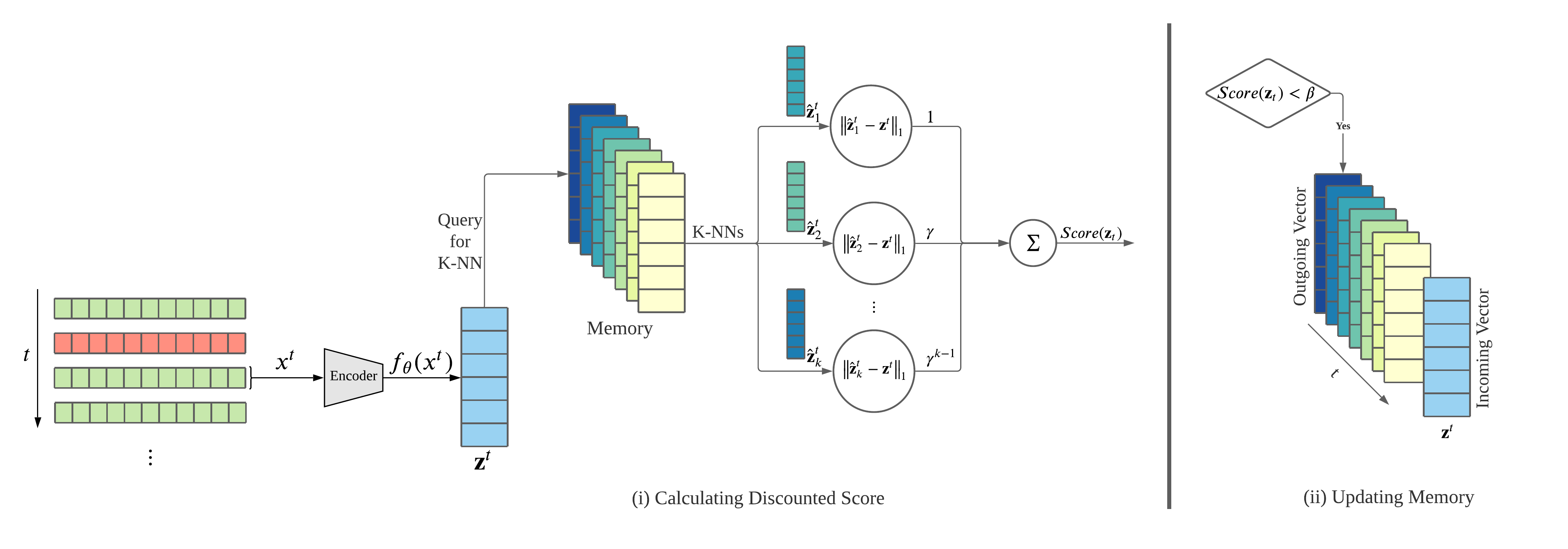}
\caption{
After initial training of the feature extractor on a small subset of normal data, \memstream\ processes records in two steps: (i) It outputs anomaly scores for each record by querying the memory for $K$-nearest neighbors to the record encoding and calculating a discounted distance and (ii) It updates the memory, in a FIFO manner, if the anomaly score is within an update threshold $\beta$.}
  \label{fig:teaser}
  \end{centering}
\end{figure*}

\subsection{Feature Extraction}
\label{sec:representation}
Neural Networks can learn representations using an autoencoder consisting of two parts - an encoder and a decoder \cite{Goodfellow-et-al-2016}. The encoder forms an intermediate representation of the input samples and the decoder is trained to reconstruct the input samples from their intermediate representations. Denoising autoencoders~\cite{denoisingae} partially corrupt the input data before passing it through the encoder. Intuitively, this ``forces" the network to capture the useful structure in the input distribution, pushing it to learn more robust features of the input. In our implementation, we use an additive isotropic Gaussian noise model.

\memstream\ allows flexibility in the choice of the feature extraction backbone. We consider Principal Component Analysis (PCA) and Information Bottleneck (IB) \cite{tishby2000information,kolchinsky2019nonlinear} as alternatives to autoencoders for feature extraction~\cite{Bhatia2021MSTREAM}. PCA-based methods are effective for off-the-shelf learning, with little to no hyperparameter tuning. Information Bottleneck can be used for learning useful features by posing the following optimization problem:
$$
\min _{p(t | x)} I(X ; T)-\beta I(T; Y)
$$
where $X$, $Y$, and $T$ are random variables. $T$ is the compressed representation of $X$, $I(X ; T)$ and $I(T ; Y)$ are the mutual information of $X$ and $T$, and of $T$ and $Y$, respectively, and $\beta$ is a Lagrange multiplier. The problem configuration and the available data greatly influence the choice of the feature extraction algorithm. We evaluate the methods to extract features in Section \ref{exp:representation}.

\subsection{Memory}
\paragraph{\textbf{Memory-based Representation:}} The memory $\boldsymbol{M}$ is a collection of $N$ real-valued $D$ dimensional vectors where $D$ is the dimension of the encodings $\vec{z}$. Given a representation $\vec{z}$, the memory is queried to retrieve the $K$-nearest neighbors $\{{\vec{\hat{z}}_1^{t},\vec{\hat{z}}_2^{t} ... \vec{\hat{z}}_K^{t}\}}$ of $\vec{z}$ in $\boldsymbol{M}$ under the $\ell_1$ norm such that:
$$||\vec{\hat{z}}_1^{t} - \vec{z}||_1 \leq ... \leq ||\vec{\hat{z}}_K^{t} - \vec{z}||_1$$
The hyper-parameter $N$ denotes the memory size. The performance of the algorithm varies depending on the value of $N$; very large or small values of $N$ would hinder the performance.

\paragraph{\textbf{Memory Update:}} Fixed memory trained on limited samples of streaming data will not be able to handle concept drift; therefore, continuous memory update is necessary. Different memory update strategies can be used such as Least Recently Used (LRU), Random Replacement (RR), and First-In-First-Out (FIFO). We observe that the FIFO memory update policy wherein the new element to be added replaces the earliest added element in the memory works well in practice. It can easily handle concept drift in streaming data as the memory retains the most recent non-anomalous samples from the distribution. We compare FIFO with LRU and RR strategies in more detail in Section \ref{exp:memory}. It is also interesting to note that \memstream\ can easily handle periodic patterns by adjusting the memory size: a memory of size greater than the product of the period and the sampling frequency should be sufficient to avoid flagging periodic changes as anomalies. Section \ref{sec:drift} evaluates \memstream's ability to detect anomalies in a periodic setting.

As shown in Algorithm \ref{alg:memstream}, the autoencoder is initially trained with a small amount of data $\mathcal{D}$ to learn how to generate data embeddings (line 2). The memory is initialized with the same training dataset (line 3). We also store the mean and standard deviation of this small training dataset. As new records arrive, the encoder performs normalization using the stored mean and standard deviation and computes the compressed representation $\vec{z}^t$ (line 6). It then computes the $K$-nearest neighbors ($\hat{\vec{z}}^t_1, \cdots,  \hat{\vec{z}}^t_K$) by querying the memory (line 8), and calculates their $\ell_1$ distance with $\vec{z}^t$ (line 10). The final discounted score is calculated as an exponentially weighted average (weighting factor $\gamma$) (line 12). This helps in making the autoencoder more robust. The discounted score is then compared against a user-defined threshold $\beta$ (line 14) and the new record is updated into the memory in a FIFO manner if the score falls within $\beta$ (line 15). This step ensures that anomalous records do not enter the memory. If the memory is updated, then the stored mean and standard deviation are also updated accordingly. The discounted score is returned as the anomaly score for the record $\vec{x}^t$ (line 17).

\begin{algorithm}
	\caption{\memstream\ \label{alg:memstream}}
	\KwIn{Stream of data records}
	\KwOut{Anomaly scores for each record}
	{\bfseries $\triangleright$ Initialization} \\
	Feature Extractor, $f_\theta$, trained using small subset of data $\mathcal{D}$ \\
	Memory, $M$, initialized as $f_\theta(\mathcal{D})$\\
	\While{new sample $\vec{x}^t$ is received:}{
	{\bfseries $\triangleright$ Extract features:} \\
	$\vec{z}^t = f_\theta(\vec{x}^t)$\\
	{\bfseries $\triangleright$ Query memory:} \\
	$\{\vec{\hat{z}_1}^{t},\vec{\hat{z}_2}^{t} ... \vec{\hat{z}_K}^{t}\}  =$ $K$-nearest neighbors of $\vec{z}^t$ in $M$\\
	{\bfseries $\triangleright$ Calculate distance:} \\
	$R(\vec{z}^t,\vec{\hat{z}_i}^t) = ||\vec{z}^t-\vec{\hat{z}_i}^t||_1$ \algorithmicforall\ $i \in 1..K$\\
	{\bfseries $\triangleright$ Assign discounted score:} \\
	$Score(\vec{z}^t) = \dfrac{\sum_{i=1}^K{\gamma^{i-1}R(\vec{z}^t,\vec{\hat{z}_i}^t)}}{\sum_{i=1}^K{\gamma^{i-1}}}$\\
	{\bfseries $\triangleright$ Update Memory:} \\
	\If{$Score(\vec{z}^t) < \beta$}
	{
	    Replace earliest added element in $\boldsymbol{M}$ with $\vec{z}^t$ 
	}
	{\bfseries $\triangleright$ Anomaly Score:}\\
	{\bfseries output} $Score(\vec{z}^t)$\\
	}
\end{algorithm}

\FloatBarrier

\subsection{Theoretical Analysis}

\subsubsection{Relation between Memory Size and Concept Drift}
\label{sec:theorymemsize}
Our analysis of the relation between memory size and concept drifts suggests that the memory size should be proportional to (the spread of data distributions) / (the speed of concept drifts).

As we increase the size of memory,  we can decrease the possibility of a  false positive (falsely classifying a normal sample as an anomaly). This is because it is more likely for a new data point to have a close point in a larger memory. Therefore, on the one hand, in order to decrease the false \textit{positive} rate, we want to increase the memory size. On the other hand, in order to minimize a false \textit{negative} rate  (i.e.,  failing to raise an alarm when an anomaly did happen), Proposition \ref{prop:1} suggests that the memory size should be smaller than some quantity proportional to (standard deviations of distributions) / (the speed of distributional drifts). That is, it suggests that the memory size should be  smaller than  $2 \sigma \sqrt{d(1+\epsilon)}/\alpha$, where $d$ is the input dimension, $\alpha $ measures the speed of distributional drifts, $\sigma$ is the standard deviation of distributions, and $\epsilon \in (0,1)$.  More concretely, under drifting normal distributions, the proposition shows that a new distribution after $\tau$  drifts and an original distribution before the $\tau$ drifts are sufficiently dissimilar whenever $\tau>2 \sigma \sqrt{d(1+\epsilon)}/\alpha$, so that the memory should forget about the original distribution to minimize a false-negative rate. We also discuss this effect of increasing the memory size in Section \ref{exp:memlen}.

\begin{proposition} \label{prop:1}
Define   $S_{t,\epsilon}=\{x\in \RR^d : \|x-\mu_{t}\|_2 \le \sigma\sqrt{d(1+\epsilon)}\}$.
Let $(\mu_t)_t$ be the sequence such that there exits a positive real number $\alpha$ for which $\|\mu_t- \mu_{t'}\|_2\ge (t'-t)\alpha$ for any $t<t'$.  Let  $\tau > \frac{2 \sigma \sqrt{d(1+\epsilon)}}{\alpha}$ and    ${\displaystyle x_{t}\sim \ {\mathcal {N}}(\mu_t, \sigma I)}$ for all $t\in \NN^+$. Then, for any $\epsilon > 0$ and $t \in \NN^+ $, with probability at least $1-2 \exp(-d\epsilon^2/8)$, the following holds:  $x_t \in S_{t,\epsilon}$ and $x_{t+\tau} \notin S_{t,\epsilon}$.
\end{proposition}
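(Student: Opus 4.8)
The statement has two parts: a concentration bound showing $x_t \in S_{t,\epsilon}$ with high probability, and a deterministic-after-conditioning argument showing $x_{t+\tau} \notin S_{t,\epsilon}$ whenever the drift has accumulated enough. The plan is to handle both via a single $\chi^2$-type concentration inequality for Gaussian vectors, then combine with the triangle inequality and a union bound.

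First I would record the key concentration fact. Writing $x_s \sim \mathcal{N}(\mu_s, \sigma I)$, the vector $(x_s - \mu_s)/\sigma$ is a standard Gaussian in $\RR^d$, so $\|x_s - \mu_s\|_2^2/\sigma^2$ is a $\chi^2_d$ random variable with mean $d$. By a standard Laurent--Massart / Bernstein-type tail bound for $\chi^2$ variables, for any $\epsilon \in (0,1)$ we have both $P\!\left(\|x_s-\mu_s\|_2^2 > \sigma^2 d(1+\epsilon)\right) \le \exp(-d\epsilon^2/8)$ and $P\!\left(\|x_s-\mu_s\|_2^2 < \sigma^2 d(1-\epsilon)\right) \le \exp(-d\epsilon^2/8)$. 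Applying the upper tail at $s=t$ gives $x_t \in S_{t,\epsilon}$ with probability at least $1-\exp(-d\epsilon^2/8)$. Applying the lower tail at $s=t+\tau$ gives $\|x_{t+\tau}-\mu_{t+\tau}\|_2 \ge \sigma\sqrt{d(1-\epsilon)}$ with probability at least $1-\exp(-d\epsilon^2/8)$.

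Next I would chain the drift hypothesis with the reverse triangle inequality. On the event just described,
\[
\|x_{t+\tau} - \mu_t\|_2 \ge \|\mu_{t+\tau}-\mu_t\|_2 - \|x_{t+\tau}-\mu_{t+\tau}\|_2 \ge \tau\alpha - \sigma\sqrt{d(1-\epsilon)},
\]
using the assumption $\|\mu_{t+\tau}-\mu_t\|_2 \ge \tau\alpha$. To conclude $x_{t+\tau}\notin S_{t,\epsilon}$ it suffices that $\tau\alpha - \sigma\sqrt{d(1-\epsilon)} > \sigma\sqrt{d(1+\epsilon)}$, i.e. $\tau\alpha > \sigma\sqrt d\,(\sqrt{1-\epsilon}+\sqrt{1+\epsilon})$. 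Since $\sqrt{1-\epsilon}+\sqrt{1+\epsilon}\le 2\sqrt{1+\epsilon}$ (concavity of $\sqrt{\cdot}$, or direct check), the hypothesis $\tau > 2\sigma\sqrt{d(1+\epsilon)}/\alpha$ implies this, so $x_{t+\tau}\notin S_{t,\epsilon}$ on that event. Finally, a union bound over the two events (the upper tail at $t$ and the lower tail at $t+\tau$) gives that both $x_t\in S_{t,\epsilon}$ and $x_{t+\tau}\notin S_{t,\epsilon}$ hold simultaneously with probability at least $1 - 2\exp(-d\epsilon^2/8)$, which is the claim.

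The only mild subtlety — and the place to be careful rather than a genuine obstacle — is the constant bookkeeping: matching whatever $\chi^2$ tail constant one invokes to the stated $\exp(-d\epsilon^2/8)$, and verifying the inequality $\sqrt{1-\epsilon}+\sqrt{1+\epsilon}\le 2\sqrt{1+\epsilon}$ cleanly so that the threshold $2\sigma\sqrt{d(1+\epsilon)}/\alpha$ comes out exactly as written. One should also note the hypothesis is stated for general $\epsilon>0$ but the $\chi^2$ lower-tail bound in this clean form needs $\epsilon\in(0,1)$; for $\epsilon\ge 1$ the lower-tail event is trivial or the bound can be replaced, so the statement still goes through (and the interesting regime is small $\epsilon$ anyway). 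No step requires anything beyond Gaussian concentration and the triangle inequality, so the proof is short.
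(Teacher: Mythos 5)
Your overall strategy---$\chi^2$ concentration for $\|x_s-\mu_s\|_2$, the reverse triangle inequality against the drift $\|\mu_{t+\tau}-\mu_t\|_2\ge\tau\alpha$, and a union bound over the two time indices---is exactly the paper's, but there is a direction error in the step handling $x_{t+\tau}$. You invoke the \emph{lower} tail at $s=t+\tau$, so the event you condition on is $\|x_{t+\tau}-\mu_{t+\tau}\|_2\ge\sigma\sqrt{d(1-\epsilon)}$. But the chain
\[
\|x_{t+\tau}-\mu_t\|_2\ \ge\ \|\mu_{t+\tau}-\mu_t\|_2-\|x_{t+\tau}-\mu_{t+\tau}\|_2\ \ge\ \tau\alpha-\sigma\sqrt{d(1-\epsilon)}
\]
needs the \emph{upper} bound $\|x_{t+\tau}-\mu_{t+\tau}\|_2\le\sigma\sqrt{d(1-\epsilon)}$ for its second inequality; on your lower-tail event $\|x_{t+\tau}-\mu_{t+\tau}\|_2$ can be arbitrarily large, in which case the reverse triangle inequality gives nothing (and $x_{t+\tau}$ could in principle deviate from $\mu_{t+\tau}$ back toward $\mu_t$ and land inside $S_{t,\epsilon}$). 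So the step as written fails.

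The fix is immediate and is what the paper does: apply the same \emph{upper}-tail bound at $t+\tau$ as at $t$, i.e.\ $\|x_{t+\tau}-\mu_{t+\tau}\|_2\le\sigma\sqrt{d(1+\epsilon)}$ with probability at least $1-\exp(-d\epsilon^2/8)$. Then $\|x_{t+\tau}-\mu_t\|_2\ge\tau\alpha-\sigma\sqrt{d(1+\epsilon)}>\sigma\sqrt{d(1+\epsilon)}$ follows directly from $\tau>2\sigma\sqrt{d(1+\epsilon)}/\alpha$; the constant $2\sqrt{1+\epsilon}$ comes out exactly rather than through the slack $\sqrt{1-\epsilon}+\sqrt{1+\epsilon}\le 2\sqrt{1+\epsilon}$, and your caveat about $\epsilon\ge 1$ evaporates because the lower tail is never needed. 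The $x_t\in S_{t,\epsilon}$ half and the final union bound are fine as you have them.
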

\begin{proof}
Let us write $\bd(x,x')=\|x-x'\|_2$. Then, by the triangle inequality,
\begin{align} \label{eq:1}
\bd(\mu_t, \mu_{t+\tau}) \le \bd(\mu_t, x_{t+\tau})+\bd( x_{t+\tau}, \mu_{t+\tau}). 
\end{align} 
By using the property of  the Gaussian distribution with   ${\displaystyle z_{t+\tau}\sim \ {\mathcal {N}}(0,  I)}$, we have that 
\begin{align*}
& \Pr(\| x_{t+\tau}-\mu_{t+\tau}\|_{2}< \sigma\sqrt{d(1+\epsilon)} ) \\
& = \Pr(\| \sigma z_{t+\tau}+\mu_{t+\tau}-\mu_{t+\tau}\|_{2}< \sigma\sqrt{d(1+\epsilon)} ) \\
& =\Pr(\| z_{t+\tau}\|_{2}^{2}< d(1+\epsilon) ). 
\end{align*}

Thus, using the Chernoff bound for the Standard normal distribution  for  ${\displaystyle z_{t+\tau}\sim \ {\mathcal {N}}(0,  I)}$, we have that
\begin{align*}
\Pr(\| x_{t+\tau}-\mu_{t+\tau}\|_{2}>\sigma\sqrt{d(1+\epsilon)} ) \le \exp\left(-\frac{d\epsilon^2}{8}\right).
\end{align*}
Similarly, 
\begin{align*}
\Pr(\| x_{t}-\mu_{t}\|_{2}>\sigma\sqrt{d(1+\epsilon)} ) \le \exp\left(-\frac{d\epsilon^2}{8}\right).
\end{align*}
By tanking union hounds, we have  that with probability at least $1-2 \exp(-d\epsilon^2/8)$,
\begin{align} \label{eq:2}
\| x_{t+\tau}-\mu_{t+\tau}\|_{2}\le\sigma\sqrt{d(1+\epsilon)},
\end{align}
and
\begin{align} \label{eq:3}
\| x_{t}-\mu_{t}\|_{2}\le\sigma\sqrt{d(1+\epsilon)}.
\end{align}
By using the upper bound of \eqref{eq:2} in \eqref{eq:1}, we have that $\bd(\mu_t, \mu_{t+\tau}) \le \bd(\mu_t, x_{t+\tau})+\sigma\sqrt{d(1+\epsilon)}$,
which implies that  
\begin{align*} 
\bd(\mu_t, \mu_{t+\tau}) -\sigma\sqrt{d(1+\epsilon)}\le \bd(\mu_t, x_{t+\tau}).
\end{align*}
Using the assumption on  $(\mu_t)_t$, 
\begin{align*} 
\tau\alpha-\sigma\sqrt{d(1+\epsilon)}\le \bd(\mu_t, x_{t+\tau}).
\end{align*}
Using the definition of $\tau$, 
\begin{align*} 
 \sigma \sqrt{d(1+\epsilon)}< \bd(\mu_t, x_{t+\tau}).
\end{align*}
This means that $x_{t+\tau} \notin S_{t,\epsilon}$. On the other hand, equation \eqref{eq:3} shows that $ x_{t} \in S_{t,\epsilon}$. 
\end{proof}

\subsubsection{Architecture Choice}
\label{sec:theoryarch}
In the following, we provide one reason why we use an architecture with $d \le D$, where $d$ is the input dimension and $D$ is the embedding dimension. Namely, Proposition \eqref{prop:2} shows that if $d>D$, then there exists an anomaly constructed through perturbation of a normal sample such that the anomaly is not detectable. The construction of an anomaly in the proof is indeed unique to the case of  $d>D$, and is not applicable to the case of   $d \le D$. This provides the motivation for why we may want to use the architecture of $d \le D$, to avoid such an undetectable anomaly.

Let $\theta$ be fixed. Let  $f_\theta$ be  a  deep neural network $f_\theta: \RR^d\rightarrow \RR^D$ with ReLU and/or max-pooling as:
$f_\theta(x)=\sigma^{[L]}\big(z^{[L]}(x,\theta)\big),  z^{[l]}(x,\theta) = W^{[l]} \sigma^{(l-1)} \left(z^{[l-1]}(x,\theta)\right)$, for  $l=1,2,\dots, L$, where $\sigma^{(0)} \left(z^{[0]}(x,\theta)\right)= x$, $\sigma$ represents nonlinear function due to ReLU and/or max-pooling, and $W^{[l]}\in \RR^{N_l \times N_{l-1}}$ is a matrix of weight parameters connecting the $(l-1)$-th layer to the $l$-th layer. For the nonlinear function $\sigma$ due to ReLU and/or max-pooling, we can define $\dot \sigma^{[l]}(x,\theta)$ such that $\dot \sigma^{[l]}(x,\theta)$ is a diagonal matrix with each element being $0$ or $1$, and $\sigma^{[l]} \left(z^{[l]}(x,\theta)\right)=\dot \sigma^{[l]}(x,\theta) z^{[l]}(x,\theta)$. For any differentiable point $x$ of $f_\theta$, define $\Omega(x) = \{x'\in \RR^d:\forall l, \  \dot \sigma^{[l]}(x',\theta)= \dot\sigma^{[l]}(x,\theta)\}$ and $\Bcal_{r}(x)=\{x'\in \RR^d : \|x - x'\|_2 \le r\}$.

\begin{proposition} \label{prop:2}
Let $x$ be a differentiable point of $f_\theta$ such that   $\mathcal{B}_{r}(x) \subseteq \Omega(x)$ for some $r>0$. If $d>D$, then there exists a  $\delta \in \RR^d$ such that for any $\hx\in\RR^d$ and $\bbeta >0$,  the following holds: $\|\delta\|_{2}= r$ and 
$$
R(x,\hx) <\bbeta \implies R(x+\delta, \hx)< \bbeta.
$$
\end{proposition}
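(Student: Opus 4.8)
The plan is to leverage the local linearity of $f_\theta$ on $\mathcal{B}_r(x)$ and then run a dimension count. First I would unroll the layerwise recursion: using the identity $\sigma^{[l]}\!\left(z^{[l]}(x',\theta)\right) = \dot\sigma^{[l]}(x',\theta)\, z^{[l]}(x',\theta)$ together with the fact that every $x'\in\Omega(x)$ shares the activation pattern of $x$, i.e.\ $\dot\sigma^{[l]}(x',\theta) = \dot\sigma^{[l]}(x,\theta)$ for all $l$, an induction over $l=1,\dots,L$ yields
\[
f_\theta(x') = A x' \qquad\text{for all } x'\in\mathcal{B}_r(x),
\]
where $A := \dot\sigma^{[L]}(x,\theta) W^{[L]} \cdots \dot\sigma^{[1]}(x,\theta) W^{[1]} \in \RR^{D\times d}$ is constant on the closed ball (here I use the hypothesis $\mathcal{B}_r(x)\subseteq\Omega(x)$, so the identity holds even at the boundary of the ball). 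Thus $f_\theta$ coincides with a single linear map throughout $\mathcal{B}_r(x)$.

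The key step is then a rank argument. Since $A$ maps $\RR^d$ into $\RR^D$ and $d>D$, rank--nullity gives $\dim\ker A \ge d-D\ge 1$, so I would pick any nonzero $v\in\ker A$ and set $\delta := r\,v/\|v\|_2$. Then $\|\delta\|_2 = r$, so $x+\delta\in\mathcal{B}_r(x)\subseteq\Omega(x)$, and $A\delta = 0$; hence $f_\theta(x+\delta) = A(x+\delta) = Ax = f_\theta(x)$, i.e.\ the perturbed record has exactly the same embedding as $x$. This is precisely where $d>D$ is essential: when $d\le D$ the map $A$ may be injective with $\ker A=\{0\}$, so no such direction exists --- this is the asymmetry the proposition is pointing at, and it is why the construction is unique to the over-determined case.

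Finally I would observe that the anomaly score produced by Algorithm~\ref{alg:memstream} depends on a record only through its embedding under $f_\theta$ --- the $K$ nearest neighbours in memory, their $\ell_1$ distances, and the discounted average are all functions of $f_\theta(\cdot)$ --- so $R(\cdot,\hx)$ factors through $f_\theta$. Consequently $R(x+\delta,\hx) = R(x,\hx)$ for every $\hx\in\RR^d$, and therefore $R(x,\hx)<\bbeta \Rightarrow R(x+\delta,\hx)<\bbeta$ for any $\bbeta>0$, which is the claim. I expect the only genuinely fiddly part to be the first step: carefully propagating the frozen-activation identity through all $L$ layers to certify that $f_\theta|_{\mathcal{B}_r(x)}$ is exactly the linear map $x'\mapsto Ax'$ (including at the boundary of the ball, which is still inside $\Omega(x)$ by assumption); everything after that is a two-line linear-algebra computation.
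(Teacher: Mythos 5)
Your proposal is correct and follows essentially the same route as the paper's proof: freeze the activation pattern on $\mathcal{B}_r(x)\subseteq\Omega(x)$ so that $f_\theta$ restricts to a single linear map $M\in\RR^{D\times d}$, use $d>D$ to extract a nonzero null-space direction scaled to norm $r$, and conclude $f_\theta(x+\delta)=f_\theta(x)$ so the score is unchanged. Your closing observation that $R(\cdot,\hx)$ factors through $f_\theta$ is the step the paper leaves implicit ("this implies the statement"), so you have if anything been slightly more explicit, not different.
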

\begin{proof}
 We can rewrite the output of the function as
$
f_\theta(x)=\dot \sigma^{[L]}(x,\theta) W^{[L]} \dot \sigma^{[L-1]}(x,\theta)  W^{[L-1]}  \cdots 
W^{[2]}\dot \sigma^{[1]}(x,\theta) W^{[1]} x.
$
Thus, for any $\delta$ such that $(x+\delta) \in \mathcal{B}_{r}(x) \subseteq \Omega(x)$, we have
\begin{align*}
f_\theta(x+\delta)& =\dot \sigma^{[L]}(x+\delta,\theta) W^{[L]} \dot \sigma^{[L-1]}(x+\delta,\theta)  W^{[L-1]}  \cdots \\
& W^{[2]}\dot \sigma^{[1]}(x+\delta,\theta) W^{[1]} (x+\delta)
\\
& =\sigma^{[L]}(x,\theta) W^{[L]} \dot \sigma^{[L-1]}(x,\theta)  W^{[L-1]}  \cdots \\
& W^{[2]}\dot \sigma^{[1]}(x,\theta) W^{[1]} (x+\delta)
 \\ & = M x + M \delta 
\end{align*}
where $M = \sigma^{[L]}(x,\theta) W^{[L]} \dot \sigma^{[L-1]}(x,\theta)  W^{[L-1]}  \\ \cdots W^{[2]}\dot \sigma^{[1]}(x,\theta) W^{[1]}$. Notice that $M$ is a matrix of size $D$ by $d$. Thus,  ff $d>D$, there the nulls space (or the kernel space) of $M$ is not $\{0\}$ and there exists $\delta'\in\RR^d $ in the null space of $M$ such that $\|\delta'\|\neq 0$ and $M (r'\delta')=0$ for all $r'>0$. Thus, there exists a  $\delta \in \RR^d$ such that  $(x+\delta) \in \mathcal{B}_{r}(x) \subseteq \Omega(x)$,   $\|\delta\|_2 = r$, and $M \delta=0$, yielding
$$
f_\theta(x+\delta)=M x=f_\theta(x). 
$$
This implies the statement of this proposition. 
\end{proof}

\FloatBarrier
\section{Experiments}
\label{sec:expmemstream}

In this section, we aim to answer the following questions:

\begin{enumerate}[label=\textbf{Q\arabic*.}]
\item {\bfseries Comparison to Streaming Methods:} How accurately does \memstream\ detect real-world anomalies as compared to state-of-the-art streaming baseline methods?
\item {\bfseries Concept Drift:} How fast can \memstream\ adapt under concept drift?
\item {\bfseries Retraining:} What effect does retraining \memstream\ have on the accuracy and time?
\item {\bfseries Self-Correction and Recovery:} Does \memstream\ provide a self-correction mechanism to recover from ``bad" memory states?
\end{enumerate}

\paragraph{Experimental Setup}
All methods output an anomaly score for every record (higher is more anomalous). We report the ROC-AUC (Area under the Receiver Operating Characteristic curve). All experiments, unless explicitly specified, are performed $5$ times for each parameter group, and the mean values are reported. All experiments are carried out on a $2.6 GHz$  Intel Core \textit{i}$7$ system with $16 GB$ RAM and running Mac OS Catalina $10.15.5$. Following \textsc{MStream}, we take the output dimension as $8$ for PCA and IB. For \memstream-PCA, we use the open-source implementation available in the scikit-learn \cite{scikit-learn} library of Principal Component Analysis. For \memstream-IB, we used an online implementation \footnote{\url{https://github.com/burklight/nonlinear-IB-PyTorch}} for the underlying Information Bottleneck algorithm with $\beta=0.5$ and the variance parameter set to $1$. The network was implemented as a $2$ layer binary classifier. For \memstream, the encoder and decoder were implemented as single layer Neural Nets with ReLU activation. We used Adam Optimizer to train both these networks with $\beta_1 = 0.9$ and $\beta_2 = 0.999$. Grid Search was used for hyperparameter tuning:  Learning Rate was set to $1\mathrm{e}-2$, and the number of epochs was set to $5000$. The memory size $N$, and the value of the threshold $\beta$, can be found in Table \ref{tab:params} in the Appendix. Memory size for each intrusion detection dataset was searched in $\{256, 512, 1024, 2048\}$. For multi-dimensional point datasets, if the size of the dataset was less than $2000$, $N$ was searched in $\{4, 8, 16, 32, 64\}$, and if it was greater than $2000$, then $N$ was searched in $\{128, 256, 512, 1024, 2048\}$. The threshold $\beta$, is an important parameter in our algorithm, and hence we adopt a finer search strategy. For each dataset, and method, $\beta$ was searched in $\{10, 1, 0.1, 0.001, 0.0001\}$. Unless stated otherwise, AE was used for feature extraction with output dimension $D=2d$, and with a FIFO memory update policy. The KNN coefficient $\gamma$  was set to $0$ for all experiments. For the synthetic dataset, we use a memory size of $N=16$. For all methods, across all datasets, the number of training samples used is equal to the memory size.

\paragraph{Datasets:}

Table \ref{tab:datasets} contains the datasets that we use for evaluation. We briefly describe how these datasets are prepared for anomaly detection. Table \ref{tab:params} shows the memory size $N$, and the value of the threshold $\beta$.

\begin{table*}[!htb]
\centering
\caption{Statistics of the datasets.}
\resizebox{\linewidth}{!}{
\begin{tabular}{@{}lccccccccccccc@{}}
\toprule
 
 & KDD99
 & NSL
 & UNSW
 & DoS
 & Syn.
 & Ion.
 & Cardio
 & Sat.
 & Sat.-2
 & Mamm.
 & Pima
 & Cover \\ \midrule

\textbf{Records} & $494,021$ & $125,973$ & $2,540,044$ & $1,048,575$ & $10,000$ & $351$ & $1831$ & $6435$ & $5803$ & $11183$ & $768$ & $286048$ \\

\textbf{Dimensions} & $121$ & $126$ & $122$ & $95$ & $1$ & $33$ & $21$ & $36$ & $36$ & $6$ & $8$ & $10$ \\

\bottomrule
\label{tab:datasets}
\end{tabular}
}
\end{table*}

\begin{table*}[!htb]
\centering
\caption{Memory Length and Update Threshold used for the different datasets}
\resizebox{\linewidth}{!}{
\begin{tabular}{@{}lccccccccccccc@{}}
\toprule
 \textbf{Method}
 & KDD99
 & NSL
 & UNSW
 & DoS
 & Syn.
 & Ion.
 & Cardio
 & Sat.
 & Sat.-2
 & Mamm.
 & Pima
 & Cover \\ \midrule

$N$ & $256$ & $2048$ & $2048$ & $2048$ & $16$ & $4$ & $64$ & $32$ & $256$ & $128$ & $64$ & $2048$\\

$\beta$ & $1$ & $0.1$ & $0.1$ & $0.1$ & $1$ & $0.001$ & $1$ & $0.01$ & $10$ & $0.1$ & $0.001$ & $0.0001$  \\

\bottomrule
\label{tab:params}
\end{tabular}
}
\end{table*}

\begin{enumerate}
    \item \emph{KDDCUP99} \cite{KDDCup192:online} is based on the DARPA data set and is amongst the most extensively used data sets for multi-aspect anomaly detection. The original dataset contains samples of $41$ dimensions, $34$ of which are continuous and $7$ are categorical, and also displays concept drift \cite{minku2011ddd}. We use one-hot representation to encode the categorical features, and eventually, we obtain a dataset of $121$ dimensions. For the \emph{KDDCUP99} dataset, we follow the settings in \cite{zong2018deep}. As $20\%$ of data samples are labeled as ``normal" and $80\%$ are labeled as ``attack", normal samples are in a minority group; therefore, we treat normal ones as anomalous in this experiment, and the $80\%$ samples labeled as attack in the original dataset are treated as normal samples.

    \item \emph{NSL-KDD} \cite{tavallaee2009detailed} solves some of the inherent problems of the \emph{KDDCUP99} dataset such as redundant and duplicate records and is considered more enhanced as compared to \emph{KDDCUP99}.

    \item \emph{CICIDS-DoS} \cite{sharafaldin2018toward} was created by the Canadian Institute of Cybersecurity. Each record is a flow containing features such as source IP address, source port, destination iP address, bytes, and packets. These flows were captured from a real-time simulation of normal network traffic and synthetic attack simulators. This consists of the \emph{CICIDS-DoS} dataset ($1.05$ million records). \emph{CICIDS-DoS} has $5\%$ anomalies and contains samples of $95$ dimensions with a mixture of numeric and categorical features. For categorical features, we further used binary encoding to represent them because of the high cardinality. \cite{ring2019survey} surveys more than 30 intrusion detection datasets and recommends to use the newer CICIDS \cite{sharafaldin2018toward} and UNSW-NB15 \cite{moustafa2015unsw} datasets.

    \item \emph{UNSW-NB15} \cite{moustafa2015unsw} was created by the Cyber Range Lab of the Australian Centre for Cyber Security (ACCS) for generating a hybrid of real modern normal activities and synthetic contemporary attack behaviors. This dataset has nine types of attacks, namely, Fuzzers, Analysis, Backdoors, DoS, Exploits, Generic, Reconnaissance, Shellcode, and Worms. It has $13\%$ anomalies.

    \item Ionosphere \cite{oddsdatasets} is derived using the ionosphere dataset from the UCI ML repository \cite{ucidatasets} which is a binary classification dataset with dimensionality $34$. There is one attribute having values of all zeros, which is discarded. So the total number of dimensions is $33$. The `bad' class is considered as outliers class and the `good' class as inliers.
    
    \item Cardio \cite{oddsdatasets} is derived using the Cardiotocography (Cardio) dataset from the UCI ML repository \cite{ucidatasets} which consists of measurements of fetal heart rate (FHR) and uterine contraction (UC) features on cardiotocograms classified by expert obstetricians. This is a classification dataset, where the classes are normal, suspect, and pathologic. For outlier detection, the normal class formed the inliers, while the pathologic (outlier) class is downsampled to $176$ points. The suspect class is discarded.
    
    \item Satellite \cite{oddsdatasets} is derived using the Statlog (Landsat Satellite) dataset from the UCI ML repository \cite{ucidatasets} which is a multi-class classification dataset. Here, the training and test data are combined. The smallest three classes, i.e. $2, 4, 5$ are combined to form the outliers class, while all the other classes are combined to form an inlier class.
    
    \item Satimage-2 \cite{oddsdatasets} is derived using the Statlog (Landsat Satellite) dataset from the UCI ML repository \cite{ucidatasets} which is also a multi-class classification dataset. Here, the training and test data are combined. Class $2$ is down-sampled to $71$ outliers, while all the other classes are combined to form an inlier class. The modified dataset is referred to as Satimage-$2$.
    
    \item Mammography \cite{oddsdatasets} is derived from openML\footnote{https://www.openml.org/}. The publicly available openML dataset has $11,183$ samples with $260$ calcifications. If we look at predictive accuracy as a measure of goodness of the classifier for this case, the default accuracy would be $97.68\%$ when every sample is labeled non-calcification. But, it is desirable for the classifier to predict most of the calcifications correctly. For outlier detection, the minority class of calcification is considered as the outlier class and the non-calcification class as inliers.
    
    \item Pima \cite{oddsdatasets} is the same as Pima Indians diabetes dataset of the UCI ML repository \cite{ucidatasets} which is a binary classification dataset. Several constraints were placed on the selection of instances from a larger database. In particular, all patients here are females at least $21$ years old of Pima Indian heritage.

    \item ForestCover \cite{oddsdatasets} is the ForestCover/Covertype dataset from the UCI ML repository \cite{ucidatasets} which is a multiclass classification dataset. It is used in predicting forest cover type from cartographic variables only. This dataset has $54$ attributes ($10$ quantitative variables, $4$ binary wilderness areas, and $40$ binary soil type variables). Here, an outlier detection dataset is created using only $10$ quantitative attributes. Instances from class $2$ are considered as normal points and instances from class $4$ are anomalies. The anomalies ratio is $0.9\%$. Instances from the other classes are omitted.

\end{enumerate}

Apart from these standard datasets, we also create and use a synthetic dataset (that we plan to release publicly), \emph{Syn} with $10\%$ anomalies and $T=10000$ samples. This dataset is constructed as a superposition of a linear wave with slope $2 \times 10^{-3}$, two sinusoidal waves with time periods $0.2T$ and $0.3T$ and amplitudes $8$ and $4$, altogether with an additive Gaussian noise from a standard normal distribution. $10\%$ of the samples are chosen at random and are perturbed with uniform random noise from the interval $[3, 6]$ to simulate anomalous data. Figure \ref{fig:syndata} shows a scatterplot of the synthetic data. Anomalous samples constitute $10\%$ of the data and are represented by red dots in the scatter plot.

    \begin{figure}[!htb]
    \begin{centering}
  \includegraphics[width=0.7\columnwidth]{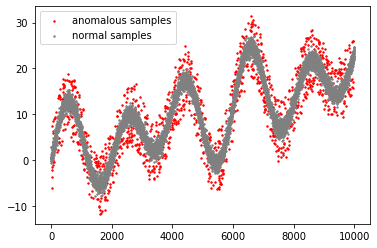}
  \caption{Scatterplot of the Synthetic Dataset.}
  \label{fig:syndata}
 \end{centering}
    \end{figure}
    
     By construction, the synthetic data distribution changes significantly over time. The presence of this concept drift makes the task challenging resulting in poor performance by baseline approaches, as seen in the Experiments. However, \memstream, through the use of explicit memory, can adapt to the drift in the distribution, proving its effectiveness in concept drift settings.

\paragraph{Baseline Parameters}

STORM: window\_size=$10000$, max\_radius=$0.1$\\
HS-Tree: window\_size=$100$, num\_trees=$25$, max\_depth=$15$, initial\_window\_X=None\\
iForestASD:	window\_size=$100$, n\_estimators=$25$, anomaly\_threshold=$0.5$, drift\_threshold=$0.5$\\
RS-Hash: sampling\_points=$1000$, decay=$0.015$, num\_components=$100$, num\_hash\_fns=$1$\\
RCF: num\_trees=$4$, shingle\_size=$4$, tree\_size=$256$\\
LODA: num\_bins=$10$, num\_random\_cuts=$100$\\
Kitsune: max\_size\_ae=$10$, learning\_rate=$0.1$, hidden\_ratio=$0.75$, grace\_feature\_mapping=grace\_anomaly\_detector=$10\%$ of data\\
DILOF: window size = $400$, thresholds = [0.1f, 1.0f, 1.1f, 1.15f, 1.2f, 1.3f, 1.4f, 1.6f, 2.0f, 3.0f] , K = $8$\\
\textsc{xStream}: projection size=$50$, number of chains=$50$, depth=$10$, rowstream=$0$, nwindows=$0$, initial sample size=\# rows in data, scoring batch size=$100000$\\
\textsc{MStream}: alpha = $0.85$\\
Ex. IF: ntrees=$200$, sample\_size=$256$, limit=None, ExtensionLevel=$1$

\subsection{Comparison to Streaming Methods}

Table \ref{tab:auc} shows the AUC of \memstream\ and state-of-the-art streaming baselines. We use open-sourced implementations of DILOF \cite{Na2018DILOFEA}, \textsc{xStream} \cite{Manzoor2018xStreamOD}, \textsc{MStream} \cite{Bhatia2021MSTREAM}, Extended Isolation Forest (Ex. IF) \cite{Hariri2021ExtendedIF}, provided by the authors, following parameter settings as suggested in the original papers. For STORM \cite{Angiulli2007DetectingDO}, HS-Tree \cite{Tan2011FastAD}, iForestASD \cite{Ding2013AnAD}, RS-Hash \cite{Sathe2016SubspaceOD}, Random Cut Forest (RCF) \cite{guha2016robust}, LODA \cite{Pevn2015LodaLO}, Kitsune \cite{Mirsky2018KitsuneAE}, we use the open-source library PySAD \cite{pysad} implementation, following original parameters. LODA could not process the large \emph{UNSW} dataset. Ex. IF and Kitsune are unable to run on datasets with just one field, therefore their results with \emph{Syn} are not reported.

\begin{table*}[!htb]
\centering
\caption{AUC of \memstream\ and Streaming Baselines. Averaged over $5$ runs.}
\resizebox{\linewidth}{!}{
\begin{tabular}{@{}lccccccccccccc@{}}
\toprule
 \textbf{Method}
 & KDD99
 & NSL
 & UNSW
 & DoS
 & Syn.
 & Ion.
 & Cardio
 & Sat.
 & Sat.-2
 & Mamm.
 & Pima
 & Cover \\ \midrule

STORM (CIKM'07) & $0.914$ & $0.504$ & $0.810$ & $0.511$ & $0.910$ & $0.637$ & $0.507$ & $0.662$ & $0.514$ & $0.650$ & $0.528$ & $0.778$\\

{HS-Tree (IJCAI'11)} & $0.912$ & $0.845$ & $0.769$ & $0.707$ & $0.800$ & $0.764$ & $0.673$ & $0.519$ & $0.929$ & $0.832$ & $0.667$ & $0.731$ \\

{iForestASD (ICONS'13)} & $0.575$ & $0.500$ & $0.557$ & $0.529$ & $0.501$ & $0.694$ & $0.515$ & $0.504$ & $0.554$ & $0.574$ & $0.525$ & $0.603$ \\
 
{RS-Hash (ICDM'16)} & $0.859$ & $0.701$ & $0.778$ & $0.527$ & $0.921$ & $0.772$ & $0.532$ & $0.675$ & $0.685$ & $0.773$ & $0.562$ & $0.640$ \\
  
{RCF (ICML'16)} & $0.791$ & $0.745$ & $0.512$ & $0.514$ & $0.774$ & $0.675$ & $0.617$ & $0.552$ & $0.738$ & $0.755$ & $0.571$ & $0.586$ \\

{LODA (ML'16)} & $0.500$ & $0.500$ & $---$ & $0.500$ & $0.506$ & $0.503$ & $0.501$ & $0.500$ & $0.500$ & $0.500$ & $0.502$ & $0.500$\\
 
Kitsune (NDSS'18) & $0.525$ & $0.659$ & $0.794$ & $0.907$ & $---$ & $0.514$ & $0.966$ & $0.665$ & $0.973$ & $0.592$ & $0.511$ & $0.888$\\
 
{DILOF (KDD'18)} & $0.535$ & $0.821$ & $0.737$ & $0.613$ & $0.703$ & $\mathbf {0.928}$ & $0.570$ & $0.561$ & $0.563$ & $0.733$ & $0.543$ & $0.688$ \\

{\textsc{xStream} (KDD'18)} & $0.957$ & $0.552$ & $0.804$ & $0.800$ & $0.539$ & $0.847$ & $0.918$ & $0.677$ & $\mathbf {0.996}$ & $0.856$ & $0.663$ & $0.894$ \\
 
{\textsc{MStream} (WWW'21)} & $0.844$ & $0.544$ & $0.860$ & $0.930$ & $0.505$ & $0.670$ & $\mathbf {0.986}$ & $0.563$ & $0.958$ & $0.567$ & $0.529$ & $0.874$ \\
 
{Ex. IF (TKDE'21)} & $0.874$ & $0.767$ & $0.541$ & $0.734$ & $---$ & $0.872$ & $0.921$ & $0.716$ & $0.995$ & $0.867$ & $0.672$ & $0.902$ \\

{\textbf{\memstream}} & $\mathbf {0.980}$ & $\mathbf {0.978}$ & $\mathbf {0.972}$ & $\mathbf {0.938}$ & $\mathbf {0.955}$ & $0.821$ & $0.884$ & $\mathbf {0.727}$ & $0.991$ & $\mathbf {0.894}$ & $\mathbf {0.742}$ & $\mathbf {0.952}$ \\
 
\bottomrule
\label{tab:auc}
\end{tabular}
}
\end{table*}

Random subspace generation in RS-Hash includes many irrelevant features into subspaces while omitting relevant features in high-dimensional data. The objective of random projection in LODA retains the pairwise distances of the original space, therefore it fails to provide accurate outlier estimation. \textsc{xStream} performs well in \emph{KDD99}, \textsc{MStream} performs well in \emph{DoS}, however, note that \memstream\ achieves statistically significant improvements in AUC scores over baseline methods. Moreover, baselines are unable to catch complicated drift scenarios in \emph{NSL}, \emph{UNSW} and \emph{Syn}.

Table \ref{tab:time} reports the running AUC-PR scores of \memstream\ and baseline methods on the \emph{NSL-KDD} dataset, as well as their corresponding running times. Note that not only does \memstream\ greatly outperform baselines on AUC-PR, but also does so in a time-efficient manner.

\begin{table}[!htb]
\centering
\caption{AUC-PR and Time required to run \memstream\ and Streaming Baselines on \emph{NSL-KDD}. \memstream\ provides statistically significant (p-value $< 0.001$) improvements over baseline methods.}
\begin{tabular}{@{}lcc@{}}
\toprule
 \textbf{Method}
 & \textbf{AUC-PR}
 & \textbf{Time (s)}
 \\ \midrule
STORM & $0.681 \pm 0.000$ & $754$\\
{HS-Tree} & $0.709 \pm 0.063$ & $306$\\
{iForestASD} & $0.534 \pm 0.000$ & $19876$\\
{RS-Hash} & $0.500 \pm 0.140$  & $892$\\
{RCF} & $0.664 \pm 0.006$ & $665$\\
{LODA} & $0.734 \pm 0.067$ & $2617$\\
Kitsune & $0.673 \pm 0.000$ & $821$\\
{DILOF} & $0.822 \pm 0.000$ & $260$\\
{\textsc{xStream}} & $0.541 \pm 0.070$ & $34$\\
{\textsc{MStream}} & $0.510 \pm 0.000$ & $0.08$\\
{Ex. IF} & $0.659 \pm 0.014$ & $889$\\
\textbf{\memstream} & $\mathbf{0.959} \pm 0.002$ & $55$\\
\bottomrule
\label{tab:time}
\end{tabular}
\end{table}

\FloatBarrier

\subsection{Concept Drift}
\label{sec:drift}

We next investigate \memstream's performance under concept drift, particularly how fast it can adapt. As shown in Figure \ref{fig:drift} (top), we create a synthetic data set which covers a wide variety of drifts scenarios: (a) point anomalies: $T=19000$ (b) sudden frequency change: $T\in[5000, 10000]$ (c) continuous concept drift: $T\in [15000, 17500]$ (d) sudden concept drift due to mean change: $T\in [12500, 15000]$. Anomaly scores are clipped at $T=12500$ and $T=19000$ for better visibility.

\begin{figure}[!htb]
\centering
  \includegraphics[width=0.8\columnwidth]{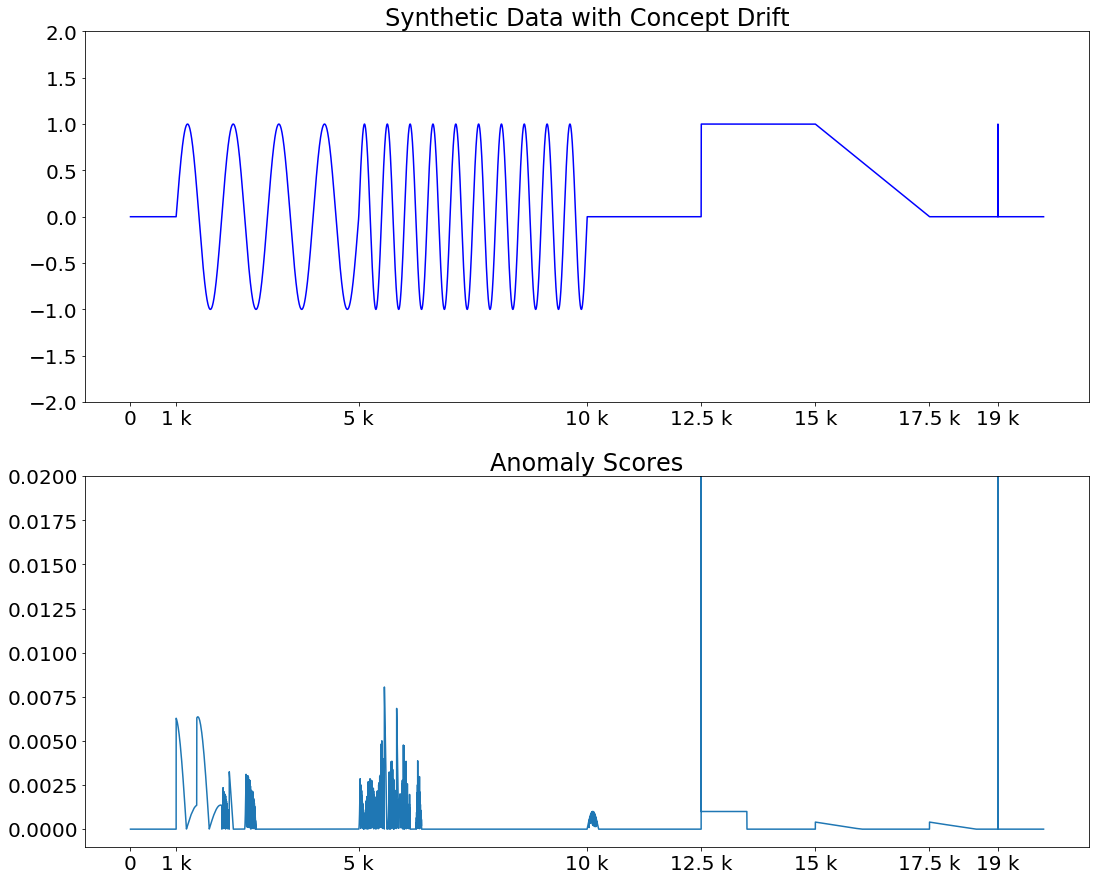}
  \captionof{figure}{(Top): Synthetic data with drift. (Bottom): Anomaly Scores output by \memstream\ demonstrating resilience to drift.}
  \label{fig:drift}
\end{figure}

\memstream\ is able to handle all the above-mentioned concept drift scenarios as is evident in Figure \ref{fig:drift} (bottom). We observe that \memstream\ assigns high scores corresponding to trend-changing events (e.g. $T=1000, 5000, 10000$, etc.) which produce anomalies, then with a gradual decrease in scores thereafter as it \emph{adapts} successfully to the new distribution. Note that \memstream\ can also adapt to periodic streams. For the first cycle of the sine wave $T\in[1000, 2000]$, the anomalous scores are relatively high. However, as more and more normal samples are seen from the sine distribution, \memstream\ adapts to it.

\FloatBarrier

\subsection{Retraining}
\label{sec:retrain}
 The need for re-training is especially prevalent in very long drifting streams where the feature extractor, trained on the small subset of the initial normal data $\mathcal{D}$, starts facing record data sufficiently different from its training data. In this experiment, we test the ability of \memstream\ to accommodate this more challenging setting by periodically retraining its feature extractor. Fine-tuning is performed at regular intervals distributed uniformly across the stream, i.e. to implement $k$ fine-tunings on a stream of size $S$, the first fine-tuning occurs at $\left \lfloor \frac{S}{k+1} \right \rfloor$. Figure \ref{fig:retrain} shows the AUC and time taken to fine-tune \memstream\ on \emph{CICIDS-DoS} with a stream size greater than $1M$ records. Note that as we increase the number of times \memstream\ is fine-tuned, we observe large gains in AUC with a negligible time difference.

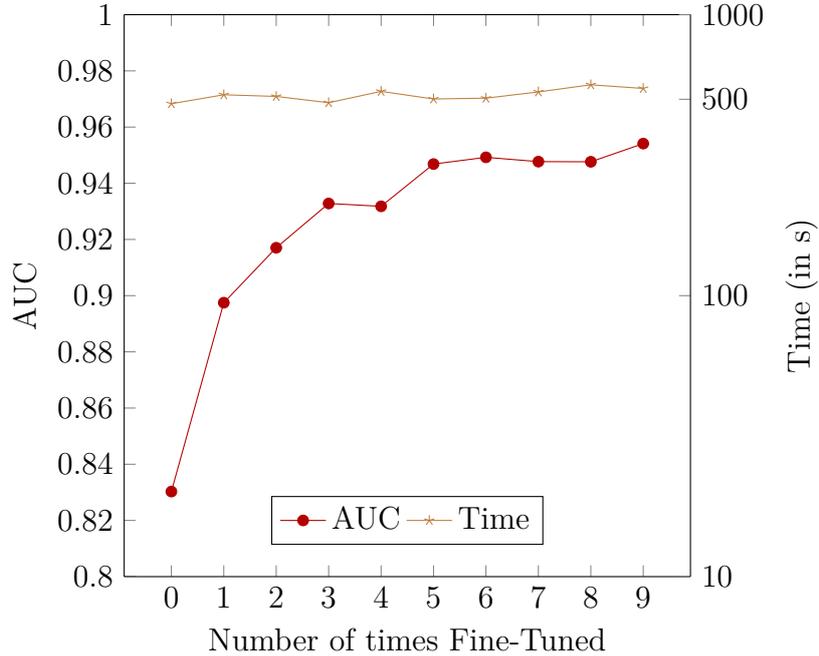
\begin{figure}[!htb]
\centering
\begin{tikzpicture}
\pgfplotsset{
      scale only axis,
      width=0.5\textwidth,height=0.5\textwidth
  }

  \begin{axis}[
    axis y line*=left,
    xlabel=Number of times Fine-Tuned,
    xtick = {0, 1, 2, 3, 4, 5, 6, 7, 8, 9},
    ymin=0.8,
    ymax=1.0,
    tick label style={font=\normalsize},
    ylabel = {AUC},
  ]
    \addplot[red!70!black,mark=*] coordinates {(0, 0.8302751955)
(1, 0.8974884823)
(2, 0.9170457649)
(3, 0.9328035165)
(4, 0.9317918228)
(5, 0.9468332062)
(6, 0.9492323078)
(7, 0.9477016701)
(8, 0.947640945)
(9, 0.9541160801)
    };\label{auc}
    \end{axis}

    \begin{axis}[
      axis y line*=right,
      axis x line=none,
      xtick = {0, 1, 2, 3, 4, 5, 6, 7, 8, 9},
      ymode = log,
      log ticks with fixed point,
        ytick = {10, 100, 500, 1000},
        ymin=10,
        ymax=1000,
        yticklabels = {$10$, $100$, $500$, $1000$},
        tick label style={font=\normalsize},
        ylabel = {Time (in s)},
        legend style={at={(0.5,0.1)},legend columns=2,fill=none,draw=black,anchor=center,align=center},
    ]

    \addlegendimage{/pgfplots/refstyle=auc}\addlegendentry{AUC}
    \addplot[brown,mark=star] coordinates {(0, 481.9602451)
(1, 518.6869633)
(2, 511.7169969)
(3, 486.0186646)
(4, 533.2838132)
(5, 501.0282154)
(6, 504.4203665)
(7, 530.9628208)
(8, 562.8917241)
(9, 546.230355)
    };     
    \addlegendentry{Time}; 
  \end{axis}

\end{tikzpicture}
\caption{Retraining effect on the AUC and time for \emph{CICIDS-DOS}.}
\label{fig:retrain}
\end{figure}

\FloatBarrier

\subsection{Self-Correction and Recovery}
\label{sec:recovery}

Consider the scenario where an anomalous element enters the memory. A particularly catastrophic outcome of this event could be the cascading effect where more and more anomalous samples replace the normal elements in the memory due to their similarity. This can ultimately lead to a situation where the memory solely consists of anomalous samples. These ``Group Anomaly" events are fairly common in intrusion detection settings. We show that this issue is mitigated by the use of $K$-nearest neighbors in our approach. We simulate the above setting by adding the first labeled anomalous element in memory during the initialization.

In Table \ref{tab:recovery}, a high $\beta$ allows anomalous elements to also enter the memory. In the absence of $K$-nearest neighbor discounting (i.e. $\gamma=0$), a high $\beta$ value algorithm succumbs to the above-described scenario resulting in poor performance. On the other hand, with discounting (i.e. $\gamma \neq 0$), the algorithm is able to ``recover" itself, and as a result, the performance does not suffer considerably. Note that when the threshold $\beta$ is in its appropriate range, the algorithm is robust to the choice of discount factor $\gamma$.

\begin{table}[!htb]
\centering
\caption{Performance of \memstream\ on \emph{NSL-KDD} dataset after adding an anomalous element in memory when $K=3$ and for different values of discount factor $\gamma$.}
\label{tab:recovery}
\begin{tabular}{lrr}
    \toprule
    \textbf{$\gamma$} & High $\beta (=1)$ & Appropriate $\beta (=0.001)$ \\
    \midrule
    $0$ & $0.771$ & $0.933$\\ 
    $0.25$ & $0.828$ & $0.966$\\ 
    $0.5$ & $0.848$ & $0.967$\\ 
    $1$ & $0.888$ & $0.965$\\
    \bottomrule
  \end{tabular}

\end{table}

\FloatBarrier

\subsection{Ablations}

\begin{table}[!htb]
    \centering
    \caption{Ablation study for different components of \memstream\ on \emph{KDDCUP99}.}
    \label{tab:ablations}
    \begin{tabular}{ll|rrrr}
    \toprule
         & \textbf{Component} & \multicolumn{4}{c}{\bfseries Ablations}\\
         \midrule
         (a) & Memory & None & LRU & RR & FIFO\\
         & Update & $0.938$& $0.946$& $0.946$& $0.980$\\
         \midrule
         (b) & Feature & Identity &PCA & IB & AE\\
         & Extraction & $0.822$& $0.863$ & $0.959$& $0.980$\\
         \midrule
         (c) & Memory & 128 & 256 & 512 & 1024\\
         & Length ($N$) & $0.950$& $0.980$& $0.946$& $0.811$\\
         \midrule
         (d) & Output & $d/2$ & $d$ & $2d$ & $5d$\\
         & Dimension ($D$) & $0.951$& $0.928$& $0.980$& $0.983$\\
         \midrule
         (e) & Update & 1 & 0.1 & 0.01 & 0.001\\
         & Threshold ($\beta$) & $0.980$& $0.938$& $0.938$& $0.938$ \\
         \midrule
         (f) & KNN & 0 & 0.25 & 0.5 & 1\\
         & coefficient ($\gamma$) & $0.980$& $0.939$& $0.937$ & $0.936$ \\
    \bottomrule
    \end{tabular}
\end{table}

\textbf{(a) Memory Update:}
\label{exp:memory}
Taking inspiration from the work done in cache replacement policies in computer architecture, we replace the FIFO memory update policy with Least Recently Used (LRU) and Random Replacement (RR) policies. Table \ref{tab:ablations}(a) reports results with these three and when no memory update is performed on the \emph{KDDCUP99} dataset. Note that FIFO outperforms other policies. This is due to the temporal locality preserving property of the FIFO policy to keep track of the current trend. LRU and RR policies do not maintain a true snapshot of the stream in the memory and are thus unable to learn the changing trend.

\textbf{(b) Feature Extraction:}
\label{exp:representation}
Table \ref{tab:ablations}(b) shows experiments with different methods for feature extraction discussed in Section \ref{sec:representation}. Autoencoder outperforms both PCA and Information Bottleneck approaches.

\textbf{(c) Memory Length ($N$):}
\label{exp:memlen} 
As we noted in Section \ref{sec:theorymemsize}, increasing $N$ can decrease the false positive rate, but also increase the false negative rate. We observe this effect empirically in Table \ref{tab:ablations}(c), where the sweet spot is found at $N=256$, and increasing memory length further degrades performance.  An additional experiment demonstrating the effect of memory size is discussed in Table \ref{tab:memoryeffect}. We note that very large or very small values of N would hinder the algorithm performance as the memory will not be able to capture the current trend properly. A very large `N' will not ensure that the current trend is learned exclusively and the memory would always be contaminated by representatives of the previous trend. On the other hand, a very small `N' will not allow enough representatives from the current trend and thus in both cases, the performance of the algorithm will be sub-optimal.

\begin{table*}[!htb]
\centering
\caption{Effect of Memory Size on the AUC in \memstream\ on \emph{NSL-KDD} dataset.}
\label{tab:memoryeffect}
\resizebox{\linewidth}{!}{
\begin{tabular}{@{}rcccccccccccc@{}}
\toprule
\textbf{Memory Size} & $2^4$ & $2^5$ & $2^6$ & $2^7$ & $2^8$ & $2^9$ & $2^{10}$ & $2^{11}$ & $2^{12}$ & $2^{13}$ & $2^{14}$ \\

\midrule
\textbf{AUC} & $0.670$ & $0.649$ & $0.932$ & $0.936$ & $0.923$ & $0.950$ & $0.972$ & $0.976$ & $0.985$ & $0.989$ & $0.991$ \\
\bottomrule
\end{tabular}
}
\end{table*}

\textbf{(d) Output Dimension ($D$):}
\label{exp:output}
In Section \ref{sec:theoryarch}, we motivate why we use an architecture with $D >= d$. In Table \ref{tab:ablations}(d), we compare architectures with different output dimension $D$ as a function of the input dimension $d$. We find that $D=d/2$ outperforms an architecture with $D=d$, owing to the features learning by dimensionality reduction. Note that \memstream\ performs well for large $D$.

\textbf{(e) Update Threshold ($\beta$):}
\label{exp:threshold}
The update threshold is used to judge records based on their anomaly scores and determine whether they should update the memory. A high $\beta$ corresponds to frequent updates to the memory, whereas a low $\beta$ seldom allows memory updates. Thus, $\beta$ can capture our belief about how frequently the memory should be updated, or how close is the stream to the initial data distribution. From Table \ref{tab:ablations}(e), we notice that for \emph{KDDCUP99}, a drifting dataset, a more flexible threshold ($\beta=1$) performs well, and more stringent thresholds perform similar to no memory updates (Table \ref{tab:ablations}(a)).

\textbf{(f) KNN coefficient ($\gamma$):}
\label{exp:knncoeff}
In Section \ref{sec:recovery}, we discussed the importance of the KNN coefficient $\gamma$ in the Self-Recovery Mechanism. Table \ref{tab:ablations}(f) compares different settings of $\gamma$, without memory poisoning.

\FloatBarrier

\section{Conclusion}
We propose \memstream, a novel memory augmented feature extractor framework for streaming anomaly detection in multi-dimensional data and concept drift settings. \memstream\ uses a denoising autoencoder to extract features and a memory module with a FIFO replacement policy to learn the dynamically changing trends. Moreover, \memstream\ allows quick retraining when the arriving stream becomes sufficiently different from the training data. We give a theoretical guarantee on the relation between the memory size and the concept drift. Furthermore, \memstream\ prevents memory poisoning by using (1) a discounting $K$-nearest neighbor memory leading to a unique self-correcting and recovering mechanism; (2) a theoretically motivated architecture design choice. \memstream\ outperforms $11$ state-of-the-art streaming methods. Future work could consider more tailored memory replacement policies, e.g. by assigning different weights to the memory elements.
\part{ Conclusion and Future Work}
\chapter{Conclusion and Future Work}
\label{ch:concl}

\section{Summary and Overarching Themes}
This dissertation was organized into six chapters. Chapter \ref{ch:intro} motivated the need for real-time anomaly detection and summarized the contributions. Chapter \ref{ch:related} categorizes and discusses the related work in graph and multi-aspect data settings.

Chapter \ref{ch:midas} introduced MIDAS which used a count-min sketch data structure to detect \emph{microcluster anomalies}, or suddenly arriving groups of suspiciously similar edges, in edge streams, using constant time and memory. In addition, by using a principled hypothesis testing framework, \textsc{Midas} provided theoretical bounds on the false positive probability, which previous methods do not provide. We also proposed two variants, \midas-R which incorporated temporal and spatial relations, and \midas-F which filtered away anomalous edges to prevent them from negatively affecting the algorithm's internal data structures.

In Chapter \ref{ch:anograph}, we extended the count-min sketch to a higher-order sketch data structure to capture complex relations in graph data. This higher-order sketch has the useful property of preserving the dense subgraph structure (dense subgraphs in the input turn into dense submatrices in the data structure). We then proposed four online algorithms that utilize this enhanced data structure to detect both edge and graph anomalies in constant memory and constant update time. Furthermore, our approach was the first streaming work that incorporates dense subgraph search to detect graph anomalies in constant memory and constant update time per newly arriving edge. We also provided theoretical guarantees on the higher-order sketch estimate and the submatrix density measure.

We then broadened the graph setting to a multi-aspect data stream in Chapter \ref{ch:mstream} and proposed \mstream\ to detect anomalous records in multi-aspect data streams including both categorical and numeric attributes. \mstream\ is online, thus processing each record in constant time and constant memory. We further proposed \mstream-PCA, \mstream-IB, and \mstream-AE to incorporate correlation between features and demonstrated how the anomalies detected by \mstream\ are explainable.

Finally, in Chapter \ref{ch:memstream}, we considered multi-aspect data streams with concept drift and proposed \memstream\ to detect anomalous records. \memstream\ leveraged the power of a denoising autoencoder to learn representations and a memory module to learn the dynamically changing trend in data without the need for labels. We proved a theoretical bound on the size of memory for effective drift handling. In addition, we allow quick retraining when the arriving stream becomes sufficiently different from the training data. Furthermore, \memstream\ made use of two architecture design choices to be robust to memory poisoning.

In Appendix \ref{ch:exgan}, we propose ExGAN for adversarial generation of extreme/anomalous data. Appendix \ref{ch:sess} incorporates semi-supervision in streaming anomaly detection.

Throughout this dissertation, we have described a number of different methods, designed to detect anomalies in a specific setting. How can we distill these into a coherent framework? The anomaly detection approaches can be categorized based on both the data setting, as well as the type of anomaly we wish to detect, as follows.

\begin{enumerate}[label=\textbf{Q\arabic*.}]
\item {\bfseries Graphs}
\begin{enumerate}
    \item How can we detect anomalous edges in dynamic graphs using constant time and memory? {\textsc{MIDAS/AnoEdge}}
    \item How can we detect anomalous subgraphs in dynamic graphs using constant time and memory? {\textsc{AnoGraph}}
\end{enumerate}

\item {\bfseries Multi-Aspect Data:}
\begin{enumerate}
    \item How can we detect anomalous behavior in multi-aspect data streams, including group anomalies involving the sudden appearance of large groups of suspicious activity, in an unsupervised manner? {\textsc{MStream}}
    \item How can we detect anomalous activities in multi-aspect data streams where concept drift is present? {\textsc{MemStream}}
\end{enumerate}
\end{enumerate}

\section{Future Work}
We list a few potential directions for future work in the area of streaming anomaly detection.

\begin{itemize}
\item Advanced data structures and algorithms:
Future work can extend our symmetrical higher-order sketch to a rectangular matrix and try more complex combinations (e.g. weighted sums) of anomaly scores for individual attributes. Moreover, one can consider more tailored memory replacement policies as well, e.g. by assigning different weights to the memory elements. Graph Neural Networks are an effective way of learning from complex input data and an exciting future direction is to incorporate embedding-based approaches and node and edge representations in streaming anomaly detection. A heterogeneous graph setting consisting of different types of entities also provides a greater challenge.

\item Faster data streams: Analysing the data stream rate is an important aspect of the design and performance of streaming anomaly detection systems. In general, faster data streams will require more time and memory to process, as the anomaly detection system will need to analyze the data more quickly and will need to store more data in memory. This can be a challenge, as the amount of time and memory available to the system may be limited, and the system may need to be redesigned to handle a wide range of data stream rates, for example using parallel computing.

\item Exploring new applications: Streaming anomaly detection is currently used in a variety of applications, such as network security and fraud detection. However, there is potential to expand the use of these techniques to a wider range of applications, such as predictive maintenance, where it could be used to identify potential issues with equipment or systems before they fail. This could involve analyzing data streams from sensors and other monitoring systems to identify anomalies that could indicate potential problems, and using this information to schedule maintenance or other interventions to prevent failure. Other applications include environmental monitoring, social media data streams, medical data, and other types of complex and dynamic data streams.

\item More powerful models: We plan to investigate a hybrid approach of deep learning models and streaming data structures that combines the strength of both, by using deep learning models to extract rich and detailed representations of the data, and then combining them with streaming data structures to process and analyze these representations in real-time.

During the course of the dissertation, we moved from a graph to a multi-aspect data setting in trying to combine multiple sources and richer inputs to detect the anomalies more accurately. Continuing in this direction, we want to gradually build more powerful models that can capture complex types of input data, for example, it will be interesting to analyze accompanying textual data using recent innovations in natural language processing models, and gradually expand to more multi-modal approaches. 

\end{itemize}

\begin{appendices}
\part*{Appendix}
\SetPicSubDir{ExGAN}
\SetExpSubDir{ExGAN}

\chapter[ExGAN: Adversarial Generation of Extreme Samples][ExGAN]{ExGAN: Adversarial Generation of Extreme Samples}
\label{ch:exgan}
\vspace{2em}

\begin{mdframed}[backgroundcolor=magenta!20] 
Chapter based on work that appeared at AAAI'21 \cite{bhatia2021exgan} \href{https://arxiv.org/pdf/2009.08454.pdf}{[PDF]}.
\end{mdframed}

\section{Introduction}
Modelling extreme events in order to evaluate and mitigate their risk is a fundamental goal with a wide range of applications, such as extreme weather events, financial crashes, and managing unexpectedly high demand for online services. A vital part of mitigating this risk is to be able to understand or generate a wide range of extreme scenarios. For example, in many applications, stress-testing is an important tool, which typically requires testing a system on a wide range of extreme but realistic scenarios, to ensure that the system can successfully cope with such scenarios. This leads to the question: how can we generate a wide range of extreme but realistic scenarios, for the purpose of understanding or mitigating their risk?

\begin{figure*}[!t]
\centering
\includegraphics[width=\textwidth]{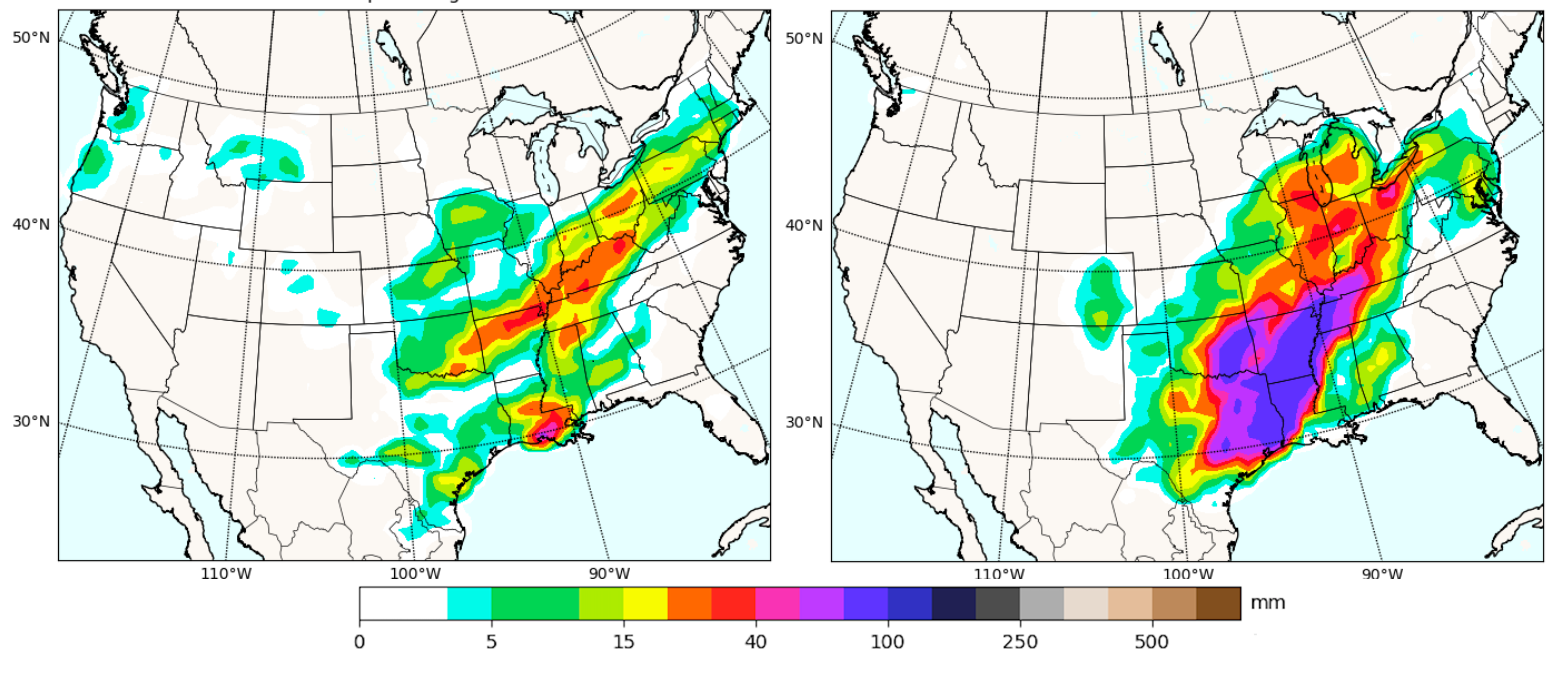}
    \caption{Our goal is to generate samples which are both \emph{realistic} and \emph{extreme}, based on any user-specified extremeness criteria (in this case, high total rainfall). \textit{Left:} Existing GAN-based approaches generate typical rainfall patterns, which have low (green) to moderate (red) rainfall. \textit{Right:} Extreme samples generated by our approach have extreme (violet) rainfall, and realistic spatial patterns resembling that of real floods.}
    \label{fig:teaserexgan}
\end{figure*}

Recently, Generative Adversarial Networks (GANs) and their variants have led to tremendous interest, due to their ability to generate highly realistic samples. On the other hand, existing GAN-based methods generate \emph{typical} samples, i.e. samples that are similar to those drawn from the bulk of the distribution. Our work seeks to address the question: how can we design deep learning-based models which can generate samples that are not just realistic, but also extreme (with respect to any user-specified measure)? Answering this question would allow us to generate extreme samples that can be used by domain experts to assist in their understanding of the nature of extreme events in a given application. Moreover, such extreme samples can be used to perform stress-testing of existing systems, to ensure that the systems remain stable under a wide range of extreme but realistic scenarios. 

Our work relates to the recent surge of interest in making deep learning algorithms reliable even for safety-critical applications such as medical applications, self-driving cars, aircraft control, and many others. Toward this goal, our work explores how deep generative models can be used for understanding and generating the extremes of a distribution, for any user-specified extremeness probability, rather than just generating typical samples as existing GAN-based approaches do. 

More formally, our problem is as follows: Given a data distribution and a criterion to measure extremeness of any sample in this data, can we generate a diverse set of realistic samples with any given extremeness probability? Consider a database management setting with queries arriving over time; users are typically interested in resilience against high query loads, so they could choose to use the number of queries per second as a criterion to measure extremeness. Then using this criterion, we aim to simulate extreme (i.e. rapidly arriving) but realistic query loads for the purpose of stress testing. Another example is rainfall data over a map, as in Figure \ref{fig:teaserexgan}. Here, we are interested in flood resilience, so we can choose to measure extremeness based on total rainfall. Then, generating realistic extreme samples would mean generating rainfall scenarios with spatially realistic patterns that resemble rainfall patterns in actual floods, such as in the right side of Figure \ref{fig:teaserexgan}, which could be used for testing the resilience of a city's flood planning infrastructure.

To model extremeness in a principled way, our approach draws from Extreme Value Theory (EVT), a probabilistic framework designed for modelling the extreme tails of distributions. However, there are two additional aspects to this problem that make it challenging. The first issue is the lack of training examples: in a moderately sized dataset, the rarity of ``extreme" samples means that it is typically infeasible to train a generative model only on these extreme samples. The second issue is that we need to generate extreme samples at any given, user-specified extremeness probability.

One possible approach is to train a GAN, say DCGAN \cite{radford2016unsupervised}, over all the images in the dataset regardless of their extremeness. A rejection sampling strategy can then be applied, where images are generated repeatedly until an example satisfying the desired extremeness probability is found. However, as we show in Section \ref{experiments}, the time taken to generate extreme samples increases rapidly with increasing extremeness, resulting in poor scalability.

Our approach, ExGAN, relies on two key ideas. Firstly, to mitigate the lack of training data in the extreme tails of the data distribution, we use a novel \textbf{distribution shifting} approach, which gradually shifts the data distribution in the direction of increasing extremeness. This allows us to fit a GAN in a robust and stable manner, while fitting the tail of the distribution, rather than its bulk. Secondly, to generate data at any given extremeness probability, we use \textbf{EVT-based conditional generation}: we train a conditional GAN, conditioned on the extremeness statistic. This is combined with EVT analysis, along with keeping track of the amount of distribution shifting performed, to generate new samples at the given extremeness probability.

We present a thorough analysis of our approach, ExGAN, on the US precipitation data. This dataset consists of daily precipitation data over a spatial grid across the lower $48$ United States (Continental United States), Puerto Rico, and Alaska. The criteria used to define extremeness is the total rainfall, and, as explained above, an extreme scenario would correspond to a flood. We show that we are able to generate realistic and extreme rainfall patterns.

Figure \ref{fig:2} shows images of rainfall patterns from the data, both normal and extreme samples, and images sampled from DCGAN and ExGAN simulating normal and extreme conditions.

\begin{figure*}[t!]
\begin{subfigure}{\textwidth}
\includegraphics[width=\linewidth]{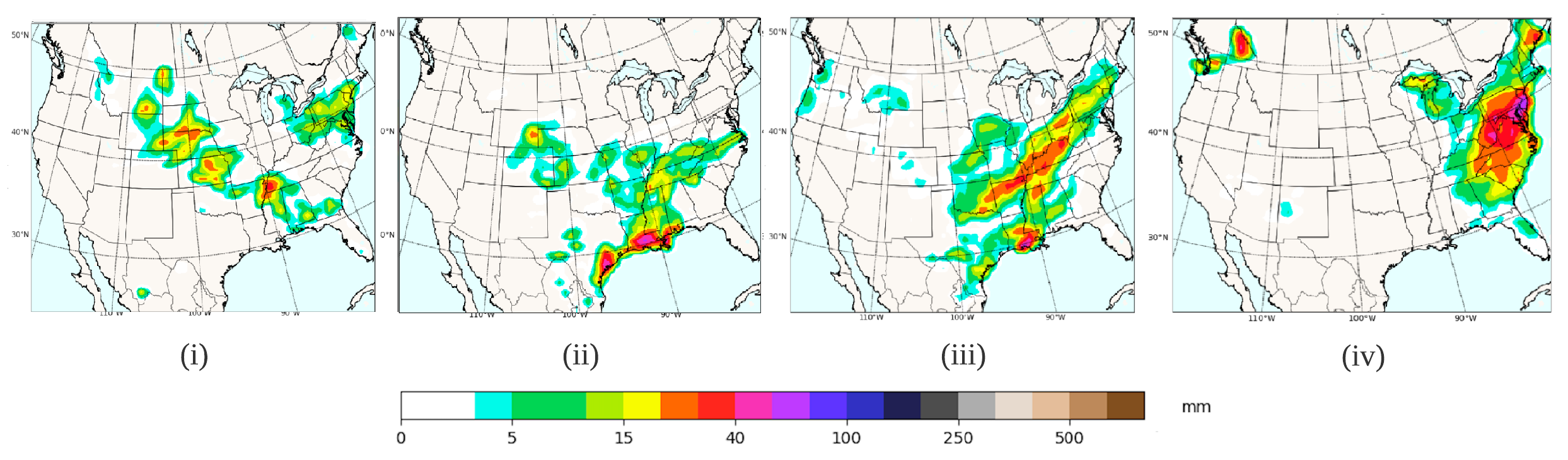}
\caption{Normal samples ((i) and (ii)) from the original dataset show low and moderate rainfall. Samples generated using DCGAN ((iii) and (iv)) are similar to normal samples from the original dataset.} \label{fig:1a}
\end{subfigure}
\begin{subfigure}{\textwidth}
\includegraphics[width=\linewidth]{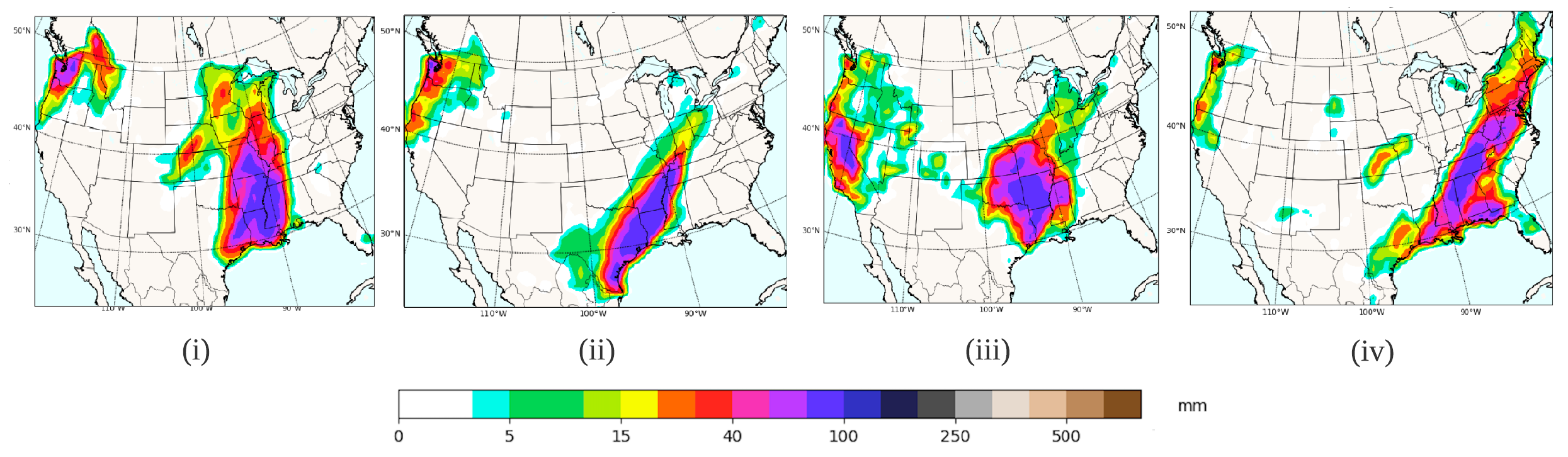}
\caption{Extreme samples ((i) and (ii)) from the original dataset showing high rainfall. Samples generated using ExGAN ((iii) and (iv)) are similar to extreme samples from the original dataset.} \label{fig:1b}
\end{subfigure}
\caption{Comparison between DCGAN (which generates normal samples), and ExGAN (which generates extreme samples).} \label{fig:2}
\end{figure*}

In summary, the main contributions of our approach are:
\begin{enumerate}
    \item {\bfseries Generating Extreme Samples:} We propose a novel deep learning-based approach for generating extreme data using distribution-shifting and EVT analysis.
    \item {\bfseries Constant Time Sampling:} We demonstrate how our approach is able to generate extreme samples in constant-time (with respect to the extremeness probability $\tau$), as opposed to the $\mathcal{O}(\frac{1}{\tau})$ time taken by the baseline approach.
    \item {\bfseries Effectiveness:} Our experimental results show that ExGAN generates realistic samples based on both visual inspection and quantitative metrics, and is faster than the baseline approach by at least three orders of magnitude for extremeness probability of $0.01$ and beyond.
\end{enumerate}

{\bfseries Reproducibility}: Our code and datasets are publicly available at \href{https://github.com/Stream-AD/ExGAN}{https://github.com/Stream-AD/ExGAN}.

\FloatBarrier

\section{Related Work}
\subsection{Conditional Generative Adversarial Networks}
Conditional GANs (CGANs), introduced in \cite{cgan}, allow additional information as input to GAN which makes it possible to direct the data generation process. Conditional DCGAN (CDCGAN) \cite{cdcgan}, is a modification of CGAN using the conditional variables but with a  convolutional architecture. These methods are briefly discussed in Section \ref{background}. 
There has also been a significant amount of work done on GAN-based models for conditioning on different types of inputs such as images \cite{imagegan1,imagegan2}, text \cite{textgan2}, and multi-modal conditional GANs \cite{multimodal}.

\subsection{Data Augmentation}
Data Augmentation using GANs \cite{antoniou2017data,shmelkov2018good,tran2017bayesian,tran2020towards,yamaguchi2019effective,karras2020training} has been extensively used in different domains, such as anomaly detection \cite{lim2018doping}, time series \cite{zhou2019beatgan,ramponi2018t}, speech processing \cite{zhang2019dada}, NLP \cite{chang2018code,yu2017seqgan, fedus2018maskgan}, emotion classification \cite{zhu2018emotion, luo2018eeg_emotion}, 
medical applications \cite{Zheng2017UnlabeledSG, han2019learning, hu2018prostategan, calimeri2017biomedical} and 
computer vision \cite{karras2019style, odena2017conditional, perez2017effectiveness, sixt2018rendergan, choi2019self, siarohin2019appearance} as a solution for tackling class imbalance \cite{bagan} and generating cross-domain data \cite{crossdomain}. However, these methods do not provide any control over the extremeness of the generated data.

\subsection{Extreme Value Theory}
Extreme value theory~\cite{gumbel2012statistics,theorem2} is a statistical framework for modelling extreme deviations or tails of probability distributions. EVT has been applied to a variety of machine learning tasks including anomaly detection \cite{evtad1, Siffer2017AnomalyDI, evtad3, evtad4, evtad5}, graph mining \cite{hooi2020telltail} and local intrinsic dimensionality estimation \cite{evtlid1, evtlid2}. \cite{evtnips} use EVT to develop a probabilistic framework for classification in extreme regions, \cite{robustnessevt} use it to design an attack-agnostic robustness metric for neural networks.

EVT typically focuses on modelling univariate or low-dimensional~\cite{tawn1990modelling} distributions. A few approaches, such as dimensionality-reduction based~\cite{chautru2015dimension,sabourin2014bayesian}, exist for moderate dimensional vectors (e.g. $20$). A popular approach for multivariate extreme value analysis is Peaks-over-Threshold with specific definitions of exceedances \cite{rootzen2006, ferreira2014, engelke}, and \cite{dombry} showed it can be modelled by r-Pareto processes. \cite{davison2018, fondeville2020functional} presented an inference method on r-Pareto processes applicable to higher dimensions compared to previous works on max-stable processes \cite{asadi} and Pareto processes \cite{thibaud}.

To the best of our knowledge, there has not been any work on extreme sample generation using deep generative models.

\section{Background}  \label{background}

\label{app:4exgan}
\subsection{GAN and DCGAN:}
Generative Adversarial Network (GAN) is a framework to train deep generative models. The training is done using a minimax game, where a generator $G$ producing synthetic samples plays against a discriminator $D$ that attempts to discriminate between real data and samples created by $G$. The goal of the generator is to learn a distribution $P_G$ which matches the data distribution $P_{data}$. Instead of explicitly estimating $P_{G}$, $G$ learns to transform noise variables $z \sim P_{noise}$, where $P_{noise}$ is the distribution of noise, into synthetic samples $ x \sim G(z)$. The discriminator $D$ outputs $D(x)$ representing the probability of a sample $x$ coming from the true data distribution. In practice, both $G(z;\theta_g)$ and $D(x;\theta_d)$ are parameterized by neural networks. $G$ and $D$ are simultaneously trained by using the minimax game objective $V_{GAN}(D,G)$:
\begin{align*}
   \min _{G} \max _{D} V_{GAN}(D, G)=\mathbb{E}_{x \sim P_{data}}[\log D(x)] \\
   + \ \mathbb{E}_{z \sim P_{n o i s e}}[\log (1-D(G(z)))]
\end{align*}
The stability in training and the effectiveness in learning unsupervised image representations are some of the reasons that make Deep Convolutional GAN, or DCGAN, \cite{radford2016unsupervised} one of the most popular and successful network designs for GAN, especially when dealing with image data. The DCGAN model uses strided convolutions in the discriminator and fractional strided convolutions in the generator along with a bunch of tricks to stabilize training. 
\subsection{CGAN and CDCGAN:}
CGAN extends GANs to conditional models by adding auxiliary information, or conditionals, to both the generator and discriminator. It is done by feeding the conditional, $y$, as an additional input layer. The modified objective is given by  
\begin{align*}
    \min _{G} \max _{D} V(D, G)=\mathbb{E}_{{x} \sim P_{data}}[\log D({x} | {y})] \\
   + \ \mathbb{E}_{{z} \sim P_{noise}}[\log (1-D(G({z} | {y})))] 
\end{align*}
The implementation of CGAN consists of linear or fully connected layers. cDCGAN improves on CGAN by using the DCGAN architecture along with the additional conditional input. The use of convolutional layers generates samples with much better image quality compared to CGAN.

\subsection{Extreme Value Theory (EVT)}

The Generalized Pareto Distribution (GPD) ~\cite{coles2001introduction} is a commonly used distribution in EVT. The parameters of GPD are its \textbf{scale} $\sigma$, and its \textbf{shape} $\xi$. The cumulative distribution function
 (CDF) of the GPD is:
\begin{align}
G_{\sigma, \xi}(x) = 
\begin{cases}
    1-(1+\frac{\xi\cdot x}{\sigma})^{-1/\xi} & \text{if } \xi \neq 0 \\
    1-\exp(-\frac{x}{\sigma}) & \text{if } \xi = 0
\end{cases}
\end{align}

A useful property of the GPD is that it generalizes both Pareto distributions (which have heavy tails) and exponential distributions (which have exponentially decaying tails). In this way, the GPD can model both heavy tails and exponential tails, and smoothly interpolate between them. Another property of the GPD is its `universality' property for tails: intuitively, it can approximate the tails of a large class of distributions following certain smoothness conditions, with error approaching $0$. Thus, the GPD is particularly suitable for modelling the tails of distributions.

\cite{theorem2,theorem1} show that the excess over a sufficiently large threshold $u$, denoted by $X - u$, is likely to follow a Generalized Pareto Distribution (GPD) with parameters $\sigma(u),\xi$. This is also known as the Peaks over Threshold method. In practice, the threshold $u$ is commonly set to a value around the $95^{th}$ percentile, while the remaining parameters can be estimated using maximum likelihood estimation~\cite{grimshaw1993computing}.

\begin{theo} \cite{theorem2}\cite{theorem1}. For a large class of distributions, a function $\sigma(u)$ can be found such that
\begin{equation}
\lim _{u \rightarrow \bar{x}} \sup _{0 \leq x<\bar{x}-u}\left|F_{u}(x)-G_{\sigma(u),\xi}\right|=0
\end{equation}
where $\bar{x}$ is the rightmost point of the distribution, $u$ is a threshold, and $F_{u}$ is the \emph{excess distribution function}, i.e. $F_{u}(x)=P(X-u \leq x | X>u)$.
\end{theo}

\section{ExGAN: Extreme Sample Generation Using GANs}
\label{methods}
\subsection{Problem}
We are given a training set $\mathbf{x}_1, \cdots, \mathbf{x}_n \sim \mathcal{D}$, along with $\mathsf{E}(\mathbf{x})$, a user-defined \emph{extremeness measure}: for example, in our running example of rainfall modelling, the extremeness measure is defined as the total rainfall in $\mathbf{x}$, but any measure could be chosen in general. We are also given a user-specified \emph{extremeness probability} $\tau \in (0, 1)$, representing how extreme the user wants their sampled data to be: for example, $\tau=0.01$ represents generating an event whose extremeness measure is only exceeded $1\%$ of the time.\footnote{In hydrology, the notion of a \emph{$100$-year flood} is a well-known concept used for flood planning and regulation, which is defined as a flood that has a $1$ in $100$ chance of being exceeded in any given year. Given daily data, generating a 100-year flood then corresponds to setting $\tau=\frac{1}{365 \times 100}.$ }

Given these, our goal is to generate synthetic samples $\mathbf{x}'$ that are both 1) \emph{realistic}, i.e. hard to distinguish from the training data, and 2) \emph{extreme} at the given level: that is, 
$P_{\mathbf{x} \sim \mathcal{D}}(\mathsf{E}(\mathbf{x}) > \mathsf{E}(\mathbf{x}'))$ should be as close as possible to $\tau$.

\subsection{Distribution Shifting}
\label{secdistshift}
An immediate issue we face is that we want our trained model to mimic the extreme tails, not the bulk of the distribution; however, most of the data lies in its bulk, with much fewer samples in its tails. While data augmentation could be employed, techniques like image transform may not be applicable: for example, in the US precipitation data, each pixel captures the rainfall distribution at some fixed location; altering the image using random transforms would change this correspondence.

To address this issue, we propose a novel Distribution Shifting approach in Algorithm \ref{alg:distshift}, parameterized by a shift parameter $c \in (0, 1)$. Our overall approach is to repeatedly `shift' the distribution by filtering away the less extreme $(1-c)$ proportion of the data, then generating data to return the dataset to its original size. In addition, to maintain the desired proportion of original data points from $\mathcal{X}$, we adopt a `stratified' filtering approach, where the original and generated data are filtered separately.

 \begin{algorithm}
	\caption{Distribution Shifting\ \label{alg:distshift}}
	{\bfseries Input}: dataset $\mathcal{X}$, extremeness measure $\mathsf{E}$, shift parameter $c$, iteration count $k$ \\
	Sort $\mathcal{X}$ in decreasing order of extremeness\\
	Initialize $\mathcal{X}_s \gets \mathcal{X}$ \\
    \For{$i \gets 1 \text{ to } k$}{
    {\bfseries $\triangleright$ Shift the data distribution by a factor of $c$:}\\
    Train DCGAN $G$ and $D$ on $\mathcal{X}_s$\\
	$\mathcal{X}_s \gets$ top $\lfloor c^{i}\cdot n\rfloor$ extreme samples of $\mathcal{X}$ \\
    Generate $\lceil (n-\lfloor c^{i}\cdot n\rfloor) \cdot \dfrac{1}{c}\rceil$ data points using $G$, and insert most extreme $n-\lfloor c^{i}\cdot n\rfloor$ samples into $\mathcal{X}_s$\\
    }
 \end{algorithm}

Specifically, we first sort our original dataset $\mathcal{X}$ in decreasing order of extremeness (Line 2), then initialize our shifted dataset $\mathcal{X}_s$ as $\mathcal{X}$ (Line 3). Next, each iteration $i$ of a Distribution Shift operation works as follows. We first fit a DCGAN to $\mathcal{X}_s$ (Line 6). We then replace our shifted dataset $\mathcal{X}_s$ with the top $\lfloor c^i\cdot n\rfloor$ extreme data points from $\mathcal{X}$ (Line 7). Next, we use the DCGAN to generate additional $\lceil (n-\lfloor c^{i}\cdot n\rfloor) \cdot \dfrac{1}{c}\rceil$ data samples and add the most extreme $n-\lfloor c^{i}\cdot n\rfloor$ samples to $\mathcal{X}_s$ (Line 8). This ensures that we choose the most extreme $c$ proportion of the generated data while bringing the dataset back to its original size of $n$ data points. Each such iteration shifts the distribution toward its upper tail by a factor of $c$. We perform $k$ iterations, aiming to shift the distribution sufficiently so that $\tau$ is no longer in the extreme tail of the resulting shifted distribution. Iteratively shifting the distribution in this way ensures that we always have enough data to train the GAN in a stable manner, while allowing us to gradually approach the tails of the distribution.

In addition, during the shifting process, we can train successive iterations of the generator via `warm start', by initializing its parameters using the previously trained model, for the sake of efficiency.

\FloatBarrier

\subsection{EVT-based Conditional Generation}
The next issue we face is the need to generate samples at the user-given extremeness probability of $\tau$. Our approach will be to train a conditional GAN using extremeness as a conditioning variable. To generate samples, we then use EVT analysis, along with our knowledge of how much shifting has been performed, to determine the necessary extremeness level we should condition on, to match the desired extremeness probability.

Specifically, first note that after $k$ shifts, the corresponding extremeness probability in the shifted distribution that we need to sample at becomes $\tau' = \tau / c^k$. Thus, it remains to sample from the shifted distribution at the extremeness probability of $\tau'$, which we will do using EVT. Algorithm \ref{alg:condgenEVT} describes our approach: we first compute the extremeness values using $\mathsf{E}$ on each point in $\mathcal{X}_s$: i.e. $e_i = \mathsf{E}(\mathbf{x_i}) \ \forall \ \mathbf{x_i} \in \mathcal{X}_s$ (Line 2). Then we perform EVT Analysis on $e_1, \cdots, e_n$: we fit Generalized Pareto Distribution (GPD) parameters $\sigma, \xi$ using maximum likelihood estimation~\cite{grimshaw1993computing} to $e_1, \cdots, e_n$ (Line 3). Next, we train a conditional DCGAN (Generator $G_s$ and Discriminator $D_s$) on $\mathcal{X}_s$, with the conditional input to $G_s$ (within the training loop of $G_s$) sampled from a GPD with parameters $\sigma, \xi$ (Line 4). In addition to the image, $D_s$ takes in a second input which is $e$ for a generated image $G_s(\mathbf{z}, e)$ and $\mathsf{E}(\mathbf{x})$ for a real image $\mathbf{x}$. An additional loss $\mathcal{L}_{\text{ext}}$ is added to the GAN objective:
\begin{align} 
\mathcal{L}_{\text{ext}} = \mathbb{E}_{\mathbf{z},e}\left[\dfrac{|e-\mathsf{E}({G_s(\mathbf{z}, e)})|}{e}\right]
\end{align}
where $\mathbf{z}$ is sampled from a multivariate standard normal distribution and $e$ is sampled from a GPD with parameters $\sigma, \xi$. Note that training using $\mathcal{L}_{\text{ext}}$ requires $\mathsf{E}$ to be differentiable.

$\mathcal{L}_{\text{ext}}$ minimizes the distance between the desired extremeness ($e$) and the extremeness of the generated sample ($\mathsf{E}(G_s(z, e)$). This helps reinforce the conditional generation property and prevents the generation of samples with unrelated extremeness. Using the inverse CDF of the GPD, we determine the extremeness level $e'$ that corresponds to an extremeness probability of $\tau'$:
\begin{align} 
    e' = G^{-1}_{\sigma, \xi}(1-\tau')
    \label{eq:gpd}
\end{align}
where $G^{-1}_{\sigma, \xi}$ is the inverse CDF of the fitted GPD (Line 5). 
Finally, we sample from our conditional DCGAN at the desired extremeness level $e'$ (Line 6). 

\begin{algorithm}
	\caption{EVT-based Conditional Generation \label{alg:condgenEVT}}
	{\bfseries Input}: shifted dataset $\mathcal{X}_s$, extremeness measure $\mathsf{E}$, adjusted extremeness probability $\tau'$ \\
	Compute extremeness values $e_i = \mathsf{E}(\mathbf{x_i}) \ \forall \ \mathbf{x_i} \in \mathcal{X}_s$ \\
	Fit GPD parameters $\sigma, \xi$ using maximum likelihood~\cite{grimshaw1993computing} on $e_1, \cdots, e_n$ \\
	Train conditional DCGAN ($G_s$ and $D_s$) on $\mathcal{X}_s$ where the conditioning input for $G_s$ is sampled from a GPD with parameters $\sigma, \xi$ \\
	Extract required extremeness level: $e' \gets G^{-1}_{\sigma, \xi}(1-\tau')$\\
	Sample from $G_s$ conditioned on extremeness level $e'$\\
\end{algorithm}

\FloatBarrier

\section{Experiments} \label{experiments}
In this section, we evaluate the performance of ExGAN compared to DCGAN on the US precipitation data. We aim to answer the following questions:

\begin{enumerate}[label=\textbf{Q\arabic*.}]
\item {\bfseries Realistic Samples (Visual Inspection):} Does ExGAN generate realistic extreme samples, as evaluated by visual inspection of the images?
\item {\bfseries Realistic Samples (Quantitative Measures):} Does ExGAN generate realistic extreme samples, as evaluated using suitable GAN metrics?
\item {\bfseries Speed:} How fast does ExGAN generate extreme samples compared to the baseline? Does it scale with high extremeness?
\end{enumerate}

\paragraph{Dataset:}
We use the US precipitation dataset \footnote{https://water.weather.gov/precip/}. The National Weather Service employs a multi-sensor approach to calculate the observed precipitation with a spatial resolution of roughly $4\times4$ km on an hourly basis. We use the daily spatial rainfall distribution for the duration of January 2010 to December 2016 as our training set, and for the duration of January 2017 to August 2020 as our test set. We only retain those samples in our test set which are more extreme, i.e. have higher total rainfall, than the $95^{th}$ percentile in the train set. Images with original size $813\times1051$ are resized to $64\times64$ and normalized between $-1$ and $1$.

\paragraph{Baseline:}
The baseline is a DCGAN \cite{radford2016unsupervised} trained over all the images in the dataset, combined with rejection sampling. Specifically, to generate at a user-specified level $\tau$, we use EVT as in our framework (i.e. Eq. \eqref{eq:gpd}) to compute the extremeness level $e=G^{-1}_{\sigma, \xi}(1-\tau)$ that corresponds to an extremeness probability of $\tau$. We then repeatedly generate images until one is found that satisfies the extremeness criterion within $10\%$ error; that is, we reject the image $\mathbf{x}$ if 
$\left|\dfrac{e - \mathsf{E}(\mathbf{x})}{e}\right| > 0.1$.

\paragraph{Evaluation Metrics:}
We evaluate how effectively the generator is able to mimic the tail of the distribution using FID and Reconstruction Loss metrics. Fréchet Inception Distance (FID) \cite{fid} is a common metric used in the GAN literature to evaluate image samples and has been found to be consistent with human judgement. Intuitively, it compares the distributions of real and generated samples based on their activation distributions in a pre-trained network. However, an ImageNet-pretrained Inception network which is usually used to calculate FID is not suitable for our dataset. Hence, we construct an autoencoder trained on test data, as described above, and use the statistics on its bottleneck activations to compute the FID:
\begin{align*}
\mathrm{FID}=\left\|\bm{\mu_{r}}-\bm{\mu_{g}}\right\|^{2}+\operatorname{Tr}\left(\bm{\Sigma_{r}}+\bm{\Sigma_{g}}-2\left(\bm{\Sigma_{r}} \bm{\Sigma_{g}}\right)^{1 / 2}\right)
\end{align*}
where $\operatorname{Tr}$ denotes the trace of a matrix, $\left(\bm{\mu_{r}}, \bm{\Sigma_{r}}\right)$ and $\left(\bm{\mu_{g}}, \bm{\Sigma_{g}}\right)$ are the mean and covariance of the bottleneck activations for the real and generated samples respectively.

We further evaluate our model on its ability to reconstruct unseen extreme samples by computing a reconstruction loss on the test set \cite{Xiang2017OnTE}.

Letting $\mathbf{\tilde{x}}_1, \cdots, \mathbf{\tilde{x}}_m$ denote the test images, the reconstruction loss for an unconditional generator $G$ is given by, 
$$\mathcal{L}_{\mathrm{rec}}=\frac{1}{m} \sum_{i=1}^{m} \min _{\mathbf{z_i}}\left\|G(\mathbf{z_i})-\mathbf{\tilde{x}}_i\right\|_{2}^{2}$$
where $\mathbf{z_i}$ are the latent space vectors

For an extremeness conditioned generator $G$,
$$\mathcal{L}_{\mathrm{rec\_ext}}=\frac{1}{m} \sum_{i=1}^{m} \min _{\mathbf{z_i}}\left\|G(\mathbf{z_i}, \mathsf{E}(\mathbf{\tilde{x}}_i))-\mathbf{\tilde{x}}_i\right\|_{2}^{2}$$

To compute the reconstruction loss, we initialize the latent space vectors $\mathbf{z_i}$ as the zero vector and perform gradient descent on it to minimize the objective defined above. We use similar parameters as \cite{Xiang2017OnTE} to calculate the reconstruction loss, i.e. learning rate was set to $0.001$ and the number of gradient descent steps was set to $2000$, while we use Adam optimizer instead of RMSprop.

We also evaluate how accurately our method is able to condition on the extremeness of the samples. We use Mean Absolute Percentage Error (MAPE), where the error is calculated between the extremeness used to generate the sample ($e$) and the extremeness of the generated sample ($\mathsf{E}(G(\mathbf{z}, e))$).

\begin{align} 
\text{MAPE} = \mathbb{E}_{\mathbf{z},e}\left[\dfrac{|e-\mathsf{E}({G_s(\mathbf{z}, e)})|}{e}\right] \times 100\%
\end{align}
where $\mathbf{z}$ is sampled from a multivariate standard normal distribution and $e$ is sampled from a GPD with parameters $\sigma, \xi$.

\paragraph{Experimental Setup:}
All experiments are carried out on a $2.6 GHz$  Intel Xeon CPU, $256 GB$ RAM, $12 GB$ Nvidia GeForce RTX 2080 Ti GPU running Debian GNU/Linux $9$.

Images are upsampled from $64\times64$ to $813\times1051$ to plot the rainfall maps. We also apply techniques introduced in the literature to stabilize GAN training such as label smoothing, noisy inputs to the discriminator, lower learning rate for the discriminator, label flipping, and gradient clipping~\cite{wgan,noisy}. Details of these techniques can be found in the Implementation Details.

\paragraph{Network Architectures}
\label{app:2exgan}

Let ConvBlock denote the sequence of layers Conv$4\times4$, InstanceNorm\cite{instancenorm}, LeakyReLU with appropriate sizes. Similarly let ConvTBlock denote the sequence of layers ConvTranspose4x4, InstanceNorm, LeakyRelu with appropriate sizes. Let $n$ be the batch size.

Tables \ref{tab:exgang}, \ref{tab:exgand}, \ref{tab:dcgang}, \ref{tab:dcgand}, and \ref{tab:extremeae} show the architectures for ExGAN Generator, ExGAN Discriminator, DCGAN Generator, DCGAN Discriminator, and FID Autoencoder respectively.

\begin{table}[!ht]
\centering
\caption{Architecture for ExGAN Generator.}
\label{tab:exgang}
\begin{center}
\begin{tabular}{@{}rccc@{}}
\toprule
 \textbf{Index} & \textbf{Layer} & \textbf{Output Size}  \\ \midrule
$1$ \ \ \ \ & ConvTBlock & $n\times512\times4\times4$\\
$2$ \ \ \ \ & ConvTBlock & $n\times256\times8\times8$\\
$3$ \ \ \ \ & ConvTBlock & $n\times128\times16\times16$\\
$4$ \ \ \ \ & ConvTBlock & $n\times64\times32\times32$\\
$5$ \ \ \ \ & ConvTranpose$4\times4$ & $n\times1\times64\times64$\\
$6$ \ \ \ \ & Tanh & $n\times1\times64\times64$\\
\bottomrule
\end{tabular}

\end{center}
\end{table}

\begin{table}[htb!]
\caption{Architecture for ExGAN Discriminator.}
\label{tab:exgand}
\begin{center}
\begin{tabular}{@{}rccc@{}}
\toprule
 \textbf{Index} & \textbf{Layer} & \textbf{Output Size}  \\ \midrule
$1$ \ \ \ \ & ConvBlock & $n\times64\times32\times32$\\
$2$ \ \ \ \ & ConvBlock & $n\times128\times16\times16$\\
$3$ \ \ \ \ & ConvBlock & $n\times256\times8\times8$\\
$4$ \ \ \ \ & ConvBlock & $n\times512\times4\times4$\\
$5$ \ \ \ \ & Conv$4\times4$ & $n\times64\times1\times1$\\
$6$ \ \ \ \ &Reshape &$n\times64$\\
$7$ \ \ \ \ &Concat &$n\times65$\\
$8$ \ \ \ \ & Linear &$n\times1$\\
$9$ \ \ \ \ & Sigmoid &$n\times1$\\
\bottomrule
\end{tabular}
\end{center}
\end{table}

\begin{table}[!ht]
\caption{Architecture for DCGAN Generator.}
\label{tab:dcgang}
\begin{center}
\begin{tabular}{@{}rccc@{}}
\toprule
\textbf{Index} & \textbf{Layer} & \textbf{Output Size}  \\ \midrule
$1$ \ \ \ \ & ConvTBlock & $n\times512\times4\times4$\\
$2$ \ \ \ \ & ConvTBlock & $n\times256\times8\times8$\\
$3$ \ \ \ \ & ConvTBlock & $n\times128\times16\times16$\\
$4$ \ \ \ \ & ConvTBlock & $n\times64\times32\times32$\\
$5$ \ \ \ \ & ConvTranpose$4\times4$ & $n\times1\times64\times64$\\
$6$ \ \ \ \ & Tanh &$n\times1\times64\times64$\\
\bottomrule
\end{tabular}

\end{center}
\end{table}

\begin{table}[htb!]
\caption{Architecture for DCGAN Discriminator.}
\label{tab:dcgand}
\begin{center}
\begin{tabular}{@{}rccc@{}}
\toprule
\textbf{Index} & \textbf{Layer} & \textbf{Output Size}  \\ \midrule
$1$ \ \ \ \ & ConvBlock & $n\times64\times32\times32$\\
$2$ \ \ \ \ & ConvBlock & $n\times128\times16\times16$\\
$3$ \ \ \ \ & ConvBlock & $n\times256\times8\times8$\\
$4$ \ \ \ \ & ConvBlock & $n\times512\times4\times4$\\
$5$ \ \ \ \ & Conv$4\times4$ & $n\times64\times1\times1$\\
$6$ \ \ \ \ &Reshape &$n\times64$\\
$7$ \ \ \ \ & Linear &$n\times1$\\
$8$ \ \ \ \ & Sigmoid &$n\times1$\\
\bottomrule
\end{tabular}
\end{center}
\end{table}

\begin{table}[htb!]
\caption{Architecture for FID Autoencoder}
\label{tab:extremeae}
\begin{center}
\begin{tabular}{@{}rccc@{}}
\toprule
\textbf{Index} & \textbf{Layer} & \textbf{Output Size}  \\ \midrule
$1$ \ \ \ \ & Linear & $n\times128$\\
$2$ \ \ \ \ & ReLU & $n\times128$\\
$3$ \ \ \ \ & Dropout(0.5) & $n\times128$\\
$4$ \ \ \ \ & Linear & $n\times4096$\\
\bottomrule
\end{tabular}
\end{center}
\end{table}

\FloatBarrier

\paragraph{Implementation Details}
\label{app:3exgan}
The following settings were common to both DCGAN and ExGAN. All convolutional layer weights were initialized from $\mathcal{N}(0, 0.02)$. We sample the noise, or latent inputs, from a standard normal distribution instead of uniform distribution with the latent dimension = $20$. Alpha for LeakyReLU was set to $0.2$. Adam optimizer was used with parameters, Learning rate for $G = 0.0002$, $D = 0.0001$, and betas = ($0.5$, $0.999$). Noisy labels were used, i.e. the Real and Fake labels used for training had values in [$0.7$, $1.2$] and [$0, 0.3$] instead of $1$ and $0$ respectively \cite{noisy}. The Real and Fake labels were flipped with a probability of $0.05$. Gradient clipping was employed restricting the gradients of $G$ and $D$ to be in [-20, 20]. Noise was added to the input of the $D$ starting from $1e-5$ and linearly decreased to $0$. Batch Size was $256$.\\
{\bfseries Distribution Shifting}: Unless stated otherwise, $c$ was set to 0.75, $k$ was set to 10. For the initial iteration, where the network is trained on all data, the learning rates for $G$ and $D$ were set to $0.0002$ and $0.0001$ respectively, and the network was trained for $500$ epochs. For subsequent iterations, learning rates for $G$ and $D$ were lowered to $0.00002$ and $0.00001$ respectively, and the network was trained for 100 epochs. \\
FID Autoencoder: The Autoencoder was optimized using Adam with a learning rate $0.001$, trained for $50$ epochs with standard L1 Loss. To ensure a fair comparison, we only compare the most extreme samples from DCGAN with ExGAN. Specifically, if ExGAN generates $n$ samples where the extremeness probabilities are sampled uniformly from $(0, \tau]$, then we generate $\lceil\frac{n}{\tau}\rceil$ samples from DCGAN and retain the most extreme $n$ samples for comparison.

\subsection{Realistic Samples (Visual Inspection)}
Figure \ref{fig:3} shows the extreme samples generated by ExGAN corresponding to extremeness probability $\tau = 0.001$ and $0.0001$. We observe that ExGAN generates samples that are similar to the images of rainfall patterns from the original data in Figure \ref{fig:1b}. As we change $\tau$ from $0.001$ to $0.0001$, we observe the increasing precipitation in the generated samples. The typical pattern of radially decreasing rainfall in real data is learned by ExGAN. ExGAN also learns that coastal areas are more susceptible to heavy rainfall.

\begin{figure*}[t!]
\begin{subfigure}{\textwidth}
\includegraphics[width=\linewidth]{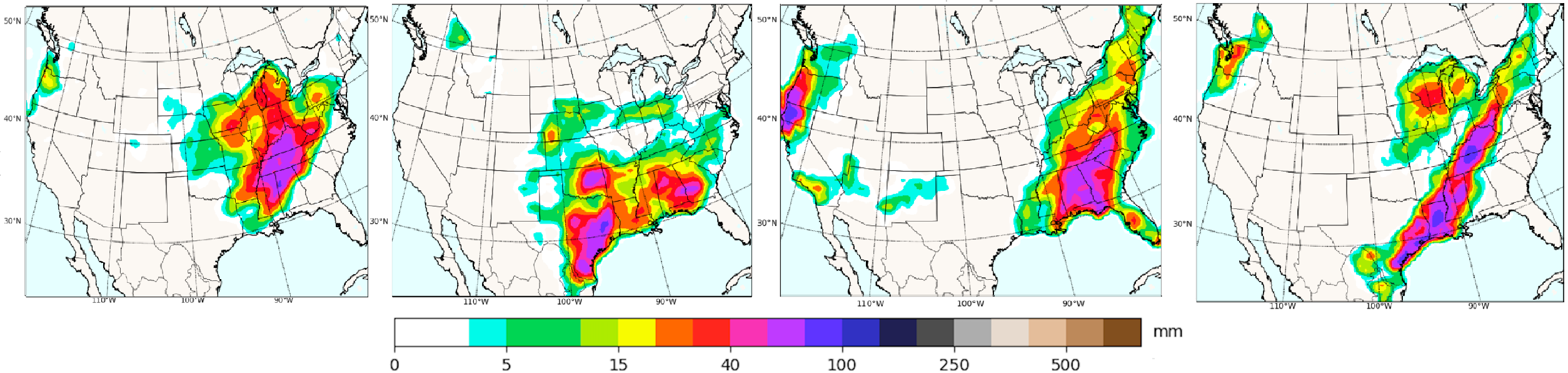}
\caption{Samples from ExGAN for extremeness probability $\tau = 0.001$. Time taken to sample = $0.002s$} \label{fig:3a}
\end{subfigure}
\begin{subfigure}{\textwidth}
\includegraphics[width=\linewidth]{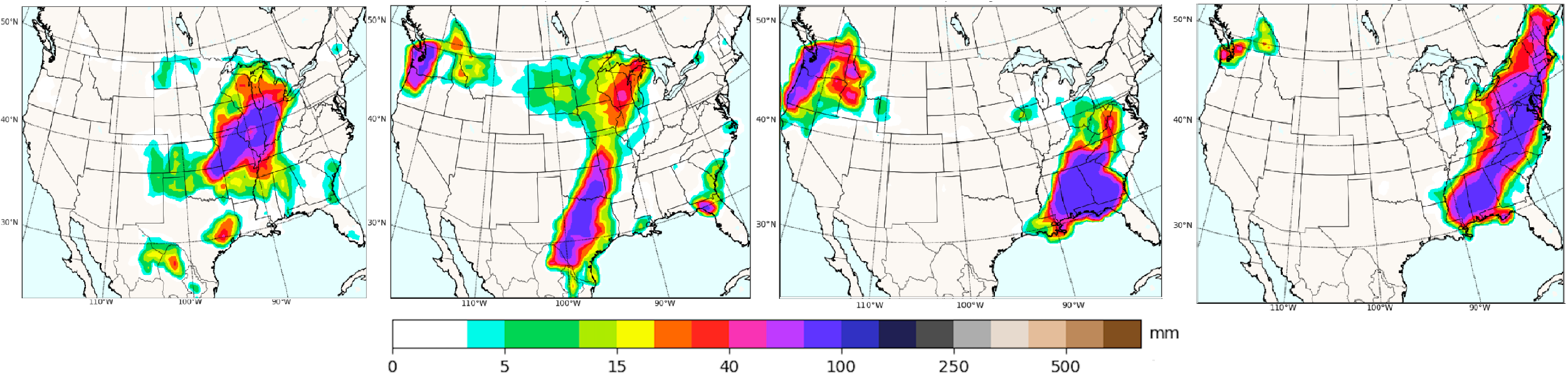}
\caption{Samples from ExGAN for extremeness probability $\tau = 0.0001$. Time taken to sample = $0.002s$} \label{fig:3b}
\end{subfigure}
\begin{subfigure}{\textwidth}
\includegraphics[width=\linewidth]{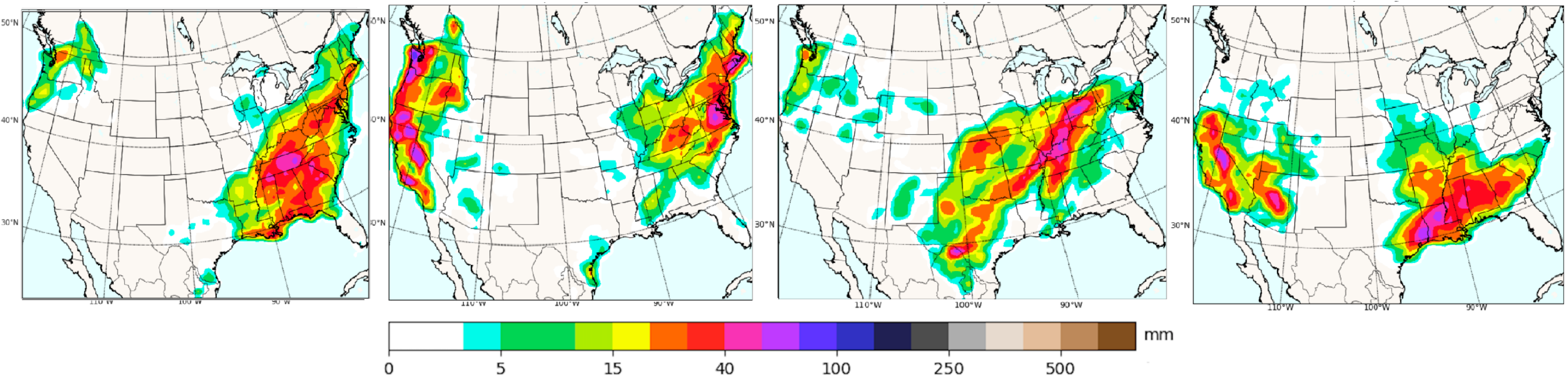}
\caption{Samples from DCGAN for extremeness probability $\tau = 0.01$. Time taken to sample = $7.564s$. DCGAN is unable to generate samples in $1$ hour when $\tau = 0.001$ or $0.0001$.} \label{fig:3c}
\end{subfigure}
\caption{ExGAN generates images that are realistic, similar to the original data samples, in constant time.}
\label{fig:3}
\end{figure*}

Figure \ref{fig:3c} shows the extreme samples generated by DCGAN for extremeness probability $\tau = 0.01$. When $\tau = 0.001$ or $0.0001$, DCGAN is unable to generate even one sample, within $10$\% error, in $1$ hour (as we explain further in Section \ref{sec:speed}).

\FloatBarrier

\subsection{Realistic Samples (Quantitative Measures)}

The GAN is trained for $100$ epochs in each iteration of distribution shifting. For distribution shifting, we set $c=0.75$, $k=10$ and use warm start. MAPE for DCGAN can be upper bounded by the rejection strategy used for sampling, and this bound can be made tighter at the expense of sampling time. For our experiment, we upper bound the MAPE for DCGAN by $10\%$ as explained above. MAPE for ExGAN is $3.14\% \pm 3.08\%$.

Table \ref{tab:metrics} reports the FID (lower is better) and reconstruction loss (lower is better). ExGAN is able to capture the structure and extremeness in the data, and generalizes better to unseen extreme scenarios, as shown by the lower reconstruction loss and lower FID score (loss = $0.0172$ and FID = $0.0236 \pm 0.0037$) as compared to DCGAN (loss = $0.0292$ and FID = $0.0406 \pm 0.0063$).

\begin{table}[ht]
\begin{center}
\caption{FID, and Reconstruction Loss, for DCGAN and ExGAN (averaged over $5$ runs). For FID, the p-value for significant improvement of ExGAN over the baseline is $0.002$, using a standard two-sample t-test.}
\begin{tabular}{ccc}
\toprule

{\bfseries Method} & {\bfseries FID} & {\bfseries Reconstruction Loss} \\
\midrule
\textbf{DCGAN}       & $0.0406 \pm 0.0063$ & $0.0292$ \\
\textbf{ExGAN}       & $0.0236 \pm 0.0037$ & $0.0172$ \\

\bottomrule
\end{tabular}
\label{tab:metrics}
\end{center}
\end{table}

Table \ref{tab:apptable2} reports the reconstruction loss, MAPE and FID for ExGAN for different values of $c$ and $k$. To ensure a fair comparison, we select the parameters $c$ and $k$ for distribution shifting, such that the amount of shift, $c^k$, is approximately similar.
Intuitively, we would expect higher $c$ to correspond to slower and more gradual shifting, which in turn helps the network smoothly interpolate and adapt to the shifted distribution, leading to better performance. This trend is observed in Table \ref{tab:apptable2}. However, these performance gains with higher $c$ values come at the cost of training time.

\begin{table}[!ht]
\begin{center}
\caption{Reconstruction Loss, MAPE and FID values for ExGAN for different $c$ and $k$ (averaged over $5$ runs).}
\begin{tabular}{ccccc}
\toprule

$c$&$k$&{\bfseries Rec. Loss}&{\bfseries MAPE}&{\bfseries FID}\\

\midrule

$0.24$ & $2$ & $0.0173$ & $3.43 \pm 3.01$ & $0.0367 \pm 0.0096$\\
$0.49$ & $4$ & $0.0173$ & $3.32 \pm 3.10$ & $0.0304 \pm 0.0109$\\
$0.75$ & $10$ & $0.0172$ & $3.14 \pm 3.08$ & $0.0236 \pm 0.0037$\\
$0.90$ & $27$ & $0.0169$ & $3.05 \pm 3.14$ & $0.0223 \pm 0.0121$\\

\bottomrule
\end{tabular}
\label{tab:apptable2}
\end{center}
\end{table}

\FloatBarrier

\subsection{Speed} \label{sec:speed}

The time taken to generate $100$ samples for different extremeness probabilities is reported in Table \ref{tab:times}. Note that ExGAN is scalable and generates extreme samples in constant time as opposed to the $\mathcal{O}(\frac{1}{\tau})$ time taken by DCGAN to generate samples with extremeness probability $\tau$. DCGAN could not generate even one sample for extremeness probabilities $\tau = 0.001$ and $\tau = 0.0001$ in $1$ hour. Hence, we do not report sampling times on DCGAN for these two values.

\begin{table}[ht!]
\centering
\caption{Sampling times for DCGAN and ExGAN for different extremeness probabilities (in seconds).}
\begin{center}
\begin{tabular}{ccccc}
\toprule
\multirow{2}{*}{\bfseries Method} & \multicolumn{4}{c}{\bfseries Extremeness Probability ($\tau)$}\\
  &$0.05$  &$0.01$ &$0.001$ &$0.0001$\\  \midrule
\textbf{DCGAN}    & $1.230s$ & $7.564s$ & $-$ & $-$\\
\textbf{ExGAN}   & $0.002s$ & $0.002s$ & $0.002s$ & $0.002s$\\
\bottomrule
\end{tabular}
\label{tab:times}
\end{center}
\end{table}

\FloatBarrier

\subsection{Ablation Results}
\label{app:5exgan}
To evaluate the advantage of distribution shifting, we construct a model with an architecture similar to ExGAN but trained over all images in the dataset, i.e. no Distribution Shifting has been applied. This model is then evaluated in the same manner as described in the chapter.

Without distribution shifting, the reconstruction loss remains almost the same as ExGAN ($0.0166$ compared to $0.0172$). However, we observe that the FID score increases significantly ($0.0493 \pm 0.0097$ compared to $0.0236 \pm 0.0037$), showing the need for distribution shifting.

\FloatBarrier

\section{Ethical Impact}
Modelling extreme events in order to evaluate and mitigate their risk is a fundamental goal in a wide range of applications, such as extreme weather events, financial crashes, and managing unexpectedly high demand for online services. Our method aims to generate realistic and extreme samples at any user-specified probability level, for the purpose of planning against extreme scenarios, as well as stress-testing existing systems. Our work also relates to the goal of designing robust and reliable algorithms for safety-critical applications such as medical applications, aircraft control, and many others, by exploring how we can understand and generate the extremes of a distribution.

Our work explores the use of deep generative models for generating realistic extreme samples, toward the goal of building robust and reliable systems. Possible negative impact can arise if these samples are not truly representative or realistic enough, or do not cover a comprehensive range of possible extreme cases. Hence, more research is needed, such as for ensuring certifiability or verifiability, as well as evaluating the practical reliability of our approach for stress-testing in a wider range of real-world settings.

\FloatBarrier

\section{Conclusion}
In this chapter, we propose ExGAN, a novel deep learning-based approach for generating extreme data. We use (a) distribution shifting to mitigate the lack of training data in the extreme tails of the data distribution; (b) EVT-based conditional generation to generate data at any given extremeness probability.

We demonstrate how our approach is able to generate extreme samples in constant-time (with respect to the extremeness probability $\tau$), as opposed to the $\mathcal{O}(\frac{1}{\tau})$ time taken by the baseline. Our experimental results show that ExGAN generates realistic samples based on both visual inspection and quantitative metrics, and is faster than the baseline approach by at least three orders of magnitude for extremeness probability of $0.01$ and beyond.

The flexibility and realism achieved by the inclusion of GANs, however, come at the cost of theoretical guarantees. While our algorithmic steps (e.g. Distribution Shifting) are designed to approximate the tails of the original distribution in a principled way, it is difficult to provide guarantees due to its GAN framework. Future work could consider different model families (e.g. Bayesian models), toward the goal of deriving theoretical guarantees, as well as incorporating neural network based function approximators to learn a suitable extremeness measure ($\mathsf{E}$).

\SetPicSubDir{SESS}
\SetExpSubDir{SESS}

\chapter[Semi-Supervised Anomaly Detection via Sketches][SESS]{Semi-Supervised Anomaly Detection via Sketches}
\label{ch:sess}

\begin{mdframed}[backgroundcolor=magenta!20] 
Chapter based on work that is currently under submission.
\end{mdframed}

\section{Introduction}
In this chapter, we initiate the study of semi-supervision of sketch-based anomaly detection algorithms.
Anomaly detection of aggregate objects (e.g., graphs) where the input is a sequence of simple information (edges) has received increased attention in recent years, for example,
 \midas\ \cite{bhatia2020midas} and \spotlight\ \cite{eswaran2018spotlight}. \midas\ detects edge anomalies in real-time data streams, where the input is a sequence of edges and the goal is to detect anomalous edges (defined via bursty behavior) based on sketches (embeddings) of the input graph. In \spotlight\ the goal is to discover anomalous graphs (defined by a collection of observed input edges in an interval) that have dense bi-cliques for sub-intervals of time corresponding to a bursty behavior.
We note that anomaly detection is a
multifaceted problem \cite{chandola2009anomaly,akoglu2015graph} and a detailed treatment of that topic is beyond the scope of this
manuscript. However, the two mentioned applications correspond to
anomaly detection performed via sketches of the dynamic input data
stream. In most
anomaly detection scenarios, typically few ground truth labels are
available -- making anomaly detection an enticing application of
semi-supervision. In this chapter, we address the question of augmenting such sketch-based anomaly detection with semi-supervision and show a few surprising results.

Semi-supervision is a celebrated principle in machine learning that often improves the performance of models in scenarios where large corpora of labeled data are difficult to find, and a rich and impactful literature exists on this topic e.g. \cite{zhou2004learning,li2016graph}. It has been well established that for static data analysis, the availability of a few labeled examples can greatly improve performance in many settings.  However, straightforward off-the-shelf applications of standard semi-supervision do not often complete execution within reasonable time limits for streaming data.
The dynamic streaming aspect has received little
attention in semi-supervision with a few notable recent exceptions
\cite{wagner2018semi,siddiqui2018feedback,zheng2019addgraph,zhu2020semi} -- however, even for these applications, the type of objects seen in a stream and the objects for which semi-supervised feedback is provided are identical. A consequence of this uniformity of feedback is that more feedback (over randomly chosen subsets, which is non-adversarial) is almost always beneficial. However, we show that for certain extremely simple two-state bursty streams, with full observation, the performance of an ``optimum'' algorithm given an inexact statistical estimate of the stream, can decrease with increased feedback (again over non-adversarial/random subsequences). Such a modular operation, assuming incomplete knowledge and stepwise optimization, is typical in many off-the-shelf learning approaches -- but any assumption that a streaming algorithm over a large number of edges and nodes has reasonably accurate statistics seems to be inapplicable. The fact that stream characteristics remain stable over such a large number of observations may simply not be true.
 This immediately demonstrates that semi-supervision over streaming data creates a tension between learning data characteristics and learning a decision boundary. Indeed, as a contrast point, approximate algorithms based on Thomson sampling can be formally proven to not exhibit this behavior in that same two-state scenario.

The above observation alone is sufficient to mandate more investigation of semi-supervision of algorithms that use sketches or implicit parameter estimation as substeps. However, anomaly detection presents a yet more fascinating surprise: determining whether an observation is an anomaly can be easier than determining that a point is not an anomaly. In other words, anomalies are often self-evident. This implies that the semi-supervised feedback may be one-sided, or that propensity of label errors can skew in one direction. We show that in such a case, for the same two-state bursty stream, the performance of both the optimum and the Thompson Sampling algorithm (both algorithms being given the correct underlying statistic) decreases with more feedback! If two state systems can create such an unusual phenomenon, it stands to reason that semi-supervised graph anomaly detection over a large number of nodes requires significantly more investigation.  This problem is typically seen in 
learning algorithms that also have to decide on which points get feedback -- 
inappropriate operations on feedback can relegate a learning algorithm to be stuck in a bad region of the decision space. In a streaming context, that same phenomenon arises from the decision of ``which points to forget'' which may impact the relevancy of subsequent feedback even if the feedback was provided via agnostic non-adversarial random sampling.

At the same time, both for this simple system as well as for real data, this chapter shows that algorithms can be designed to achieve significant benefits with semi-supervision. One avenue of this improvement is the use of sketches. Sketches are not just useful embeddings but also operational data structures that approximately summarize and aggregate a data stream. Most sketches have a natural notion of an update algorithm corresponding to an update of the input. Such an update has an obvious parallel in semi-supervision where labels are updated. The aforementioned tension of learning an accurate distribution and an accurate decision boundary can be expressed as a single joint problem in a sketching setting. 
Streaming algorithms typically decide to ``forget'' elements in the stream and the semi-supervised feedback can help a streaming algorithm decide better on which pieces of information it chooses to forget. One can notice the parallel of such a process with the celebrated multiplicative weight update algorithm where expert feedback helps us find optimum solutions to convex optimization problems -- we show that a similar style of algorithm can greatly improve semi-supervised graph anomaly detection algorithms (\midas,\spotlight) in contrast to state-of-the-art streaming semi-supervisions algorithms such as \cite{wagner2018semi} which implement streaming label propagation. We note that the issue of providing feedback to objects different from the objects seen in the stream need no longer be important in a sketched/embedded representation because all objects are inexact. Such generalizations of semi-supervision are achieved automatically.

\looseness=-1
Note that semi-supervision is broadly connected to information acquisition in constrained systems. The example of the two-state process is a common Partially Observable Markov Decision Process (POMDP) used in wireless routing \cite{Zhao2008OnMS} and stochastic control  
\cite{Ny2008MultiUAVDR} literature. Often these systems exemplify restless bandits \cite{Kaelbling1998PlanningAA,Guha2010ApproximationAF} and standard techniques of information acquisition such as bandit problems, for example \cite{Abernethy2016ThresholdBW}, do not apply. Moreover, in the context of anomaly detection, the restless bandit setup corresponds to a two-arm, unbalanced classification setting and sketch variants of probability matching/Thompson Sampling \cite{Kuzborskij2019EfficientLB},
do not apply.

{\bfseries Contributions:} To summarize, while sketching techniques (a) often preserve unknown manifolds defined by dynamic data, (b) are amenable to easy
updates, (c) are defined for structured objects such as graphs, and
(d) can be harnessed to provide anytime semi-supervised algorithms  -- care is required to apply these ideas and off-the-shelf methods may not be a fit. In this work, we first investigate a conceptual
system model in Section~\ref{sec:conceptual} where anomalies are
bursty. We ignore all connections to graph data -- and observe a
single edge in isolation. Using the intuition of counting based
summary of edges, we then switch gears to graphs and investigate a
combination of sketching and semi-supervised learning. We show that
state-of-the-art streaming graph anomaly detection algorithms like
\midas\ \cite{bhatia2020midas} and \spotlight\
\cite{eswaran2018spotlight} which rely on count-based sketches can be
improved significantly with semi-supervision, using real-life public
datasets.

In the context of \midas, we
propose \sess\ that significantly improves upon \midas\ when
the classes are imbalanced (as is the case in anomaly detection)
without sacrificing the inherent efficiency of \midas. We then propose \sess-3D  which is capable of incorporating node feedback and improves upon the processing by being cache-aware and using higher-order sketches.
The performance of these algorithms is significantly better (in
accuracy and computational efficiency) than using state-of-the-art streaming semi-supervision algorithms such as
\cite{wagner2018semi}. Note that non-streaming semi-supervised
algorithms such as those based on label propagation
\cite{zhu2003semi} do not finish on these large datasets in a reasonable time.

In the context of \spotlight, we note that its performance can be
improved in the weakly semi-supervised setting where only edge
feedback is available. This provides a realistic example of weakly
correlated feedback because the presence of dense bicliques is only
weakly correlated with the provided feedback over edges.

While there has been work on sketch-based
classification problems \cite{Talukdar2014ScalingGS} for individual input points, we are not sure
how that applies to semi-supervision over a graph defined by the input points.
To the best of our knowledge, this direction of exploring sketching algorithms for
semi-supervised unbalanced classification has not been considered heretofore.

\section{Related Work}

Streaming or online algorithms vary significantly from their static counterparts in terms of space and time management strategies due to the strict restrictions posed by the streaming nature of data. Many algorithms \cite{bateni2018optimal,Hao2020SparseAL,Huang2018NearOF,Menon2007AnID,Braverman2016BeatingCF,Coleman2019RACESM,Bury2018SketchA,Cohen2007SketchingUD,Tai2018SketchingLC,Shi2020HigherOrderCS} make use of data sketches to maintain item-counts owing to their compact structure and yet bounded error estimates. Sketches have also been used for faster anomaly detection \cite{Abry2007InvitedTS,Gopalan2018FasterAD,kumagai2020SSAD_on_graphs,bruschi2020discovering,Chen2021MakingOS,Li2006DetectionAI,zhang2020augsplicing}. See \cite{Mcgregor2014GraphSA} for an extensive survey.

Semi-supervised algorithms have been explored in various domains like vision \cite{Li2020SemisupervisedCW}, text \cite{Yang2017BridgingCF} and graph data \cite{esfandiari2018streaming,morvan2018graph,wan2020contrastive,Song2021GraphbasedSL}. Although the specific form of application manifests differently according to the domain and its constraints, at its core, semi-supervision ideas are realized in three different categories.

\begin{enumerate}
    \item \textbf{Consistency Regularization:} Unlabeled data produce perturbed, unlabeled input samples relying on the assumption that the model should output similar predictions. Generative Modeling is a famous technique to generate perturbed data samples for consistency regularization \cite{meng2019semi}. \cite{kingma2014semi} bootstraps the dataset by predicting the labels of the unlabeled data points by using a generative model. \cite{alberti2019synthetic} introduces noise in unlabeled data samples to increase performance. \cite{sohn2020fixmatch} also generates pseudo labels using the model's predictions on weakly augmented images. \cite{berthelot2019mixmatch} guesses low-entropy labels for data-augmented unlabeled examples and mixes labeled and unlabeled data using a sharpening function. Many problems in the field of Natural Language Processing have found a semi-supervision learning based solution e.g. \cite{cer2018universal_sentence_encodings}.

    \item \textbf{Entropy Minimization:} The core idea of Entropy Minimization is that the decision boundary of the classifier should not pass through high density regions of the data space. As predictions near the decision boundary are more uncertain, entropy minimization seeks to make the model more confident in its predictions by moving the boundary away from the data. \cite{grandvalet2005entropy_min} introduces a loss function to learn the model parameters by minimizing entropy in the prediction, additionally to the supervised loss. \cite{berthelot2019mixmatch} reduces the entropy by employing a sharpening function.
    
    \item \textbf{Graph-Based:} \cite{zhu2003semi,zhou2004learning,li2016graph,liu2019deep} have a long history of work and propagate limited label information to unlabeled examples following clustering or manifold assumptions. By taking advantage of the progress of deep learning including graph neural networks and graph convolutional networks, these methods have achieved state-of-the-art results on various semi-supervised node classification tasks \cite{kipf2017semi,yang2016revisiting,rong2019dropedge,xu2020graph,luo2018smooth,iscen2019label,jia2019graph}. However, these methods do not assume class imbalance and thus are not immediately applicable in anomaly detection.
\end{enumerate}

Semi-supervision has either directly been used or can be modified for Anomaly Detection in \cite{gornitz2013toward,wu2018imverde,zhou2018sparc,Ruff2020DeepSA,amid2015kernel,feng2019beyond,JU2020167,Liu2016DetectingAI}. However, all of these approaches cannot be used in a streaming setting. There is some active learning related work used for anomaly detection including \cite{siddiqui2018feedback}, but it is also not clear how to use these in a streaming manner. PENminer \cite{belth2020mining} detects burst anomalies, however, it does not consider semi-supervision. Online graph-based semi-supervision has generated considerable interest recently \cite{zhu2009some,Valko2010OnlineSL,ravi2016large,zheng2019addgraph,zhu2020semi,bera2012advanced,Hu2019HierarchicalGC}, but the processing time and memory are still proportional to the stream length.

Closest in spirit to our work is \cite{wagner2018semi}, which runs semi-supervision on streams with sub-linear memory. We show how \sess\ is significantly better and runs in real-time while requiring constant space.

\section{A Conceptual System}
\label{sec:conceptual}

In this section, we show that for an optimal algorithm with incorrectly estimated parameters, the errors may increase with increasing feedback. We also show that for probability matching/Thompson Sampling type methods, with incorrectly estimated parameters, the errors decrease with increasing feedback (which is the desirable phenomenon) for two-sided feedback (all classes being observable). And yet, the error may dramatically increase in one-sided observations. Finally, we show that the benefit of semi-supervision is significantly higher in unbalanced settings in comparison to balanced settings.

The specific system is a POMDP, widely studied in wireless scheduling \cite{Bertsimas2001PerformanceOM,Zhao2008OnMS} and unmanned aerial vehicle (UAV) routing \cite{Ny2008MultiUAVDR}.

As shown in Figure \ref{fig:statefigure}, consider a machine $T$, with transition probabilities
$P[N\rightarrow A]=p$, $P[A \rightarrow N] = q$, $P[A \rightarrow A] =
1-q$ and $P[N \rightarrow N] = 1-p$.

\begin{figure}[!ht]
        \center{\includegraphics[scale=0.9]
        {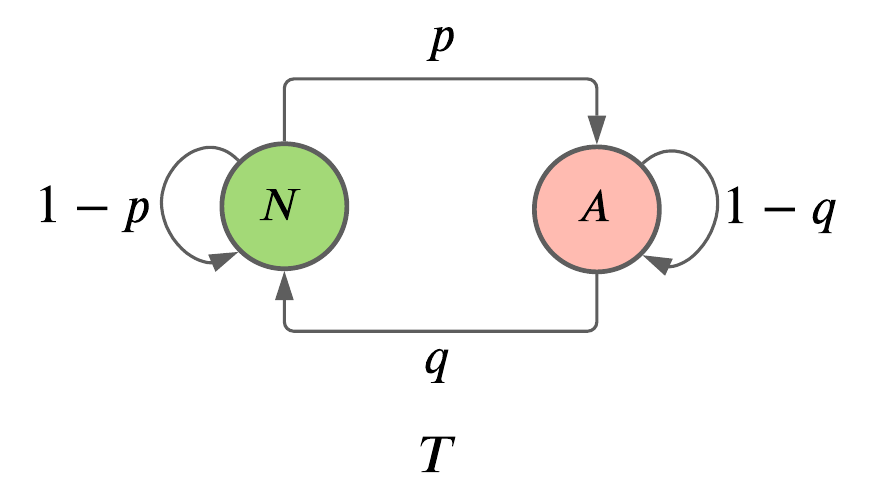}}
        \caption{\label{fig:statefigure} A machine $T$ with transition probabilities $p$ and $q$ between normal and anomalous states.}
\end{figure}

$A$ corresponds to an anomalous
state and $N$ corresponds to a normal state. For the transition
probabilities $p,q \ll 1$ and $p+q <1$, the states are sticky and anomalies/normal
points exhibit a bursty behavior. The expected long-run probability of
observing state $A$ is $p/(p+q)$, and an assumption of $20 p < q$ is suitable for anomaly detection application;
corresponding to $\approx 95\%$ normal (N) observations. The goal of an algorithm is to produce predictions $\{ A,N\}$ for each
time sequence -- while observing the true state (feedback) of $T$ for a few
select time steps. We introduce the following definitions:

\begin{definition}
 If the algorithm is allowed to inspect the true state of $T$
 irrespective of its own prediction or the true labels, then define the feedback to be {\bfseries two-sided}.
\end{definition}

\begin{definition}
 If the algorithm can only inspect the true state of $T$
   when $T$ is in state $A$, then define the feedback to be {\bfseries one-sided}.
  \end{definition}

The two-sided scenario is most typical and captures the experimental
measurement when a random subset of true labels are
provided in an online manner to a streaming algorithm. Note that one-sided feedback is easier to measure. Many other definitions of sidedness may exist -- based on specifics of the application, which we omit in this presentation. However, the above notions are the most natural in the context of an algorithm seeking feedback.

 \subsection{Two Illustrative Algorithms}
Consider a simple probability matching type algorithm {\textsc{ Imitate}}: Suppose the
algorithm has an estimate $\hat{p},\hat{q}$ for the true parameters
$p,q$. It uses the parameters to predict $A$/$N$ independently
of the true process; except that on receiving feedback, it resets to the state provided in the feedback.

\smallskip
Consider an optimal algorithm {\textsc{ Opt}}, that has no foreknowledge of
future feedback, with estimations $\hat{p},\hat{q}$ for the true parameters
$p,q$. First note that:

\begin{theorem}
\label{firsttheorem}
  Suppose the locations of the feedback were chosen independently of
  {\textsc{ Opt}}, and {\textsc{ Opt}} has no knowledge when the next feedback
  would arrive.
  If the last feedback was $N$, Algorithm {\textsc{ Opt}} continues to predict $N$ till the next feedback. If the last feedback was $A$,
  then {\textsc{ Opt}} predicts $A$ for a fixed number of steps $L$ (to be determined) and switches to predicting $N$. 
  \end{theorem}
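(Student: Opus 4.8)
The plan is to argue that \textsc{Opt}'s optimal policy is a threshold policy in the number of steps elapsed since the last feedback, because the underlying belief state of the POMDP is fully determined by (the value of the last feedback, the number of steps since then). First I would set up the belief-state dynamics: after receiving feedback $A$ at some time, the posterior probability that the chain is in state $A$ exactly $j$ steps later is a deterministic function $\pi_A(j)$ obtained by iterating the transition matrix from the point mass on $A$; similarly $\pi_N(j)$ after feedback $N$. Because $\hat p + \hat q < 1$, both sequences are monotone: $\pi_A(j)$ is decreasing in $j$ toward the stationary probability $\hat p/(\hat p+\hat q)$, and $\pi_N(j)$ is increasing in $j$ toward the same limit. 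The key monotonicity fact is that $\pi_A(j) \ge \pi_N(j')$ for all $j, j'$, so after $N$-feedback the belief never rises above the stationary level, while after $A$-feedback it decays down through it.

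Next I would specify the per-step loss. Since feedback locations are chosen independently of \textsc{Opt} and \textsc{Opt} does not know when the next feedback arrives, the expected future cost contributed by any single prediction decision decomposes additively over time steps (the feedback, when it comes, simply resets the belief and is exogenous to the action). Hence \textsc{Opt} minimizes expected loss step-by-step, and the optimal prediction at belief $\pi$ is simply: predict $A$ if $\pi \ge \theta$ and $N$ otherwise, where $\theta$ is the loss-dependent threshold (for $0/1$ loss, $\theta = 1/2$; for a general cost structure, $\theta$ is the ratio of false-negative to total misclassification cost). This is the standard Bayes-optimal one-step rule.

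Combining the two pieces gives the statement. After $N$-feedback, $\pi_N(j) \le \pi_N(\infty) = \hat p/(\hat p+\hat q)$, and under the anomaly-detection regime ($\hat p$ small relative to $\hat q$) this stationary value is below $\theta$; by monotonicity of $\pi_N$, the belief stays below $\theta$ for every $j$, so \textsc{Opt} predicts $N$ until the next feedback. After $A$-feedback, $\pi_A(0) = 1 \ge \theta$ and $\pi_A(j) \downarrow \hat p/(\hat p+\hat q) < \theta$, so by monotonicity there is a unique first index where $\pi_A$ crosses below $\theta$; call it $L$. Then \textsc{Opt} predicts $A$ for steps $1,\dots,L$ and $N$ thereafter, which is exactly the claimed form, with $L = \min\{ j : \pi_A(j) < \theta\}$.

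The main obstacle I anticipate is justifying rigorously that the problem decouples into independent per-step decisions — i.e. that \textsc{Opt} gains nothing by hedging against the arrival of future feedback. This needs a clean argument that, because the feedback process is exogenous and its timing is unknown (and, crucially, independent of the algorithm's actions), the value function is just the sum of per-step Bayes risks; no action affects the distribution of future belief states. Once that is in place, everything else is the monotonicity of the two scalar sequences $\pi_A(j)$ and $\pi_N(j)$, which is a routine computation with the $2\times 2$ transition matrix (eigenvalue $1-\hat p-\hat q \in (0,1)$). I would also need to note the edge case where the stationary level lies above $\theta$ (a non-anomaly-detection regime), in which case $L = \infty$ and \textsc{Opt} predicts $A$ forever after $A$-feedback; under the stated assumption $20\hat p < \hat q$ this does not occur, so $L$ is finite.
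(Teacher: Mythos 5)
Your proposal is correct and takes essentially the same route as the paper: both arguments reduce to (i) the per-step decomposability of the expected mistake count, which holds because feedback timing is exogenous and independent of the algorithm's actions, and (ii) the monotonicity of $P(T=A \mid \text{last feedback})$ as a function of elapsed time, a consequence of $p+q<1$ and $p<q$. The paper phrases this via exchange arguments (swap a single prediction and strictly improve the expected mistake bound) rather than your explicit belief-state/Bayes-threshold formulation, but the mathematical content is identical, and your version makes explicit the decoupling step that the paper's swaps implicitly rely on.
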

  
  \begin{proof}
The first part of the proof follows from the fact that conditioned on
last observing $N$, the probability that $T$ is in $N$ is higher than
$T$ being in state $A$, since $q>p$. If the optimum algorithm
predicted $A$ for a particular time step, then it could predict $N$
(keeping every other prediction the same) and improve its mistake
bound in expectation.

If the last seen state was $A$, the algorithm {\textsc{ Opt}} should (1)
eventually start predicting $N$ and (2) once it starts predicting $N$,
it should continue to predict $N$. To observe (1), note that the long-run probability of $T$ being in $N$ is $q/(p+q)$ which is higher than
being in $A$. Moreover, $T$ is expected to transition to $N$ after an
expected $1/q$ number of steps. Even though {\textsc{ Opt}} may not know
$q$, not switching to $N$ after a long period of time is clearly
suboptimal. For (2), observe that if {\textsc{ Opt}} predicts an $A$
following an $N$; then switching the order of those two predictions
(keeping other predictions the same) improves the expected mistake
bound since the probability of observing $A$ decreases monotonically
with time $t$ (a consequence of $1> p+q$ and $p<q$).  Therefore, not
knowing when the next feedback would arrive, {\textsc{ Opt}}'s strategy
would correspond to a distribution over steps it waits at $A$ before
switching to $N$. Since the time steps are discrete, one of those time
steps would provide a minimum number of mistakes. That number of steps
determines $L$. 
 \end{proof}

We note that, given the knowledge {\textsc{ Opt}} has, its best action
corresponds to $L = 1/\hat{q}$. We now discuss the difference between the two
algorithms {\textsc{ Imitate}} and {\textsc{ Opt}}.
We make the simplistic assumption that the locations
of the feedback are chosen at random (agnostic of both algorithms).
We begin with the following theorems.

\begin{theorem}
  \label{imitatetheorem}
For a fixed stream length, the number of mistakes made by {\textsc{
  Imitate}} decreases with increasing feedback, for locations
chosen randomly.
\end{theorem}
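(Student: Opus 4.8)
\textbf{Proof proposal for Theorem~\ref{imitatetheorem}.}

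The plan is to set up a coupling between the mistake process of \textsc{Imitate} with a feedback set $F$ and the mistake process with a strictly larger feedback set $F' \supseteq F$, and to argue that every extra feedback location can only help. First I would fix the realization of the true state process $(s_t)_t$ of $T$ and fix the randomness used by \textsc{Imitate} to generate its independent predictions (a sequence of i.i.d.\ draws from the stationary/transition law defined by $\hat p, \hat q$, say using a fixed stream of uniform random variables). With these fixed, the only remaining degree of freedom is the feedback set. I would then observe that the prediction sequence of \textsc{Imitate} is \emph{memoryless between feedback points}: on each maximal block of time steps between two consecutive feedback locations (or from the last feedback to the end of the stream), \textsc{Imitate}'s behavior depends only on the state revealed at the left endpoint and its own fixed internal randomness, and is conditionally independent of everything else.

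The key step is a local exchange/monotonicity argument. Consider adding a single feedback location $t^\star$ to $F$, lying strictly inside a block $[\ell, r)$ of $F$ whose left endpoint revealed state was $\sigma$. Before the addition, on $[t^\star, r)$ \textsc{Imitate} predicts by rolling forward from $\sigma$ at time $\ell$; after the addition, it resets to the true state $s_{t^\star}$ at $t^\star$ and rolls forward from there. On $[\ell, t^\star)$ nothing changes. So it suffices to show that, for a block started by a \emph{reset to the true current state}, the expected number of mistakes on that block is no larger than for a block started by rolling forward from an (possibly stale, possibly wrong) earlier true state. I would prove this by noting that starting the block from the true state $s_{t^\star}$ is exactly the best possible initial condition: for any initial belief, \textsc{Imitate}'s expected mistake count on a block of fixed length is minimized when it is started in sync with the truth, because the process $T$ is a time-homogeneous Markov chain and a prediction scheme that is correct at the block start has a mistake probability at each subsequent step governed by $|P^k(\sigma_{\text{start}} \to s_{t^\star+k}) - \mathbf{1}[\sigma_{\text{start}} = s_{t^\star+k}]|$-type contributions; being started at the true state makes this contribution vanish at step $0$ and, by the contraction of $P^k$ toward stationarity (using $p+q<1$), dominates the stale-start contribution at every later step as well. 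Summing the per-step inequalities over the block and then over all blocks gives the claim. Iterating the single-point addition shows the expected mistake count is nonincreasing as $F$ grows, and then taking expectation over the random (agnostic) choice of feedback locations of a given size preserves the monotonicity since it is a convex combination of the per-realization inequalities.

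The main obstacle I expect is making the ``started in sync is optimal'' comparison fully rigorous for \textsc{Imitate}, because \textsc{Imitate} does not update a posterior belief — it predicts by sampling from $\hat p, \hat q$, which may differ from the true $p, q$. So I cannot simply invoke an optimal-filtering argument; instead I must compare the coupled mistake indicators directly. Concretely, the delicate point is showing that $\Pr[\hat{\text{pred}}_k \neq s_{t^\star+k} \mid \text{reset start}] \le \Pr[\hat{\text{pred}}_k \neq s_{t^\star+k} \mid \text{stale start}]$ for every $k\ge 0$, where both predictions use the \emph{same} internal random bits. Here I would couple the two runs so that after the first time the stale run happens to agree with the true state at the block start's implied position they evolve identically; before that, a short calculation with the $2\times2$ transition matrix (using $20p < q$ and $p+q<1$, hence a spectral gap) shows the reset run's disagreement probability is term-by-term smaller. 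That elementary but slightly fiddly two-state computation — essentially bounding $\|P^k(\sigma\to\cdot) - e_{s}\|$ versus $\|P^k(\sigma'\to\cdot) - e_s\|$ in the relevant coordinate — is where the real work sits; everything around it (the block decomposition, the single-point induction, averaging over feedback locations) is routine bookkeeping once that inequality is in hand.
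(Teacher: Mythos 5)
Your overall framework (couple feedback sets $F \subseteq F'$, add one location at a time, reduce to a per-block comparison, average over random locations) is sound bookkeeping, but it is a different route from the paper's and it routes the entire difficulty into a lemma that does not hold. The paper never compares a reset-started block against a stale continuation: its proof uses only the trivial pathwise fact that a block's mistake count is nondecreasing in its length (a length-$(C{+}1)$ block contains a length-$C$ block as a prefix), together with stochastic dominance of the block-length distribution as feedback increases. Your reduction instead hinges on the claim that resetting at $t^\star$ incurs no more mistakes on $[t^\star, r)$ than the stale roll-forward, \emph{term by term at every offset} $k$, and that claim is false.

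Concretely: it cannot hold pathwise (condition on a true path with $s_\ell = s_{t^\star+k} = A$ but $s_{t^\star}=N$ and $k$ small; the stale chain, started at $A$, is the one more likely to be right), and it also fails in expectation over the future true path at intermediate offsets. Write $\lambda = 1-p-q$ and $\hat\lambda = 1-\hat p-\hat q$, and suppose $\hat\lambda$ is much closer to $1$ than $\lambda$ (e.g.\ $\hat q \ll q$), with the block's left endpoint having revealed $N$ and $\Delta = t^\star - \ell$ large. At offsets $k$ where $\lambda^k \approx 0$ but $\hat\lambda^{\Delta+k}\approx 1$, the stale prediction is still essentially the constant-$N$ predictor and errs with probability about $\pi_A = p/(p+q)$, which is the optimal \emph{marginal} rate when $\pi_A<\tfrac12$; the reset run, having re-randomized its start to $s_{t^\star}\sim\pi$, errs with probability about $2\pi_A(1-\pi_A) > \pi_A$ once the prediction--truth correlation (which decays like $(\lambda\hat\lambda)^k$) is gone --- with probability $\approx\pi_A$ it locked onto $A$ for $\Theta(1/(\hat p+\hat q))$ steps while the truth reverted to $N$ within $\Theta(1/q)$ steps. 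So the per-offset domination your coalescing coupling is meant to deliver is simply not there, and the "fiddly two-state computation" you defer cannot be completed as stated. Only a block-aggregated version of the reset-vs-stale comparison could possibly be true, and proving that is essentially the whole theorem rather than a routine lemma; as written, your plan reduces the statement to something strictly stronger and false.
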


\begin{proof}
We first observe that for any fixed chunk length (between two
feedbacks) where there has been
no feedback; the number of mistakes cannot decrease if the chunk
length $C$ increases by $1$. This is best seen by a coupling where a
sample path (corresponding to the transcript of states of both {\textsc{
  Imitate}} and the true process $T$) of length $C$, is increased by $1$.
The mistake bound holds for each sample path. By induction, this
extends to any $C' >C$. As the number of feedback  increases,
the increased feedback corresponds to a distribution of lengths ${\mathcal D}(C)$
which is stochastically dominated by a distribution ${\mathcal
  D}(C')$.  The theorem follows.
\end{proof}

\begin{theorem}
  \label{opttheorem}
For a fixed stream length and randomly chosen feedback location, the number of mistakes made by {\textsc{
 Opt}} can increase when $\hat{q} > q^2/(p+q)$ especially when the
fraction of feedback $\phi \rightarrow 0$. Note that since $p <
q$, this corresponds to a small overestimation of $q$; and a small
underestimation in the number of anomalies.
\end{theorem}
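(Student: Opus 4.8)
The plan is to compute the expected number of mistakes $\mathsf{M}(\phi)$ made by \textsc{Opt} over a stream of fixed length as a function of the feedback rate $\phi$, and to show that, under the stated condition on $\hat q$, $\mathsf{M}$ is \emph{increasing} in $\phi$ in a neighbourhood of $\phi=0$ (equivalently, $\mathsf M(\phi')>\mathsf M(\phi)$ for some $\phi'>\phi$ both near $0$). Throughout I take the feedback locations to be i.i.d.\ Bernoulli($\phi$) and independent of $T$, so the gaps between consecutive feedbacks are geometric with mean $1/\phi$; since $p+q>0$, for $\phi\to 0$ the chain $T$ essentially reaches stationarity before the next feedback, so the state revealed at a feedback point is, up to $O(\phi)$ corrections, independent across feedbacks and equal to $A$ with probability $\pi_A=p/(p+q)$ and to $N$ with probability $\pi_N=q/(p+q)$.

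First I would set up a renewal/segment decomposition: cut the stream at the feedback points and compare \textsc{Opt}'s behaviour on each segment with the ``always predict $N$'' behaviour that \textsc{Opt} exhibits when no feedback is present (the $\phi\to 0$ limit). On a segment following an $N$-feedback, \textsc{Opt} predicts $N$ throughout, matching this baseline on the same sample path, so such segments contribute zero excess. On a segment following an $A$-feedback at time $s$, \textsc{Opt} predicts $A$ on $s+1,\dots,s+L$ with $L=1/\hat q$ (the optimal window from Theorem~\ref{firsttheorem}) and $N$ afterwards; using the two-step transition identities $P(X_{s+\tau}=N\mid X_s=A)=\pi_N(1-\lambda^{\tau})$ and $P(X_{s+\tau}=A\mid X_s=A)=\pi_A+\pi_N\lambda^{\tau}$ with $\lambda=1-p-q$, the per-segment excess telescopes to $\Delta_A=L(\pi_N-\pi_A)-2\pi_N S_L$ where $S_L=\sum_{\tau=1}^{L}\lambda^{\tau}=\lambda(1-\lambda^{L})/(p+q)$. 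Averaging over the revealed state and summing over the $\approx\phi\,\mathsf T$ feedbacks gives $\mathsf M(\phi)=\pi_A\mathsf T+\phi\,\mathsf T\,\bar\Delta+o(\phi\mathsf T)$ with $\bar\Delta=\pi_A\Delta_A=\pi_A L(\pi_N-\pi_A)-2\pi_A\pi_N S_L$, so the sign of $\bar\Delta$ decides whether a small amount of feedback helps or hurts.

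Next I would read off the sign: since $\pi_A>0$, we get $\bar\Delta>0$ iff $L(q-p)(p+q)>2q\lambda(1-\lambda^{L})$; substituting $L=1/\hat q$ and analysing this (transcendental) inequality in the bursty regime $p<q\ll 1$, where $\lambda\approx 1$, $\ln\lambda\approx-(p+q)$ and hence $\lambda^{1/\hat q}\approx e^{-(p+q)/\hat q}$, its leading-order boundary is $\hat q=q^{2}/(p+q)=q\pi_N$. This yields the claimed instance: for $\hat q$ past this boundary and $\phi$ small, $\mathsf M$ is strictly increasing at $\phi=0^{+}$, so adding a small amount of randomly located, non-adversarial feedback \emph{strictly worsens} \textsc{Opt}. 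I would close by contrasting with Theorem~\ref{imitatetheorem}: the coupling used there works because \textsc{Imitate} always resets to the revealed state, so an extra feedback can only shorten a bad run path-by-path; \textsc{Opt} instead commits to a window calibrated to $\hat q$ rather than $q$, making feedback insertion non-monotone, with the net effect flipping sign exactly at $\hat q=q^{2}/(p+q)$.

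The main obstacle is controlling the interaction between an inserted feedback and its neighbours --- feedbacks landing inside an open $A$-window (resetting or extending it) or truncating it early. For $\phi\to 0$ these overlap events occur with probability $O(\phi)$ per segment, so they perturb only the $o(\phi\mathsf T)$ remainder and the linear-in-$\phi$ term stays governed by $\bar\Delta$; making this rigorous needs a second-moment estimate on the geometric gap lengths together with a uniform $O(1)$ bound on the excess contributed by a short segment. The second delicate point is pinning down the asymptotic regime of $(p,q,\hat q)$ in which the transcendental sign condition collapses to the clean threshold $q^{2}/(p+q)$, and verifying that this regime is the anomaly-detection-relevant one ($p<q$, both small, so $\pi_A=p/(p+q)$ is a small anomaly rate) --- which is precisely the ``especially when $\phi\to 0$'' caveat in the statement.
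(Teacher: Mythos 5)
Your route is genuinely different from the paper's. The paper's proof is a deterministic sojourn-time calculation: after an $A$-feedback it assumes $T$ sits in $A$ for exactly $1/q$ steps and then in $N$, credits {\textsc{Opt}} with $\min\{L,1/q\}$ correct predictions during its $A$-window and $K\cdot\frac{q}{p+q}$ during the remaining $K=\frac{1}{\phi}-L$ steps, and reads off the sign of the coefficient of $\phi$ in the resulting accuracy $\frac{q}{p+q}+\phi\bigl[\frac{1}{q}-\frac{qL}{p+q}\bigr]$; the boundary $L=\frac{p+q}{q^2}$, i.e.\ $\hat q=q^2/(p+q)$, is exactly where that coefficient changes sign. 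You instead compute the exact expected excess over the always-$N$ baseline using the exact $\tau$-step transition probabilities of the two-state chain, which is the more principled calculation. Your per-segment excess $\Delta_A=L(\pi_N-\pi_A)-2\pi_N S_L$ is correct, the renewal decomposition and the $O(\phi)$ control of window/feedback overlaps are sound, and together they do establish the qualitative phenomenon: for $L$ large enough (equivalently $\hat q$ small enough) a vanishing amount of randomly placed feedback strictly increases the expected mistake count.

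The gap is in the last step. Your sign condition $L(q-p)(p+q)>2q\lambda(1-\lambda^L)$ does \emph{not} collapse to the threshold $\hat q=q^2/(p+q)$. Writing $x=L(p+q)$ it reads $(1-e^{-x})/x<(q-p)/(2q)$ to leading order, so for $p\ll q$ the boundary sits at $x^*\approx 1.59$, i.e.\ $\hat q^*\approx (p+q)/1.59$, whereas $q^2/(p+q)\approx q\approx p+q$; the two constants differ by a fixed factor, which is already visible in the regime $\lambda^L\approx 0$, where your threshold is $L^*=\frac{2q}{(q-p)(p+q)}\approx 2/q$ against the paper's $\frac{p+q}{q^2}\approx 1/q$. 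This discrepancy is not an arithmetic slip on your side: it is the price of the paper's fluid approximation, and it means the specific constant in the statement cannot be recovered from your exact computation without switching to the paper's cruder sojourn model. You flagged this as a delicate point but then asserted the collapse anyway; as written that assertion is unsupported and appears to be false. Separately, be explicit about direction: your analysis (like the paper's own proof) identifies the harmful regime as $\hat q$ \emph{below} the threshold (window too long, $q$ underestimated, anomalies over-predicted), which is the opposite inequality to the one printed in the theorem statement, so the phrase ``for $\hat q$ past this boundary'' must be pinned down rather than left ambiguous.
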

\paragraph{One-sided Feedback}
\label{app:proof3}

\begin{proof}
Prediction machine $P$ predicts state $A$ for $L$ steps and then returns to state $N$ until feedback is provided. $P$ is given a true label of $1$ with probability $\phi$. This process of receiving feedback forms a geometric distribution and the expected number of timesteps between two true labels is $\frac{1}{\phi}$.

If $L > \frac{1}{\phi}$, $P$ always remains in state $A$ no matter how much feedback is provided. Therefore expected accuracy is the probability that $T$ is in state $A$ which is $\frac{p}{p+q}$. Let us now consider $L < \frac{1}{\phi}$. As shown in Figure \ref{fig:optproof1}, the algorithm accuracy in such a block can be calculated in two parts: when $P$ is in state $A$ for $L$ steps and when $P$ returns to state $N$ after $L$ steps and remains in state $N$ for $K$ steps.

\begin{figure}[!ht]
        \center{\includegraphics[width=0.7\columnwidth]
        {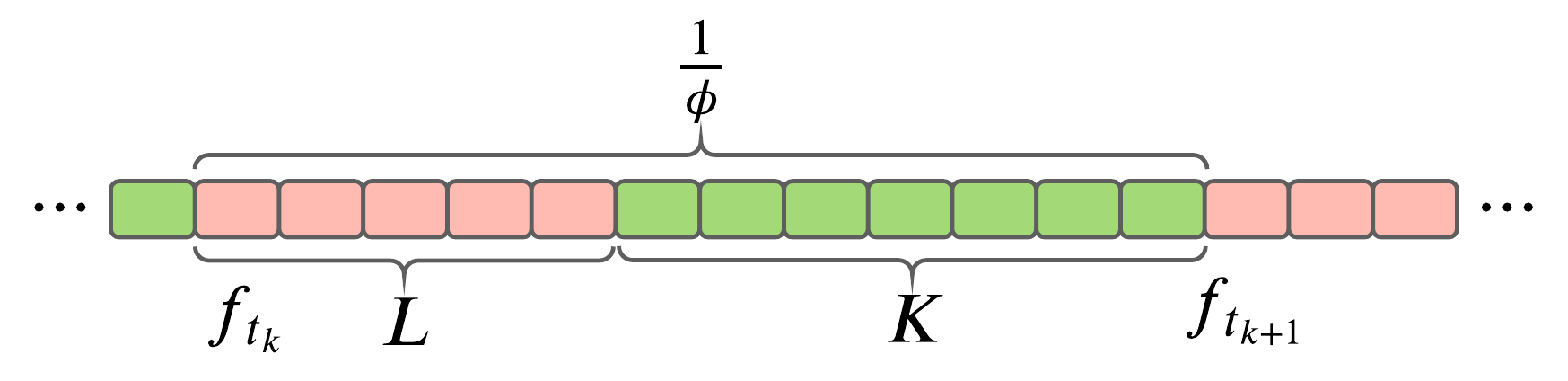}}
        \caption{\label{fig:optproof1} Analysing {\textsc {Opt}} for one partition when $L < \frac{1}{\phi}$.}
\end{figure}

(1) When true label $1$ is provided to $P$, $T$ is in state $A$ and it is expected that it will remain in state $A$ for $\frac{1}{q}$ expected steps before going to $N$. This is because the mean first passage time \cite{levin2017markov} for the state $N$ is $\frac{1}{Pr(A \rightarrow N)} = \frac{1}{q}$. The expected number of times $P$'s prediction matches with $T$ is $min\left\{L, \frac{1}{q}\right\}$ given $L < \frac{1}{p}$.
    
(2) When $P$ comes back to state $N$ (after completing its stay in state $A$ for $L$ steps), it outputs $0$ in every timestep until new feedback arrives. It will stay in state $N$ for an estimated $K = \frac{1}{\phi} - L$ steps. $P$ correctly predicts $0$ for an estimated $K*Pr[s=N] = K*\frac{q}{p+q}$ number of times.

Thus the accuracy of $P$, as a function of $L$ and $\phi$ is 

\[   
Acc(L,\phi) = 
     \begin{cases}
        \frac{K\cdot \frac{q}{p+q} + L}{K+L} &\quad\text{if } L \le \frac{1}{q}\\
        \frac{K\cdot \frac{q}{p+q} + \frac{1}{q}}{K+L} &\quad\text{if } \frac{1}{q} \le L < \frac{1}{p}\\ 
     \end{cases}
\]

Note that $K + L = \frac{1}{\phi}$. Simplifying the expressions, we get

\[   
Acc(L,\phi) = 
     \begin{cases}
        \frac{q}{p+q} + \frac{\phi L p}{p+q} &\quad\text{if } L \le \frac{1}{q}\\
        \frac{q}{p+q} + \phi\left[\frac{1}{q} - \frac{qL}{p+q}\right] &\quad\text{if } \frac{1}{q} \le L < \frac{1}{p}\\ 
     \end{cases}
\]

We can thus infer that the accuracy in one partition reaches its maximum at the expected length $L = \frac{1}{q}$. Furthermore, if $L > \frac{1+\frac{p}{q}}{q}$, then as feedback ($\phi$) increases, the accuracy decreases as the coefficient of $\phi$ $\left(\frac{1}{q} - \frac{qL}{p+q} \right)$ becomes negative. Note that when no feedback is provided ($\phi=0$), accuracy is $\frac{q}{p+q}$.
\end{proof}

\paragraph{Two-sided Feedback}

\begin{proof}
Suppose the algorithm received true label $1$ with probability $r$ and true label $0$ with probability $1-r$ conditioned on being given feedback. The probability of giving feedback is $\phi$. Let us first consider $L < \frac{1}{\phi}$.

(1) When $L < \frac{1}{q}$, we calculate the final accuracy of $P$ by considering two cases:
(1) $f_{t_{k}} = 1$ (2) $f_{t_{k}} = 0$.
    For the first case, the accuracy was already calculated in Appendix \ref{app:proof3} to be  $\frac{q}{p+q} + \frac{\phi L p}{p+q}$. For the second case when $f_{t_{k}} = 0$, we can calculate the final accuracy as $Pr[s = N] = \frac{q}{p+q}$. Since we are receiving $1$s with probability $r$ and $0$s with probability $1-r$, the combined accuracy of $P$ becomes:
    
    \begin{align*}
        &r*\left[\frac{q}{p+q} + \frac{\phi L p}{p+q}\right] + (1-r)*\frac{q}{p+q} = \frac{q}{p+q} + \frac{\phi L p r}{p+q}
    \end{align*}
    
(2) When $\frac{1}{q} < L < \frac{1}{p}$, we break down the calculation of final accuracy of $P$ by considering two cases: (1) $f_{t_{k}} = 1$ (2) $f_{t_{k}} = 0$. Similar to when $L < \frac{1}{q}$, we refer to Appendix \ref{app:proof3} to get the accuracy of $P$ when $f_{t_{k}} = 1$ as $\frac{q}{p+q} + \phi\left[\frac{1}{q} - \frac{qL}{p+q}\right]$. When $f_{t_{k}} = 0$, we calculate the final accuracy as $Pr[s = N] = \frac{q}{p+q}$. Summing up the two cases by considering the probability of their occurrences, we get the final accuracy of $P$ as $\frac{q}{p+q} + r \phi \left[\frac{1}{q} - \frac{qL}{p+q}\right]$.

When $L>\frac{1}{\phi}$, if $P$ is in state $A$ having received a positive true label previously, it will get interrupted by the next stream feedback even before it finishes its term in the $A$ state. This means that we can analyze the performance of $P$ as before but by just replacing $L$ with $\frac{1}{\phi}$. Thus the performance of $P$ will be independent of $L$.

Thus the accuracy of $P$ will be $\frac{q}{p+q} + \frac{pr}{p+q}$ when $\frac{1}{\phi} < \frac{1}{q}$ and $\frac{q}{p+q} + r\phi \left[ \frac{1}{q} - \frac{q}{\phi(p+q)}\right]$ when $\frac{1}{q} < \frac{1}{\phi} < \frac{1}{p}$.

Now we derive the performance of $P$ for a general $L$. Assuming that $P$ gets positive feedback, $T$ starts in state $A$. We conduct an expected case analysis where $T$ remains in state $A$ for $\frac{1}{q}$ number of steps, then it switches to state $N$ and remains there for an expected $\frac{1}{p}$ number of steps and so on. Since $P$ is predicting $1$ all throughout, the calculation of its accuracy just means that we calculate the number of times $T$ stays in state $A$. We break down the analysis into two cases:

(1) When $T$ ends in state $A$ - There are multiple ways in which $T$ can end up in state $A$, when $L < \frac{1}{q}$, $\frac{1}{p} + \frac{1}{q}< L < \frac{1}{p} + \frac{2}{q}$, $\frac{2}{p} + \frac{2}{q}< L < \frac{2}{p} + \frac{3}{q}$ and so on. Let $T_A = \left \lfloor \frac{1}{q} \right \rfloor$ and $T_N = \left \lfloor \frac{1}{p} \right \rfloor$. For a general $L$, $T$ will remain in state $A$ for an expected $\frac{1}{q} \left \lfloor \frac{L}{T_A + T_N}  \right \rfloor + (L \mod T_A)$ steps.

(2) When $T$ ends in state $N$ - As before there are multiple ways in which $T$ can end up in state $N$, when $\frac{1}{q} < L < \frac{1}{p}$, $\frac{1}{p} + \frac{1}{q}< L < \frac{2}{p} + \frac{1}{q}$, $\frac{2}{p} + \frac{2}{q}< L < \frac{3}{p} + \frac{2}{q}$ and so on. For a general $L$, $T$ will remain in state $A$ for an expected $\frac{1}{q} + \frac{1}{q}\left \lfloor \frac{L}{T_A + T_N}  \right \rfloor$ steps.

Note that $\lfloor x \rfloor \leq x$ and $y \mod x < x$ for any $x,y \in \mathcal{Z^{+}}$. We apply these inequalities to the above expressions. If $L \mod (T_A + T_N) < T_A$ i.e. $T$ ends up in state $A$, then $Acc_L < \frac{1}{L} \left[ \frac{pL}{p+q} + \frac{1}{q} \right]$ and if $L \mod (T_A + T_N) > T_A$ i.e. $T$ ends up in state $N$, then $Acc_L < \frac{1}{L} \left[ \frac{pL}{p+q} + \frac{1}{q} \right]$. Thus $Acc_L < \frac{p}{p+q} + \frac{1}{Lq}$ for all $L$.

\end{proof}
It is worth noting that in semi-supervised learning, feedback is
typically small i.e. $\phi \rightarrow 0$. It is not unexpected that phenomena such as Theorem~\ref{opttheorem} arise when the amount
of feedback is large. However, a non-monotone behavior at the initial
stages appears to be more problematic.
Further contrast Theorems~\ref{imitatetheorem}, ~\ref{opttheorem}
with the following observation:

\begin{observation}
In the absence of feedback, the accuracy (fraction of correct
prediction of $A$, $N$) of {\textsc{ Imitate}}
is $(\hat{p}p + \hat{q}q)/((\hat{p} +\hat{q})*(p+q))$, based on the mixing
probability of the two markov chains corresponding to the real and
imitated processes. An algorithm that always answers $N$ has accuracy
$q/(p+q)$. 
\end{observation}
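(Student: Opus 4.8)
The plan is to observe that, absent feedback, the prediction of $\textsc{Imitate}$ at each time step is simply the current state of an internal two-state Markov chain $I$ with transition probabilities $P[N\to A]=\hat p$ and $P[A\to N]=\hat q$, which evolves independently of the true process $T$ (the reset step never triggers, since there is no feedback). A prediction is therefore correct exactly when $I$ and $T$ occupy the same state, so ``accuracy'' is the long-run fraction of time steps on which this agreement occurs. First I would record that both $T$ and $I$ are irreducible (since $p,q,\hat p,\hat q\in(0,1)$, each chain can move between $N$ and $A$) and aperiodic (the self-loop probabilities $1-p,1-q,1-\hat p,1-\hat q$ are all positive because the transition probabilities are small). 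Hence each has a unique stationary distribution: $\pi_T(A)=\tfrac{p}{p+q}$, $\pi_T(N)=\tfrac{q}{p+q}$ and $\pi_I(A)=\tfrac{\hat p}{\hat p+\hat q}$, $\pi_I(N)=\tfrac{\hat q}{\hat p+\hat q}$.

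Next I would form the product chain $(T,I)$ on the four-point state space $\{A,N\}\times\{A,N\}$. Because $T$ and $I$ run independently, the product transition kernel factorizes, so the product chain is again irreducible and aperiodic and its stationary distribution is the product $\pi_T\otimes\pi_I$. By the ergodic theorem for finite Markov chains, for any choice of starting states the Cesàro average of the indicator $\mathbf{1}[T_t=I_t]$ converges (almost surely and in expectation) to $\Pr_{\pi_T\otimes\pi_I}[T=I]=\pi_T(A)\pi_I(A)+\pi_T(N)\pi_I(N)$. Substituting the stationary probabilities gives $\tfrac{p}{p+q}\cdot\tfrac{\hat p}{\hat p+\hat q}+\tfrac{q}{p+q}\cdot\tfrac{\hat q}{\hat p+\hat q}=\tfrac{\hat p p+\hat q q}{(\hat p+\hat q)(p+q)}$, which is the asserted accuracy of $\textsc{Imitate}$. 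For the trivial baseline that always predicts $N$, correctness holds exactly when $T$ is in state $N$, so applying the same ergodic argument to $\mathbf{1}[T_t=N]$ yields accuracy $\pi_T(N)=\tfrac{q}{p+q}$.

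The one point requiring care is the passage from ``fraction of correct predictions'' to a stationary probability: one must fix the convention that accuracy denotes the limiting time-averaged agreement rate, and then invoke ergodicity of the product chain so that the transient effect of the initial states washes out. This is the step I expect to need the most attention, although it is entirely standard; everything else reduces to the elementary computation of two two-state stationary distributions and reading off their product.
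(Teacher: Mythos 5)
Your proposal is correct and follows exactly the route the paper intends: the observation is justified there in one line as ``the mixing probability of the two markov chains corresponding to the real and imitated processes,'' which is precisely your computation of $\pi_T(A)\pi_I(A)+\pi_T(N)\pi_I(N)$ for the two independent stationary chains, plus $\pi_T(N)=q/(p+q)$ for the always-$N$ baseline. Your elaboration via the product chain and the ergodic theorem simply makes rigorous the passage from ``fraction of correct predictions'' to the stationary agreement probability, which the paper leaves implicit.
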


It is surprising that the performance of a (supposedly) ``optimal''
algorithm decreases with feedback (at least initially) as shown in
Theorem~\ref{opttheorem} and demonstrated in 
Table \ref{tab:app2}.
This is due to the fact that the parameters $\hat{p},\hat{q}$ are estimated incorrectly but the ``optimal'' algorithm could not correct for that incorrect estimation. In contrast, as Theorem~\ref{imitatetheorem} and
Table \ref{tab:app4}
 shows, {\textsc{ Imitate}} does not have this
undesirable property and performs better with more feedback, even when
parameters are estimated inaccurately. At the same time, the
performance of the optimum algorithm for correctly estimated parameters
can be higher. 
For (randomized) one-sided
feedback, as 
Table ~\ref{tab:app1}
shows, the performance of the
optimum can degrade significantly; while the performance of {\textsc{
  Imitate}} does not 
  (Table \ref{tab:app3}).
  These observations seem to indicate that {\textsc{ Imitate}} is desirable.

\begin{table*}[!ht]
\centering
\caption{Optimum: Two-Sided Feedback. Stream size $1,000,000$, created with $p = 0.001$, $q = 0.02$ (Averaged over $10$ runs).}
\label{tab:app2}
\resizebox{\columnwidth}{!}{
\begin{tabular}{@{}rccccccccc@{}}
\toprule
$L$ & $\phi=0$ & $\phi=0.001$ & $\phi=0.002$ & $\phi=0.003$ & $\phi=0.004$ & $\phi=0.005$ & $\phi=0.01$ & $\phi=0.02$\\
\midrule
$10$ & $0.951$	& $0.952$ &	$0.952$ & $0.952$ & $0.953$ & $0.953$ & $0.955$ & $0.959$ \\
$20$ & $0.951$	& $0.952$ &	$0.952$ & $0.953$ & $0.954$ & $0.954$ & $0.957$ & $0.962$ \\
$30$ & $0.951$	& $0.952$ &	$0.952$ & $0.953$ & $0.954$ & $0.955$ & $0.958$ & $0.963$ \\
$40$ & $0.951$	& $0.952$ &	$0.952$ & $0.953$ & $0.954$ & $0.955$ & $0.958$ & $0.963$ \\
$50$ & $0.951$	& $0.952$ &	$0.952$ & $0.953$ & $0.954$ & $0.954$ & $0.957$ & $0.962$ \\
$60$ & $0.951$	& $0.952$ &	$0.952$ & $0.953$ & $0.954$ & $0.954$ & $0.957$ & $0.961$ \\
$70$ & $0.951$	& $0.951$ &	$0.951$ & $0.952$ & $0.953$ & $0.953$ & $0.956$ & $0.96$ \\
$100$ & $0.951$	& $0.951$ &	$0.951$ & $0.95$ & $0.951$ & $0.951$ & $0.952$ & $0.956$ \\
$150$ & $0.951$	& $0.949$ &	$0.949$ & $0.946$ & $0.946$ & $0.946$ & $0.946$ & $0.952$ \\
$200$ & $0.951$	& $0.947$ &	$0.947$ & $0.943$ & $0.943$ & $0.942$ & $0.942$ & $0.951$ \\
\bottomrule
\end{tabular}}
\end{table*}

\begin{table*}[ht]
\centering
\caption{Imitate: Two-Sided Feedback. Stream size $1,000,000$, created with $p = 0.001$, $q = 0.02$ (Averaged over $10$ runs).}
\label{tab:app4}
\resizebox{\columnwidth}{!}{
\begin{tabular}{@{}rcccccccccc@{}}
\toprule
$\hat{p}$ & $\hat{q}$ & $\phi=0$ & $\phi=0.001$ & $\phi=0.002$ & $\phi=0.003$ & $\phi=0.004$ & $\phi=0.005$ & $\phi=0.01$ & $\phi=0.02$\\
\midrule
$0.001$ & $0.01$ & $0.871$ & $0.874$ & $0.88$ & $0.886$ & $0.89$ & $0.894$ & $0.91$ & $0.928$ \\
$0.001$ & $0.02$ & $0.908$ & $0.911$ & $0.912$ & $0.914$ & $0.917$ & $0.918$ & $0.925$ & $0.937$ \\
$0.001$ & $0.03$ & $0.922$ & $0.923$ & $0.925$ & $0.926$ & $0.927$ & $0.928$ & $0.933$ & $0.942$ \\
$0.001$ & $0.04$ & $0.929$ & $0.93$ & $0.931$ & $0.932$ & $0.933$ & $0.934$ & $0.937$ & $0.945$ \\
$0.001$ & $0.05$ & $0.934$ & $0.935$ & $0.935$ & $0.935$ & $0.936$ & $0.937$ & $0.94$ & $0.946$ \\
$0.001$ & $0.1$ & $0.942$ & $0.943$ & $0.943$ & $0.943$ & $0.944$ & $0.944$ & $0.946$ & $0.949$ \\
\bottomrule
\end{tabular}}
\end{table*}

\begin{table*}[!ht]
\centering
\caption{Optimum: One-Sided Feedback. Stream size $1,000,000$, created with $p = 0.001$, $q = 0.02$ (Averaged over $10$ runs).}
\label{tab:app1}
\resizebox{\columnwidth}{!}{
\begin{tabular}{@{}rccccccccc@{}}
\toprule
$L$ & $\phi=0$ & $\phi=0.001$ & $\phi=0.002$ & $\phi=0.003$ & $\phi=0.004$ & $\phi=0.005$ & $\phi=0.01$ & $\phi=0.02$\\
\midrule
$10$ & $0.951$ & $0.952$ & $0.952$ & $0.952$ & $0.953$ & $0.953$ &	$0.955$ & $0.958$ \\
$20$ & $0.951$ & $0.952$ & $0.952$ & $0.953$ & $0.954$ & $0.954$ &	$0.957$ & $0.961$ \\
$30$ & $0.951$ & $0.952$ & $0.952$ & $0.953$ & $0.954$ & $0.954$ &	$0.957$ & $0.962$ \\
$40$ & $0.951$ & $0.952$ & $0.952$ & $0.953$ & $0.954$ & $0.955$ &	$0.957$ & $0.96$ \\
$50$ & $0.951$ & $0.952$ & $0.952$ & $0.953$ & $0.954$ & $0.954$ &	$0.956$ & $0.958$ \\
$60$ & $0.951$ & $0.952$ & $0.952$ & $0.952$ & $0.953$ & $0.953$ &	$0.955$ & $0.955$ \\
$70$ & $0.951$ & $0.951$ & $0.951$ & $0.952$ & $0.953$ & $0.953$ &	$0.953$ & $0.951$ \\
$100$ & $0.951$ & $0.951$ & $0.951$ & $0.949$ & $0.95$ & $0.949$ &	$0.946$ & $0.94$ \\
$150$ & $0.951$ & $0.949$ & $0.949$ & $0.944$ & $0.944$ & $0.942$ &	$0.933$ & $0.919$ \\
$200$ & $0.951$ & $0.947$ & $0.947$ & $0.94$ & $0.938$ & $0.934$ &	$0.92$ & $0.899$ \\
\bottomrule
\end{tabular}}
\end{table*}

\begin{table*}[ht]
\centering
\caption{Imitate: One-Sided Feedback. Stream size $1,000,000$, created with $p = 0.001$, $q = 0.02$ (Averaged over $10$ runs).}
\label{tab:app3}
\resizebox{\columnwidth}{!}{
\begin{tabular}{@{}rcccccccccc@{}}
\toprule
$\hat{p}$ & $\hat{q}$ & $\phi=0$ & $\phi=0.001$ & $\phi=0.002$ & $\phi=0.003$ & $\phi=0.004$ & $\phi=0.005$ & $\phi=0.01$ & $\phi=0.02$\\
\midrule
$0.001$ & $0.01$ & $0.87$ & $0.869$ & $0.867$ & $0.865$ & $0.868$ & $0.866$ & $0.861$ & $0.855$ \\
$0.001$ & $0.02$ & $0.908$ & $0.907$ & $0.908$ & $0.908$ & $0.909$ & $0.908$ & $0.909$ & $0.909$ \\
$0.001$ & $0.03$ & $0.922$ & $0.922$ & $0.923$ & $0.923$ & $0.923$ & $0.923$ & $0.925$ & $0.927$ \\
$0.001$ & $0.04$ & $0.929$ & $0.93$ & $0.93$ & $0.93$ & $0.931$ & $0.931$ & $0.933$ & $0.935$ \\
$0.001$ & $0.05$ & $0.934$ & $0.934$ & $0.934$ & $0.935$ & $0.935$ & $0.935$ & $0.937$ & $0.939$ \\
$0.001$ & $0.1$ & $0.942$ & $0.942$ & $0.943$ & $0.943$ & $0.943$ & $0.944$ & $0.945$ & $0.948$ \\
\bottomrule
\end{tabular}}
\end{table*}

\FloatBarrier

We investigate the accuracy of the two algorithms for a
particular run over a sequence of length $1M$, as more data is ingested by the
algorithms with correct estimates of parameters. Note that as shown in
Figure \ref{fig:optimum},
one-sided {\textsc{ Opt}} does much better than {\textsc{ Imitate}} shown in
Figure \ref{fig:imitate}.
However, there is a minimal improvement between one-sided and two-sided feedback even for feedback as large as $2\%$.

\begin{figure}[!htb]
        \center{\includegraphics[width=0.7\columnwidth]
        {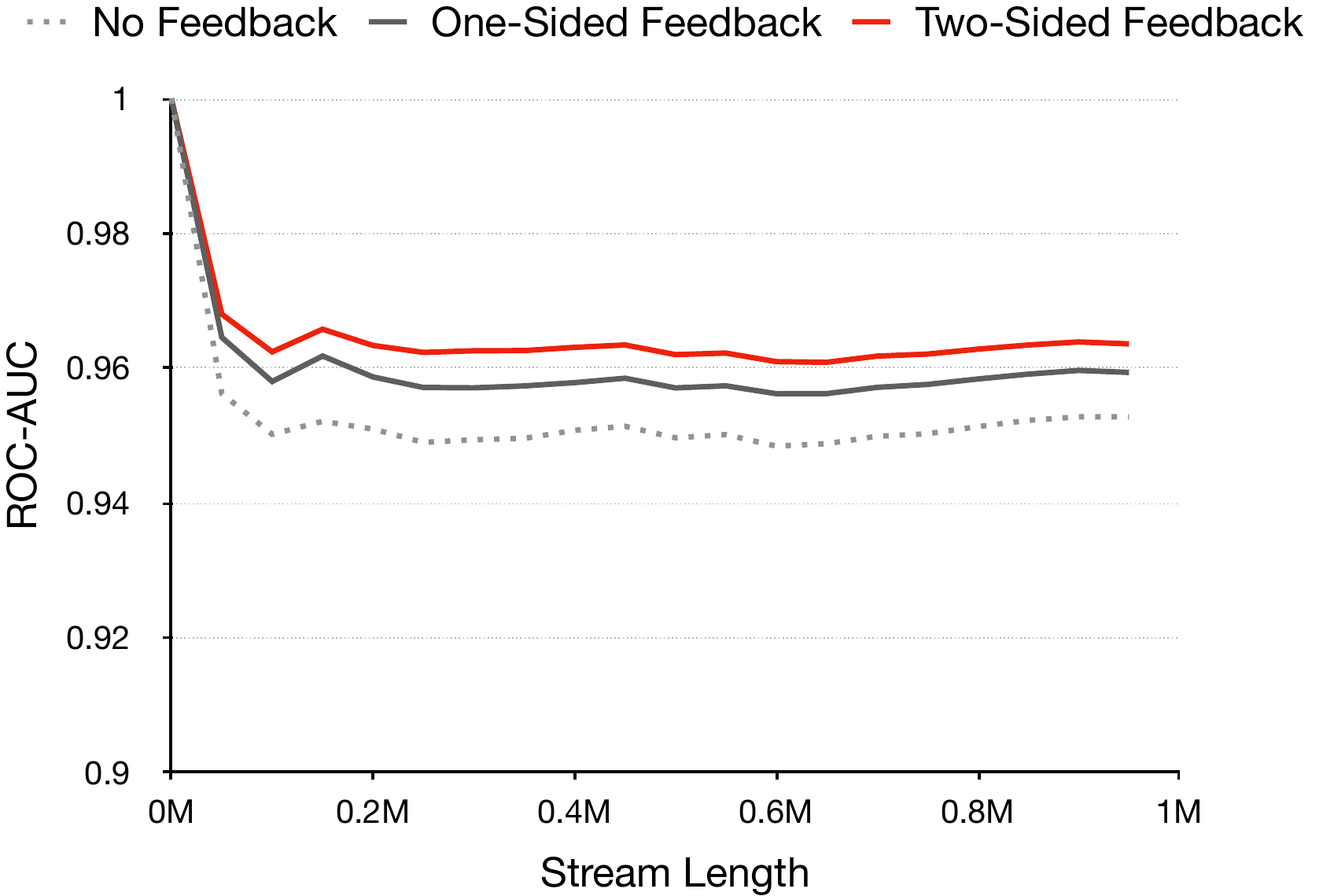}}
        \caption{\label{fig:optimum} Accuracy profile of Optimum.}
\end{figure}

\begin{figure}[!htb]
        \center{\includegraphics[width=0.7\columnwidth]
        {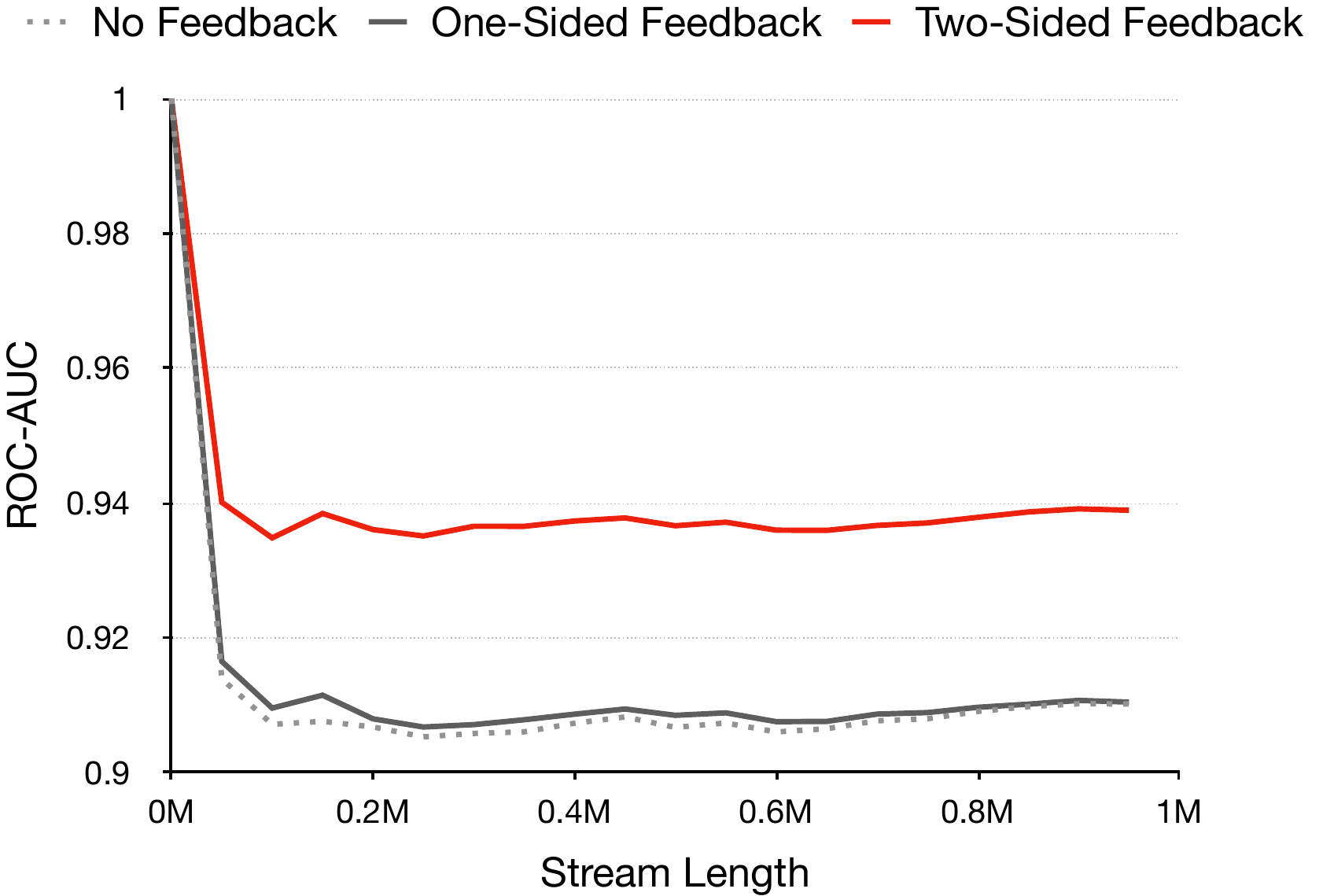}}
        \caption{\label{fig:imitate} Accuracy profile of Imitate.}
\end{figure}

Next, in Table \ref{tab:incorrect}, we study the effect of incorrectly estimated parameters. Unlike with one-sided (only anomalous) feedback, both {\textsc{ Imitate}} and {\textsc{ Opt}} have the ability to correct themselves with two-sided (both normal and anomalous) feedback. Note that as we move further away from the correct parameter ($\hat{q} = 0.02$ for {\textsc {Imitate}} and $L=50$ for {\textsc {Opt}}), the difference between one-sided and two-sided feedback becomes increasingly large.

\begin{table}[!htb]
\centering
\caption{Performance with one-sided and two-sided feedback using incorrect parameters. Stream size $1,000,000$, created with $p = 0.001$, $q = 0.02$. $2\%$ feedback (Averaged over $10$ runs).}
\label{tab:incorrect}
\begin{tabular}{@{}crrr@{}}
\toprule
Algorithm & {$\hat{q}$} & {One-sided} & {Two-sided} \\ \midrule
\multirow{4}{*}{\textsc{ Imitate}} & $0.02$  & $0.911$ & $0.939$ \\
& $0.002$ & $0.581$ & $0.917$ \\
& $0.0002$ & $0.157$ & $0.913$ \\
& $0.00002$ & $0.060$ & $0.912$ \\
\midrule
Algorithm & $L$ & {One-sided} & {Two-sided} \\ \midrule
\multirow{4}{*}{\textsc{ Opt}} & $50$ & $0.959$ & $0.963$ \\
& $500$ & $0.777$ & $0.953$ \\
& $5000$ & $0.111$ & $0.953$ \\
& $50000$ & $0.048$ & $0.953$ \\
\bottomrule
\end{tabular}
\end{table}

Finally, in Table \ref{tab:balanced}, we analyze the effect of balanced and unbalanced ratios of normal and anomalous samples. For $50\%$ normal observations i.e. $p=q=0.02$, accuracy of {\textsc{ Imitate}} one-sided is $0.545$, {\textsc{ Imitate}} two-sided is $0.598$,  {\textsc{ Opt}} one-sided is $0.630$, and {\textsc{ Opt}} two-sided is $0.651$. For $95\%$ normal observations i.e. $20p=q=0.02$, accuracy increases to $0.911$, $0.939$, $0.959$, and $0.963$ respectively. This reaffirms the fact that semi-supervision in the context of unbalanced classes provides a regimen of explorations.

\begin{table}[!htb]
\centering
\caption{Stream size $1,000,000$ with $2\%$ feedback (Averaged over $10$ runs).}
\label{tab:balanced}
\begin{tabular}{@{}rrrrrr@{}}
\toprule \\
& $p$ & $q$ & Algorithm & {One-sided} & {Two-sided} \\ \midrule
\multirow{2}{*} {Bal.} & 0.02 & 0.02 & {\textsc{ Imitate}}  & $0.545$ & $0.598$ \\
& 0.02 & 0.02 & {\textsc{ Opt}} & $0.630$ & $0.651$ \\
\midrule
\multirow{2}{*} {Unbal.} &  0.001 & 0.02 & {\textsc{ Imitate}}  & $0.911$ & $0.939$ \\
& 0.001 & 0.02 & {\textsc{ Opt}} & $0.959$ & $0.963$ \\
\bottomrule
\end{tabular}
\end{table}

The conceptual systems serve as an exemplar that (i) optimization needs to be considered carefully (ii) algorithms that work on synopsis and suboptimal at the outset need not have poor performance at the end of semi-supervision. This system model is abstracted to model a single edge and the feedback pertains to the same edge. For an extended object such as a graph, feedback would also correspond to many edges that likely were never anomalous, as a result, we expect the performance to slowly degrade.

\FloatBarrier

\section{Semi-Supervision}

\subsection{\midas}
\midas\ \cite{bhatia2020midas} detects anomalous edges from a stream of graph edges. It combines a chi-squared statistic with count-min sketches (CMS)~\cite{cormode2005improved} streaming data structures to get an anomaly score for each edge. \midas\ defines $s_{uv}$ as the total number of edges from node $u$ to $v$ up to the current time tick $t$, and $a_{uv}$ as the number of edges from node $u$ to $v$ only in the current time tick $t$ (excluding past time ticks). It then divides the edges into two classes: edges at the current time tick $t$ ($=a_{uv}$), and edges in past time ticks ($=s_{uv} - a_{uv}$), and computes the chi-squared statistic as $\left(a_{uv} - \frac{s_{uv}}{t}\right)^2 \frac{t^2}{s_{uv}(t-1)}$. \midas\ then uses two CMS data structures to maintain approximate counts $\hat{s}_{uv}$ and $\hat{a}_{uv}$ to estimate $s_{uv}$ and $a_{uv}$ respectively and defines the anomaly score for an edge as:

\begin{equation}
score(u,v,t) = \left(\hat{a}_{uv} - \frac{\hat{s}_{uv}}{t}\right)^2 \label{eqn:eq1sess} \frac{t^2}{\hat{s}_{uv}(t-1)}
\end{equation}

\subsection{Semi-Supervision on \midas}
\paragraph{\textbf{SESS}}

\begin{figure}[!htb]
        \center{\includegraphics[width=0.8\columnwidth]
        {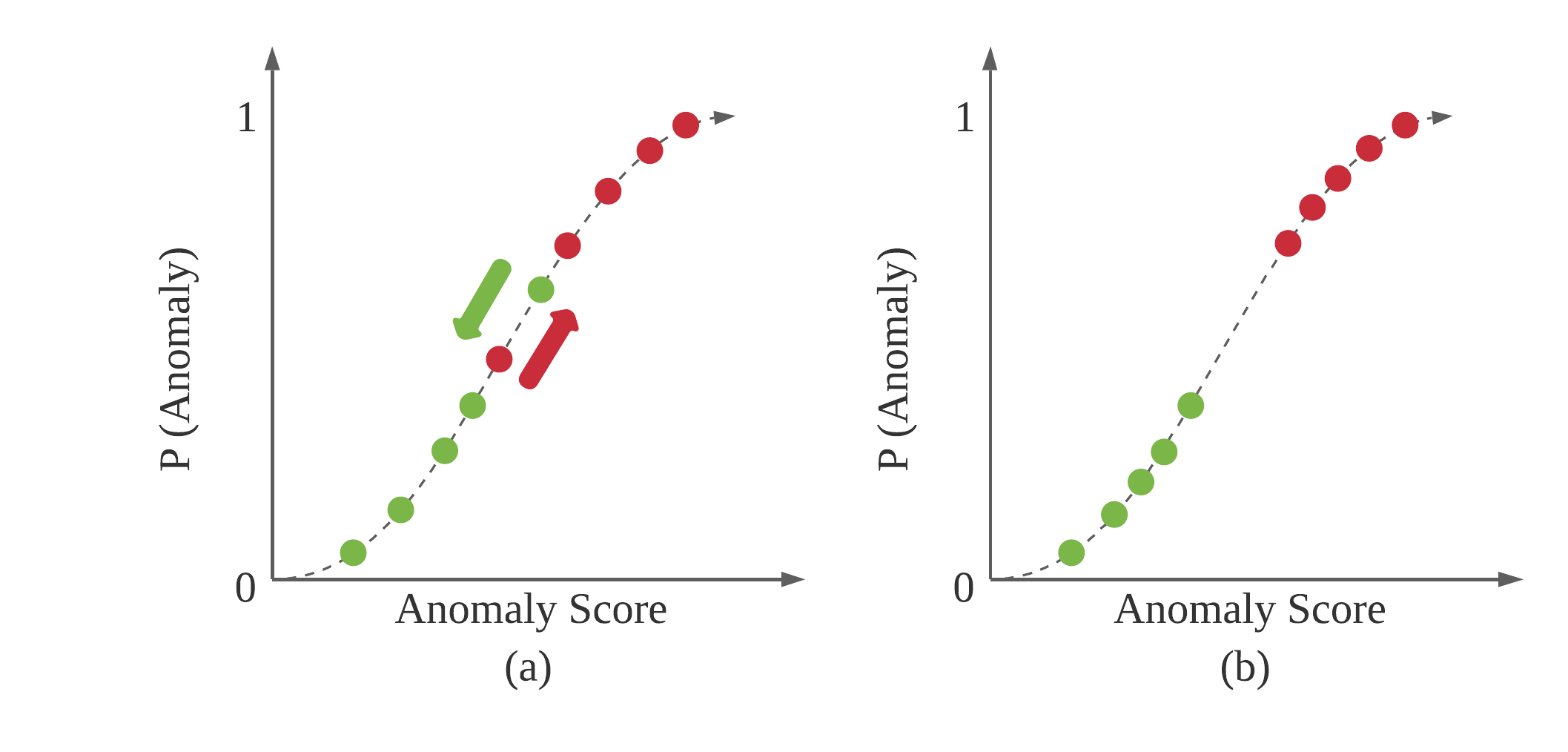}}
        \caption{\label{fig:sessentropy} `Sharpening effect' to increase the anomaly score of an anomalous edge and decrease that of a non-anomalous edge.}
\end{figure}

We incorporate semi-supervision by increasing the discriminative power of \midas. We create a {\em`sharpening effect'} in the scoring function such that we increase the anomaly score for an anomalous edge and decrease that of a non-anomalous edge as shown in Figure \ref{fig:sessentropy}. 

As seen in Equation \ref{eqn:eq1sess}, the anomaly score in \midas\ is proportional to the difference between $\hat{a}_{uv}$ and $\hat{s}_{uv}$. For an anomalous edge, we want to increase this difference, therefore we multiply $\hat{a}_{uv}$ by a factor of $\lambda$ ($>1$) and $\hat{s}_{uv}$ by a factor of $\mu$ $\in (0,1)$. For a non-anomalous edge, we want to reduce this difference, therefore we multiply $\hat{a}_{uv}$ by $\mu$ and $\hat{s}_{uv}$ by $\lambda$. Count-min sketches satisfy associative rules such that the order of updates is no longer relevant and the original guarantees hold as long as the update does not cause the sketch to become negative \cite{cormode2005improved}. We, therefore, multiply by a factor rather than subtracting the minimum across multiple hash functions, although subtractions will also hold as long as one ensures that the sketch counts always remain positive. \sess\ is summarized in Algorithm \ref{alg:sess}.

\begin{algorithm}
    \LinesNumberedHidden
	\caption{\sess:\ Semi-Supervision on \midas\ \label{alg:sess}}
	
	\KwIn{Stream of unlabeled and few labeled edges}
	\KwOut{Anomaly scores per edge}
	{\bfseries $\triangleright$ Initialize CMS data structures} \\
	Initialize CMS for total counts $s_{uv}, s_{u}, s_{v}$\\
	Initialize CMS for expected counts $a_{uv}, a_{u}, a_{v}$\\
	\While{\ new edge $e=(u,v,t)$ is received}{
	{\bfseries $\triangleright$ Calculate} \midas\ score for e\\
	\textbf{Update} CMS data structures\\
	\textbf{output} score\\ 
	{\bfseries $\triangleright$ Semi-Supervision}\\
	\If{label available}{
	\textsc{Update}($e$, $\lambda$, $\mu$, $label$, CMS for $s$ and $a$)
	}}
\end{algorithm}

\begin{algorithm}
        \LinesNumberedHidden
		\caption{\textsc{Update}}
		\label{alg:update}
		\KwIn{Edge $e$, $\lambda$, $\mu$, $label$, CMS for $s$ and $a$}
		\For{$i \gets 1$ to $r$}{
		$bucket = h_{i}(e)$   \tcp*[f]{i$^{th}$ hash function}\\
			\If{$label==0$}{ 
			    $\hat{s}[i,bucket]\ \asteq \lambda$\\
			    $\hat{a}[i,bucket]\ \asteq \mu$
            }
            \If{$label==1$}{
			    $\hat{s}[i,bucket]\ \asteq \mu$\\
			    $\hat{a}[i,bucket]\ \asteq \lambda$
            }
		}
\end{algorithm}

\FloatBarrier

\paragraph{\textbf{SESS-3D}}
\midas\ maintains different CMS data structures to keep track of current and total edge and node counts. We introduce a novel 3-Dimensional (3D) CMS data structure where $w$ buckets for each hash function of the original CMS data structure are now mapped to $\lfloor \sqrt w \rfloor * \lfloor \sqrt w \rfloor$ buckets. 

As shown in Figure \ref{fig:sess3d}, in \sess-3D, we hash source and destination nodes in separate dimensions as opposed to \sess\ where the source-destination pair are hashed together. The advantage of \sess-3D is that node feedback can directly be incorporated in addition to the usual edge feedback because of using separate buckets for hashing source and destination nodes. Moreover, this fits very well in the cache and results in a lower running time as discussed in Section \ref{sec:speedsess}.

\begin{figure}[!htb]
        \center{\includegraphics[width=0.8\columnwidth]
        {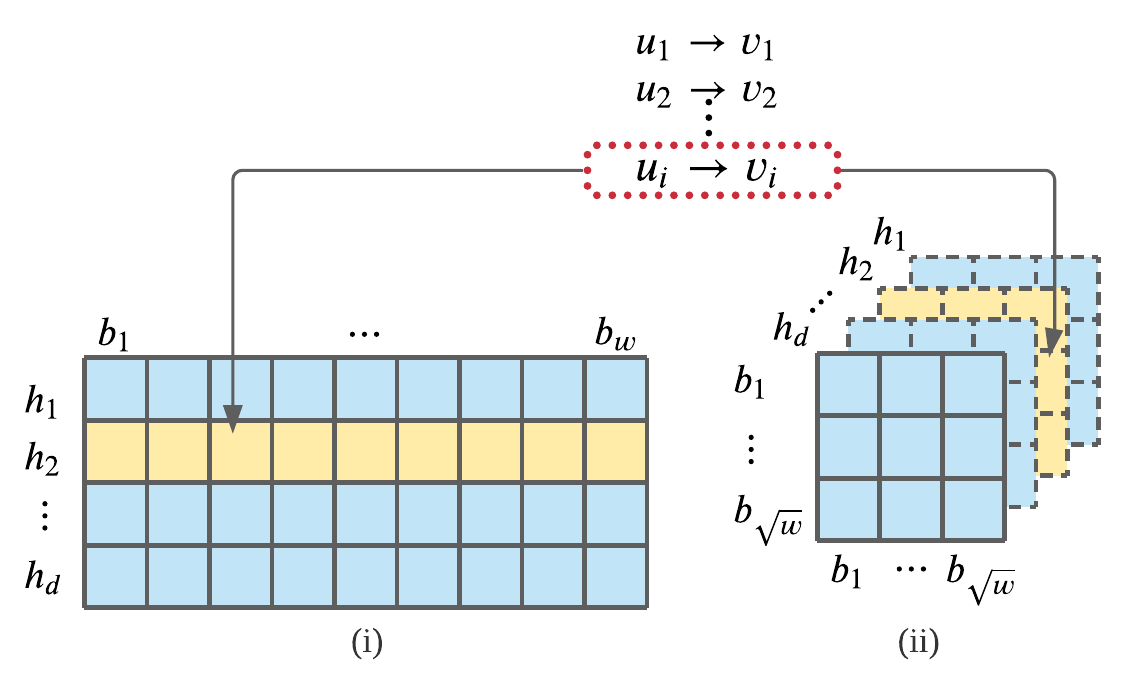}}
        \caption{\label{fig:sess3d} \sess-3D: $w$ buckets for each hash function of original CMS data structure are now mapped to $\lfloor \sqrt w \rfloor * \lfloor \sqrt w \rfloor$ buckets.}
\end{figure}

\paragraph{Time and Memory Complexity}
In terms of memory, both \sess\ and \sess-3D only need to make use of the original CMS data structures of \midas, which are proportional to $O(dw)$, where $d$ and $w$ are the number of hash functions and the number of buckets respectively; bounded by the data size. Thus, space complexity is $O(1)$. \midas\ either updates or queries the CMS, which takes $O(d)$ time per update step. For incorporating semi-supervision, the relevant steps in Algorithms \ref{alg:sess} and \ref{alg:update} run in constant time. Thus, the time complexity per update step is $O(1)$.

\FloatBarrier

\subsection{Experiments}

We now compare the performance of \sess\ and \sess-3D with \tlp\ (TLP) and vanilla \midas. We aim to answer the following questions:

\begin{enumerate}[label=\textbf{Q\arabic*.}]
\item {\bfseries Accuracy:} How accurately does \sess\ detect anomalies as compared to baselines, as evaluated using the ground truth labels?
\item {\bfseries Speed:} How does the time needed to process each input compare to the baseline approaches?
\end{enumerate}

\paragraph{\textbf{Datasets:}}
To evaluate a semi-supervised setting, we need labeled datasets to be able to sample and pass true labels as feedback to the algorithm. \emph{DARPA} \cite{lippmann1999results} is the only dataset containing ground truth used both by \midas\ or \spotlight. \emph{DARPA} has $4.5M$ communications over $1463$ hours. \emph{DARPA Unbalanced} is \emph{DARPA} but considering all Neptune attack type edges as non-anomalous. \emph{DARPA} has $60.1\%$ anomalies and \emph{DARPA Unbalanced} has $13\%$ anomalies of total edges.

\cite{ring2019survey} surveys more than $30$ intrusion detection datasets and recommends to use the newer \emph{CICIDS} \cite{sharafaldin2018toward,CICIDSDataset} datasets. \cite{prasad2019machine} further extracts and combines multiple CICIDS datasets to form \emph{Balanced DDoS} and \emph{Unbalanced DDoS} datasets containing $12.8M$ and $7.6M$ edges respectively. \emph{Balanced DDoS} has $50\%$ anomalies and \emph{Unbalanced DDoS} has $20\%$ anomalies of total edges.

\paragraph{\textbf{Baseline:}}
Note that even in the smaller \emph{DARPA} dataset there are $4.5$ $million$ edges. Label propagation algorithm \cite{zhulearning} in the Scikit-learn \cite{scikit-learn} library requires the entire graph to be in memory and therefore it runs out of memory on our datasets. We use state-of-the-art streaming label propagation \cite{wagner2018semi} as our baseline and define the similarity between adjacent edges to be higher ($score=10$) as compared to non-adjacent edges ($score=1$).

\paragraph{\textbf{Evaluation Metrics:}}
All methods output an anomaly score per edge (higher is more anomalous). We report the Area under the ROC curve (AUC) since it can be calculated using predicted scores. If MIDAS provided a fixed threshold, accuracy could have been reported since it is calculated on predicted classes. Recall that AUC lies in $[0,1]$ and a higher value is better. We measure the running time averaged over $21$ runs with $0.01\%$ random feedback (unless specified otherwise) and report the median values.

\paragraph{\textbf{Experimental Setup:}}
All experiments are carried out on a $2.8 GHz$ Intel Core $i7$ processor, $16 GB$ RAM, running OS $X$ $10.15.6$. We used an open-sourced implementation of \midas-R (better performing \midas\ variant), provided by the authors, following parameter settings as suggested in the original paper ($2$ hash functions, $2719$ buckets). We implement \sess\ in C\texttt{++}. We follow the same parameter settings as \midas\ ($2$ hash functions, $2719$ buckets).

$\lambda$ and $\mu$ are chosen as $2$ and $0.3$ respectively. Exact step sizes $\lambda$ and $\mu$ are dataset dependent but since \sess\ is based on multiplicative weights, we should ensure that computations using the step size remain bounded to maintain theoretical guarantees and give meaningful results. We did not find any significant difference in the accuracy on increasing $\lambda$ up to $2$ and reducing $\mu$ to $0.3$. Adding and subtracting the step size should also give similar results because multiplicative weights have corresponding additive versions, however, we omit this discussion in the interest of space.

\paragraph{\bfseries Accuracy:}
Table \ref{tab:aucsess} shows the AUC of TLP, \midas, \sess\ and \sess-3D on \emph{DARPA}, \emph{DARPA Unbalanced}, \emph{DDoS Balanced} and \emph{DDoS Unbalanced} datasets with $0.01\%$ feedback. By incorporating semi-supervision, \sess\ and \sess-3D achieve higher AUC as compared to \midas and TLP. Note that \midas\ was unable to perform well on unbalanced datasets: \emph{DARPA Unbalanced} and \emph{DDoS Unbalanced} where small feedback ($0.01\%$) in \sess\ was sufficient to improve the performance significantly.

\begin{table}[!htb]
\centering
\caption{AUC and Time with $0.01\%$ feedback.}
\label{tab:aucsess}
\begin{tabular}{@{}lrrrr@{}}
\toprule
Dataset
 & { TLP}
 & { \midas}
 & \textbf{\sess}
 & \textbf{\sess-3D}  \\ \midrule
 {\bfseries DARPA} & $0.764$ & $0.952$ & $0.969$ & $\textbf{0.977}$ \\
 
 & $\pm 0.022$ & $\pm 0$ & $\pm 0.001$ & $\pm 0.002$ \\
 & $\sim2000s$ & $1.6s$ & $1.4s$ & $0.6s$ 
  \vspace{0.2cm}
 \\
 
 {\bfseries DARPA} & $0.569$ & $0.613 $ & $0.865$ & $\textbf{0.885}$ \\
 {\bfseries Unbalanced} & $\pm 0.045$ & $\pm 0$ & $\pm 0.005$ & $\pm 0.006$ \\
 & $\sim2000s$ & $1.6s$ & $1.4s$ & $0.6s$ 
 \vspace{0.2cm}
 \\

 {\bfseries DDoS} & $0.651$ & $0.941 $ & $0.992$ &  $\textbf{0.998}$  \\
 {\bfseries Balanced} & $\pm 0.025$ & $\pm 0$ & $\pm 0.000$ & $\pm 0.000$ \\
 & $\sim12000s$ & $1.8s$ & $1.8s$ & $1.5s$
 \vspace{0.2cm}
 \\
 {\bfseries DDoS} & $0.931$ & $0.663 $ & $0.923$ &  $\textbf{0.990}$ \\
 {\bfseries Unbalanced} & $\pm 0.005$ & $\pm 0$ & $\pm 0.004$ & $\pm 0.000$ \\
 & $\sim7000s$  & $1.3s$ & $1.3s$ & $0.9s$
 \vspace{0.2cm}
 \\
\bottomrule
\end{tabular}
\end{table}

Sketches spread out the data well and perform better denoising as compared to TLP. It may be possible to have a more informative kernel when more information is available. However, without any other assumptions about the data, it is not clear what kernel to set and how to run Label Propagation on these datasets consisting of edges. Moreover, even though there is a potential to choose the right set of parameters as can be observed by $0.93$ for TLP in Table \ref{tab:aucsess}, running time will still be of the same order.

\paragraph{\bfseries Speed}
\label{sec:speedsess}
\sess\ and \sess-3D are at least three orders of magnitude faster ($<2s$ vs $\sim2000s$) compared to TLP on all datasets. It is worth noting in Table~\ref{tab:aucsess} that \sess\ and \sess-3D did not slow down compared to the original \midas\ algorithm. Also, \sess-3D by being cache-aware has a lower running time compared to \sess. Note that \sess\ and \sess-3D are scalable with increasing feedback since the time complexity is constant, whereas TLP requires time quadratic to the proportion of feedback.

\paragraph{\bfseries One-Sided Feedback:}
When we only provide anomalous one-sided feedback, AUC drops from $0.865$ to $0.704$ for \sess\ and from $0.885$ to $0.703$ for \sess-3D on \emph{DARPA Unbalanced} dataset. This shows that receiving randomized signals from both categories (normal and anomalous) is much more beneficial as compared to learning from one-sided feedback (anomalous labels).

\FloatBarrier

\paragraph{\bfseries Evaluating AUC in a streaming manner:}
\label{app:evaluatingauc}
Figure \ref{fig:streamingauc} plots the AUC for \midas, \sess$^{1}$, \sess-3D$^{1}$, \sess\ and \sess-3D on \emph{DARPA Unbalanced} dataset when evaluated over the stream with $0.01\%$ feedback. \sess$^{1}$ and \sess-3D$^{1}$ are \sess\ and \sess-3D with one-sided (only anomalous) feedback whereas \sess\ and \sess-3D receive two-sided feedback (both normal and anomalous feedback). Evaluation is performed after every $500K$ records. At the end of the stream, AUC for \midas, \sess\ and \sess-3D is $0.613$, $0.870$ and $0.885$ as also shown in Table~\ref{tab:aucsess}. Note that as the stream length increases, there is a continuous drop in AUC of \midas, whereas AUC for \sess\ and \sess-3D does not drop significantly. One-sided feedback: \sess$^{1}$ and \sess-3D$^{1}$ has better performance as compared to \midas\ but receiving signals from both categories (\sess\ and \sess-3D) achieves the highest AUC. Figure \ref{fig:feedbackauc} shows the influence of feedback on the AUC. AUC for \midas\ remains the same, whereas that of \sess\ and \sess-3D improves with increasing feedback. Note that these observations correspond well with the findings in Section \ref{sec:conceptual}.

\begin{figure}[!htb]
        \center{\includegraphics[width=0.7\columnwidth]
        {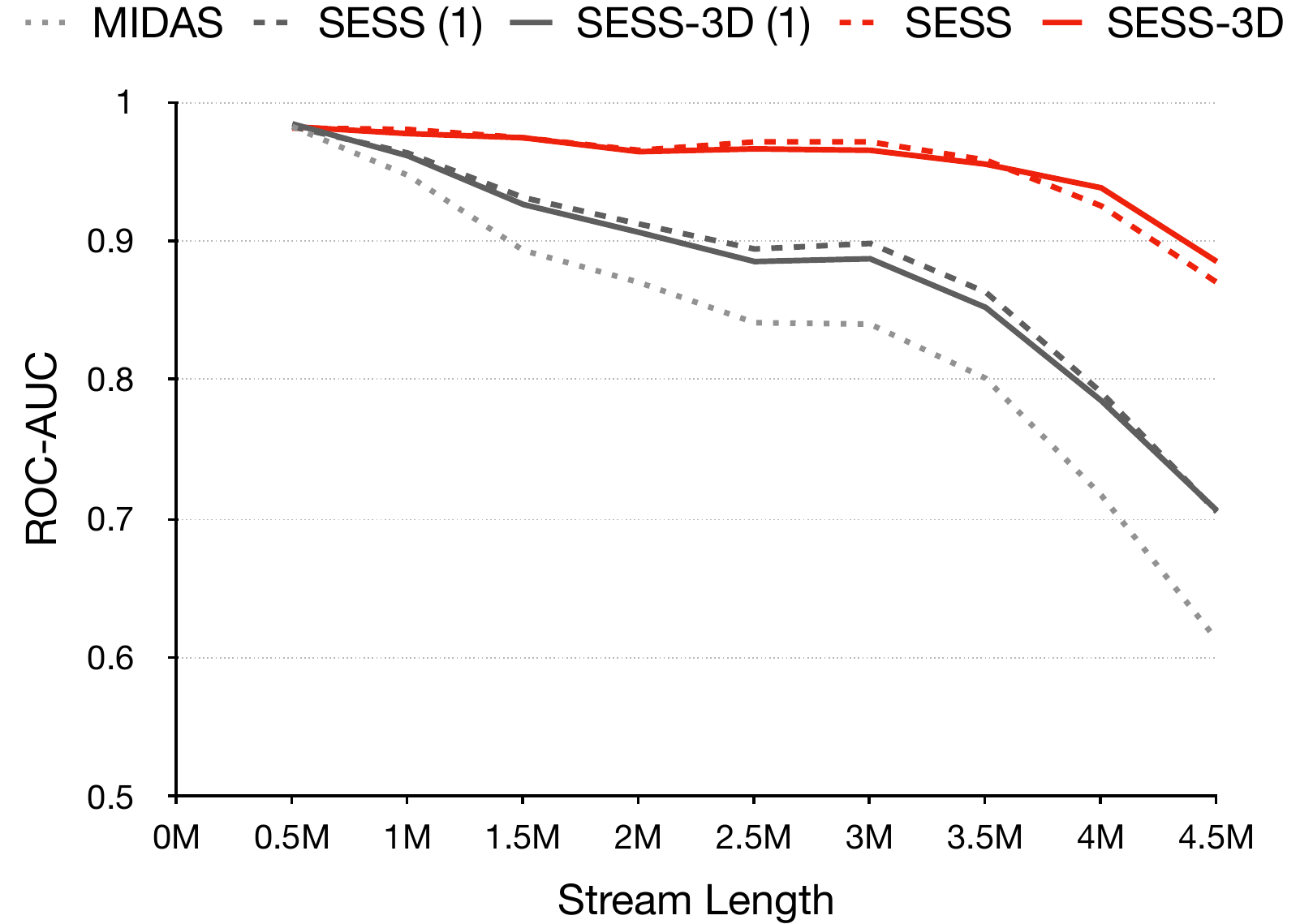}}
        \caption{\label{fig:streamingauc} AUC of \midas\ drops; \sess\ and \sess-3D are steady. Mean and standard deviations for $21$ runs are shown.}
\end{figure}

\begin{figure}[!htb]
        \center{\includegraphics[width=0.7\columnwidth]
        {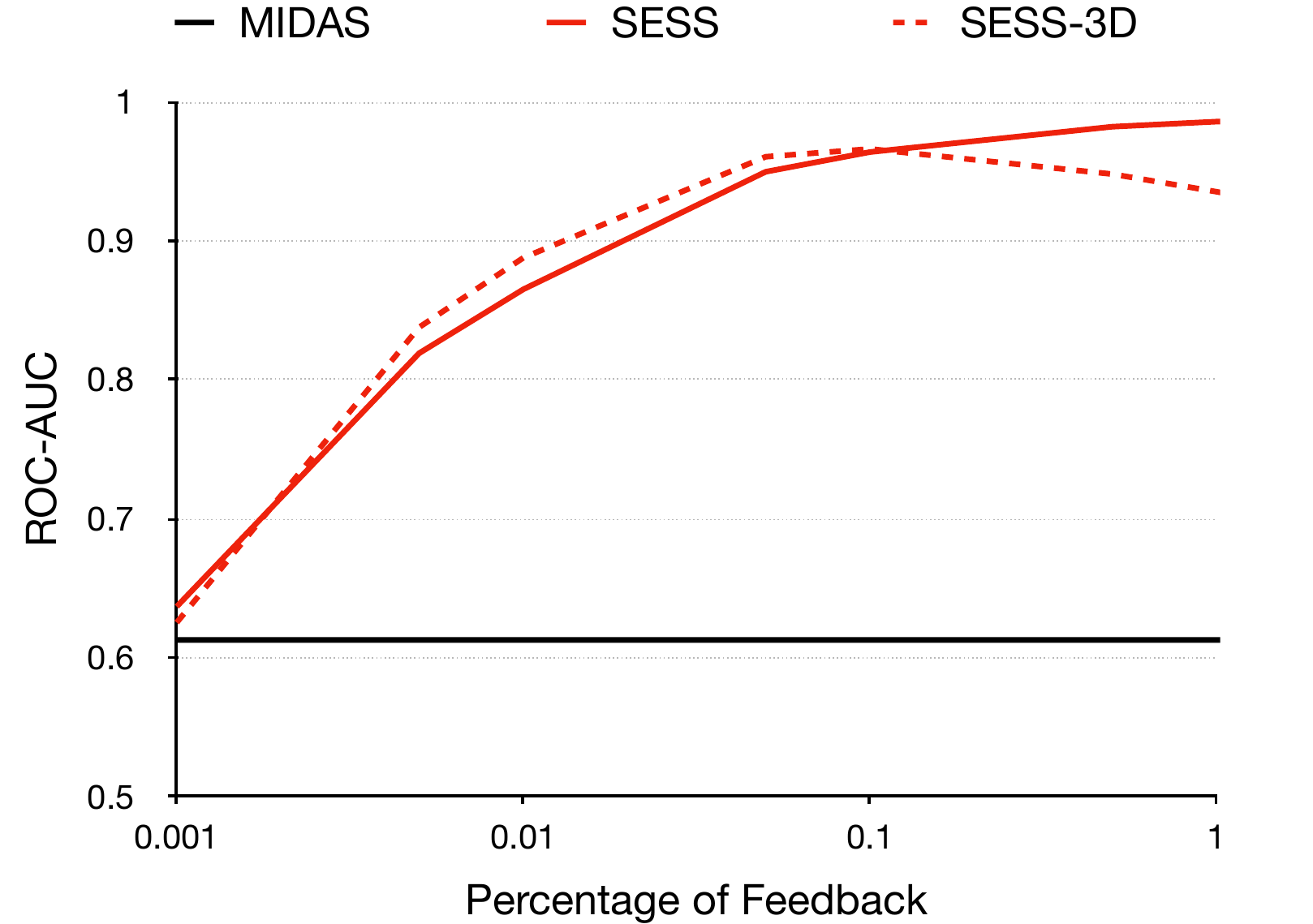}}
        \caption{\label{fig:feedbackauc} AUC of \sess\ and \sess-3D increases with more feedback. Mean and standard deviation for $21$ runs are shown.}
\end{figure}

\FloatBarrier

\section{Weakly Correlated Semi-Supervision}
In this section, we consider semi-supervision when feedback is weakly
correlated, taking \spotlight\ \cite{eswaran2018spotlight} as an
example. \spotlight\ detects anomalous graphs in a streaming manner,
however, we provide feedback on the anomalousness of individual edges.

At the outset, we note that two-sided feedback is infeasible in a
weakly correlated setting because we have no mechanism to argue that
the input is not anomalous -- the fact that a weakly correlated signal
is answering no is tangential evidence. Likewise, query-based
one-sided feedback is not defined for weakly correlated signals. One-sided feedback can be defined in this context and it
corresponds to providing the weakly correlated signal for a randomly
chosen set defined on that correlated signal. However, a sketch-based semi-supervised algorithm has an interesting capability in this
context. Based on the score it sees for the most recent input, and
any weakly correlated feedback, the algorithm can choose to update
or choose not to update itself. Thus, the algorithm does possess a
mechanism to amplify the feedback.

\subsection{\spotlight}
\spotlight\ uses randomized sketching to project graphs to points. It
initially chooses $K$ query subgraphs
$\{(\mathcal S'_k, \mathcal D'_k)\}_{k=1}^K$ by sampling each source
into each $\mathcal S'_k$ and each destination into each
$\mathcal D'_k$ with probabilities $p$ and $q$ respectively. The
sketch vector $\mathbf v(\mathcal G) \in \mathbb R^{K}$ is calculated
as
$\mathbf v_k(\mathcal G) = \sum_{s \in{\mathcal S'_k},d \in {\mathcal
    D'_k}} A_{sd}$. \spotlight\ then runs random cut forest (RCF)
\cite{guha2016robust} on the hashed space to calculate the anomaly
score for the sketch vector. Finally, the RCF is updated
with the embedding of the current graph to better reflect the trend and detect anomalies in future.

\subsection{Semi-Supervision on \spotlight}
For semi-supervision, we use the current graph embedding to update the
forest only when no edge is labeled as
anomalous. This is to prevent the graphs containing anomalous
edges from updating the random cut forest.

During the final step of
\spotlight, once the sketch vector is computed and an anomaly score is
calculated for that vector, the embedding is used to update the random
cut forest for better future predictions. Vanilla \spotlight\ however
fails to benefit from the predictions themselves and updates the
forest for every embedding instead of doing so only for normal
predictions. We, therefore, update the forest only when the anomaly
score is less than a particular threshold ($\beta$) in addition to the
semi-supervision. We choose $\beta=1$ as the threshold because RCF
scores are calibrated around a score of $1$ but for the purposes of measurement, one may also fix the best threshold by exploring the search space such that the AUC score
is maximum. Note that such a choice of the threshold would be fully
supervised and not semi-supervised. However, the variation provides an
interesting baseline for comparing the semi-supervised approach as well. Weakly Correlated Semi-Supervision on \spotlight\ is summarized in Algorithm \ref{alg:spotlight}.

\begin{algorithm}
    \LinesNumberedHidden
	\caption{Semi-Supervision on \spotlight\ \label{alg:spotlight}}
	\KwIn{Stream of unlabeled and few labeled edges aggregated into graphs}
	\KwOut{Anomaly scores per graph}
	{\bfseries $\triangleright$ Initialize Random Cut Forest $\mathcal F$}\\
	\While{new graph $\mathcal G$ is received}{
	$anom\_flag$ = 0\\
	\If{$\exists$ edge $e \in \mathcal G$ st. $e$ is labeled anomalous}
	{
	$anom\_flag$ = 1
	}
	{\bfseries $\triangleright$ Calculate} \spotlight\ score for graph $\mathcal G$ \\
	$\mathbf{v}$ = \textsc{Sketch}($\mathcal G$)\\
	score = \textsc{AnomalyScore}($\mathbf{v}$)\\
	{\bfseries $\triangleright$ Semi-Supervision}\\
	\If{$score < \beta$ and $anom\_flag \ne 1$}{
	\textsc{UpdateRCF}($\mathcal F$, $\mathbf{v}$)
	}
	}
\end{algorithm}

\FloatBarrier

\subsection{Experiments}

The authors of \spotlight\ use \emph{DARPA} dataset for evaluation, hence we use \emph{DARPA} and its variant \emph{DARPA Unbalanced} for comparison.
As described in the original paper, we used open-sourced implementations of RCF \cite{awsrando88:online} and Carter Wegman hashing \cite{carter1979universal}, and obtain a stream of graphs by aggregating edges in \emph{DARPA} and \emph{DARPA Unbalanced} occurring every $60$ minutes. Additionally, we show experiments for aggregations of $t=15$ and $30$ minutes. A graph is labeled as anomalous if it contains at least $\tau$ attack edges. AUC scores are averaged on five seeds and feedback is given on $1\%$ of the edges.

In Table \ref{tab:weaksemi}, we show AUC of \emph{Basic}, \emph{Semi-Supervised}, \emph{Semi-Supervised+$1$} and \emph{Fixed} \spotlight, for different $t$ and $\tau$ values, on \emph{DARPA} and \emph{DARPA Unbalanced} datasets. Basic refers to the original \spotlight\ without any semi-supervision. Semi-Supervised incorporates weakly correlated semi-supervision without an additional thresholding step. Semi-Supervised+$1$ refers to both semi-supervision as well as the thresholding step for $\beta = 1$. Fixed is similar to Semi-supervised+$1$ but the threshold is now fixed rather than being $1$ by searching across different thresholds to see which one works better. For \emph{DARPA} and \emph{DARPA Unbalanced} datasets, we find that $\beta = 0.6$ performs well.

\begin{table}[!htb]
\centering
\caption{Weakly Correlated Semi-Supervision on \spotlight.}
\label{tab:weaksemi}
\begin{tabular}{@{}rrrrrrr@{}}
\toprule
Dataset &
 {$t$} & {$\tau$} &
 { Basic}
 & {Semi}
 & {Semi + $1$} 
 & {Fixed} \\ \midrule
 
\multirow{4}{*}{DARPA} & $15$ & $25$ & $0.682$ & $0.831$ & $0.857$ & $0.859$ \\
& $30$ & $50$ & $0.682$ & $0.831$ & $0.857$ & $0.859$ \\
& $60$ & $50$ & $0.641$ & $0.790$ & $0.794$ & $0.795$ \\
& $60$ & $100$ & $0.719$ & $0.886$ & $0.869$ & $0.871$ \\ \midrule

& $15$ & $25$ & $0.658$ & $0.763$ & $0.838$ & $0.844$	\\
{DARPA} & $30$ & $50$ & $0.686$ & $0.813$ & $0.846$ & $0.859$	\\
{Unbalanced}& $60$ & $50$ & $0.622$ & $0.740$ & $0.778$ & $0.781$	\\
& $60$ & $100$ & $0.704$ & $0.838$ & $0.857$ & $0.862$	\\

\bottomrule
\end{tabular}
\end{table}

Note that feedback is on the edges but the predictions are on the anomalousness of graphs. We observe that both Semi-Supervised and Semi-Supervised+$1$ perform consistently better than Basic. Moreover, we observe that on \emph{DARPA Unbalanced},
thresholding shows substantial improvement; Semi-Supervised+$1$
performs similar to Fixed which is fully supervised to find the best threshold.

\FloatBarrier

\section{Conclusion}
This chapter explores semi-supervision via sketching for two anomaly detection algorithms over graphs where the input is provided as a stream of edges. A small number of labeled samples can provide significant benefits, even if the labels are weakly correlated with the objective and the feedback is forced to be one-sided. In contrast, it is not clear how to provide one-sided feedback to label propagation based methods. It is non-obvious how the feedback on edges can easily be propagated to other edges in a graph.

Note that none of the experiments discuss query-based feedback. In addition to posing difficulties in analysis, the notion of query-based one-sided feedback also poses significant challenges in defining a reasonable measurement strategy, especially when the input is a stream of edges and the anomalies are defined in the context of the overall graph. The resilience of \sess\ to wrong label propagation is also not discussed due to the lack of a reasonable baseline or evaluation measurement. However, in label propagation based algorithms, once the label has been propagated, there is no way to undo it. On the other hand, sketches have reversible properties because they are associative in nature.
\end{appendices}

\bookmarksetup{startatroot}
\printbibliography[heading=bibintoc]

\end{document}